\documentclass{article}

\pdfoutput=1 




     \usepackage[final]{neurips_2021}


\title{The Causal-Neural Connection:\\ Expressiveness, Learnability, and Inference}

\usepackage[square,numbers]{natbib}

\ifdefined\preamble\else\def\preamble{}


\usepackage{tikz}
\usetikzlibrary{shapes,decorations,arrows,calc,arrows.meta,fit,positioning, patterns}
\tikzset{
    -Latex,auto,node distance =1 cm and 1 cm,semithick,
    state/.style ={ellipse, draw, minimum width = 0.7 cm},
    point/.style = {circle, draw, inner sep=0.04cm,fill,node contents={}},
    bidirected/.style={Latex-Latex,dashed},
    el/.style = {inner sep=2pt, align=left, sloped}
}

\usepackage{amsmath,amsthm,amsfonts,amssymb}

\usepackage{thmtools,thm-restate}
\declaretheorem{theorem}
\declaretheorem{lemma}
\declaretheorem{fact}

\theoremstyle{definition}
\declaretheorem{definition}
\declaretheorem{example}

\newcommand{\M}{\mathcal{M}}

\newcommand{\Ll}{\mathcal{L}}










\newcommand{\Layer}{{Layer}}

\DeclareSymbolFont{symbolsC}{U}{txsyc}{m}{n}
\DeclareMathSymbol{\boxright}{\mathrel}{symbolsC}{128}


\def\*#1{\mathbf{#1}}

\newcommand{\pai}[1]{\*{pa}_{#1}}
\newcommand{\Pai}[1]{\*{Pa}_{#1}}

\newcommand{\ui}[1]{\*{u}_{#1}}
\newcommand{\Ui}[1]{\*{U}_{#1}}

\newcommand\numberthis{\addtocounter{equation}{1}\tag{\theequation}}

\theoremstyle{definition}

\DeclareMathOperator{\ctm}{\textsf{CM}}

\DeclareMathOperator{\bin}{bin}

\DeclareMathOperator{\unif}{Unif}

\DeclareMathOperator{\mlp}{MLP}
\DeclareMathOperator{\scand}{\textsc{AND}}
\DeclareMathOperator{\scor}{\textsc{OR}}
\DeclareMathOperator{\scnot}{\textsc{NOT}}
\DeclareMathOperator{\kl}{\textsc{KL}}
\DeclareMathOperator{\se}{SE}

\newcommand{\ncm}{\text{NCM}}
\newcommand{\ncms}{\text{NCMs}}
\newcommand{\genncm}{\text{NCM}}
\newcommand{\werm}{\text{WERM}}


\newcommand{\xdasharrow}[2][->]{
\tikz[baseline=-\the\dimexpr\fontdimen22\textfont2\relax]{
\node[anchor=south,font=\scriptsize, inner ysep=1.5pt,outer xsep=8pt](x){#2};
\draw[shorten <=3.4pt,shorten >=3.4pt,dashed,#1](x.south west)--(x.south east);
}
}

\def\ddefloop#1{\ifx\ddefloop#1\else\ddef{#1}\expandafter\ddefloop\fi}
\def\ddef#1{\expandafter\def\csname bb#1\endcsname{\ensuremath{\mathbb{#1}}}}
\ddefloop ABCDEFGHIJKLMNOPQRSTUVWXYZ\ddefloop
\def\ddef#1{\expandafter\def\csname c#1\endcsname{\ensuremath{\mathcal{#1}}}}
\ddefloop ABCDEFGHIJKLMNOPQRSTUVWXYZ\ddefloop
\def\ddef#1{\expandafter\def\csname v#1\endcsname{\ensuremath{\boldsymbol{#1}}}}
\ddefloop ABCDEFGHIJKLMNOPQRSTUVWXYZabcdefghijklmnopqrstuvwxyz\ddefloop
\def\ddef#1{\expandafter\def\csname v#1\endcsname{\ensuremath{\boldsymbol{\csname #1\endcsname}}}}
\ddefloop {alpha}{beta}{gamma}{delta}{epsilon}{varepsilon}{zeta}{eta}{theta}{vartheta}{iota}{kappa}{lambda}{mu}{nu}{xi}{pi}{varpi}{rho}{varrho}{sigma}{varsigma}{tau}{upsilon}{phi}{varphi}{chi}{psi}{omega}{Gamma}{Delta}{Theta}{Lambda}{Xi}{Pi}{Sigma}{varSigma}{Upsilon}{Phi}{Psi}{Omega}{ell}\ddefloop

\makeatletter
\newcommand*{\indep}{%
  \mathbin{%
    \mathpalette{\@indep}{}%
  }%
}
\newcommand*{\nindep}{%
  \mathbin{
    \mathpalette{\@indep}{\not}
  }%
}
\newcommand*{\@indep}[2]{%
  \sbox0{$#1\perp\m@th$}
  \sbox2{$#1=$}
  \sbox4{$#1\vcenter{}$}
  \rlap{\copy0}
  \dimen@=\dimexpr\ht2-\ht4-.2pt\relax
  \kern\dimen@
  {#2}%
  \kern\dimen@
  \copy0 
} 
\makeatother
\fi

\usepackage[utf8]{inputenc} 
\usepackage[T1]{fontenc}    
\usepackage{url}            
\usepackage{booktabs}       
\usepackage{amsfonts}       
\usepackage{nicefrac}       
\usepackage{microtype}      
\usepackage{lipsum}
\usepackage{amsmath}
\usepackage{amsthm}
\usepackage{bbm}
\usepackage{enumitem}
\usepackage{optidef}
\usepackage{subcaption}
\usepackage{bm}
\usepackage{upgreek}
\usepackage{wrapfig}
\usepackage{enumitem,kantlipsum}
\usepackage{microtype}
\usepackage{setspace}

\tikzset{
    -Latex,auto,node distance =1 cm and 1 cm,semithick,
    state/.style ={ellipse, draw, minimum width = 0.7 cm},
    point/.style = {circle, draw, inner sep=0.04cm,fill,node contents={}},
    bidirected/.style={Latex-Latex,dashed},
    el/.style = {inner sep=2pt, align=left, sloped}
}
\tikzstyle{mybox} = [draw=gray, fill=gray!20, very thick,
rectangle, rounded corners, inner xsep=3pt, inner ysep=7pt]

\usepackage{tikz}
\usetikzlibrary{arrows.meta}

\usepackage[linesnumbered, ruled, vlined]{algorithm2e}
\usepackage{etoolbox}
\makeatletter
\patchcmd{\@algocf@start}
  {-1.5em}
  {0pt}
  {}{}
\makeatother

\usepackage[english]{babel}

%

\author{%
  Kevin Xia \\
  \textls[-20]{CausalAI Lab } \\
  \textls[-20]{Columbia University} \\
  \textls[-55]{\small{\texttt{kmx2000@columbia.edu}}} \\
  \And
  Kai-Zhan Lee \\
  \textls[-20]{Bloomberg L.P.} \\
  \textls[-20]{Columbia University} \\
    \textls[-55]{\small{\texttt{kl2792@columbia.edu}}}
  \And
  Yoshua Bengio \\
  \textls[-20]{MILA} \\
  \textls[-30]{\small{Universit\'e de Montr\'eal}}\\
   \textls[-55]{\small{\texttt{yoshua.bengio@mila.quebec}}}
  \And
  Elias Bareinboim \\
  \textls[-20]{CausalAI Lab } \\
  \textls[-20]{Columbia University} \\
  \textls[-55]{\small{\texttt{eb@cs.columbia.edu}}}
}

\begin{document}

\maketitle

\begin{abstract}
One of the central elements of any causal inference is an object called structural causal model (SCM), which represents a collection of mechanisms and exogenous sources of random variation of the system under investigation (Pearl, 2000). An important property of many kinds of neural networks is \textit{universal approximability}: the ability to approximate any function to arbitrary precision. Given this property, one may be tempted to surmise that a collection of neural nets is capable of learning any SCM by training on data generated by that SCM. In this paper, we show this is not the case by disentangling the notions of expressivity and learnability. Specifically, we show that the causal hierarchy theorem (Thm.\ 1, Bareinboim et al., 2020), which describes the limits of what can be learned from data, still holds for neural models. For instance, an arbitrarily complex and expressive neural net is unable to predict the effects of interventions given observational data alone. Given this result, we introduce a special type of SCM called a neural causal model (\ncm), and formalize a new type of inductive bias to encode structural constraints necessary for performing causal inferences. Building on this new class of models, we focus on solving two canonical tasks found in the literature known as causal  identification and estimation. Leveraging the neural toolbox, we develop an algorithm that is both sufficient and necessary to determine whether a causal effect can be learned from data (i.e., causal identifiability); it then estimates the effect whenever identifiability holds (causal estimation). Simulations corroborate the proposed approach.
\end{abstract}

\vspace{-0.19in}
\section{Introduction} \label{sec:intro}
\vspace{-0.12in}

One of the most celebrated and relied upon results in the science of intelligence is the universality of neural models. More formally, universality says that neural models can approximate any function (e.g., boolean, classification boundaries, continuous valued) with arbitrary precision given enough capacity in terms of the depth and breadth of the network \citep{Cybenko89,HORNIK1991251,LESHNO1993861,NIPS2017_32cbf687}. This result, combined with the observation that most tasks can be abstracted away and modeled as input/output -- i.e., as functions -- leads to the strongly held belief that under the right conditions, neural networks can solve the most challenging and interesting tasks in AI.
This belief is not without merits, and is corroborated by ample evidence of practical successes, including in compelling tasks in computer vision \citep{NIPS2012_c399862d}, speech recognition \citep{pmlr-v32-graves14}, and game playing \citep{volodymr:etal13}.
Given that the universality of neural nets is such a compelling proposition, we investigate this belief in the context of causal reasoning.

To start understanding the causal-neural connection -- i.e., the non-trivial and somewhat intricate relationship between these modes of reasoning -- 
two standard objects in causal analysis will be instrumental. 
First, we evoke a class of generative models known as the  \textit{Structural Causal Model} (SCM, for short) \citep[Ch.~7]{pearl:2k}. In words, an SCM $\cM^*$ is a representation of a system that includes a collection of mechanisms and a probability distribution over the exogenous conditions
  (to be formally defined later on). 
Second, any fully specified SCM $\cM^*$ induces a collection of distributions known as the \textit{Pearl Causal Hierarchy} (PCH) {\citep[Def.~9]{bareinboim:etal20}}.
The importance of the PCH is that it formally delimits distinct cognitive capabilities (also known as layers; not to be confused with neural nets layers) that can be associated with the human activities of ``seeing'' (layer 1), ``doing'' (2), and ``imagining'' (3)  \citep[Ch.~1]{pearl:mackenzie2018}.
\footnotemark[1]
Each of these layers can be expressed as a distinct formal language and represents queries that can help to classify different types of inferences \citep[Def.~8]{bareinboim:etal20}. Together, these layers form a strict containment hierarchy  \citep[Thm.~1]{bareinboim:etal20}. We illustrate these notions in Fig.~\ref{fig:pch}(a) (left side), where  SCM $\cM^*$ induces layers $L^*_1, L^*_2, L^*_3$ of the PCH.

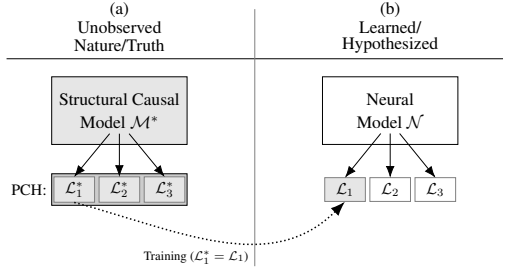
\begin{wrapfigure}{r}{0.5\textwidth}
\vspace{-0.3in}
  \begin{center}
    \begin{tikzpicture}[-, scale=0.6, every node/.append style={transform shape}]
        \draw [line width=0.05mm] (-5.5,1.9) -- (5.5,1.9);
        \draw [line width=0.05mm, draw=gray] (0,3) -- (0,-2.7);
        
        \node [align=center, font=\fontsize{11}{0}\selectfont] (atext1) at (-3, 3) {(a)};
        \node [align=center, font=\fontsize{11}{0}\selectfont] (atext2) at (-3, 2.6) {Unobserved};
        \node [align=center, font=\fontsize{11}{0}\selectfont] (atext3) at (-3, 2.2) {Nature/Truth};
        \node [align=center, font=\fontsize{11}{0}\selectfont] (btext1) at (3, 3) {(b)};
        \node [align=center, font=\fontsize{11}{0}\selectfont] (btext2) at (3, 2.6) {Learned/};
        \node [align=center, font=\fontsize{11}{0}\selectfont] (btext3) at (3, 2.2) {Hypothesized};
        
        \filldraw [fill=gray!40,line width=0.01mm] (-4.5, -0.65) rectangle (-1.5, -1.35);
        
        \node (pch) at (-5, -1) {PCH:};
    
        \filldraw [fill=gray!20, line width=0.01mm] (-4.5, 1.5) rectangle (-1.5, 0);
        \node [align=center, font=\fontsize{11}{0}\selectfont] (Pu1) at (-3,1) {Structural Causal};
        \node [align=center, font=\fontsize{11}{0}\selectfont] (Pu) at (-3,0.5) {Model $\cM^*$};
    
       	\node [fill=gray!20, draw=gray, line width=0.08mm, align=center, minimum width=0.9cm] (L1Pv) at (-4,-1) {$\Ll_1^*$};
       	\node [fill=gray!20, draw=gray, line width=0.08mm, align=center, minimum width=0.9cm] (L2Pv) at (-3,-1) {$\Ll_2^*$};
       	\node [fill=gray!20, draw=gray, line width=0.08mm, align=center, minimum width=0.9cm] (L3Pv) at (-2,-1) {$\Ll_3^*$};
       	
       	\path [-Latex, line width=0.1mm] (Pu) edge (L1Pv.north);
       	\path [-Latex, line width=0.1mm] (Pu) edge (L2Pv.north);
       	\path [-Latex, line width=0.1mm] (Pu) edge (L3Pv.north);

       	\filldraw [fill=gray!0,line width=0.01mm] (4.5, 1.5) rectangle (1.5, 0);
        \node [align=center, font=\fontsize{11}{0}\selectfont] (nPu1) at (3,1) {Neural};
        \node [align=center, font=\fontsize{11}{0}\selectfont] (nPu) at (3,0.5) {Model $\cN$};
    
       	\node [fill=gray!20, draw=gray, line width=0.08mm, align=center, minimum width=0.9cm] (nL1Pv) at (2,-1) {$\Ll_1$};
       	\node [draw=gray, line width=0.08mm, align=center, minimum width=0.9cm] (nL2Pv) at (3,-1) {$\Ll_2$};
       	\node [draw=gray, line width=0.08mm, align=center, minimum width=0.9cm] (nL3Pv) at (4,-1) {$\Ll_3$};
    
       	\path [-Latex, line width=0.1mm] (nPu) edge (nL1Pv.north);
       	\path [-Latex, line width=0.1mm] (nPu) edge (nL2Pv.north);
       	\path [-Latex, line width=0.1mm] (nPu) edge (nL3Pv.north);
       	
       	\path [-Latex, line width=0.2mm] (L1Pv.south) edge[densely dotted, out=-15, in=-135] (nL1Pv.south);
       	\node [align=center, font=\fontsize{8}{0}\selectfont] (trainingtext) at (-1.3, -2.5) {Training ($\cL_1^* = \cL_1$)};
        
    \end{tikzpicture}
    \caption{The l.h.s. contains the unobserved true SCM $\cM^*$ that induces the three layers of the PCH. The r.h.s. contains an NCM that is trained to match in layer 1. The matching shading indicates that the two models agree w.r.t. $L_1$ while not necessarily agreeing w.r.t. layers 2 and 3.}
    \label{fig:pch}
    \end{center}
    \vspace{-0.3in}
\end{wrapfigure}

Even though each possible statement within these capabilities has well-defined semantics given the true SCM $\cM^*$ \citep[Ch.~7]{pearl:2k}, a challenging inferential task arises when one wishes to recover part of the PCH when $\cM^*$ is only partially observed. This situation is typical in the real world aside from some special settings in physics and chemistry where the laws of nature are understood with high precision. 

For concreteness, consider the setting where one needs to make a statement about the effect of a new intervention (i.e., about layer 2),\footnotetext[1]{This 
structure is named after Judea Pearl and is a central topic in his \textit{Book of Why (BoW)},  where it is also called the ``Ladder of Causation'' \citep{pearl:mackenzie2018}. For a more technical discussion on the PCH, we refer readers to \citep{bareinboim:etal20}.  } \stepcounter{footnote}
but only has observational data from layer 1, which is passively collected.\footnote{\label{footnote:id}The full inferential challenge is, in practice, more general since an agent may be able to perform interventions and obtain samples from a subset of the PCH's layers, while its goal is to make inferences about some other parts of the layers  \citep{bareinboim2016causal,lee:etal19,bareinboim:etal20}. This situation is not uncommon in RL settings \cite{sutton:barto98,forney:etal17,lee:bar18a,lee2020omp}. Still, for the sake of space and concreteness, we will focus on two canonical and more basic tasks found in the literature. }
Going back to the causal-neural connection, one could try to learn a neural model $\cN$ using the observational dataset (layer 1) generated by the true SCM $\cM^*$, as illustrated in Fig.~\ref{fig:pch}(b). Naturally, a basic consistency requirement  is that $\cN$ should be capable of generating the same distributions as $\cM^*$; in this case, their layer 1 predictions should match (i.e., $L_1 = L^*_1$). Given the universality of neural models, it is not hard to believe that these constraints can be satisfied in the large sample limit. 
The question arises of whether the learned model $\cN$ can act as a proxy, having the capability of predicting the effect of  interventions that matches the $L_2$ distribution generated by the true (unobserved) SCM $\cM^*$.
\footnote{We defer a more formal discussion on how neural models could be used to assess the effect of interventions to Sec.~\ref{section:ncms}. Still, this is neither attainable in all universal neural architectures nor trivially implementable.} 
 The answer to this question cannot be ascertained in general, as will become evident later on (Corol.~1). The intuitive reason behind this result is that 
there are multiple neural models that are equally consistent w.r.t. the $L_1$ distribution of $\cM^*$ but generate different $L_2$-distributions.
\footnote{Pearl shared a similar observation in the \textit{BoW} \cite[p.~32]{pearl:mackenzie2018}: ``Without the causal model, we could not go from rung (layer) one to rung (layer) two. This is why deep-learning systems (as long as they use only rung-one data and do not have a causal model) will never be able to answer questions about interventions (...)''. 
\label{footnote:pearl} }
Even though $\cN$ may be expressive enough to fully represent $\cM^*$ (as discussed later on), generating one particular parametrization of $\cN$ consistent with $L_1$ is insufficient to provide any guarantee regarding higher-layer inferences, i.e., about predicting the effects of interventions ($L_2$) or counterfactuals ($L_3$).

The discussion above entails two tasks that have been acknowledged in the literature, namely, causal effect identification and estimation. The first -- causal identification  -- has been extensively studied, and general solutions have been developed, such as Pearl's celebrated do-calculus \citep{pearl:95a}. Given the impossibility described above, the ingredient shared across current non-neural  solutions  is to represent assumptions about the unknown $\cM^*$ in the form of causal diagrams \citep{pearl:2k,spirtes:etal00,bareinboim2016causal} or their equivalence classes \citep{jaber2018causal,perkovic:etal18,jaber2019markovcompleteness,Zhang2008}. The task is then to decide whether there is a unique solution for the causal query  based on such assumptions. There are no neural methods today focused on solving this task. 

\vspace{-0.05in}
The second task -- causal estimation -- is triggered when effects are determined to be identifiable by the first task. 
Whenever identifiability is obtained through the backdoor criterion/conditional ignorability  \citep[Sec.~3.3.1]{pearl:2k}, deep learning techniques can be leveraged to estimate such effects with impressive practical performance \citep{shalit2017estimating, louizos2017causal,  NIPS2017_b2eeb736, johansson2018learning, NEURIPS2018_a50abba8, yoon2018ganite,  kallus2018deepmatch, shi2019adapting,  du2020adversarial, Guo_2020, kennedy2021semiparametric, johansson2021generalization}.
For effects that are identifiable through causal functionals that are not necessarily of the backdoor-form (e.g., frontdoor, napkin), other optimization/statistical techniques can be employed that enjoy properties such as double robustness and debiasedness \cite{jung2020estimating,jung2020werm,jung2021dml}. 
Each of these approaches optimizes a particular estimand corresponding to one specific target interventional distribution.

Despite all the great progress achieved so far, it is still largely unknown how to perform the tasks of causal identification and estimation in arbitrary settings using neural networks as a generative model, acting as a proxy for the true SCM $\cM^*$. 
It is our goal here to develop a general causal-neural framework that has the potential to scale to real-world, high-dimensional domains while preserving the validity of its inferences, as in traditional symbolic approaches. In the same way that the causal diagram encodes the assumptions necessary for the do-calculus to decide whether a certain query is identifiable, our method encodes the same invariances as an inductive bias while being amenable to  gradient-based optimization, allowing us to perform both tasks in an integrated fashion (in a way, addressing Pearl's concerns alluded to in Footnote \ref{footnote:pearl}). Specifically, our contributions are as follows:

\vspace{-0.08in}
\begin{enumerate}[wide, labelwidth=0pt, labelindent=0pt]
    \item  $[$Sec.~\ref{section:ncms}$]$ We introduce a special yet simple type of SCM that is  amenable to gradient descent called a \emph{neural causal model} (\ncm{}). We prove basic properties of this class of models, including its universal expressiveness and ability to encode an inductive bias representing certain structural invariances (Thm.~\ref{thm:lrepr}-\ref{thm:nscm-g-uat}). Notably, we show that despite the \ncm's expressivity, it still abides by the Causal Hierarchy Theorem (Corol.~\ref{cor:ncht}). 
    \item $[$Sec.~\ref{sec:neural-id}$]$ We formalize the problem of neural identification  (Def.~\ref{def:nscm-id}) and prove a duality between identification in causal diagrams and in neural causal models  (Thm.~\ref{thm:nscm-id-equivalence}). We introduce an operational way to perform inferences in \ncms{} (Corol.~\ref{thm:dual-graph-id}-\ref{thm:markovid}) and a sound and complete algorithm to jointly train and decide effect identifiability for an NCM (Alg.~\ref{alg:nscm-solve-id},
    Corol.~\ref{thm:nscm-id-correctness}).
    \item $[$Sec.~\ref{sec:ncm-fitting}$]$ Building on these results, we develop a gradient descent algorithm to jointly identify and estimate causal effects  (Alg.~\ref{alg:ncm-learn-pv}). 
\end{enumerate}
\vspace{-0.09in}
There are multiple ways of grounding these theoretical results. In  Sec.~\ref{sec:experiments},  we perform experiments with one possible implementation which support the feasibility of the proposed approach. All appendices including proofs, experimental details, and examples can be found in the full technical report \citep{xia:etal21}.

\vspace{-0.10in}
\subsection{Preliminaries}
\vspace{-0.1in}

In this section, we provide the necessary background to understand this work, following the presentation in \citep{pearl:2k}.  An uppercase letter $X$ indicates a random variable, and a lowercase letter $x$ indicates its corresponding value; bold uppercase  $\mathbf{X}$ denotes a set of random variables, and lowercase letter $\mathbf{x}$ its corresponding values. We use $\cD_{X}$ to denote the domain of $X$ and $\cD_{\mathbf{X}} = \cD_{X_1} \times \dots \times \cD_{X_k}$ for $\mathbf{X} = \{X_1, \dots, X_k\}$. We denote $P(\mathbf{X})$ as a probability distribution over a set of random variables $\mathbf{X}$ and $P(\mathbf{X} = \mathbf{x})$ as the probability of $\mathbf{X}$ being equal to the value of $\mathbf{x}$ under the distribution $P(\mathbf{X})$. For simplicity, we will mostly abbreviate $P(\mathbf{X} = \mathbf{x})$ as simply $P(\mathbf{x})$.
The basic semantic framework of our analysis rests on \textit{structural causal models} (SCMs) \citep[Ch.~7]{pearl:2k}, which are defined below. 

\begin{definition}[Structural Causal Model (SCM)]
    \label{def:scm} 
    An SCM $\mathcal{M}$ is a 4-tuple $\langle \mathbf{U}, \mathbf{V}, \mathcal{F}, P(\mathbf{U})\rangle$, where $\mathbf{U}$ is a set of exogenous variables (or ``latents'') that are determined by factors outside the model; $\mathbf{V}$ is a set $\{V_1, V_2, \ldots, V_n\}$ of (endogenous) variables of interest that are determined by other variables in the model -- that is, in $\mathbf{U} \cup \mathbf{V}$; $\mathcal{F}$ is a set of functions $\{f_{V_1}, f_{V_2}, \ldots, f_{V_n}\}$ such that each $f_i$ is a mapping from (the respective domains of) $\Ui{V_i} \cup \Pai{V_i}$ to $V_{i}$, where $\Ui{V_i} \subseteq \mathbf{U}$, $\Pai{V_i} \subseteq \mathbf{V} \setminus V_{i}$, and the entire set $\mathcal{F}$ forms a mapping from $\mathbf{U}$ to $\mathbf{V}$. That is, for $i=1,\ldots,n$, each $f_i \in \mathcal{F}$ is such that $v_i \leftarrow f_{V_i}(\pai{V_i}, \ui{V_i})$; and $P(\mathbf{u})$ is a probability function defined over the domain of $\mathbf{U}$.
    \hfill $\blacksquare$
\end{definition}

Each SCM $\mathcal{M}$ induces a  causal diagram $G$ where every $V_i \in \*V$ is a vertex, there is a directed arrow $(V_j \rightarrow V_i)$ for every $V_i \in \*V$ and $V_j \in Pa(V_i)$, and there is a dashed-bidirected arrow $(V_j  \dashleftarrow \dashrightarrow V_i)$ for every pair $V_i, V_j \in \*V$ such that  $\*U_{V_i}$ and $\*U_{V_j}$ are not independent. For further details on this construction, see \citep[Def.~13/16,~Thm.~4]{bareinboim:etal20}. The exogenous $\*U_{V_i}$'s are not assumed independent (i.e.\ Markovianity does not hold). We will consider here \emph{recursive} SCMs, which implies acyclic diagrams, and that the endogenous variables ($\*V$) are discrete and have finite domains.

We show next how an SCM $\cM$ gives values to the PCH's layers; for details on the semantics, see \citep[Sec.~1.2]{bareinboim:etal20}.
Superscripts are omitted when unambiguous.

\begin{definition}[Layers 1, 2 Valuations] 
\label{def:l2-semantics}
An SCM $\cM$ induces layer $L_2(\cM)$, a set of distributions over $\*V$, one for each intervention $\*x$.  For each ${\*Y} \subseteq \*V$, 
\begingroup\abovedisplayskip=0.5em\belowdisplayskip=0pt
\begin{align}
    \label{eq:def:l2-semantics}
    P^{\cM}({\*y}_{\*x}) = 
    \sum_{\{\*u \mid {\*Y}_{\*x}(\*u)={\*y}\}}
    P(\*u),
\end{align}
where ${\*Y}_{\*x}(\*u)$ is the solution for $\*Y$ after evaluating 
 $\mathcal{F}_{\*x} := \{f_{V_i} : V_i \in \*V \setminus \*X\} \cup \{f_X \leftarrow x : X \in \*X\}$. 
 \\The specific distribution $P(\*V)$, where $\*X$ is empty, is defined as layer $L_1(\cM)$.
\endgroup
\hfill $\blacksquare$
\end{definition}

In words, an external intervention forcing a set of variables $\*X$ to take values $\*x$ is modeled by replacing the original mechanism $f_X$ for each $X \in \*X$ with its corresponding value in $\*x$. This operation is represented formally by the do-operator, $do(\*X = \*x)$, 
and graphically as the \emph{mutilation} procedure. For the definition of the third layer, $L_3(\cM)$, see Def.~\ref{def:l3-semantics} in Appendix~\ref{app:proofs} or \citep[Def.~7]{bareinboim:etal20}.

\section{Neural Causal Models and the Causal Hierarchy Theorem}
\vspace{-0.08in}
\label{section:ncms}

In this section, we aim to resolve the tension between expressiveness and learnability (Fig.~\ref{fig:pch}). To that end, we define a special class of SCMs based on neural nets that is amenable to optimization and has the potential to act as a proxy for the true, unobserved SCM $\cM^*$.

\begin{definition}[NCM]
    \label{def:nscm}
    A Neural Causal Model (for short, NCM) $\widehat{M}(\bm{\theta})$ over variables $\*V$ with parameters $\bm{\theta} = \{\theta_{V_i} : V_i \in \*V\}$ is an SCM $\langle \widehat{\*U}, \*V, \widehat{\cF}, P(\widehat{\*U}) \rangle$ such that 
    \begin{itemize}
    \setlength\itemsep{-0.5em}
        \item $\widehat{\*U} \subseteq \{\widehat{U}_{\*C} : \*C \subseteq \*V\}$, where each $\widehat{U}$ is associated with some subset of variables $\*C \subseteq \*V$, and $\cD_{\widehat{U}} = [0, 1]$ for all $\widehat{U} \in \widehat{\*U}$.  (Unobserved confounding is present whenever $|\*C|>1$.)
        
        \item $\widehat{\cF} = \{\hat{f}_{V_i} : V_i \in \*V\}$, where each $\hat{f}_{V_i}$ is a feedforward neural network parameterized by $\theta_{V_i} \in \bm{\theta}$ 
        mapping values of $\Ui{V_i} \cup \Pai{V_i}$ to values of $V_i$ for some $\Pai{V_i} \subseteq \*V$ and $\Ui{V_i} = \{\widehat{U}_{\*C} : \widehat{U}_{\*C} \in \widehat{\*U}, V_i \in \*C\}$;
        
        \item $P(\widehat{\*U})$ is defined s.t. $\widehat{U} \sim \unif(0, 1)$ for each $\widehat{U} \in \widehat{\*U}$.
    \end{itemize}
    \vspace{-0.30in}
    \hfill $\blacksquare$
\end{definition}

There is a number of remarks worth making at this point.    

\begin{enumerate}[leftmargin=*,itemindent=1.25em]
\item \textbf{[Relationship NCM $\rightarrow$ SCM}] By definition, all NCMs are SCMs, which means NCMs have the capability of generating any distribution associated with the PCH's layers. 
\item \textbf{[Relationship SCM $\nrightarrow$ NCM}]  On the other hand, not all SCMs are NCMs, since Def. \ref{def:nscm} dictates that $\widehat{\*U}$ follows uniform distributions in the unit interval and $\widehat{\cF}$ are feedforward neural networks. 
\item \textbf{[Non-Markovianity]} For any two endogenous variables $V_i$ and $V_j$, it is the case that $\*U_{V_i}$ and $\*U_{V_j}$ might share an input from $\widehat{\*U}$, which will play a critical role in causality, not ruling out \textit{a priori} the possibility of unobserved confounding  and violations of Markovianity.
\item \textbf{[Universality of Feedforward Nets]} Feedforward networks are universal approximators \cite{Cybenko89, HORNIK1991251} (see also  \cite{Goodfellow-et-al-2016}), and any probability distribution can be generated by the uniform one (e.g., see \textit{probability integral transform} \cite{angus:inttransform}). This  suggests that the pair $\langle \widehat{\cF}, P(\widehat{\*U}) \rangle$ may be expressive enough for modeling  $\cM^*$'s mechanisms $\cF$ and distribution $P(\*U)$ without loss of generality. 
\item \textbf{[Generalizations / Other Model Classes] } The particular modeling choices  within the definition above were made for the sake of explanation, and the results discussed here still hold for other, arbitrary classes of functions and probability distributions, as shown in Appendix~\ref{app:generalizations}. 
\end{enumerate}

To compare the expressiveness of NCMs and SCMs, we introduce the following definition.

\begin{definition}[P$^{(L_i)}$-Consistency]
    \label{def:li-consistency}
    Consider two SCMs, $\cM_1$ and $\cM_2$. $\cM_2$ is said to be P$^{(L_i)}$-consistent (for short, $L_i$-consistent) w.r.t. $\cM_1$ if $L_i(\cM_1) = L_i(\cM_2)$.
    \hfill $\blacksquare$
\end{definition}
This definition applies to NCMs since they are also SCMs.  As shown below, NCMs can not only approximate the collection of functions of the true SCM $\cM^*$, but they can perfectly \textit{represent} all the observational, interventional, and counterfactual distributions. This property is, in fact,  special and not enjoyed by many neural models. (For examples and  discussion, see Appendix \ref{app:examples} and \ref{sec:nn-general}.)

\begin{restatable}[\ncm{} Expressiveness]{theorem}{lrepr} \label{thm:lrepr}
    For any SCM $\cM^* = \langle \*U, \*V, \cF, P(\*U) \rangle$, there exists an \ncm{} $\widehat{M}(\bm{\theta}) = \langle \widehat{\*U}, \*V, \widehat{\cF}, P(\widehat{\*U}) \rangle$ s.t. $\widehat{M}$ is $L_3$-consistent w.r.t. $\cM^*$.
    \hfill $\blacksquare$
\end{restatable}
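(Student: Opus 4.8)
The starting observation is that every distribution in the PCH of an SCM is a function of a single object: the pushforward of $P(\*U)$ under the response map $\*u \mapsto \big(\*V_{\*x}(\*u)\big)_{\*x}$ that records the value of each endogenous variable under each intervention. Layer~1 is the $\*x=\emptyset$ coordinate, layer~2 is a single coordinate at a time, and layer~3 (counterfactuals) is a joint event across several coordinates. So to prove the theorem it is enough to construct an \ncm{} whose analogous response map has the same pushforward; I would not try to also reproduce the causal diagram of $\cM^*$, since that is the content of Thm.~\ref{thm:nscm-g-uat}, and here a single shared exogenous variable suffices.

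\emph{Step 1: canonical form.} Assuming finite endogenous domains (the continuous case reduces to this one via the probability integral transform and universal approximation, cf.\ the footnote after Def.~\ref{def:nscm} and Appendix~\ref{app:generalizations}), I pass from $\cM^*$ to a canonical SCM $\cM^{\mathrm{can}}$ over the same $\*V$, with the same directed parent sets $\Pai{V_i}$, a single latent $R$ ranging over the finite set $\cR := \prod_i \{\pi : \cD_{\Pai{V_i}} \to \cD_{V_i}\}$, mechanisms $f^{\mathrm{can}}_{V_i}(\pai{V_i}, r) := [r]_{V_i}(\pai{V_i})$ (the $V_i$-th response function of the tuple $r$, evaluated at the parent values), and $P(R)$ equal to the pushforward of $P(\*U)$ under $r(\cdot)$, where $r(\*u) := \big(\pai{V_i} \mapsto f_{V_i}(\pai{V_i}, \ui{V_i}(\*u))\big)_i$ collects the response functions induced by $\*u$. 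A short induction on a topological order of the DAG (which both models share, since only the latents change) shows $\*V_{\*x}(\*u)$ computed in $\cM^*$ equals $\*V_{\*x}(r(\*u))$ computed in $\cM^{\mathrm{can}}$ for every $\*x$ and $\*u$; hence the response map of $\cM^*$ factors through $r(\cdot)$, the two pushforwards coincide, and $\cM^{\mathrm{can}}$ is $L_3$-consistent with $\cM^*$ (this is the canonical-SCM argument of \citep{bareinboim:etal20}).

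\emph{Step 2: turn $\cM^{\mathrm{can}}$ into an \ncm{}.} Let $\widehat{\*U}=\{\widehat U_{\*V}\}$ with $\widehat U_{\*V}\sim\unif(0,1)$. Enumerating $\cR=\{r^{(1)},\dots,r^{(m)}\}$ with $p_k:=P(R=r^{(k)})$, let $\dec:[0,1]\to\cR$ map the subinterval $\big[\sum_{j<k}p_j,\; \sum_{j\le k}p_j\big)$ to $r^{(k)}$, so that $\dec(\widehat U_{\*V})$ has law $P(R)$ by the inverse-CDF construction. Define $\hat f_{V_i}(\pai{V_i},\widehat u):=[\dec(\widehat u)]_{V_i}(\pai{V_i})$, which has signature $\cD_{\Pai{V_i}}\times[0,1]\to\cD_{V_i}$ with $\Ui{V_i}=\{\widehat U_{\*V}\}$, exactly as Def.~\ref{def:nscm} requires. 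Coupling $R:=\dec(\widehat U_{\*V})$ and repeating the Step~1 induction shows this \ncm{} and $\cM^{\mathrm{can}}$ induce the same response map, so the \ncm{} is $L_3$-consistent with $\cM^*$. It remains only to check that each $\hat f_{V_i}$ can be a feedforward net: it is piecewise-constant in $\widehat u$ (one piece per subinterval) and a finite lookup over $\cD_{\Pai{V_i}}$ for fixed $\widehat u$, so it is computed \emph{exactly} by a small network (e.g.\ threshold units producing the indicators $\mathbbm{1}[\widehat u\ge\sum_{j\le k}p_j]$ together with one-hot parent encodings in the first layer, and a linear readout of the table afterwards), or approximated to arbitrary precision under continuous activations with the residual handled as in Appendix~\ref{app:generalizations}.

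\textbf{Main obstacle.} The step that needs care is Step~1: one must verify that replacing the latents by response functions preserves \emph{all} of layer~3 --- the joint law of $\big(\*V_{\*x}(\*u)\big)_{\*x}$ across worlds, not merely its one-world marginals --- which is precisely what the factorization of the response map through $r(\cdot)$ delivers. The only other point to watch is the sense of ``feedforward network'': exact $L_3$-consistency (as the statement demands) requires exact representability, which is why I either restrict to finite domains above or route the general case through Appendix~\ref{app:generalizations}.
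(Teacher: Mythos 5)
Your proposal is correct and follows essentially the same route as the paper's proof: reduce $\cM^*$ to a canonical (response-function) SCM that is $L_3$-consistent with it, then realize that canonical model as an \ncm{} by decoding a single shared $\unif(0,1)$ variable through an inverse-CDF partition of $[0,1]$ and implementing the resulting finite lookup exactly with binary-step MLPs (the paper's Lemmas~\ref{lem:scm-to-ctm}, \ref{lem:binary-uat}--\ref{lem:unif-to-pmf}). The only cosmetic difference is that you package the latents as one tuple-valued $R$ and phrase $L_3$-consistency via the pushforward of the full response map, whereas the paper keeps per-variable $R_V$'s with a joint distribution and verifies equality on conjunctive counterfactual events, which are equivalent formulations under the paper's standing assumption of discrete $\cD_{\*V}$.
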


\vspace{-0.05in}
Thm.~\ref{thm:lrepr} ascertains that there is no loss of expressive  power using \ncms{} despite the constraints imposed over its form, i.e., \ncms{} are as expressive as  SCMs. 
One might be tempted to surmise, therefore, that an \ncm{} can be trained on the observed data and act as a proxy for the true SCM $\cM^*$, and inferences about other quantities of $\cM^*$ can be done through  computation directly in $\widehat{\cM}$. Unfortunately, this is almost never the case: \footnote{Multiple examples of this phenomenon are discussed in Appendix \ref{app:examples-expr} and \citep[Sec.~1.2]{bareinboim:etal20}}

\begin{restatable}[Neural Causal Hierarchy Theorem (N-CHT)]{corollary}{ncht} 
\label{cor:ncht}
    Let $\Omega^*$ and $\Omega$ be the sets of all SCMs and NCMs, respectively. We say that Layer $j$ of the causal hierarchy for NCMs \emph{collapses} to Layer $i$ ($i < j$) relative to $\cM^* \in \Omega^*$ if $L_i(\cM^*) = L_i(\widehat{M})$ implies that $L_j(\cM^*) = L_j(\widehat{M})$ for all $\widehat{M} \in \Omega$. Then, with respect to the Lebesgue measure over (a suitable encoding of $L_3$-equivalence classes of) SCMs, the subset in which Layer $j$ of NCMs collapses to Layer $i$ has measure zero.
    \hfill $\blacksquare$ 
\end{restatable}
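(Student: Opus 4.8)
The plan is to obtain the N-CHT as a corollary of the Causal Hierarchy Theorem for general SCMs (\citep[Thm.~1]{bareinboim:etal20}) via the expressiveness result of Theorem~\ref{thm:lrepr}. The key observation is that, although the NCM class $\Omega$ is a strict subclass of $\Omega^*$, it is already expressive enough that whenever some \emph{general} SCM witnesses non-collapse relative to $\cM^*$, some NCM witnesses it as well. Consequently the set of $\cM^*$ for which Layer $j$ of NCMs collapses to Layer $i$ is \emph{contained in} the set of $\cM^*$ for which Layer $j$ of SCMs collapses to Layer $i$, and the latter is Lebesgue-null by the CHT.

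Concretely, I would fix a layer pair $i < j$ (the three cases $(i,j)\in\{(1,2),(2,3),(1,3)\}$ are handled identically) and an SCM $\cM^* \in \Omega^*$, and establish the inclusion by contraposition. Suppose Layer $j$ does \emph{not} collapse to Layer $i$ relative to $\cM^*$ within $\Omega^*$: there is an SCM $\cM' \in \Omega^*$ with $L_i(\cM') = L_i(\cM^*)$ but $L_j(\cM') \neq L_j(\cM^*)$. Apply Theorem~\ref{thm:lrepr} to $\cM'$ to obtain an NCM $\widehat{M}' \in \Omega$ that is $L_3$-consistent with $\cM'$. Since $L_3$-consistency implies $L_1$- and $L_2$-consistency -- a lower-layer valuation is a restriction / marginal of the counterfactual layer, cf.\ Def.~\ref{def:l2-semantics} and \citep[Sec.~1.2]{bareinboim:etal20} -- we get $L_i(\widehat{M}') = L_i(\cM') = L_i(\cM^*)$ and $L_j(\widehat{M}') = L_j(\cM') \neq L_j(\cM^*)$, so $\widehat{M}'$ witnesses non-collapse relative to $\cM^*$ within $\Omega$. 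Taking contrapositives yields $\{\cM^* : \text{Layer } j \text{ of NCMs collapses to Layer } i\} \subseteq \{\cM^* : \text{Layer } j \text{ of SCMs collapses to Layer } i\}$.

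It then remains to invoke \citep[Thm.~1]{bareinboim:etal20}, which states that, under the same encoding of $L_3$-equivalence classes of SCMs and the induced Lebesgue measure, the right-hand set has measure zero; monotonicity of the measure transfers this to the left-hand set, for each of the three layer pairs, which is exactly the claim. (Note the collapse property depends on $\cM^*$ only through $L_i(\cM^*)$ and $L_j(\cM^*)$, both functions of $L_3(\cM^*)$, so it is well-defined on the quotient carrying the measure.) Essentially all of the work is borrowed: the delicate measure-zero construction is \citep[Thm.~1]{bareinboim:etal20}, and the fact that restricting to feedforward mechanisms with uniform exogenous noise loses nothing is Theorem~\ref{thm:lrepr}. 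The only points I would be careful about are that Theorem~\ref{thm:lrepr} delivers \emph{exact} (not approximate) $L_3$-consistency, so that the strict disagreement $L_j(\cM') \neq L_j(\cM^*)$ transfers verbatim to $\widehat{M}'$ (it does, by the statement of the theorem), and that the ``suitable encoding'' of $L_3$-equivalence classes is taken to be literally the one used in \citep{bareinboim:etal20}, so that a set inclusion at the level of equivalence classes pushes through to a measure inequality -- the latter being the main, though essentially bookkeeping, obstacle.
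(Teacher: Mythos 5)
Your proposal is correct and follows essentially the same route as the paper's proof: use Theorem~\ref{thm:lrepr} to turn any SCM witnessing non-collapse relative to $\cM^*$ into an $L_3$-consistent (hence $L_i$- and $L_j$-consistent) NCM witness, and then transfer the measure-zero statement from the CHT of \citep{bareinboim:etal20}. The only cosmetic difference is that you prove the single inclusion needed for the measure argument (with explicit bookkeeping about the quotient and monotonicity), whereas the paper additionally notes the reverse inclusion, via $\Omega \subset \Omega^*$, to conclude the two collapse sets coincide.
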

  
\vspace{-0.05in}

This corollary  highlights the fundamental challenge of performing inferences across the PCH layers even when the target object (NCM $\widehat{\cM}$) is a suitable surrogate for the underlying SCM $\cM^*$, in terms of expressiveness and capability of generating the same observed distribution. That is, expressiveness does not mean that the learned object has the same empirical content as the generating model. 
For concrete examples of the expressiveness of NCMs and why it is insufficient for causal inference, see Examples~\ref{ex:expressiveness-basic} and \ref{ex:ncht} in Appendix~\ref{app:examples-expr}. Thus, structural assumptions are necessary to perform causal inferences when using NCMs, despite their expressiveness. We discuss next  how to incorporate the necessary assumptions into an NCM to circumvent the limitation highlighted by Corol.~\ref{cor:ncht}.

\vspace{-0.05in}
\subsection{A Family of Neural-Interventional Constraints (Inductive Bias)}
In this section, we investigate constraints about $\cM^*$ that will narrow down the hypothesis space and possibly allow for valid cross-layer inferences.
One well-studied family of structural constraints comes in the form of a pair comprised of a  collection of interventional distributions  $\cP$ and causal diagram $\cG$, known as a \emph{causal bayesian network} (CBN) (Def.~\ref{def:cbn}; see also  \citep[Thm.~4]{bareinboim:etal20})).
The diagram $\cG$ encodes constraints over the space of interventional distributions $\cP$ which are useful to perform cross-layer inferences (for details, see Appendix \ref{app:examples-cg}). For simplicity, we focus on interventional inferences from observational data.  To compare the constraints entailed by distinct SCMs, we define the following notion of consistency: 
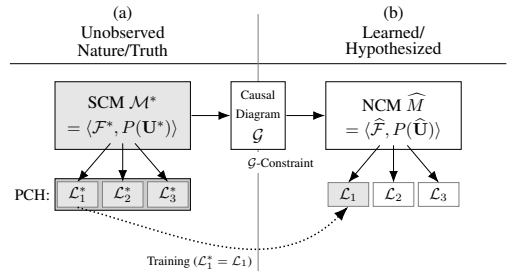
\begin{wrapfigure}{r}{0.45\textwidth}
  \begin{center}
    \begin{tikzpicture}[-, scale=0.6, every node/.append style={transform shape}]
        \draw [line width=0.05mm] (-5.5,1.9) -- (5.5,1.9);
        \draw [line width=0.05mm, draw=gray] (0,3) -- (0,-2.7);
        
        \node [align=center, font=\fontsize{11}{0}\selectfont] (atext1) at (-3, 3) {(a)};
        \node [align=center, font=\fontsize{11}{0}\selectfont] (atext2) at (-3, 2.6) {Unobserved};
        \node [align=center, font=\fontsize{11}{0}\selectfont] (atext3) at (-3, 2.2) {Nature/Truth};
        \node [align=center, font=\fontsize{11}{0}\selectfont] (btext1) at (3, 3) {(b)};
        \node [align=center, font=\fontsize{11}{0}\selectfont] (btext2) at (3, 2.6) {Learned/};
        \node [align=center, font=\fontsize{11}{0}\selectfont] (btext3) at (3, 2.2) {Hypothesized};
        
        \filldraw [fill=gray!40,line width=0.01mm] (-4.5, -0.65) rectangle (-1.5, -1.35);
        
        \node (pch) at (-5, -1) {PCH:};
    
        \filldraw [fill=gray!20,line width=0.01mm] (-4.5, 1.5) rectangle (-1.5, 0);
        \node [align=center, font=\fontsize{11}{0}\selectfont] (Pu1) at (-3,1.05) {SCM $\cM^*$};
        \node [align=center, font=\fontsize{11}{0}\selectfont] (Pu) at (-3,0.45) {$=\langle \cF^*, P(\*U^*) \rangle$};
    
       	\node [fill=gray!20, draw=gray, line width=0.08mm, align=center, minimum width=0.9cm] (L1Pv) at (-4,-1) {$\Ll_1^*$};
       	\node [fill=gray!20, draw=gray, line width=0.08mm, align=center, minimum width=0.9cm] (L2Pv) at (-3,-1) {$\Ll_2^*$};
       	\node [fill=gray!20, draw=gray, line width=0.08mm, align=center, minimum width=0.9cm] (L3Pv) at (-2,-1) {$\Ll_3^*$};
       	
       	\path [-Latex, line width=0.1mm] (Pu) edge (L1Pv.north);
       	\path [-Latex, line width=0.1mm] (Pu) edge (L2Pv.north);
       	\path [-Latex, line width=0.1mm] (Pu) edge (L3Pv.north);

       	\filldraw [fill=gray!0,line width=0.01mm] (4.5, 1.5) rectangle (1.5, 0);
        \node [align=center, font=\fontsize{11}{0}\selectfont] (nPu1) at (3,1.05) {NCM $\widehat{M}$};
        \node [align=center, font=\fontsize{11}{0}\selectfont] (nPu) at (3,0.45) {$= \langle \widehat{\cF}, P(\widehat{\*U})\rangle$};
    
       	\node [fill=gray!20, draw=gray, line width=0.08mm, align=center, minimum width=0.9cm] (nL1Pv) at (2,-1) {$\Ll_1$};
       	\node [draw=gray, line width=0.08mm, align=center, minimum width=0.9cm] (nL2Pv) at (3,-1) {$\Ll_2$};
       	\node [draw=gray, line width=0.08mm, align=center, minimum width=0.9cm] (nL3Pv) at (4,-1) {$\Ll_3$};
    
       	\path [-Latex, line width=0.1mm] (nPu) edge (nL1Pv.north);
       	\path [-Latex, line width=0.1mm] (nPu) edge (nL2Pv.north);
       	\path [-Latex, line width=0.1mm] (nPu) edge (nL3Pv.north);
       	
       	\path [-Latex, line width=0.2mm] (L1Pv.south) edge[densely dotted, out=-15, in=-135] (nL1Pv.south);
       	\node [align=center, font=\fontsize{8}{0}\selectfont, fill=white, inner sep=0.5mm] (trainingtext) at (-1.3, -2.5) {Training ($\cL_1^* = \cL_1$)};
       	
       	\filldraw [fill=gray!0,line width=0.01mm] (0.6, 1.5) rectangle (-0.6, 0);
        \node [align=center, font=\fontsize{8}{0}\selectfont] (nG1) at (0,1.2) {Causal};
        \node [align=center, font=\fontsize{8}{0}\selectfont] (nG2) at (0,0.75) {Diagram};
        \node [align=center, font=\fontsize{11}{0}\selectfont] (nG3) at (0,0.3) {$\cG$};
        
        \node [inner sep=0] (scmright) at (-1.5,0.75) {};
        \node [inner sep=0] (cgleft) at (-0.6,0.75) {};
        \node [inner sep=0] (cgright) at (0.6,0.75) {};
        \node [inner sep=0] (ncmleft) at (1.5,0.75) {};
        \path [-Latex, line width=0.1mm] (scmright) edge (cgleft);
        \path [-Latex, line width=0.1mm] (cgright) edge (ncmleft);
        
        \node [align=center, font=\fontsize{8}{0}\selectfont, fill=white] (gconstrainttext) at (0.5, -0.3) {$\cG$-Constraint};
        
    \end{tikzpicture}
    \caption{The l.h.s. contains the true SCM $\cM^*$ that induces PCH's three layers. The r.h.s. contains an NCM that is trained with layer 1 data. The  matching shading indicates that the two models agree with respect to $L_1$ while not necessarily agreeing in layers 2 and 3. The causal diagram $\cG$ entailed by $\cM^*$ is used as an inductive bias for $\widehat{M}$.}
    \label{fig:cht}
    \end{center}
    \vspace{-0.45in}
\end{wrapfigure}
\begin{definition}[$\cG$-Consistency]
    \label{def:g-consistency}
    Let $\cG$ be the causal diagram induced by SCM $\cM^*$. For any SCM $\cM$, we say that $\cM$ is $\cG$-consistent (w.r.t. $\cM^*$) if $\cG$ is a CBN for $L_2(\cM)$.
    \hfill $\blacksquare$
\end{definition}
In the context of \ncms{}, this means that $\cM$ would impose the same constraints over $\cP$ as the true SCM $\cM^*$ (since $\cG$ is also a CBN for $L_2(\cM^*)$ by \citep[Thm.~4]{bareinboim:etal20}).  Whenever the corresponding  diagram $\cG$ is known, one should only consider \ncms{} that are $\cG$-consistent. \footnote{Otherwise, the causal diagram can be learned through structural learning algorithms from observational data \cite{spirtes:etal00,peters:17} or experimental data \cite{kocaoglu2017experimental,kocaoglu2019characterization,jaber2020cd}. See the next footnote for a neural take on this task. \label{footnote:structural-learning}} We provide below a systematic way of constructing $\cG$-consistent \ncms{}.

\begin{definition}[$C^2$-Component]
    For a causal diagram $\cG$, a subset $\*C \subseteq \*V$ is a complete confounded component (for short, $C^2$-component) if any pair $V_i, V_j \in \*C$ is connected with a bidirected arrow in $\cG$ and is maximal (i.e. there is no $C^2$-component $\*C'$ for which $\*C \subset \*C'$.)
    \hfill $\blacksquare$
\end{definition}

\begin{definition}[$\cG$-Constrained \ncm{} (constructive)]
    \label{def:g-cons-nscm}
    Let $\cG$ be the causal diagram induced by  SCM $\cM^*$.
    Construct NCM $\widehat{M}$ as follows. \textbf{(1)} Choose $\widehat{\*U}$ s.t. $\widehat{U}_{\*C} \in \widehat{\*U}$ if and only if $\*C$ is a $C^2$-component in $\cG$. \textbf{(2)} For each variable $V_i \in \*V$, choose $\Pai{V_i} \subseteq \*V$ s.t. for every $V_j \in \*V$, $V_j \in \Pai{V_i}$ if and only if there is a directed edge from $V_j$ to $V_i$ in $\cG$.
    Any \ncm{} in this family is said to be $\cG$-constrained.
     $\blacksquare$
\end{definition}
\vspace{-0.03in}

Note that this represents a family of \ncms{}, not a unique one, since $\bm{\theta}$ (the parameters of the neural networks)
are not yet specified by the construction, only the scope of the function and independence relations among the sources of randomness ($\widehat{\*U}$). In contrast to SCMs where both $\langle \cF, P(\*u) \rangle$ can freely vary, the degrees of freedom within \ncms{} come from $\bm{\theta}$.
\footnote{There is a growing literature that models SCMs using neural nets as functions, but which differ in nature and scope to our work. Broadly, these works assume Markovianity, which entails strong constraints over $P(\*U)$ and, in the context of identification, implies that all effects are always identifiable; see Corol.~\ref{thm:markovid}. 
For instance, 
\citep{goudet2018fcm} attempts to learn the entire SCM from observational ($L_1$) data, while  \citep{bengio_causal_mechanisms:2020,brouillard:2020} also leverages experimental ($L_2$) data. On the inference side, 
\citep{kocaoglu2017causalgan} focuses on estimating causal effects of labels on images. 
\label{footnote:other-ncms}}

We show next that an \ncm{} constructed following the procedure dictated by Def.~\ref{def:g-cons-nscm} encodes all the constraints of the original causal diagram. 

\begin{restatable}[\ncm{} $\cG$-Consistency]{theorem}{nscmgcons}
    \label{thm:nscm-g-cons}
    Any $\cG$-constrained \ncm{} $\widehat{M}(\bm{\theta})$ is $\cG$-consistent. 
    \hfill $\blacksquare$
\end{restatable}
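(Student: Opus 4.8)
The plan is to reduce the statement to a comparison of two graph constructions. Concretely, I would show that the causal diagram \emph{induced by} the $\cG$-constrained \ncm{} $\widehat{M}$ — in the sense of the construction described immediately after Definition~\ref{def:scm} — is exactly $\cG$ itself. Once that is in hand, the theorem follows in one line: by \citep[Thm.~4]{bareinboim:etal20} the diagram induced by any SCM is a CBN for that SCM's layer-2 valuation, so the diagram induced by $\widehat{M}$ is a CBN for $L_2(\widehat{M})$; since that diagram is $\cG$, this says precisely that $\widehat{M}$ is $\cG$-consistent in the sense of Definition~\ref{def:g-consistency}. (Even if the matching only yielded that the diagram induced by $\widehat{M}$ is an edge-subgraph of $\cG$, the conclusion would still go through, since CBN compatibility is only weakened by adding edges; but I expect exact equality.)

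For the directed edges, I would invoke item~(2) of Definition~\ref{def:g-cons-nscm}: $V_j \in \Pai{V_i}$ in $\widehat{M}$ iff $\cG$ contains the directed edge $V_j \to V_i$. Since the induced-diagram construction draws a directed edge $V_j \to V_i$ exactly when $V_j \in \Pai{V_i}$, the directed parts of the two diagrams coincide. This step incidentally confirms that $\widehat{M}$ is a legitimate recursive SCM: its directed structure equals that of $\cG$, which is acyclic because $\cG$ is induced by the recursive SCM $\cM^*$, so Definitions~\ref{def:scm} and~\ref{def:l2-semantics} apply to $\widehat{M}$ without obstruction.

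For the bidirected edges, I would unpack the latent structure. In $\widehat{M}$ every $\widehat{U}_{\*C}$ is an independent $\unif(0,1)$, so $\Ui{V_i}$ and $\Ui{V_j}$ fail to be independent iff they share a common latent, i.e.\ iff there is some $\widehat{U}_{\*C} \in \widehat{\*U}$ with $\{V_i,V_j\} \subseteq \*C$; and by item~(1) of Definition~\ref{def:g-cons-nscm} such a $\widehat{U}_{\*C}$ exists iff some $C^2$-component of $\cG$ contains both $V_i$ and $V_j$. It then remains to see that ``some $C^2$-component of $\cG$ contains $\{V_i,V_j\}$'' is equivalent to ``$V_i \Confounded V_j$ in $\cG$'': the forward direction is immediate from the defining property of a $C^2$-component (every pair inside it is joined by a bidirected edge), and the reverse direction holds because the edge $V_i \Confounded V_j$ makes $\{V_i,V_j\}$ a clique in the bidirected subgraph of $\cG$, which therefore extends to a maximal such clique — and a maximal clique of the bidirected subgraph is by definition a $C^2$-component. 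Hence the bidirected parts of the two diagrams also coincide, and the diagram induced by $\widehat{M}$ equals $\cG$.

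I do not anticipate a genuine obstacle; the proof is essentially a matching of the two constructions against the induced-diagram definition. The one point that deserves care is the equivalence in the previous paragraph, which relies on the maximality clause built into the notion of $C^2$-component and on the fact that every vertex of $\cG$ — even one incident to no bidirected edge — lies in at least one $C^2$-component (its own singleton, if nothing else), so that each $\Ui{V_i}$ is nonempty and $\widehat{M}$ is well defined as an SCM to begin with. After the diagram equality is established, the appeal to \citep[Thm.~4]{bareinboim:etal20} together with Definition~\ref{def:g-consistency} completes the argument.
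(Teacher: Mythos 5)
Your proposal is correct and follows essentially the same route as the paper: the paper first proves a lemma (Lemma~\ref{lem:nscm-g-match}) showing that the causal diagram induced by a $\cG$-constrained \ncm{} equals $\cG$, by matching vertices, directed edges via item~(2) of Def.~\ref{def:g-cons-nscm}, and bidirected edges via item~(1) together with the mutual independence of the $\widehat{U}$'s, and then concludes by citing the SCM--CBN $L_2$ connection (Fact~\ref{fact:scm-cbn-l2}, i.e.\ \citep[Thm.~4]{bareinboim:etal20}). Your treatment of the bidirected case is, if anything, slightly more explicit about both directions of the equivalence (shared latent iff common $C^2$-component iff bidirected edge, using maximality) than the paper's own write-up.
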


We show next the implications of imposing the structural constraints embedded in the causal diagram.

\begin{restatable}[$L_2$-$\cG$ Representation]{theorem}{lgrepr}
    \label{thm:nscm-g-uat}
    For any SCM
    $\cM^*$
    that induces causal diagram $\cG$, there exists a $\cG$-constrained \ncm{} $\widehat{M}(\bm{\theta}) = \langle \widehat{\*U}, \*V, \widehat{\cF}, P(\widehat{\*U}) \rangle$ 
    that is $L_2$-consistent w.r.t. $\cM^*$.
    \hfill $\blacksquare$
\end{restatable}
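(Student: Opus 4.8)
The plan is to exhibit, for a given $\cM^*$ with diagram $\cG$, a particular $\cG$-constrained \ncm{} and show it is $L_2$-consistent with $\cM^*$, with the bulk of the work reduced to matching only the \emph{observational} distribution. The key enabling fact is that in a causal Bayesian network $L_2$ is completely determined by the pair (diagram, $L_1$) through the $c$-component factorization of Tian--Pearl: since $\cG$ is a CBN for $L_2(\cM^*)$ by \citep[Thm.~4]{bareinboim:etal20}, and $\cG$ is a CBN for $L_2$ of \emph{any} $\cG$-constrained \ncm{} by Thm.~\ref{thm:nscm-g-cons}, it suffices to build a $\cG$-constrained \ncm{} $\widehat{M}(\bm\theta)$ that is $L_1$-consistent with $\cM^*$; then $\widehat{M}$ and $\cM^*$ agree on $L_1$ and both have $c$-factorization governed by $\cG$, hence agree on every interventional distribution.

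Concretely, I would proceed as follows. \textbf{Step 1 (structure).} Take $\widehat{M}$ with exactly the latent and parent scopes dictated by Def.~\ref{def:g-cons-nscm}: one latent $\widehat U_{\*C}\sim\unif(0,1)$ for each $C^2$-component $\*C$ of $\cG$, and $\Pai{V_i}$ equal to the directed parents of $V_i$ in $\cG$, so $\Ui{V_i}=\{\widehat U_{\*C}:V_i\in\*C\}$. \textbf{Step 2 (mechanisms).} Choose the maps $\hat f_{V_i}$ to reproduce $P^{\cM^*}(\*v)$. Using the $c$-component factorization, this decomposes into reproducing, within each $c$-component $\*S$ of $\cG$, the $c$-factor $Q^{\cM^*}[\*S]$ while reading only the clique-latents $\{\widehat U_{\*C}:\*C\subseteq\*S \text{ a } C^2\text{-component},\ V_i\in\*C\}$ and $\Pai{V_i}$. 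I would do this variable by variable along a topological order of $\cG$ restricted to $\*S$: via a measure isomorphism $[0,1]\cong[0,1]^k$, split each uniform clique-latent into a ``content'' coordinate and independent ``noise'' coordinates, arrange the content coordinates so that by the time $V_i$ is processed its mechanism can reconstruct (from the clique-latents it shares with earlier variables) everything in $\*S$ it must condition on, and then use the probability integral transform --- the same device used in the proof of Thm.~\ref{thm:lrepr} --- with a fresh noise coordinate to set $V_i$'s conditional law to the one prescribed by $Q^{\cM^*}[\*S]$. Independence across $c$-components is automatic since distinct $c$-components share no clique-latent. \textbf{Step 3 (neuralization).} Each resulting measurable map is defined on the unit cube times the (finite, or suitably regular) parent domain; by the universal approximation theorem for feedforward networks, invoked exactly as in Thm.~\ref{thm:lrepr} and the footnote to Def.~\ref{def:nscm}, it is approximated arbitrarily well, and a fine enough approximation makes the finitely many relevant interventional probabilities agree (in the continuous case one argues convergence in distribution under the regularity hypotheses inherited from Thm.~\ref{thm:lrepr}). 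The output $\langle\widehat{\*U},\*V,\widehat{\cF},\widehat P(\widehat{\*U})\rangle$ is by construction a $\cG$-constrained \ncm{} and is $L_1$-consistent with $\cM^*$, hence $L_2$-consistent by Step 1's reduction.

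The step I expect to be the crux is the within-$c$-component construction in Step 2: showing that the \emph{maximal-clique} ($C^2$-component) latent structure --- rather than a single latent per $c$-component --- is already rich enough to realize every $c$-factor compatible with $\cG$. This amounts to checking that one can always pick a topological order along which each variable recovers, through its shared clique-latents, precisely the already-generated quantities its conditional depends on; here one leans on the fact that $\cG$, being a CBN for $L_2(\cM^*)$, genuinely captures $\cM^*$'s confounding (so $L_2(\cM^*)$ respects the bidirected-connectivity independences of $\cG$), matching the independences that the clique-latent model imposes. This is exactly the feature that, by the contrast between Thm.~\ref{thm:lrepr} and Corol.~\ref{cor:ncht}, must \emph{fail} to capture $L_3$ in general, so some care is needed to see why it nonetheless suffices for $L_2$; the remaining pieces (the graphical bookkeeping, the probability integral transform, and universal approximation) are routine given the earlier results.
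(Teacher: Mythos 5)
There is a genuine gap, and it sits exactly where you flagged the crux. Your reduction rests on the claim that once $\cG$ is a CBN for both $L_2(\cM^*)$ and $L_2(\widehat{M})$, agreement on $L_1$ forces agreement on $L_2$ via the $c$-component factorization. That claim is false whenever $\cG$ has bidirected edges: it is equivalent to asserting that every interventional distribution is identifiable from $\langle \cG, P(\*v)\rangle$, which contradicts Corol.~\ref{cor:ncht} and the paper's own counterexamples (e.g.\ the bow graph in Example~\ref{ex:non-id}, where two $\cG$-constrained, $L_1$-consistent \ncms{} disagree on $P(B\mid do(D))$). The Tian--Pearl factorization only determines from $P(\*v)$ the $c$-factors $Q[\*S]$ of the \emph{maximal} $c$-components; an intervention on variables inside a $c$-component requires sub-$c$-factors $Q[\*D]$ with $\*D \subsetneq \*S$, and these are not functions of $Q[\*S]$ (in the bow graph $Q[\{X,Y\}]=P(x,y)$ is known while $Q[\{Y\}]=P_x(y)$ is not). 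Your Step 2 sets each $V_i$'s conditional law to quantities recoverable from $P^{\cM^*}(\*v)$, so the resulting model is only guaranteed $L_1$-consistent; the interventional behavior it induces inside each $C^2$-component is an artifact of your construction choices and need not match $\cM^*$. Only in the Markovian case (Corol.~\ref{thm:markovid}) does your reduction go through.

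The fix --- and the paper's route --- is to make the latents carry $L_2$ information of $\cM^*$ that goes beyond $P(\*v)$. The paper invokes the $\cG$-canonical SCM of \citep{zhang:bareinboim21} (Fact~\ref{fact:scm-to-gctm}): an SCM whose exogenous variable for each $C^2$-component encodes a \emph{joint} distribution over the response functions of all variables in that component, chosen so that the model is $L_2$-consistent with $\cM^*$ by construction. The \ncm{} then exactly emulates this canonical model: the uniform $\widehat{U}_{\*C}$ is mapped to the joint response-type variables with the correct (non-$L_1$-determined) distribution via the neural inverse probability integral transform (Lemma~\ref{lem:unif-to-pmf}), and the canonical selection functions are realized exactly by MLPs (Lemma~\ref{lem:discrete-to-binary}), as in the proof of Thm.~\ref{thm:lrepr} --- no approximation step is needed in the discrete setting. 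Your Steps 1 and 3 (the latent/parent scopes from Def.~\ref{def:g-cons-nscm}, and neuralization) are fine; Step 2 must be replaced by matching a full $L_2$-equivalent representative of $\cM^*$ within each $C^2$-component rather than conditionals read off the observational distribution.
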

The importance of this result 
stems from the fact that despite constraining the space of NCMs to those compatible with $\cG$, the resultant family is still expressive enough to represent the entire Layer 2 of the original, unobserved SCM $\cM^*$.

Fig.~\ref{fig:cht} provides a mental picture useful to understand the results discussed so far.  The true SCM $\cM^*$ generates the three layers of the causal hierarchy (left side), but in many settings  only observational data (layer 1) is visible. An NCM $\widehat{M}$ trained with this data is capable of perfectly representing $L_1$ (right side). For almost any generating $\cM^*$ sampled from the space $\Omega^*$, there exists an \ncm{} $\widehat{M}$ that exhibits the same behavior with respect to observational data ($\widehat{M}$ is $L_1$-consistent) but exhibits a different behavior with respect to interventional data. In other words, $L_1$ underdetermines $L_2$. (Similarly, $L_1$ and $L_2$ underdetermine $L_3$ \cite[Sec.~1.3]{bareinboim:etal20}.) Still, the true SCM $\cM^*$ also induces a causal diagram $\cG$ that encodes constraints over the interventional distributions. If we use this collection of constraints as an inductive bias, imposing $G$-consistency in the construction of the \ncm{}, $\widehat{M}$ may agree with those of the true $\cM^*$ under some conditions, which we will investigate in the next section.  

\vspace{-0.2in}
\section{The Neural Identification Problem} \label{sec:neural-id}
\vspace{-0.12in}

We now investigate the feasibility of causal inferences 
in the class of $\cG$-constrained \ncms{}.  \footnote{This is akin to what happens with the non-neural CHT \citep[Thm.~1]{bareinboim:etal20} and the subsequent use of causal diagrams to encode the necessary inductive bias, and in which the do-calculus allows for cross-layer inferences directly from the graphical representation  \citep[Sec.~1.4]{bareinboim:etal20}.}   The first step is to refine the notion of identification \citep[pp.~67]{pearl:2k} to inferences within this class of models. 

\begin{definition}[Neural Effect Identification]
    \label{def:nscm-id}
    Consider any arbitrary SCM $\cM^*$ and the corresponding causal diagram $\cG$ and observational distribution $P(\*V)$. 
    The causal effect $P(\*y \mid do(\*x))$ is said to be neural-identifiable from the set of $\cG$-constrained NCMs $\Omega(\cG)$ and observational distribution $P(\*V)$ if and only if  $P^{\widehat{M}_1}(\*y \mid do(\*x)) = P^{\widehat{M}_2}(\*y \mid do(\*x))$ for every pair of models $\widehat{M}_1, \widehat{M}_2 \in \Omega(\cG)$ s.t. $P^{\cM^*}(\*V) = P^{\widehat{M}_1}(\*V) = P^{\widehat{M}_2}(\*V) > 0$.
    \hfill $\blacksquare$
\end{definition}

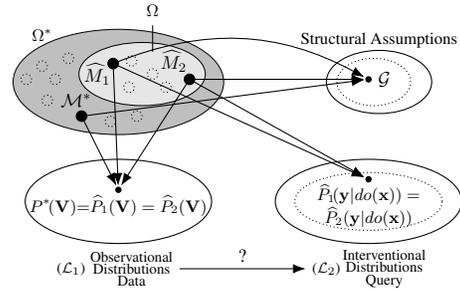
\begin{wrapfigure}{r}{0.42\textwidth}
    \vspace{-0.35in}
    \begin{tikzpicture}[-, scale=0.7, every node/.append style={transform shape}]
        \filldraw [fill=gray!50, line width=0.01mm] (-2.05,-0.05) ellipse (2.0 and 1.0);
        \filldraw [fill=gray!20, line width=0.01mm] (-1.6,0.1) ellipse (1.2 and 0.6);
      	\node [align=center, font=\fontsize{9}{0}\selectfont] at (-3.5,0.85) {$\Omega^*$};
      	\node [align=center, font=\fontsize{9}{0}\selectfont] at (-1.4,1.3) {$\Omega$};
      	\node [inner sep=0] (omg) at (-1.4,1.05) {};
      	\node [inner sep=0] (omgspace) at (-1.4,0.5) {};
      	\path [-] (omg) edge (omgspace);
      	
      	\node at (-2.85,-0.4) {$\cM^*$};
      	\draw [fill=black] (-2.75,-0.7) circle (0.1);
      	\node [inner sep=0] (ms) at (-2.75,-0.7) {\ };

      	\draw [fill=black] (-2.15,0.3) circle (0.1);
      	\node [inner sep=0] (mp) at (-2.15,0.3) {};
      	\node at (-2.45,0.1) {$\widehat{M}_1$};
      	\draw [fill=black] (-0.7,0) circle (0.1);
      	\node [inner sep=0] (mp2) at (-0.7,0) {};
      	\node at (-1,0.35) {$\widehat{M}_2$};
      	
      	\draw [densely dotted] (-1.8,0.35) circle (0.1);
      	\draw [densely dotted] (-1.85,-0.15) circle (0.1);
      	\draw [densely dotted] (-1.15,-0.15) circle (0.1);
      	\draw [densely dotted] (-1.4,0.1) circle (0.1);
      	
      	\draw [densely dotted] (-3.0,0.4) circle (0.1);
        \draw [densely dotted] (-3.5,0.25) circle (0.1);
        \draw [densely dotted] (-3.25,0.0) circle (0.1);
        \draw [densely dotted] (-3.8, 0.0) circle (0.1);
        \draw [densely dotted] (-3.4,-0.4) circle (0.1);
        \draw [densely dotted] (-2.2,-0.8) circle (0.1);
        \draw [densely dotted] (-1.7,-0.7) circle (0.1);

      	\draw (-2.05,-2.32) ellipse (1.77 and 0.8);
      	\node [align=center, font=\fontsize{8}{0}\selectfont] (l1) at (-2.95,-3.6) {($\Ll_1$)};
      	\node [align=center, font=\fontsize{8}{0}\selectfont] (l1) at (-1.8,-3.6) {Observational\\ Distributions\\ Data};
      	\node at (-2.05,-2.4) {\small $P^*\!(\*V){=}\widehat{P}_1(\*V) = \widehat{P}_2(\*V)$};
      	\draw [fill=black] (-2.05,-2.1) circle (0.05);
      	\node [inner sep=0.2em] (pl1) at (-2.05,-2.1) {\ };
    
      	\draw (2.7,-2.32) ellipse (1.77 and 0.8);
      	\node [align=center, font=\fontsize{8}{0}\selectfont] (l2) at (1.9,-3.6) {($\Ll_2$)};
      	\node [align=center, font=\fontsize{8}{0}\selectfont] (l2t) at (3.0,-3.6) {Interventional\\ Distributions\\ Query};
      	\draw [densely dotted] (2.7,-2.32) ellipse (1.4 and 0.55);
      	\node [align=center] at (2.7,-2.4) {{\small $\widehat{P}_1\!(\*y | do(\*x))=$} \\ {\small $\widehat{P}_2(\*y | do(\*x))$}};
    
      	\draw [fill=black] (2.7,-1.9) circle (0.05);
      	\node [inner sep=0.2em] (pl21) at (2.7,-1.9) {\ };

      	\path [-Latex] (ms) edge (pl1);
      	\path [-Latex] (mp) edge (pl1);
      	\path [-Latex] (mp) edge (pl21);
      	\path [-Latex] (l1) edge node[above]  {?} (l2);
      	\path [-Latex] (mp2) edge (pl1);
      	\path [-Latex] (mp2) edge (pl21);
      	
      	\draw (2.9,-0.05) ellipse (1 and 0.6);
      	\node [align=center, font=\fontsize{9}{0}\selectfont] at (2.9,0.75) {Structural Assumptions};
      	\draw [densely dotted] (2.8,-0.05) ellipse (0.7 and 0.45);
      	\node [align=center] at (3.0,0) {$\cG$};
      	\draw [fill=black] (2.7,0) circle (0.05);
      	\node [inner sep=0.2em] (g1) at (2.7,0) {\ };
      	\path [-Latex] (mp) edge [bend left=25] (g1);
      	\path [-Latex] (ms) edge (g1);
      	\path [-Latex] (mp2) edge (g1);
    \end{tikzpicture}
    
    \caption[Neural ID]{
    $P(\*y \mid do(\*x))$ is identifiable from $P(\*V)$ and $\Omega(\cG)$ if for any SCM $\cM^* \in \Omega^*$ and NCMs $\widehat{M}_1, \widehat{M}_2 \in \Omega$ (top left), $\widehat{M}_1, \widehat{M}_2, \cM^*$ match in $P(\*V)$ (bottom left) and $\cG$ (top right), then the NCMs $\widehat{M}_1$, $\widehat{M}_2$ also match in $P(\*y \mid do(\*x))$ (bottom right).
    }
    \vspace{-0.2in}
    \label{fig:omegastar}
\end{wrapfigure}

In the context of graphical identifiability \citep[Def.~3.2.4]{pearl:2k} and do-calculus, an effect is
identifiable if any SCM in $\Omega^*$ compatible with the observed causal diagram and capable of generating the observational distribution matches the interventional query. If we constrain our attention to \ncms{}, identification in the general class would imply identification in \ncms{}, naturally, since it needs to hold for all SCMs. On the other hand, it may be insufficient to constrain identification within the \ncm{} class, like in Def.~\ref{def:nscm-id}, since it is conceivable that the effect could match within the class (perhaps in a not very expressive neural architecture) while there still exists an SCM that generates the same observational distribution and induces the same diagram, but does not agree in the interventional query; see Example \ref{ex:nonexpr} in Appendix \ref{app:examples}.
The next result shows that this is never the case with NCMs, and there is no loss of generality when deciding identification through the NCM class.

\begin{restatable}[Graphical-Neural Equivalence (Dual ID)]{theorem}{idequivalence}
    \label{thm:nscm-id-equivalence}
    Let $\Omega^*$ be the set of all SCMs and $\Omega$ the set of \ncms{}. Consider the true SCM $\cM^*$ and the corresponding causal diagram $\cG$. Let $Q = P(\mathbf{y} \mid do(\mathbf{x}))$ be the query of interest and $P(\mathbf{V})$ the observational distribution. Then, $Q$ is neural identifiable from $\Omega(\cG)$ and $P(\mathbf{V})$ if and only if it is identifiable from $\cG$ and $P(\mathbf{V})$.
    \hfill $\blacksquare$
\end{restatable}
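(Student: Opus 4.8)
The plan is to sandwich the neural-identifiability condition of Definition~\ref{def:nscm-id} between the classical graphical notion and itself, using the two structural results already in hand: Theorem~\ref{thm:nscm-g-cons} (every $\cG$-constrained \ncm{} is $\cG$-consistent) and Theorem~\ref{thm:nscm-g-uat} (every layer-2 behaviour of a $\cG$-inducing SCM is realized by some $\cG$-constrained \ncm{}). The one external fact I would invoke is the soundness and completeness of the do-calculus: $Q = P(\*y \mid do(\*x))$ is identifiable from $\cG$ and $P(\*v)$ iff there is a single estimand $g$ with $P^{\cM}(\*y \mid do(\*x)) = g(P^{\cM}(\*v))$ for every SCM $\cM$ for which $\cG$ is a valid causal diagram, i.e.\ every $\cG$-consistent $\cM$.

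For the \emph{if} direction (graphical identifiability $\Rightarrow$ neural identifiability), I would fix the estimand $g$ guaranteed by identifiability and take an arbitrary pair $\cM^* \in \Omega^*$, $\widehat M \in \Omega(\cG)$ meeting the hypotheses of Definition~\ref{def:nscm-id}: so $\cM^*$ induces $\cG$ and $P(\*v) = P^{\cM^*}(\*v) = P^{\widehat M}(\*v)$. Then $\cM^*$ is $\cG$-consistent because it induces $\cG$, and $\widehat M$ (which, being an \ncm{}, is an SCM) is $\cG$-consistent by Theorem~\ref{thm:nscm-g-cons}; applying $g$ to each yields $P^{\cM^*}(\*y \mid do(\*x)) = g(P(\*v)) = P^{\widehat M}(\*y \mid do(\*x))$, which is precisely neural identifiability.

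For the \emph{only if} direction I would argue by contraposition. If $Q$ is not graphically identifiable, completeness of the do-calculus produces two SCMs $\cM_1, \cM_2 \in \Omega^*$, both inducing $\cG$, that agree on $P(\*v)$ but disagree on $P(\*y \mid do(\*x))$. I would set $\cM^* := \cM_1$, and then invoke Theorem~\ref{thm:nscm-g-uat} on $\cM_2$ to obtain a $\cG$-constrained \ncm{} $\widehat M \in \Omega(\cG)$ that is $L_2$-consistent with $\cM_2$; in particular $P^{\widehat M}(\*v) = P^{\cM_2}(\*v) = P^{\cM_1}(\*v) = P^{\cM^*}(\*v)$. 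The pair $(\cM^*, \widehat M)$ then meets every premise of Definition~\ref{def:nscm-id} while $P^{\widehat M}(\*y \mid do(\*x)) = P^{\cM_2}(\*y \mid do(\*x)) \neq P^{\cM_1}(\*y \mid do(\*x)) = P^{\cM^*}(\*y \mid do(\*x))$, so $Q$ is not neural identifiable from $\Omega(\cG)$ and $P(\*v)$.

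The main obstacle is not the bookkeeping above but the fact that it leans entirely on Theorem~\ref{thm:nscm-g-uat}: the counterexample reduction only goes through because the \emph{whole} of $L_2(\cM_2)$ — not merely a few interventional distributions — can be reproduced inside the restricted class $\Omega(\cG)$, so that a single \ncm{} matches both $P(\*v)$ and the query. A secondary point I would be careful about is the mismatch between ``$\cM^*$ induces $\cG$'', which Definition~\ref{def:nscm-id} demands, and ``$\widehat M$ is $\cG$-consistent'', which is all Theorem~\ref{thm:nscm-g-cons} delivers; routing the classical side through the do-calculus estimand $g$ (valid for all $\cG$-consistent models, not only those whose induced graph is exactly $\cG$) sidesteps this. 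I would also verify that the positivity conventions on $P(\*v)$ used in the completeness statement are compatible with those implicit in Definition~\ref{def:nscm-id}.
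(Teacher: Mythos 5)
Your proof is correct, and one half of it coincides with the paper's: for the direction ``graphically identifiable $\Leftarrow$ fails implies neurally identifiable fails'' the paper argues exactly as you do, taking the non-identifiability witnesses $\cM_1,\cM_2$ inducing $\cG$ and agreeing on $P(\*v)$, and using Thm.~\ref{thm:nscm-g-uat} to replace $\cM_2$ by an $L_2$-consistent \ncm{} in $\Omega(\cG)$, so the pair $(\cM_1,\widehat{M})$ defeats Def.~\ref{def:nscm-id}. Where you genuinely diverge is the easy direction. The paper disposes of it in one line: since every $\cG$-constrained \ncm{} is itself an SCM whose induced diagram is exactly $\cG$ (this is Lemma~\ref{lem:nscm-g-match} in the appendix, proved en route to Thm.~\ref{thm:nscm-g-cons}), the pairs quantified over in neural identifiability are a subset of those quantified over in graphical identifiability, and the implication is immediate---no do-calculus needed. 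You instead route this direction through soundness and completeness of the do-calculus, extracting an estimand $g$ valid for every $\cG$-consistent SCM and applying it to both $\cM^*$ and $\widehat{M}$. That argument is sound (the estimand produced by a do-calculus derivation depends only on the CBN constraints, so it covers all $\cG$-consistent models), and it neatly sidesteps the gap you correctly noticed between ``$\cG$-consistent'' (all that Thm.~\ref{thm:nscm-g-cons} states) and ``induces $\cG$'' (what Def.~\ref{def:nscm-id} quantifies over); but it imports completeness of the do-calculus and its positivity assumptions as external machinery, whereas the paper's inclusion argument needs neither---the stronger fact $\widehat{\cG}=\cG$ was already available and is the cheaper way to close that gap.
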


In words,  
Theorem~\ref{thm:nscm-id-equivalence}  relates the solution space of these two classes of models, which means that the identification status of a query is preserved across settings. For instance, if an effect is identifiable from the combination of a causal graph $\cG$ and $P(\*v)$, it will also be identifiable from $\cG$-constrained \ncm{}s (and the other way around). This is encouraging since our goal is to perform inferences directly through neural causal models, within $\Omega(\cG)$, avoiding the symbolic nature of do-calculus computation; the theorem guarantees that this is achievable \textit{in principle}.

\begin{restatable}[Neural Mutilation (Operational ID)]{corollary}{dualid}
    \label{thm:dual-graph-id}
    Consider the true SCM $\cM^* \in \Omega^*$, causal diagram $\cG$,  the observational distribution $P(\mathbf{V})$, and a target query $Q$ equal to $P^{\cM^*}(\*y \mid do(\*x))$. 
    Let $\widehat{M} \in \Omega(\cG)$ be a $\cG$-constrained \ncm{} that is $L_1$-consistent with $\cM^*$. If $Q$ is identifiable from $\cG$ and $P(\mathbf{V})$, then $Q$ is computable through a mutilation process on a proxy \ncm{} $\widehat{M}$, i.e.,  for each $X \in \*X$, replacing the equation $f_x$ with a constant $x$ ($Q = \textsc{proc-mutilation}(\widehat{M}; \*X = \*x, \*Y$)).
    \hfill $\blacksquare$
\end{restatable}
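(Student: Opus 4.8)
\emph{Proof plan.} The plan is to obtain the statement by composing Theorem~\ref{thm:nscm-id-equivalence} with the essentially definitional fact that, for any SCM, the mutilation procedure computes its Layer~2 valuation exactly as given in Definition~\ref{def:l2-semantics}. No new identification argument is needed: all the substantive work is already carried by Theorem~\ref{thm:nscm-id-equivalence}, and what remains is bookkeeping --- checking that the pair of models at hand falls inside the scope of Definition~\ref{def:nscm-id}, and that $\textsc{proc-mutilation}$ is precisely the operational realization of $P^{\widehat{M}}(\*y \mid do(\*x))$.

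First, since by hypothesis $Q = P(\*y \mid do(\*x))$ is identifiable from $\cG$ and $P(\*v)$ in the classical graphical sense, Theorem~\ref{thm:nscm-id-equivalence} gives that $Q$ is \emph{neural} identifiable from $\Omega(\cG)$ and $P(\*v)$; that is, Definition~\ref{def:nscm-id} holds. I would then instantiate that definition at the specific pair $(\cM^*, \widehat{M})$. Its antecedent is met: $\cM^* \in \Omega^*$ induces $\cG$ by assumption; $\widehat{M} \in \Omega(\cG)$ since it is a $\cG$-constrained \ncm{} by assumption; and $P^{\widehat{M}}(\*v) = P^{\cM^*}(\*v) = P(\*v)$ because $\widehat{M}$ is $L_1$-consistent with $\cM^*$. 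Hence neural identifiability yields $P^{\cM^*}(\*y \mid do(\*x)) = P^{\widehat{M}}(\*y \mid do(\*x))$, i.e.\ $Q = P^{\widehat{M}}(\*y \mid do(\*x))$. (That at least one such $\widehat{M}$ exists follows from Theorem~\ref{thm:nscm-g-uat}; here one is simply assumed given.)

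It then remains to show $\textsc{proc-mutilation}(\widehat{M}; \*X, \*Y) = P^{\widehat{M}}(\*y \mid do(\*x))$. Every \ncm{} is an SCM, and a $\cG$-constrained \ncm{} inherits the directed structure of $\cG$ (Definition~\ref{def:g-cons-nscm}), which is acyclic because $\cM^*$ is recursive; hence $\widehat{M}$ is recursive and $\*Y_{\*x}(\widehat{\*u})$ is well-defined for every $\widehat{\*u}$. Replacing $\hat f_X$ with the constant $x$ for each $X \in \*X$ is exactly the construction of the mutilated mechanism set $\widehat{\cF}_{\*x}$ of Definition~\ref{def:l2-semantics}, so sampling $\widehat{\*U} \sim \unif(0,1)^{|\widehat{\*U}|}$, performing a forward pass through the resulting acyclic system of feedforward networks, and reading off $\*Y$ realizes precisely the sum $\sum_{\{\widehat{\*u}\,:\,\*Y_{\*x}(\widehat{\*u})=\*y\}} \widehat{P}(\widehat{\*u})$ that defines $P^{\widehat{M}}(\*y_{\*x})$. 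Composing this with the previous paragraph gives $Q = \textsc{proc-mutilation}(\widehat{M}; \*X, \*Y)$.

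The only points requiring care are the two bookkeeping checks above --- matching the universally quantified pair $(\cM^*, \widehat{M})$ to the hypotheses of Definition~\ref{def:nscm-id}, and confirming that mutilating an \ncm{} coincides with the $L_2$ semantics (which hinges on $\cG$-constrained NCMs being recursive). The genuine difficulty, namely the graphical-to-neural transfer of identifiability, is already discharged by Theorem~\ref{thm:nscm-id-equivalence} and is not re-proven here.
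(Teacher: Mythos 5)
Your proposal is correct and follows essentially the same route as the paper's own (very terse) proof: transfer graphical identifiability to neural identifiability via Theorem~\ref{thm:nscm-id-equivalence}, instantiate Definition~\ref{def:nscm-id} at the pair $(\cM^*,\widehat{M})$ using $\cG$-constraint and $L_1$-consistency, and observe that the mutilation procedure is exactly the Layer-2 valuation of Definition~\ref{def:l2-semantics} applied to $\widehat{M}$. Your version merely spells out the bookkeeping (recursiveness/acyclicity, existence via Theorem~\ref{thm:nscm-g-uat}) that the paper leaves implicit.
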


Following the duality stated by Thm.~\ref{thm:nscm-id-equivalence}, 
this result provides a practical, operational way of evaluating queries in \ncms{}: inferences may be carried out through the process of mutilation, which gives semantics to queries in the generating SCM $\cM^*$ (via Def.~\ref{def:l2-semantics}). What is interesting here is that the proposition provides conditions under which this process leads to valid inferences, even when $\cM^*$ is unknown,  or when the mechanisms $\mathcal F$ and exogenous distribution $P(\*U)$ of $\cM^*$ and the corresponding functions and distribution of the proxy \ncm{} $\widehat{M}$ do not match. (For concreteness, refer to example~\ref{ex:id} in Appendix.~\ref{app:examples}.)
In words, inferences using mutilation on $\widehat{M}$ would work as if they were on $\cM^*$ itself, and they would be correct so long as certain stringent properties were satisfied --  $L_1$-consistency, $\cG$-constraint, and identifiability. As shown earlier, if these properties are not satisfied, inferences within a proxy model
will almost never be valid, likely bearing no relationship with the ground truth. 
(For fully worked out instances of this situation,  refer to examples~\ref{ex:ncht}, \ref{ex:cg}, or \ref{ex:non-id} in Appendix~\ref{app:examples}). 

Still, one special class of SCMs in which any interventional distribution is identifiable is called \textit{Markovian}, where all $U_i$ are assumed independent and affect only one endogenous variable $V_i$. 
\begin{restatable}[Markovian Identification]{corollary}{markovid}
    \label{thm:markovid}
    Whenever the $\cG$-constrained \ncm{} $\widehat{\cM}$ is Markovian, $P(\*y \mid do(\*x))$ is always identifiable through the process of mutilation in the proxy \ncm{} (via Corol.~\ref{thm:dual-graph-id}). 
    \hfill $\blacksquare$
\end{restatable}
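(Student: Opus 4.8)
The plan is to assemble this corollary from the duality and operational-inference results already established in this section, reducing everything to the textbook fact that all causal effects are identifiable in Markovian models.

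First I would unpack what Markovianity means for a $\cG$-constrained \ncm{}. By Def.~\ref{def:g-cons-nscm}, the latent set $\widehat{\*U}$ of $\widehat{\cM}$ contains exactly one variable $\widehat{U}_{\*C}$ per $C^2$-component $\*C$ of $\cG$, and $\widehat{U}_{\*C}$ is an input to $\hat f_{V_i}$ precisely when $V_i \in \*C$. Hence $\widehat{\cM}$ is Markovian (its latents mutually independent, each affecting a single endogenous variable) if and only if every $C^2$-component of $\cG$ is a singleton, which by the maximality clause in the definition of $C^2$-component happens exactly when $\cG$ has no bidirected edge. So the hypothesis of the corollary is equivalent to $\cG$ being a Markovian causal diagram.

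Next I would invoke the classical identification theorem for Markovian models: when $\cG$ contains no bidirected edges, the truncated factorization / g-formula expresses $P(\*y \mid do(\*x))$ as a functional of $P(\*v)$ alone \citep[Sec.~3.2]{pearl:2k}, so every such query is identifiable from $\cG$ and $P(\*v)$. Chaining with the results of this section finishes the argument: by Thm.~\ref{thm:nscm-id-equivalence} graphical identifiability transfers to neural identifiability from $\Omega(\cG)$ and $P(\*v)$, and by Corol.~\ref{thm:dual-graph-id} — applicable since $\widehat{\cM}$ is a $\cG$-constrained \ncm{} that is $L_1$-consistent with $\cM^*$ and the effect is identifiable from $\cG$ and $P(\*v)$ — the target $Q = P^{\cM^*}(\*y \mid do(\*x))$ equals the quantity obtained by the mutilation process on $\widehat{\cM}$ (replace $\hat f_X$ by the constant $x$ for each $X \in \*X$ and read off the induced distribution on $\*Y$). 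Since $\*x$ and $\*y$ were arbitrary, this is exactly the claim.

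The hard part — to the extent there is one — is the first step: pinning down that the intended reading of ``$\widehat{\cM}$ is Markovian'' is the structural one (no shared $\widehat{U}_{\*C}$ with $|\*C|>1$), and verifying that within the $\cG$-constrained family this is genuinely equivalent to $\cG$ having no bidirected edge, so that the graphical identification machinery applies verbatim. Everything after that is a direct application of Thm.~\ref{thm:nscm-id-equivalence} and Corol.~\ref{thm:dual-graph-id} together with a standard fact, with no new calculation required.
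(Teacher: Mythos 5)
Your proof is correct and takes essentially the same route as the paper's: establish that every effect is identifiable from $\cG$ and $P(\*v)$ in the Markovian case (the paper cites the adjustment formula, Fact~\ref{fact:adjustment-id}, where you cite the truncated factorization — an immaterial difference), then transfer identifiability to the neural setting via Thm.~\ref{thm:nscm-id-equivalence} and conclude computability by mutilation via Corol.~\ref{thm:dual-graph-id}. Your preliminary step equating Markovianity of the $\cG$-constrained NCM with $\cG$ having no bidirected edges mirrors the paper's own remarks preceding its proof, so nothing is missing.
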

\begin{wrapfigure}{r}{0.5\textwidth}

\IncMargin{1em}
\vspace{-0.18in}
\begin{algorithm}[H]
    \scriptsize
    \setstretch{0.9}
    \renewcommand{\AlCapSty}[1]{\normalfont\scriptsize{\textbf{#1}}\unskip}
    
    \DontPrintSemicolon
    \SetKwData{ncmdata}{$\widehat{M}$}
    \SetKwData{graphdata}{$\cG$}
    \SetKwData{variabledata}{$\*V$}
    \SetKwData{pvdata}{$P(\*V)$}
    \SetKwData{thetamin}{$\bm{\theta}_{\min}^*$}
    \SetKwData{thetamax}{$\bm{\theta}_{\max}^*$}
    \SetKwFunction{ncmfunc}{NCM}
    \SetKwInOut{Input}{Input}
    \SetKwInOut{Output}{Output}
    
    \Input{ causal query $Q = P(\*y \mid do(\*x))$, $L_1$ data \pvdata, and causal diagram \graphdata}
    \Output{ $P^{\cM^*}(\*y \mid do(\*x))$ if identifiable, \texttt{FAIL} otherwise.}
    \BlankLine
    \textls[-20]{
    $\ncmdata \gets \ncmfunc{\variabledata, \graphdata}$ \tcp*{from Def.\ \ref{def:g-cons-nscm}}
    $\thetamin \! \gets \! \arg \min_{\bm{\theta}} P^{\ncmdata(\bm{\theta})}(\*y \! \mid \! do(\*x))$ s.t. $L_1(\ncmdata(\bm{\theta})) \! = \! \pvdata$\;
    $\thetamax \! \gets \! \arg \max_{\bm{\theta}} P^{\ncmdata(\bm{\theta})}(\*y \! \mid \! do(\*x))$ s.t. $L_1(\ncmdata(\bm{\theta})) \! = \! \pvdata$\;
    \eIf{$P^{\ncmdata(\thetamin)}(\*y \mid do(\*x)) \neq P^{\ncmdata(\thetamax)}(\*y \mid do(\*x))$} {
        \Return \texttt{FAIL}
    }{
        \Return $P^{\ncmdata(\thetamin)}(\*y \mid do(\*x))$ \tcp*{choose min or max arbitrarily} 
    }
    }
    \caption{\scriptsize Identifying/estimating queries with NCMs.}
    \label{alg:nscm-solve-id}
\end{algorithm}
\DecMargin{1em}
\vspace{-0.2in}
\end{wrapfigure}

This is obviously not the case for general non-Markovian models, which leads to the very problem of identification. In these cases, we need to decide whether the mutilation procedure (Corol.~\ref{thm:dual-graph-id}) can, in principle, produce the correct answer. 
We show in Alg.~\ref{alg:nscm-solve-id} 
a learning procedure that decides whether a certain effect is identifiable from observational data. Intuitively, the procedure searches for two models that respectively minimize and maximize the target query while maintaining $L_1$-consistency with the data distribution. If the $L_2$ query values induced by the two models are equal, then the effect is identifiable, and the value is returned; otherwise, the effect is non-identifiable. 
Remarkably, the procedure is both necessary and sufficient, which means that all, and only, identifiable effects are classified as such by our procedure. This implies that, theoretically, deep learning could be as powerful as the do-calculus in deciding identifiability.
(For a more nuanced discussion of symbolic versus optimization-based approaches for identification, see Appendix \ref{app:examples-symbolic-neural}. For non-identifiability examples and further discussion, see \ref{app:examples-id}.)

\begin{restatable}[Soundness and Completeness]{corollary}{completeness}
    \label{thm:nscm-id-correctness}
    Let $\Omega^*$ be the set of all SCMs,  $\cM^* \in \Omega^*$ be the true SCM inducing causal diagram $\cG$, $Q = P(\*y \mid do(\*x))$ be a query of interest, and $\widehat{Q}$ be the result from running Alg.~\ref{alg:nscm-solve-id} 
     with inputs $P^*(\*V) = L_1(\cM^*) > 0$, $\cG$, and $Q$. Then $Q$ is identifiable from $\cG$ and $P^*(\*V)$ if and only if $\widehat{Q}$ is not \texttt{FAIL}. Moreover, if $\widehat{Q}$ is not \texttt{FAIL}, then $\widehat{Q} = P^{\cM^*}(\*y \mid do(\*x))$.
    \hfill $\blacksquare$
\end{restatable}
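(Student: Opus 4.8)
The plan is to reduce the correctness of Algorithm~\ref{alg:nscm-solve-id} to three facts established earlier: the graphical--neural duality for identification (Thm.~\ref{thm:nscm-id-equivalence}), the $L_2$-$\cG$ representation guarantee (Thm.~\ref{thm:nscm-g-uat}), and the fact that any $\cG$-constrained NCM is itself a genuine $\cG$-consistent SCM (Thm.~\ref{thm:nscm-g-cons}). The first preliminary step is to observe that the constrained optimization problems on lines~2--3 are \emph{feasible}: applying Thm.~\ref{thm:nscm-g-uat} to $\cM^*$ yields a $\cG$-constrained NCM $\widehat{M}(\bm{\theta}_0)$ that is $L_2$-consistent (hence $L_1$-consistent) with $\cM^*$, so $L_1(\widehat{M}(\bm{\theta}_0)) = L_1(\cM^*) = P^*(\*v)$ and $\bm{\theta}_0$ lies in the feasible set; thus $\bm{\theta}_{\min}^*$ and $\bm{\theta}_{\max}^*$ are well-defined. (The only care needed here is that the architecture fixed on line~1 via Def.~\ref{def:g-cons-nscm} be expressive enough to contain the NCM produced by Thm.~\ref{thm:nscm-g-uat} -- this is built into the construction, or taken in the appropriate limit -- and attainment of the $\arg\min$/$\arg\max$ is handled exactly as in the preceding results and inherits the positivity hypothesis $P^*(\*v) > 0$.)

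For the ``only if'' direction (identifiable $\Rightarrow$ not \texttt{FAIL}, with the correct value), suppose $Q$ is identifiable from $\cG$ and $P^*(\*v)$. By Thm.~\ref{thm:nscm-id-equivalence}, $Q$ is neural identifiable from $\Omega(\cG)$ and $P^*(\*v)$; unpacking Def.~\ref{def:nscm-id} with the true model $\cM^*$, this means that for every $\cG$-constrained NCM $\widehat{M}(\bm{\theta})$ with $P^{\widehat{M}(\bm{\theta})}(\*v) = P^*(\*v)$ one has $P^{\widehat{M}(\bm{\theta})}(\*y\mid do(\*x)) = P^{\cM^*}(\*y\mid do(\*x))$. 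Every $\bm{\theta}$ feasible for lines~2--3 yields exactly such an NCM (it is $\cG$-constrained since line~1 uses Def.~\ref{def:g-cons-nscm}, and $L_1$-consistent by the imposed constraint), so the objective $P^{\widehat{M}(\bm{\theta})}(\*y\mid do(\*x))$ is \emph{constant} on the feasible set, equal to $P^{\cM^*}(\*y\mid do(\*x))$. Hence $P^{\widehat{M}(\bm{\theta}_{\min}^*)}(\*y\mid do(\*x)) = P^{\widehat{M}(\bm{\theta}_{\max}^*)}(\*y\mid do(\*x))$, the algorithm does not return \texttt{FAIL}, and the value returned on line~7 is $P^{\cM^*}(\*y\mid do(\*x))$.

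For the ``if'' direction (not \texttt{FAIL} $\Rightarrow$ identifiable, with the correct value), suppose $P^{\widehat{M}(\bm{\theta}_{\min}^*)}(\*y\mid do(\*x)) = P^{\widehat{M}(\bm{\theta}_{\max}^*)}(\*y\mid do(\*x)) =: q$. Since $\bm{\theta}_{\min}^*$ and $\bm{\theta}_{\max}^*$ realize the minimum and maximum of the objective over the common feasible set, their equality forces the objective to be identically $q$ there: every $\cG$-constrained, $L_1$-consistent NCM has interventional value $q$. Now take an arbitrary $\cM' \in \Omega^*$ inducing $\cG$ with $P^{\cM'}(\*v) = P^*(\*v)$. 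By Thm.~\ref{thm:nscm-g-uat} there is a $\cG$-constrained NCM $\widehat{M}(\bm{\theta}')$ that is $L_2$-consistent with $\cM'$; it is in particular $L_1$-consistent, so $\bm{\theta}'$ is feasible and $P^{\widehat{M}(\bm{\theta}')}(\*y\mid do(\*x)) = q$, whence $L_2$-consistency gives $P^{\cM'}(\*y\mid do(\*x)) = q$. As $\cM'$ was arbitrary among SCMs compatible with $\cG$ and $P^*(\*v)$, all such SCMs agree on the query, i.e.\ $Q$ is identifiable from $\cG$ and $P^*(\*v)$ (equivalently, one first obtains neural identifiability and then applies Thm.~\ref{thm:nscm-id-equivalence}). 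Finally, $\cM^*$ is one such $\cM'$, so $q = P^{\cM^*}(\*y\mid do(\*x))$, matching the returned $\widehat{Q}$.

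The main obstacle is the bookkeeping that bridges the \emph{parametric} optimization in Algorithm~\ref{alg:nscm-solve-id} with the \emph{model-class} quantifiers in the (neural and graphical) identification definitions: one must verify that every feasible parameter vector yields a bona fide $\cG$-constrained, $L_1$-consistent NCM -- so that it is covered by Def.~\ref{def:nscm-id}, and, via Thm.~\ref{thm:nscm-g-cons}, genuinely carries $\cG$'s constraints -- and, conversely, that every SCM compatible with $\cG$ and $P^*(\*v)$ is ``shadowed'' in $L_2$ by such a feasible NCM (Thm.~\ref{thm:nscm-g-uat}). The remaining technical points -- nonemptiness of the constraint set, attainment of the $\arg\min$/$\arg\max$, and the line~1 architecture being rich enough -- are dispatched exactly as in Thms.~\ref{thm:nscm-g-uat} and~\ref{thm:nscm-id-equivalence}.
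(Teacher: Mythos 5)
Your proposal is correct and follows essentially the same route as the paper's proof: it combines the graphical--neural duality of Thm.~\ref{thm:nscm-id-equivalence} with the optimality of $\bm{\theta}^*_{\min},\bm{\theta}^*_{\max}$ (equality of the extremes forces all feasible $\cG$-constrained, $L_1$-consistent NCMs to agree), and recovers the returned value via the neural-identifiability definition, where the paper cites Corol.~\ref{thm:dual-graph-id}. The only difference is cosmetic: you additionally spell out feasibility/expressiveness via Thm.~\ref{thm:nscm-g-uat} and partially unfold the proof of Thm.~\ref{thm:nscm-id-equivalence} in the ``if'' direction, which the paper simply invokes.
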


\vspace{-0.16in}
\section{The Neural Estimation Problem} \label{sec:ncm-fitting}
\vspace{-0.14in}

While identifiability is fully solved by the asymptotic theory discussed so far (i.e., it is both necessary and sufficient), we now consider the problem of estimating causal effects in practice under imperfect optimization and finite samples and computation.
For concreteness, we discuss next the discrete case with binary variables, but our construction extends naturally to categorical and continuous variables (see Appendix \ref{app:experiments}). 
We propose next a construction of a $\cG$-constrained NCM
$\widehat M(\mathcal G; \bm{\theta}) = \langle \widehat{\*U}, \*V, \widehat{\cF}, P(\widehat{\*U}) \rangle$, which is a possible instantiation of  Def.~\ref{def:g-cons-nscm}:
\begin{eqnarray}\label{eq:ncm-differentiable}
    \begin{cases}
        \mathbf V &:= \mathbf V, \hspace{+0.05in}
        \widehat{\*U} := \{ U_{\mathbf C}: \mathbf C \in C^2(\mathcal G) \} \cup \{ G_{V_i} : V_i \in \mathbf V \}, \\
        \widehat{\mathcal F} &:= \left\{ f_{V_i} := \arg \max_{j \in \{0, 1\}} g_{j, V_i} + \begin{cases}
            \log \sigma(\phi_{V_i}(\pai{V_i}, \ui{V_i}^c; \theta_{V_i})) & j = 1 \\
            \log (1 - \sigma(\phi_{V_i}(\pai{V_i}, \ui{V_i}^c; \theta_{V_i}))) & j = 0
        \end{cases} \right\}, \\
        P(\widehat{\mathbf U}) &:= \{ U_{\mathbf C} \sim \mathrm{Unif}(0, 1) : U_{\mathbf C} \in \mathbf U \} \; \cup \\
         & \hspace{0.17in}  \{ G_{j, V_i} \sim \mathrm{Gumbel}(0, 1) : V_i \in \mathbf V, j \in \{0, 1\} \},
    \end{cases}
\end{eqnarray}

where $\mathbf V$ are the nodes of $\mathcal G$; $\sigma: \mathbb R \to (0, 1)$ is the sigmoid activation function; $C^2(\mathcal G)$ is the set of $C^2$-components of $\mathcal G$; each $G_{j, V_i}$ is a standard Gumbel random variable \citep{gumbel1954statistical}; each $\phi_{V_i}(\cdot; \theta_{V_i})$ is a neural net parameterized by $\theta_{V_i} \in \bm{\theta}$; $\pai{V_i}$ are the values of the parents of $V_i$; and $\ui{V_i}^c$ are the values of $\Ui{V_i}^c := \{ U_{\mathbf C} : U_{\mathbf C} \in \mathbf U \text{ s.t. } V_i \in \mathbf C \}$. The parameters $\bm{\theta}$ are not yet specified and must be learned through training to enforce $L_1$-consistency (Def.~\ref{def:li-consistency}). 

Let $\mathbf U^c$ and $\mathbf G$ denote the latent $C^2$-component variables and Gumbel random variables, respectively. To estimate $P^{\widehat{M}}(\*v)$ and  $P^{\widehat{M}}(\*y \mid do(\*x))$ given Eq.~\ref{eq:ncm-differentiable}, we may compute the probability mass of a datapoint $\mathbf v$ with intervention $do(\*X = \*x)$ ($\*X$ is empty when observational) as: 
\begin{align}
    P^{\widehat M(\mathcal G; \bm{\theta})}(\mathbf v \mid do(\*x)) 
    = \mathop{\mathbb E}_{P(\mathbf u^c)} \left[ \prod_{V_i \in \mathbf V \setminus \*X} \tilde{\sigma}_{v_i} \right] \approx \frac{1}{m} \sum_{j = 1}^m \prod_{V_i \in \mathbf V \setminus \*X} \tilde{\sigma}_{v_i}, \label{eq:ncm-mc-sampling}
\end{align}

where $\tilde{\sigma}_{v_i} := \begin{cases} \sigma(\phi_i(\pai{V_i}, \mathbf{u}_{V_i}^c; \theta_{V_i})) & v_i = 1 \\ 1 - \sigma(\phi_i(\pai{V_i}, \mathbf{u}_{V_i}^c; \theta_{V_i})) & v_i = 0 \end{cases}$ and $\{\ui{j}^c\}_{j=1}^m$ are samples from $P(\*U^c)$. Here, we assume $\*v$ is consistent with $\*x$ (the values of $X \in \*X$ in $\*v$ match the corresponding ones of $\*x$). Otherwise, $P^{\widehat M(\mathcal G; \bm{\theta})}(\mathbf v \mid do(\*x)) = 0$.
For numerical stability of each $\phi_i(\cdot)$, we work in log-space and use the log-sum-exp trick. 

\begin{wrapfigure}{r}{0.45\textwidth}

\IncMargin{1em}
\vspace{-0.2in}
\begin{algorithm}[H]
    \scriptsize
    \renewcommand{\AlCapSty}[1]{\normalfont\scriptsize{\textbf{#1}}\unskip}

    \DontPrintSemicolon
    \SetKw{notsymbol}{not}
    \SetKwData{ncmdata}{$\widehat{M}$}
    \SetKwData{paramdata}{$\bm{\theta}$}
    \SetKwData{pdata}{$\hat{p}$}
    \SetKwData{qdata}{$\hat{q}$}
    \SetKwData{lossdata}{$\cL$}
    \SetKwFunction{ncmfunc}{NCM}
    \SetKwFunction{estimate}{Estimate}
    \SetKwFunction{consistent}{Consistent}
    \SetKwFunction{sample}{Sample}
    \SetKwInOut{Input}{Input}
    \SetKwInOut{Output}{Output}
    
    \Input{ Data $\{\*v_k\}_{k=1}^{n}$, variables $\*V$, $\*X \subseteq \*V$, $\*x \in \cD_{\*X}$, $\*Y \subseteq \*V$, $\*y \in \cD_{\*Y}$, causal diagram $\cG$, number of Monte Carlo samples $m$, regularization constant $\lambda$, learning rate $\eta$}
    \BlankLine
    $\ncmdata \gets$ \ncmfunc{$\*V, \cG$} \tcp*{from Def.~\ref{def:g-cons-nscm}}
    Initialize parameters $\bm{\theta}_{\min}$ and $\bm{\theta}_{\max}$\;
    \For{$k \gets 1$ \KwTo $n$}{
        \tcp{\estimate from Eq.~\ref{eq:ncm-mc-sampling}}
        $\pdata_{\min} \gets$ \estimate{$\ncmdata(\paramdata_{\min}), \*V, \*v_k, \emptyset, \emptyset, m$}\;
        $\pdata_{\max} \gets$ \estimate{$\ncmdata(\paramdata_{\max}), \*V, \*v_k, \emptyset, \emptyset, m$}\; 
        $\qdata_{\min} \gets 0$\;
        $\qdata_{\max} \gets 0$\;
        \For{$\*v \in \cD_{\*V}$}{
            \If{\consistent{$\*v, \*y$}}{
                \textls[-20]{
                $\qdata_{\min} \gets \qdata_{\min} +$ \estimate{$\ncmdata(\paramdata_{\min}), \*V, \*v, \*X, \*x, m$}\;
                $\qdata_{\max} \gets \qdata_{\max} +$ \estimate{$\ncmdata(\paramdata_{\max}), \*V, \*v, \*X, \*x, m$}\;
                }
            }
        }
        \tcp{\lossdata from Eq.~\ref{eq:ncm-total-loss}}
        $\lossdata_{\min} \gets -\log \pdata_{\min} - \lambda \log(1 - \qdata_{\min})$\;
        $\lossdata_{\max} \gets -\log \pdata_{\max} - \lambda \log \qdata_{\max}$\;
        $\paramdata_{\min} \gets \paramdata_{\min} + \eta \nabla \lossdata_{\min}$\;
        $\paramdata_{\max} \gets \paramdata_{\max} + \eta \nabla \lossdata_{\max}$\;
    }
    \caption{\scriptsize Training Model}
    \label{alg:ncm-learn-pv}
\end{algorithm}
\DecMargin{1em}
\vspace{-0.9in}

\end{wrapfigure}

Alg.~\ref{alg:nscm-solve-id} (lines 2-3)
 requires non-trivial evaluations of expressions like  $\arg \max_{\bm{\theta}} P^{\widehat{M}}(\*y \mid do(\*x))$  while  enforcing $L_1$-consistency.  Whenever only finite samples are available $\{\mathbf v_k\}_{k=1}^n \sim P^*(\mathbf V)$, the parameters of an $L_1$-consistent NCM may be estimated by minimizing data negative log-likelihood:

\begin{align*}
    \bm{\theta} \in &\arg \min_{\bm{\theta}} \mathbb E_{P^*(\mathbf v)}\left[-\log P^{\widehat M(\mathcal G; \bm{\theta})}(\mathbf v)\right] \\
    \approx &\arg \min_{\bm{\theta}} \frac{1}{n} \sum_{k = 1}^n -\log \widehat{ P}_m^{\widehat M(\mathcal G; \bm{\theta})}(\mathbf v_k). \numberthis \label{eq:ncm-probability-loss}
\end{align*}

To simultaneously maximize $P^{\widehat{M}}(\*y \mid do(\*x))$, we subtract a weighted second term $\log\widehat{P}^{\widehat{M}}_m(\*y \mid do(\*x))$, resulting in the objective $\cL(\{\mathbf v_k\}_{k=1}^n)$ equal to
\begin{align*}
\hspace{-0.1in}
     \frac{1}{n} \sum_{k = 1}^n - \log \widehat{ P}_m^{\widehat M}(\mathbf v_k) - \lambda \log\widehat{P}^{\widehat{M}}_m(\*y \mid do(\*x)), \numberthis \label{eq:ncm-total-loss}
\end{align*}
where $\lambda$ is initially set to a high value and decreases during training. 
 To minimize, we instead subtract $\lambda \log(1 - \widehat{P}^{\widehat{M}}_m(\*y \mid do(\*x)))$ from the log-likelihood.

Alg.~\ref{alg:ncm-learn-pv} 
is one possible way of optimizing the parameters $\bm{\theta}$ required in lines 2,3 of Alg.~\ref{alg:nscm-solve-id}.
Eq.~\ref{eq:ncm-total-loss} is amenable to optimization through standard gradient descent tools, e.g., \citep{KingmaB14, LoshchilovH19, LoshchilovH17}. \footnote{Our approach is flexible and may take advantage of these different methods depending on the context. There are a number of alternatives for minimizing the discrepancy between $P^*$ and $P^{\widehat{M}}$, including minimizing divergences, such as maximum mean discrepancy \citep{NIPS2006_e9fb2eda} or kernelized Stein discrepancy \citep{pmlr-v48-liub16}, 
performing variational inference \citep{Blei_2017}, or generative adversarial optimization \citep{NIPS2014_5ca3e9b1}. \label{foot:alternate-ncm}} \footnote{The NCM can be extended to the continuous case by replacing the Gumbel-max trick on $\sigma(\phi_i(\cdot))$ with a model that directly computes a probability density given a data point, e.g., normalizing flow \citep{pmlr-v37-rezende15} or VAE \citep{KingmaW13}. \label{foot:continuous-ncm}} 

One way of understanding Alg.~\ref{alg:nscm-solve-id} 
is as a search within the $\Omega(\cG)$ space for two \ncm{} parameterizations, $\bm{\theta}^*_{\min}$ and $\bm{\theta}^*_{\max}$, that minimizes/maximizes the interventional distribution, respectively. Whenever the optimization ends, we can compare the corresponding $P(\*y \mid do(\*x))$ and determine whether an effect is identifiable. With perfect optimization and unbounded resources, identifiability entails the equality between these two quantities. In practice, we rely on a  hypothesis testing step such as 
\begin{equation}
    \label{eq:gap-test}
|f(\widehat{M}(\bm{\theta}_{\mathrm{max}})) - f(\widehat{M}(\bm{\theta}_{\mathrm{min}}))| < \tau
\end{equation}
for quantity of interest $f$ and a certain threshold $\tau$. This threshold is somewhat similar to a significance level in statistics and can be used to control certain types of errors.  
In our case, the threshold $\tau$ can be determined empirically. For further discussion, see  Appendix~\ref{app:experiments}. 
\vspace{-0.05in}
\section{Experiments} \label{sec:experiments}
\vspace{-0.17in}

We start by evaluating NCMs (following Eq.~\ref{eq:ncm-differentiable}) in their ability to decide whether an effect is identifiable through Alg.~\ref{alg:ncm-learn-pv}. 
Observational data is generated from 8 different SCMs, and their corresponding causal diagrams are shown in Fig.~\ref{fig:id-exp-results} (top part), and Appendix~\ref{app:experiments} provides further details of the parametrizations. Since the NCM does not have access to the true SCM, the causal diagram and generated datasets are passed to the algorithm to decide whether an effect is identifiable. 
The target  effect is $P(Y \mid do(X))$, and the quantity we optimize is the \textit{average treatment effect} (ATE) of $X$ on $Y$, ${ATE}_{\cM}(X, Y)= \bbE_\cM[Y \mid do(X = 1)] - \bbE_\cM[Y \mid do(X = 0)]$. 
Note that if the outcome $Y$ is binary, as  in our examples, $\bbE[Y \mid do(X = x)] = P(Y = 1 | do(X = x))$. The effect is identifiable through do-calculus in the settings represented by Fig.~\ref{fig:id-exp-results} in the left part, and not identifiable in right.

\begin{figure*}
    \begin{center}
    \includegraphics[width=\textwidth,keepaspectratio]{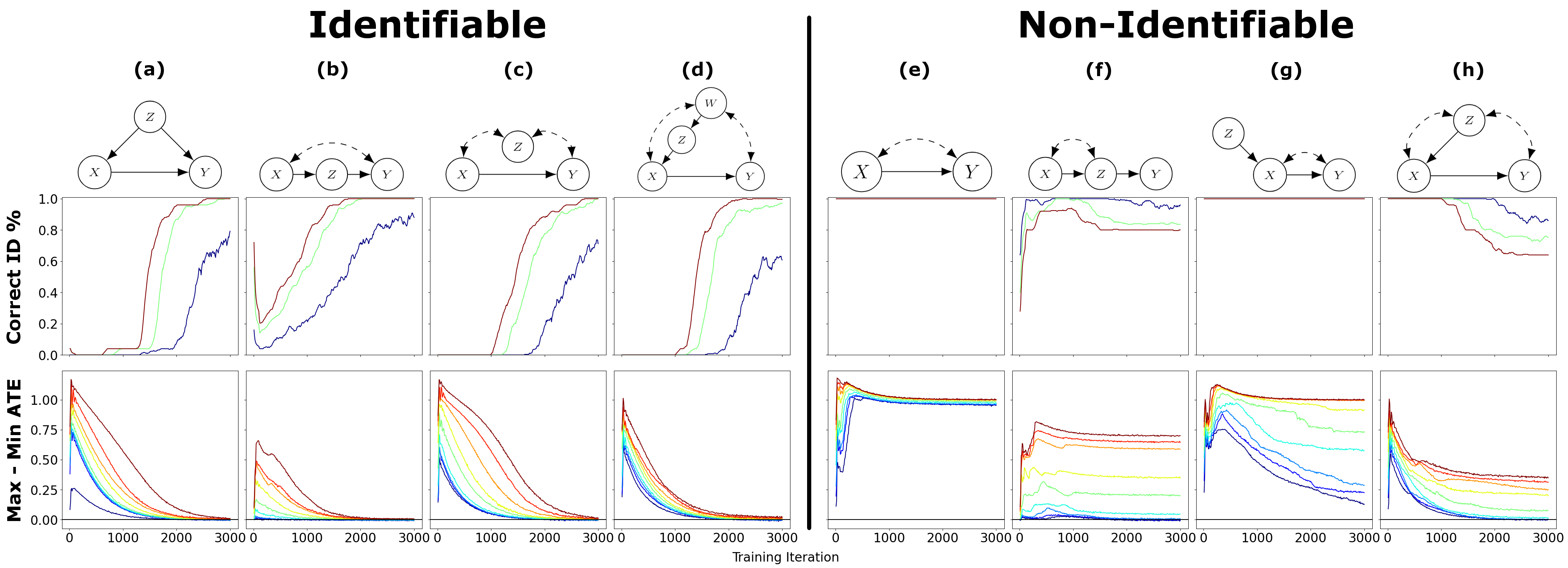}
    \caption{Experimental results on deciding identifiability with NCMs. \textbf{Top}: Graphs from left to right: (ID cases) back-door, front-door, M, napkin; (not ID cases) bow, extended bow, IV, bad M. \textbf{Middle}: Classification accuracy over 3,000 training epochs from running hypothesis test on Eq.~\ref{eq:gap-test} with $\tau = 0.01$ (blue), $0.03$ (green), $0.05$ (red). \textbf{Bottom}: (1, 5, 10, 25, 50, 75, 90, 95, 99)-percentiles for max-min gaps over 3000 training epochs.}
    \label{fig:id-exp-results}
    \end{center}
\end{figure*}

The bottom row of Fig.~\ref{fig:id-exp-results} shows the \emph{max-min gaps}, the l.h.s of Eq.~\ref{eq:gap-test} with $f(\cM) = \textsc{ATE}_{\cM}(X, Y)$, over 3000 training epochs. 
The parameter $\lambda$ is set to $1$ at the beginning, and decreases logarithmically over each epoch until it reaches $0.001$ at the end of training. 
The max-min gaps can be used to classify the quantity as ``ID'' or ``non-ID'' using the hypothesis testing procedure described in Appendix \ref{app:experiments}. The classification accuracies per training epoch are shown in Fig.~\ref{fig:id-exp-results} (middle row). Note that in identifiable settings, the gaps slowly reduce to 0, while the gaps rapidly grow and stay high throughout training in the unidentifiable ones. The classification accuracy for ID cases then gradually increases as training progresses, while accuracy for non-ID cases remain high the entire time (perfect in the bow and IV cases).

\begin{wrapfigure}{r}{0.45\textwidth}
\vspace{-0.2in}
\includegraphics[width=0.45\textwidth,keepaspectratio]{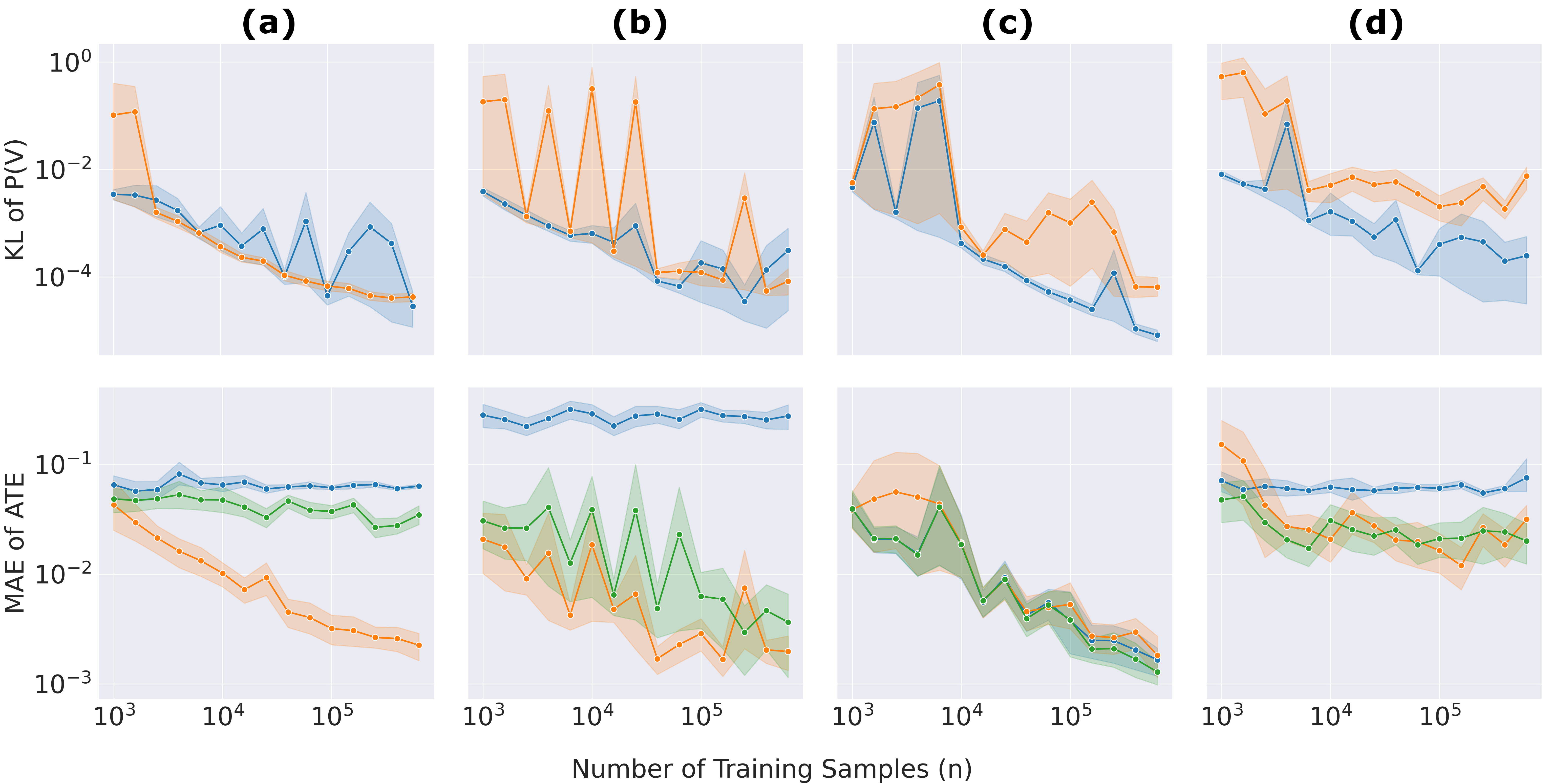}
\vspace{-0.20in}
\caption{NCM estimation results for ID cases. Columns a, b, c, d correspond to the same graphs as a, b, c, d in Fig.~\ref{fig:id-exp-results}. \textbf{Top}: KL divergence of $P(\*V)$ induced by na\"ive model (blue) and NCM (orange) compared to $P^{\cM^*}(\*V)$. \textbf{Bottom}: MAE of ATE of na\"ive model (blue), NCM (orange), and WERM (green). Plots in log-log scale.}
\label{fig:est-samples-results}
\end{wrapfigure}

In the identifiable settings, we also evaluate the performance of the NCM at estimating the correct causal effect, as shown in Fig.~\ref{fig:est-samples-results}. As a generative model, the NCM is capable of generating samples from both $P(\*V)$ and identifiable $L_2$ distributions like $P(Y \mid do(X))$. We compare the NCM to a na\"ive generative model trained via likelihood maximization fitted on $P(\*V)$ without using the inductive bias of the \ncm{}. Since the na\"ive model is not defined to sample from $P(y \mid do(x))$, this shows the implications of arbitrarily choosing $P(y \mid do(x)) = P(y \mid x)$. Both models improve at fitting $P(\*V)$ with more samples, but the na\"ive model fails to learn the correct ATE except in case (c), where $P(y \mid do(x)) = P(y \mid x)$.
 Further, the NCM is competitive with WERM \cite{jung2020werm}, a state-of-the-art estimation method that
  directly targets estimating the causal effect without generating samples.

\vspace{-0.1in}
\section{Conclusions}
\vspace{-0.1in}

In this paper, we introduced neural causal models (NCMs) (Def.~\ref{def:nscm},~\ref{def:gen-ncm}), a special class of SCMs trainable through gradient-based optimization techniques. We showed that despite being as expressive as SCMs (Thm.~\ref{thm:lrepr}), NCMs are unable to perform cross-layer inferences in general (Corol.~\ref{cor:ncht}). 
Disentangling expressivity and learnability, we formalized a new type of inductive bias based on non-parametric, structural properties of the generating SCM, accompanied with a constructive procedure that allows NCMs to represent constraints over the space of interventional distributions akin to causal diagrams (Thm.~\ref{thm:nscm-g-cons}). 
 We showed that NCMs with this bias retain their full expressivity (Thm.~\ref{thm:nscm-g-uat}) but are now empowered to solve canonical tasks in causal inference, including the problems of identification and estimation (Thm.~\ref{thm:nscm-id-equivalence}). 
We grounded these results by providing a  training procedure that is both sound and complete (Alg.~\ref{alg:nscm-solve-id}, \ref{alg:ncm-learn-pv}, Cor.~\ref{thm:nscm-id-correctness}). 
Practically speaking, different neural implementations -- combination of architectures, training algorithms, loss functions -- can leverage the framework results introduced in this work (Appendix \ref{sec:nn-general}). 
We implemented one of such alternatives as a proof of concept, and experimental results support the feasibility of the proposed approach.
After all, we hope the causal-neural framework established in this paper can help develop more principled and robust architectures to empower the next generation of AI systems. 
We expect these systems to combine the best of both worlds by (1) leveraging causal inference capabilities of processing the structural invariances found in nature to construct more explainable and generalizable decision-making procedures, and (2) leveraging deep learning capabilities to scale inferences to handle challenging, high dimensional settings found in practice. 

\section*{Acknowledgements}
We thank Judea Pearl, Richard Zemel, Yotam Alexander, Juan Correa, Sanghack Lee, and Junzhe Zhang for their valuable feedback. Kevin Xia and Elias Bareinboim were supported in part by funding from the NSF, Amazon, JP Morgan, and The Alfred P. Sloan Foundation. Yoshua Bengio was supported in part by funding from CIFAR, NSERC, Samsung, and Microsoft.


\bibliographystyle{apalike}
\bibliography{references}

\clearpage

\appendix
\setcounter{algocf}{2}

\section{Proofs} \label{app:proofs}

In this section, we provide proofs of the statements in the main body of the paper.

\subsection{Proofs of Theorem \ref{thm:lrepr} and Corollary \ref{cor:ncht}}

In addition to Def.~\ref{def:l2-semantics}, defining layers 1 and 2 of the PCH, we also require a definition for layer 3. While Def.~\ref{def:l2-semantics} shows how the SCM valuates observational and interventional distributions, the following definition of layer 3 (\citep[Def.~7]{bareinboim:etal20}) shows how the SCM valuates counterfactual distributions, a family of distributions even more expressive than those from lower layers.
\begin{definition}[\Layer{} 3 Valuation]
\label{def:l3-semantics}
An SCM $\M = \langle \*U, \*V, \mathcal{F}, P(\*U)\rangle$ induces a family of joint distributions over counterfactual events $\*Y_\*x, \ldots, \*Z_{\*w}$, for any $\*Y, \*Z, \dots, \*X, \*W \subseteq \*V$:
\begingroup\abovedisplayskip=0.5em\belowdisplayskip=0pt
{\begin{align}\label{eq:def:l3-semantics}
    P^{\M}(\*{y}_{\*{x}},\dots,\*{z}_{\*{w}}) =
\sum_{\substack{\{\*u\;\mid\;\*{Y}_{\*x}(\*u)=\*{y},\\ \;\;\;\dots,\; \*{Z}_{\*w}(\*u)=\*z\}}}
    P(\*u).
\end{align}}
\hfill $\blacksquare$
\endgroup
\end{definition}

For the expressiveness proofs of this paper, we leverage some of the notation and results from \citep{zhang:bareinboim21}. These results focus on the idea of a canonical form of SCMs, first explored in a special case by \citep{balke:pea94b}. Let $\cM = \langle \*U, \*V, \cF, P(\*U) \rangle$ be any SCM. For each $V \in \*V$, we denote $\cH_V = \{h_V : \cD_{\pai{V}} \rightarrow \cD_V\}$ as the set of all possible functions mapping from the domain of the parents $\pai{V}$ to the domain of $V$. We will order the elements of $\cH_V$ as $h_V^{(1)}, \dots, h_V^{(m_V)}$, where $m_V = |\cH_V|$. Since $\cH_V$ fully exhausts all possible functions, we can partition $\cD_{\Ui{V}}$ into sets $\cD_{\Ui{V}}^{(1)}, \dots, \cD_{\Ui{V}}^{(m_V)}$ such that $\ui{V} \in \cD_{\Ui{V}}^{(r_V)}$ if and only if $f_V(\cdot, \ui{V}) = h_V^{(r_V)}$.

\begin{lemma}[{\citep[Lem.~1]{zhang:bareinboim21}}]
    \label{lem:canon-func}
    For an SCM $\cM = \langle \*U, \*V, \cF, P(\*U) \rangle$, for each $V \in \*V$, function $f_V \in \cF$ can be expressed as
    $$f_V (\pai{V}, \ui{V}) = \sum_{r_V = 1}^{m_V} h_V^{(r_V)}(\pai{V}) \mathbbm{1}\left\{\ui{V} \in \cD_{\Ui{V}}^{(r_V)}\right\}$$
    \hfill $\blacksquare$
\end{lemma}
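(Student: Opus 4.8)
The plan is to unpack the definition of the partition $\cD_{\Ui{V}}^{(1)}, \dots, \cD_{\Ui{V}}^{(m_V)}$ and turn the claimed functional identity into a pointwise case analysis on $\ui{V}$. First I would record the ambient setting: since $\cD_{\*V}$ is discrete (as assumed at the top of this appendix) and the SCM is recursive, each $\cD_{\pai{V}}$ and $\cD_V$ is a well-defined finite set, so $\cH_V = \{h_V : \cD_{\pai{V}} \to \cD_V\}$ is a genuine finite set admitting the enumeration $h_V^{(1)}, \dots, h_V^{(m_V)}$ with $m_V = |\cH_V|$. The key structural observation is that for every fixed $\ui{V} \in \cD_{\Ui{V}}$, the slice $f_V(\cdot, \ui{V}) : \cD_{\pai{V}} \to \cD_V$ is itself an element of $\cH_V$, hence coincides with $h_V^{(r_V)}$ for exactly one index $r_V$. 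Restated, the family $\{\cD_{\Ui{V}}^{(r_V)}\}_{r_V = 1}^{m_V}$ is a partition of $\cD_{\Ui{V}}$: the cells are pairwise disjoint (a function equals at most one of the listed $h_V^{(r_V)}$) and their union is all of $\cD_{\Ui{V}}$ (it equals at least one, since $\cH_V$ exhausts all such functions). I would state and justify this partition property before the main computation.

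Then I would fix an arbitrary pair $(\pai{V}, \ui{V}) \in \cD_{\pai{V}} \times \cD_{\Ui{V}}$ and evaluate the right-hand side. By the partition property there is a unique index $r_V^\star$ with $\ui{V} \in \cD_{\Ui{V}}^{(r_V^\star)}$, so every indicator $\mathbbm{1}\{\ui{V} \in \cD_{\Ui{V}}^{(r_V)}\}$ vanishes except the one with $r_V = r_V^\star$, which equals $1$; hence the sum collapses to $h_V^{(r_V^\star)}(\pai{V})$. On the other hand, membership $\ui{V} \in \cD_{\Ui{V}}^{(r_V^\star)}$ means, by the very definition of that cell, that $f_V(\cdot, \ui{V}) = h_V^{(r_V^\star)}$ as functions, and evaluating both sides at $\pai{V}$ gives $f_V(\pai{V}, \ui{V}) = h_V^{(r_V^\star)}(\pai{V})$. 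Chaining the two identities yields $f_V(\pai{V}, \ui{V}) = \sum_{r_V = 1}^{m_V} h_V^{(r_V)}(\pai{V}) \,\mathbbm{1}\{\ui{V} \in \cD_{\Ui{V}}^{(r_V)}\}$, and since $(\pai{V}, \ui{V})$ was arbitrary, the claimed equality of functions holds.

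I do not anticipate a genuine obstacle: the content of the lemma is essentially bookkeeping, rewriting a mechanism in the canonical/response-function form of \citep{balke:pea94b}. The only point requiring a little care is verifying that $\{\cD_{\Ui{V}}^{(r_V)}\}$ is indeed a partition — i.e., that each $\ui{V}$ selects one and exactly one response function — which is precisely where the discreteness of $\cD_{\*V}$ is used (it guarantees $\cH_V$ is a set one can enumerate and that "two functions are equal" is unambiguous). Notably, no assumption on the exogenous space is needed beyond $P(\*U)$ being a well-defined probability measure, which is consistent with the blanket assumption stated at the start of this appendix.
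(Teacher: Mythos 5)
Your argument is correct: the pointwise case analysis—each $\ui{V}$ lies in exactly one cell $\cD_{\Ui{V}}^{(r_V^\star)}$ because the slice $f_V(\cdot,\ui{V})$ coincides with exactly one $h_V^{(r_V)}$, so the sum collapses to $h_V^{(r_V^\star)}(\pai{V}) = f_V(\pai{V},\ui{V})$—is precisely the standard verification of this canonical (response-function) representation. Note that the paper itself gives no proof here but imports the statement from \citep[Lem.~1]{zhang:bareinboim21}, so your direct bookkeeping argument is essentially the intended one.
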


\begin{definition}[Canonical SCM]
    \label{def:simple-ctm}
    A canonical SCM is an SCM $\cM = \langle \*U, \*V, \cF, P(\*U)\rangle$ such that
    \begin{enumerate}[label=\arabic*.]
        \item $\*U = \{R_{V} : V \in \*V\}$, where $\cD_{R_{V}} = \{1, \dots, m_{V}\}$ (where $m_V = |\{h_V : \cD_{\pai{V}} \rightarrow \cD_V\}|$) for each $V \in \*V$.
        \item For each $V \in \*V$, $f_V \in \cF$ is defined as
        $$f_V(\pai{V}, r_{V}) = h_V^{(r_V)}(\pai{V}).$$
    \end{enumerate}
    \hfill $\blacksquare$
\end{definition}

\begin{lemma}
    \label{lem:scm-to-ctm}
    For any SCM $\cM = \langle \*U, \*V, \cF, P(\*U)\rangle$, there exists a canonical SCM $\cM_{\ctm} = \langle \*U_{\ctm}, \*V, \cF_{\ctm}, P(\*U_{\ctm})\rangle$ such that $\cM_{\ctm}$ is $L_3$-consistent with $\cM$.
    \hfill $\blacksquare$
\end{lemma}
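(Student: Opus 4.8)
The plan is to construct the canonical SCM $\cM_{\ctm}$ explicitly from $\cM$ and verify $L_3$-consistency directly from the definition of Layer 3 valuations (Def.~\ref{def:l3-semantics}). The key observation, supplied by Lemma~\ref{lem:canon-func}, is that each structural function $f_V$ only ``uses'' its exogenous argument $\ui{V}$ through which cell $\cD_{\Ui{V}}^{(r_V)}$ of the partition it falls into; that is, $f_V(\cdot, \ui{V})$ equals the response function $h_V^{(r_V)}$ whenever $\ui{V} \in \cD_{\Ui{V}}^{(r_V)}$. So the natural choice is to let $R_V$ be exactly the index of that cell: define $\cM_{\ctm}$ with $\*U_{\ctm} = \{R_V : V \in \*V\}$, $\cD_{R_V} = \{1, \dots, m_V\}$, and $f_V^{\ctm}(\pai{V}, r_V) = h_V^{(r_V)}(\pai{V})$ as in Def.~\ref{def:simple-ctm}. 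For the distribution, push forward $P(\*U)$ along the map $\*u \mapsto \*r(\*u)$ where $r_V(\*u)$ is the unique index with $\ui{V} \in \cD_{\Ui{V}}^{(r_V)}$; concretely, $P(\*U_{\ctm} = \*r) = P\big(\{\*u : \forall V,\ \ui{V} \in \cD_{\Ui{V}}^{(r_V)}\}\big)$. Note this keeps all the $R_V$'s jointly distributed, so confounding (non-Markovianity) is preserved — which matters because later results (e.g.\ Thm.~\ref{thm:nscm-g-cons}) care about the exogenous dependence structure.

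Next I would establish the core lemma that the two models agree on all potential-response functions: for every $\*W \subseteq \*V$, every intervention $\*x$, and every $\*u$ in the support of $P(\*U)$, the solution $\*W_{\*x}(\*u)$ under $\cM$ equals $\*W_{\*x}(\*r(\*u))$ under $\cM_{\ctm}$. This follows by induction on a topological order of the causal diagram (both SCMs are recursive, so this is well-defined): for a variable $V \notin \*X$, its value in $\cM$ is $f_V(\pai{V}(\*u), \ui{V})$, which by Lemma~\ref{lem:canon-func} equals $h_V^{(r_V(\*u))}(\pai{V}(\*u))$; its value in $\cM_{\ctm}$ is $f_V^{\ctm}(\pai{V}(\*r(\*u)), r_V(\*u)) = h_V^{(r_V(\*u))}(\pai{V}(\*r(\*u)))$; and the parent values agree by the induction hypothesis (for intervened variables both are just the constant $x$). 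Hence the maps $\*u \mapsto \*W_{\*x}(\*u)$ and $\*r \mapsto \*W_{\*x}(\*r)$ agree after composing the latter with $\*r(\cdot)$.

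Finally, I would plug this into Def.~\ref{def:l3-semantics}. For any counterfactual event $\*y_{\*x}, \dots, \*z_{\*w}$,
\begin{align*}
    P^{\cM_{\ctm}}(\*y_{\*x}, \dots, \*z_{\*w})
    &= \sum_{\{\*r \,\mid\, \*Y_{\*x}(\*r)=\*y,\,\dots,\,\*Z_{\*w}(\*r)=\*z\}} P(\*U_{\ctm} = \*r) \\
    &= \sum_{\{\*u \,\mid\, \*Y_{\*x}(\*r(\*u))=\*y,\,\dots,\,\*Z_{\*w}(\*r(\*u))=\*z\}} P(\*u)
    = \sum_{\{\*u \,\mid\, \*Y_{\*x}(\*u)=\*y,\,\dots,\,\*Z_{\*w}(\*u)=\*z\}} P(\*u)
    = P^{\cM}(\*y_{\*x}, \dots, \*z_{\*w}),
\end{align*}
where the second equality is the change of variables defining the pushforward and the third is the response-function agreement just proved. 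Since this holds for all counterfactual events, $L_3(\cM_{\ctm}) = L_3(\cM)$, i.e.\ $\cM_{\ctm}$ is $L_3$-consistent with $\cM$.

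I expect the main obstacle to be purely bookkeeping rather than conceptual: being careful that the partition $\{\cD_{\Ui{V}}^{(r_V)}\}$ is genuinely well-defined (each $\ui{V}$ lands in exactly one cell, which is where $\cH_V$ exhausting all functions is used), that the pushforward measure is well-defined when $\cD_{\*U}$ is an arbitrary (possibly uncountable) measurable space — here the finiteness of $\cD_{\*V}$, hence of each $m_V$, keeps $\*U_{\ctm}$ finite so no measure-theoretic subtlety arises on the target side — and that the induction over the topological order correctly handles intervened versus non-intervened variables uniformly across the two models. One should also note $m_V = |\cD_V|^{|\cD_{\pai{V}}|}$ is finite precisely because $\cD_{\*V}$ is discrete, which is the standing assumption of the appendix.
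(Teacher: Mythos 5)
Your proposal is correct and follows essentially the same route as the paper's proof: the same canonical construction with the pushforward distribution $P(\*U_{\ctm}=\*r)=P(\*U\in\cD_{\*U}^{(\*r)})$, the same use of Lemma~\ref{lem:canon-func} to show the structural functions agree on each partition cell, and the same change-of-variables summation over counterfactual events. The only difference is cosmetic: you spell out the topological-order induction showing the potential responses coincide, which the paper leaves implicit in its entailment equivalences (Eqs.~\ref{eq:equiv-class-entailment} and \ref{eq:canonical-entailment}).
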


\begin{proof}
    Since $\*U_{\ctm}$ and $\cF_{\ctm}$ are already fixed, we choose $P(\*U_{\ctm})$ to fix our choice of $\cM_{\ctm}$. For $\*r \in \cD_{\*U_{\ctm}}$, we choose
    \begin{align}
        &P^{\cM_{\ctm}}(\*U_{\ctm} = \*r) \\
        &= P^{\cM_{\ctm}}(R_{V_1} = r_{V_1}, \dots, R_{V_n} = r_{V_n}) \\
        &:= P^{\cM} \left(\Ui{V_1} \in \cD_{\Ui{V_1}}^{(r_{V_1})}, \dots, \Ui{V_n} \in \cD_{\Ui{V_n}}^{(r_{V_n})} \right).
        \label{eq:ctm-choice}
    \end{align}
    
    For $\*r \in \cD_{\*U_{\ctm}}$, denote
    \begin{equation*}
        \cD_{\*U}^{(\*r)} = \left\{\*u : \*u \in \cD_{\*U}, \*u_{V} \in \cD_{\*U_V}^{(r_V)} \quad \forall V \in \*V\right\}
    \end{equation*}
    
We now show that $\cM_{\ctm}$ and $\cM$ valuate in the same way any query of the form $P(\bm{\upvarphi})$, where
    $$\bm{\upvarphi} = \bigwedge_{i \in \{1, \dots, k\}} \*Y^i_{\*x_i} = \*y_i$$
    for any $\mathbf{X}_i, \mathbf{Y}_i \subseteq \mathbf{V}$, $\mathbf{Y}_i \neq \emptyset$, $\mathbf{y}_i \in \cD_{\mathbf{Y}_i}$, and positive integer $k$. We say $\cM(\*u) \models \bm{\upvarphi}$ for $\*u \in \cD_{\*U}$ if for all $i \in \{1, \dots, k\}$, $\*Y^i_{\*x_i}(\*u) = \*y_i$. We define this notation similarly for $\cM_{\ctm}$.
    
    Let $\*u^1, \*u^2 \in \cD_{\*U}$ be any two instantiations of $\*U$. If $\*u^1$ and $\*u^2$ come from the same partition $\cD_{\*U}^{(\*r)}$, then we have for all $V \in \*V$,
    \begin{align*}
        &f_V(\pai{V}, \ui{V}^1) \\
        &= \sum_{r'_V = 1}^{m_V} h_V^{(r'_V)}(\pai{V}) \mathbbm{1}\left\{\ui{V}^1 \in \cD_{\Ui{V}}^{(r'_V)}\right\} & \text{ Lem.~\ref{lem:canon-func}}\\
        &= h_V^{(r_V)}(\pai{V}) \\
        &= \sum_{r'_V = 1}^{m_V} h_V^{(r'_V)}(\pai{V}) \mathbbm{1}\left\{\ui{V}^2 \in \cD_{\Ui{V}}^{(r'_V)}\right\} \\
        &= f_V(\pai{V}, \ui{V}^2) & \text{ Lem.~\ref{lem:canon-func}}.
    \end{align*}
    Hence,
    \begin{equation}
        \label{eq:equiv-class-entailment}
        \cM(\*u^1) \models \bm{\upvarphi} \Leftrightarrow \cM(\*u^2) \models \bm{\upvarphi}.
    \end{equation}
    
    Let $\*u \in \cD_{\*U}$ and let $\*r \in \cD_{\*U_{\ctm}}$. Then if $\*u \in \cD_{\*U}^{(\*r)}$, we have for all $V \in \*V$
    \begin{align*}
        &f_V(\pai{V}, \ui{V}) \\
        &= \sum_{r'_V = 1}^{m_V} h_V^{(r'_V)}(\pai{V}) \mathbbm{1}\left\{\ui{V} \in \cD_{\Ui{V}}^{(r'_V)}\right\} & \text{ Lem.~\ref{lem:canon-func}}\\
        &= h_V^{(r_V)}(\pai{V}) \\
        &= f_V^{\ctm}(\pai{V}, \ui{V}) & \text{ Def.~\ref{def:simple-ctm}}.
    \end{align*}
    Hence,
    \begin{equation}
        \label{eq:canonical-entailment}
        \cM(\*u) \models \bm{\upvarphi} \Leftrightarrow \cM_{\ctm}(\*r) \models \bm{\upvarphi}.
    \end{equation}
    
    Then, by the previous statements and $\cM_{\ctm}$'s construction, we have
    \begin{align*}
        P^{\cM}(\bm{\upvarphi}) &= \sum_{\{\*u : \cM(\*u) \models \bm{\upvarphi}\}} P^{\cM}(\*u) \\
        &= \sum_{\{\*r : \cM(\*u) \models \bm{\upvarphi}, \*u \in \cD_{\*U}^{(\*r)}\}} P^{\cM} \left(\*U \in \{\cD_{\*U}^{(\*r)}\} \right) \\
        & \text{ by Eq. \ref{eq:equiv-class-entailment}} \\
        &= \sum_{\{\*r : \cM(\*u) \models \bm{\upvarphi}, \*u \in \cD_{\*U}^{(\*r)}\}} P^{\cM_{\ctm}} \left(\*r\right) \\
        & \text{ by Eq. \ref{eq:ctm-choice}} \\
        &= \sum_{\{\*r : \cM_{\ctm}(\*r) \models \bm{\upvarphi}\}} P^{\cM_{\ctm}} \left(\*r\right) \\
        & \text{ by Eq. \ref{eq:canonical-entailment}} \\
        &= P^{\cM_{\ctm}}(\bm{\upvarphi}).
    \end{align*}
\end{proof}

Lemma 2 shows that the canonical SCM can be used as a representative of equivalence classes of SCMs. In the case where $\cD_{\*V}$ is discrete, the mapping from an SCM to an equivalent canonical model conveniently also remaps $\cD_{\*U}$ to a discrete space. We next show that any canonical SCM can be constructed in the form of an \ncm{}.

\begin{wrapfigure}{r}{0.5\textwidth}
    \centering
    \includegraphics[width=0.4\textwidth,height=0.5\textheight,keepaspectratio]{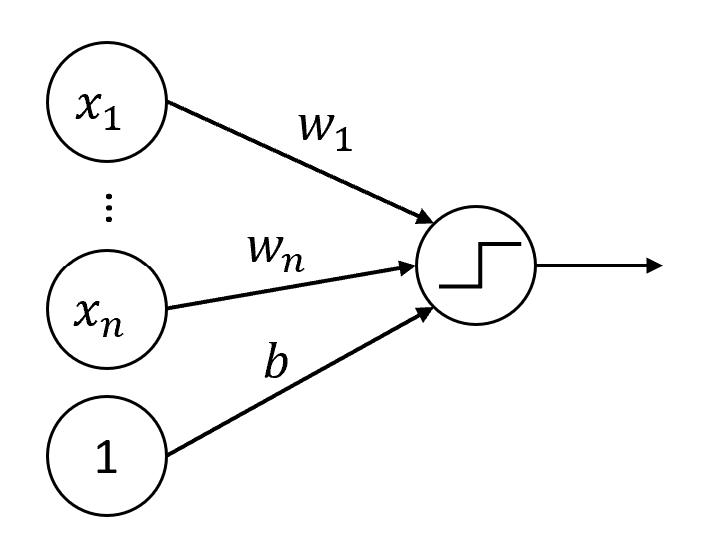}
    \caption{Example diagram of a neural network node from Definition \ref{def:ff-nn-binary-step}. Nodes on the left are inputs, numbers on the edges represent weights, and the weighted sum of the inputs is passed through the binary step activation function.}
    \label{fig:ex-bin-net}
\end{wrapfigure}

We will focus on feedforward neural networks, specifically multi-layer perceptrons (MLPs) with the binary step activation function, even though other types of neural networks could be compatible with the statement proven here (see Appendix \ref{sec:nn-general}).

\begin{definition}[Multi-layer Perceptron]
    \label{def:ff-nn-binary-step}
    A neural network node is a function defined as
    $$\hat{f}(\mathbf{x}; \mathbf{w}, b) = \sigma \left(\sum_i \mathbf{w}_i \mathbf{x}_i + b\right),$$
    where $\mathbf{x}$ is a vector of real-valued inputs, $\mathbf{w}$ and $b$ are the real-valued learned weights and bias respectively, and $\sigma$ is an activation function. For this work, we will often denote $\sigma$ as the binary step function for our activation function:
    \[\sigma(z) = 
    \begin{cases}
        1 & z \geq 0 \\
        0 & z < 0.
    \end{cases}
    \]
    This is simply one choice of activation function which always outputs a binary result. (Figure \ref{fig:ex-bin-net} provides an illustration of such a node.)
    
    A neural network layer of width $k$ is comprised of $k$ neural network nodes with the same input vector, together outputting a $k$-dimensional output:
    $$\hat{f}(\mathbf{x}; \*W, \*b) = \left( \hat{f}_1(\mathbf{x}; \mathbf{w}_1, b_1), \dots, \hat{f}_k(\*x;\mathbf{w}_k, b_k)\right),$$
    where $\*W = \{\*w_1, \dots, \*w_k\}$ and $\*b = \{b_1, \dots, b_k\}$.
    An MLP is defined as a function comprised of several neural network layers $\hat{f}_1, \dots, \hat{f}_{\ell}$, with each layer taking the previous layer's output as its input:
    $$\hat{f}_{\mlp}(\*x) = \hat{f}_{\ell}(\dots \hat{f}_1(\*x; \*W_1, \*b_1) \dots; \*W_{\ell}, \*b_{\ell}).$$
    This means that a neural network is a function that is a composition of the functions of the individual layers, where the input is the input to the first layer, and the output is the output of the last layer.
    \hfill $\blacksquare$
\end{definition}

We will show next three basic lemmas (3-5) that will be used later on to help understand the expressiveness of the networks introduced from Def.~\ref{def:ff-nn-binary-step}.

\begin{lemma}
    \label{lem:binary-uat}
    For any function $f: \mathbf{X} \rightarrow Y$ mapping a set of binary variables to a binary variable, there exists an equivalent MLP $\hat{f}$ using binary step activation functions.
    \hfill $\blacksquare$
    
    \begin{proof}
        We define the following three neural network components:
        \begin{itemize}
            \item Given binary input $x$, with $w = -1$ and $b = 0$, neural network function
            $$\hat{f}_{\textsc{NOT}}(x) = \sigma(-x)$$
            outputs the negation of $x$.
            
            \item Given binary vector input $\mathbf{x}$, with $\mathbf{w} = \mathbf{1}$ and $b = -1$, neural network function
            $$\hat{f}_{\textsc{OR}}(\mathbf{x}) = \sigma \left( \sum_i x_i - 1 \right)$$
            outputs the bitwise-OR of $\mathbf{x}$.
            
            \item Given binary vector input $\mathbf{x}$, with $\mathbf{w} = \mathbf{1}$ and $b = -|\mathbf{x}|$, neural network function
            $$\hat{f}_{\textsc{AND}}(\mathbf{x}) = \sigma \left( \sum_i x_i - |\mathbf{x}| \right)$$
            outputs the bitwise-AND of $\mathbf{x}$.
        \end{itemize}
        Since all functions mapping a set of binary variables to a binary variable can be written in disjunctive normal form (DNF), we can combine these three components to build $\hat{f}$.
    \end{proof}
\end{lemma}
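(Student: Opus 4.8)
The plan is to reduce the statement to two elementary facts: (i) every Boolean function on finitely many bits has a disjunctive normal form (DNF), i.e.\ it is an OR of ANDs of literals; and (ii) each of the primitives NOT, AND, OR on bits is computed exactly by a single binary-step neuron. Composing these primitives according to a DNF then yields an MLP in the sense of Def.~\ref{def:ff-nn-binary-step} that agrees with $f$ on all of $\{0,1\}^n$.

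First I would exhibit the primitive neurons and verify each by inspecting the sign of the pre-activation. For a single bit $x$, the weight $-1$ and bias $0$ give $\sigma(-x)$, which is $1$ when $x=0$ and $0$ when $x=1$, i.e.\ $\neg x$. For a length-$k$ input vector, weights all equal to $1$ and bias $-1$ give $\sigma\!\big(\sum_i x_i - 1\big)$, which is $1$ iff at least one $x_i=1$, i.e.\ the OR; weights all $1$ and bias $-k$ give $\sigma\!\big(\sum_i x_i - k\big)$, which is $1$ iff all $x_i=1$, i.e.\ the AND. I would also record the single-bit identity $\sigma(2x-1)$ and the constants $\sigma(-1)=0$, $\sigma(0)=1$ (zero weight vector, bias $-1$ or $0$), since these are needed below to fit the strict layer-by-layer format, in which every layer reads the full output vector of the previous layer.

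Then I would assemble the network for a given $f\colon\{0,1\}^n\to\{0,1\}$. If $f$ is constant, return the corresponding constant neuron. Otherwise write $f(\mathbf{x}) = \bigvee_{\mathbf{a}:\,f(\mathbf{a})=1}\ \bigwedge_{i=1}^n \ell_i^{a_i}$, where $\ell_i^{1}=x_i$ and $\ell_i^{0}=\neg x_i$. Build a three-layer MLP: layer $1$ outputs, for each coordinate $i$, both the identity $x_i$ and the negation $\neg x_i$; layer $2$ has one AND neuron per clause $\mathbf{a}$, with weights $1$ on exactly the layer-$1$ outputs realizing the literals $\ell_i^{a_i}$ of that clause (and $0$ elsewhere); layer $3$ is a single OR neuron over all clause outputs. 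By construction this composition is an MLP as in Def.~\ref{def:ff-nn-binary-step}, and by (i)--(ii) it evaluates to $f(\mathbf{x})$ for every $\mathbf{x}\in\{0,1\}^n$ (its behavior off $\{0,1\}^n$ is irrelevant).

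I do not expect a genuine obstacle; the one place needing attention is the routing/bookkeeping of the composition. Because a layer in Def.~\ref{def:ff-nn-binary-step} is applied to the entire previous output, the negated literals must be precomputed in an earlier layer while the positive ones are carried forward (hence the explicit identity gadget), and the empty-DNF (constant $f$) case must be split off. A slicker variant avoiding the identity layer replaces each clause's AND-of-literals by a single ``minterm indicator'' neuron reading $\mathbf{x}$ directly --- weight $+1$ on coordinates where $a_i=1$, weight $-1$ where $a_i=0$, and bias $-|\{i:a_i=1\}|$ --- whose pre-activation is $0$ exactly when $\mathbf{x}=\mathbf{a}$ and $\le -1$ otherwise, so it equals $\indic\{\mathbf{x}=\mathbf{a}\}$; one then needs only this single hidden layer plus an output OR neuron. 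Either way the lemma follows directly from the DNF normal form together with the primitive computations.
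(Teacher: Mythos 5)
Your proposal is correct and follows essentially the same route as the paper: build single-neuron NOT, OR, and AND gates with the binary step activation and compose them according to a DNF of $f$. The extra bookkeeping you supply (identity/constant gadgets to respect the strict layer-by-layer format, and the single-hidden-layer minterm variant) only fills in details the paper's proof leaves implicit.
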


\begin{lemma}
    \label{lem:discrete-to-binary}
    For any function $f: \mathbf{X} \rightarrow Y$ mapping set of variables $\mathbf{X}$ to variable $Y$, all from finite numerical domains, there exists an equivalent MLP $\hat{f}$ using binary step activations.
    \hfill $\blacksquare$
    
    \begin{proof}
        For each value $\mathbf{x} \in \cD_{\mathbf{X}}$, we first aim to assign a unique binary representation $\bin(\mathbf{x})$, which we can use more flexibly due to Lemma \ref{lem:binary-uat}. One simple way to accomplish this is to map the values to a one-hot encoding, a binary vector for which each element corresponds to a unique value in $\cD_{\*X}$.
        
        We will use here a neural function with $w = 1$ and $b = -z$, so we have
        $$\hat{f}_{\geq z}(x) = \sigma(x - z)$$
        which, on input $x$, outputs $1$ if $x \leq z$ or $0$ otherwise. We will also borrow the binary functions from the proof of Lem.~\ref{lem:binary-uat}. 
        
         For each $X_i \in \*X$ and each $x_i \in \cD_{X_i}$, we construct neural network function
        $$\hat{f}_{=x_i}(x) = \hat{f}_{\textsc{AND}} \left(\hat{f}_{\leq x_i}(x), (\forall x'_i < x_i) \hat{f}_{\textsc{NOT}} \left(\hat{f}_{\leq x'_i}(x) \right) \right)$$
        where $x'_i \in \cD_{X_i}$, which, on input $x \in \cD_{X_i}$, outputs $1$ if $x = x_i$ or 0 otherwise.
        
        We can then define for each $\*z \in \cD_{\*X}$
        $$\hat{f}_{=\*z}(\*x) = \hat{f}_{\textsc{AND}} \left( \forall i \hat{f}_{=z_i}(x_i) \right)$$
        which, on input $\*x \in \cD_{\*X}$, outputs 1 if $\*x = \*z$ or 0 otherwise. Here, $x_i$ and $z_i$ denote the $i$th element of $\*x$ and $\*z$ respectively.
        
        We can then define an one-hot binary representation of $\mathbf{x}$, $\bin(\mathbf{x})$, to be a vector of the outputs of $\hat{f}_{=\*z}(\mathbf{x})$
        for all $\*z \in \cD_{\mathbf{X}}$:
        $$\hat{f}_{\textsc{ENC}}(\*x) = \left( \forall (\*z \in \cD_{\*X}) \hat{f}_{=\*z}(\*x) \right)$$
        This representation is a binary vector of length $|\cD_{\mathbf{X}}|$ and is unique for each value of $\mathbf{x} \in \cD_{\mathbf{X}}$ because $\hat{f}_{=\*z}(\mathbf{x}) = 1$
        if and only if $\mathbf{x} = \*z$, so a different bit is 1 for every choice of $\mathbf{x}$.
        
        We can similarly define a binary representation for each $y \in \cD_{Y}$, $\bin(y)$, as a binary vector of length $|\cD_{Y}|$, where each bit corresponds to a value in $\cD_{Y}$. If $y_i \in \cD_{Y}$ is the value that corresponds with the $i$th bit of $\bin(y)$, then $\bin(y)_i = 1$ if and only if $y = y_i$. Now we consider the translation from $\bin(y)$ back into $y$ using neural networks. We can create the neural network function on input $\bin(y)$ with $\mathbf{w} = (y_i: y_i \in \cD_{Y})$ and $b = 0$,
        $$\hat{f}_{\textsc{DEC}}(\bin(y)) = \mathbf{w}^{\intercal} \bin(y),$$
        omitting the binary step activation function $\sigma$. This function simply computes the dot product of $\bin(y)$ with a vector of all of the possible values of $Y$, which results in $y$ since $\bin(y)$ is 0 in every location except for the bit corresponding to $y$.
        
        Combining all of these constructed neural network functions, we can construct a final MLP $\hat{f}$ for mapping $\mathbf{X}$ to $Y$:
        \begin{enumerate}
            \item On input $\mathbf{x} \in \cD_{\mathbf{X}}$, convert $\mathbf{x}$ to $\bin(\mathbf{x})$ using $\hat{f}_{\textsc{ENC}}(\mathbf{x})$.
            
            \item By Lemma \ref{lem:binary-uat}, find some MLP mapping $\bin(\mathbf{x})$ to $\bin(y)$.
            
            \item Finally,  use $\hat{f}_{\textsc{DEC}}$ to convert $\bin(y)$ to $y$.
        \end{enumerate}
        The final MLP $\hat{f}$ is the composition of all of the neural networks used to realize these three steps.        
    \end{proof}
\end{lemma}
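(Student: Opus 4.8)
The plan is to bootstrap from the binary case already handled by Lemma~\ref{lem:binary-uat}: sandwich $f$ between an \emph{encoder} subnetwork that rewrites each input $\mathbf{x} \in \cD_{\mathbf{X}}$ as a fixed-length binary codeword $\bin(\mathbf{x})$, a purely Boolean middle subnetwork computing the induced map $\bin \circ f \circ \bin^{-1}$ on codewords, and a \emph{decoder} subnetwork that turns the output codeword back into the numerical value $y \in \cD_Y$. Once the encoder and decoder are realized as MLPs with binary step activations, the middle block is an MLP by Lemma~\ref{lem:binary-uat}, and the composition $\hat f$ of the three blocks computes $f$ exactly; concatenating the layer lists then yields a single MLP as required.

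The bulk of the argument is exhibiting the encoder and decoder. I would use a one-hot code: index $\cD_{\mathbf{X}}$ (and $\cD_Y$) by an enumeration, and let $\bin(\mathbf{x})$ be the indicator vector that is $1$ exactly in the coordinate of $\mathbf{x}$. For the encoder, note that with the binary step $\sigma$, a single node $\sigma(x - c)$ computes the predicate ``$x \ge c$'' and a node $\sigma(c - x)$ computes ``$x \le c$''; \textsc{AND}-ing the two (via the Boolean gadgets from the proof of Lemma~\ref{lem:binary-uat}) gives the exact-equality indicator $[x = c]$ for any numerical constant $c$. \textsc{AND}-ing these coordinatewise over a tuple $\mathbf{z}$ gives $[\mathbf{x} = \mathbf{z}]$, and stacking the result over all $\mathbf{z} \in \cD_{\mathbf{X}}$ produces $\bin(\mathbf{x})$. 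The decoder is even simpler: since $\bin(y)$ is zero off the coordinate of $y$, a single linear readout with weights equal to the enumerated values of $\cD_Y$ recovers $y$ as an inner product (one can wrap this in threshold gadgets if one insists every node carry an activation).

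The point that needs care — and where ``countable'' must be read charitably — is size: the encoder and decoder built above have width on the order of $|\cD_{\mathbf{X}}|$ and $|\cD_Y|$, so a genuinely finite feedforward network realizes $f$ verbatim only when the relevant domains are finite (or, for an infinite domain, only when $f$ depends on finitely many input values; a positional binary code of bounded length is the natural alternative but it too requires such a bound). I would therefore either restrict the statement to finite discrete domains or make explicit that the size of $\hat f$ grows with the finite active portion of the domain. Apart from this bookkeeping, every step is mechanical, so I expect no real obstacle beyond being precise about the encoding.
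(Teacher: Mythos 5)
Your construction matches the paper's proof essentially step for step: a one-hot encoder built from threshold and equality-indicator gadgets, an application of Lemma~\ref{lem:binary-uat} for the Boolean map on codewords, and a linear readout decoder whose weights are the enumerated values of $\cD_Y$. Your caveat about width scaling with $|\cD_{\mathbf{X}}|$ (so the construction is only genuinely finite on finite domains) applies equally to the paper's own argument and does not signal any departure in approach.
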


Although neural networks as defined in Def.~\ref{def:ff-nn-binary-step} are undefined for non-numerical inputs and outputs, any kind of categorical data can be considered if first converted into a numerical representation.

The above two lemmas show that MLPs can be used to express any function, but we will need another result to incorporate the exogenous sources of randomness. Specifically, we show that MLPs can map $\unif(0, 1)$ noise to any other distribution of variables.

\begin{lemma}[Neural Inverse Probability Integral Transform (Discrete)]
    \label{lem:unif-to-pmf}
    For any probability mass function $P(\mathbf{X})$, there exists an MLP $\hat{f}$ which maps $\unif(0, 1)$ to $P(\mathbf{X})$.
    \hfill $\blacksquare$
    
    \begin{proof}
        Let $\mathbf{x}_1, \mathbf{x}_2, \dots$ be the elements of the support of $P(\mathbf{X})$, ordered arbitrarily. We also define some arbitrary $\mathbf{x}_0$ such that $P(\mathbf{x}_0) = 0$. For each $i \in \{0, 1, 2, \dots\}$, construct neural network function, with $w = 1$ and $b = -\sum_{j=0}^{i} P(\mathbf{x}_j)$
        $$\hat{f}_{\mathbf{x}_i}(u) = \sigma \left(u - \sum_{j=0}^{i} P(\mathbf{x}_j) \right)$$
        which, on input $u$, returns 1 if and only if $u \geq \sum_{j=0}^{i} P(\mathbf{x}_j)$. Note that $\hat{f}_{\mathbf{x}_0}(u)$ is always 1. We then construct a neural network function $\hat{f}_{\textsc{OUT}}$ which, on inputs $(\hat{f}_{\mathbf{x}_0}, \hat{f}_{\*x_1}, \hat{f}_{\*x_2}, \dots)$, outputs one of $\mathbf{x}_1, \*x_2, \dots$. Specifically, it operates as follows:
        \begin{enumerate}
            \item For each $i \in \{1, 2, \dots\}$, if $\hat{f}_{\mathbf{x}_i} = 0$ and $\hat{f}_{\mathbf{x}_{i - 1}} = 1$, then output $\mathbf{x}_i$.
            \item If none hold, output any arbitrary $\mathbf{x}_i$ (this will never happen).
        \end{enumerate}
        By Lemma \ref{lem:discrete-to-binary}, we can construct such a function since all $\hat{f}_{\mathbf{x}_i}$ are binary. Then, let $\hat{g}(u) = \hat{f}_{\textsc{OUT}}(\hat{f}_{\*x_0}(u), \hat{f}_{\*x_1}(u), \hat{f}_{\*x_2}(u), \dots)$. Observe that for $u$ sampled from $\unif(0, 1)$,
        \begin{align*}
            P(\hat{g}(u) = \mathbf{x}_i) &= P \left(\hat{f}_{\textsc{OUT}} \left(\hat{f}_{\*x_0}(u), \hat{f}_{\*x_1}, \hat{f}_{\*x_2}, \dots \right) = \mathbf{x}_i \right) \\
            &= P \left(\hat{f}_{\mathbf{x}_i}(u) = 0 \wedge \hat{f}_{\mathbf{x}_{i-1}}(u) = 1 \right) \\
            &= P \left(u < \sum_{j = 0}^i P(\mathbf{x}_j) \wedge u \geq \sum_{j = 0}^{i - 1} P(\mathbf{x}_j) \right) \\
            &= P \left(\sum_{j = 0}^{i - 1} P(\mathbf{x}_j) \leq u < \sum_{j = 0}^i P(\mathbf{x}_j) \right) \\
            &= \sum_{j = 0}^i P(\mathbf{x}_j) - \sum_{j = 0}^{i - 1} P(\mathbf{x}_j) \\
            &= P(\mathbf{x}_i)
        \end{align*}
        for each $i \in \{1, 2, \dots\}$. Therefore, we see that $\hat{g}$ successfully maps the $\unif(0, 1)$ distribution to $P(\mathbf{X})$.
    \end{proof}
\end{lemma}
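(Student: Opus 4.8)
The plan is to realize the classical inverse-CDF sampling construction as a binary-step MLP. First I would enumerate the support of $P(\mathbf X)$ as $\mathbf x_1, \mathbf x_2, \dots$ in some fixed order and introduce a dummy symbol $\mathbf x_0$ with $P(\mathbf x_0) = 0$, so that the partial sums $c_i := \sum_{j=0}^i P(\mathbf x_j)$ form the cumulative thresholds $0 = c_0 \le c_1 \le c_2 \le \cdots$. These partition $[0,1]$ into half-open intervals $[c_{i-1}, c_i)$ whose lengths are exactly $P(\mathbf x_i)$. The target is a network $\hat g$ with $\hat g(u) = \mathbf x_i$ precisely when $u$ lands in the $i$-th interval, since a $\unif(0,1)$ draw then hits $\mathbf x_i$ with probability $c_i - c_{i-1} = P(\mathbf x_i)$.

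Second, for each threshold I would build a single binary-step neuron $\hat f_{\mathbf x_i}(u) = \sigma(u - c_i)$ (weight $1$, bias $-c_i$), which outputs $1$ iff $u \ge c_i$. Because the thresholds are nondecreasing, the indicator vector $(\hat f_{\mathbf x_0}(u), \hat f_{\mathbf x_1}(u), \dots)$ is monotone: a block of $1$s followed by $0$s, with the position of the flip from $1$ to $0$ identifying the interval containing $u$. Concretely, $u \in [c_{i-1}, c_i)$ iff $\hat f_{\mathbf x_{i-1}}(u) = 1$ and $\hat f_{\mathbf x_i}(u) = 0$; note $\hat f_{\mathbf x_0}(u) = \sigma(u) = 1$ always, anchoring the detection.

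Third, I would assemble a selector $\hat f_{\textsc{OUT}}$ that reads this binary indicator vector and emits the matching $\mathbf x_i$. Since $\hat f_{\textsc{OUT}}$ takes only binary inputs and produces a value in the countable numerical codomain, its realizability as a binary-step MLP follows from Lemma \ref{lem:discrete-to-binary}, with Lemma \ref{lem:binary-uat} handling the Boolean logic of detecting the flip position. Composing, I set $\hat g(u) = \hat f_{\textsc{OUT}}(\hat f_{\mathbf x_0}(u), \hat f_{\mathbf x_1}(u), \dots)$, itself an MLP. The verification is then a one-line computation: for $u \sim \unif(0,1)$,
\[
P(\hat g(u) = \mathbf x_i) = P\bigl(c_{i-1} \le u < c_i\bigr) = c_i - c_{i-1} = P(\mathbf x_i),
\]
so $\hat g$ pushes $\unif(0,1)$ forward to $P(\mathbf X)$.

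The only delicate point is a possibly countably infinite support, where the indicator vector and selector become formally infinite; here I would argue that for each fixed $u$ exactly one flip occurs, so only finitely many neurons are active, and take care that the resulting object still qualifies as an MLP in the intended sense. For finite support this subtlety vanishes, and the remaining steps are entirely routine given the earlier lemmas.
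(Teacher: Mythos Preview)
Your proposal is correct and follows essentially the same approach as the paper: enumerate the support with a dummy $\mathbf{x}_0$, build threshold indicators $\hat f_{\mathbf x_i}(u) = \sigma(u - c_i)$, detect the $1\to 0$ flip via a selector $\hat f_{\textsc{OUT}}$ justified by Lemma~\ref{lem:discrete-to-binary}, and verify the pushforward probabilities. Your explicit remark on the countably infinite support case is a nice addition that the paper's proof leaves implicit.
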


We can now combine these neural network results with the canonical SCM results to complete the expressiveness proof for NCMs.

\lrepr*

\begin{proof}
    Lemma \ref{lem:scm-to-ctm} guarantees that there exists a canonical SCM $\cM_{\ctm} = \langle \*U_{\ctm}, \*V, \cF_{\ctm}, P(\*U_{\ctm})\rangle$ that is $L_3$-consistent with $\cM^*$. Hence, to construct $\widehat{M}$, it suffices to show how to construct $\cM_{\ctm}$ using the architecture of an NCM.
    
    Following the structure of Def.~\ref{def:nscm}, we choose $\widehat{\*U} = \{\widehat{U}_{\*V}\}$. For each $V_i \in \*V$, we construct $\hat{f}_{V_i} \in \widehat{\cF}$ using the following components:
    \begin{enumerate}
        \item By Lemma \ref{lem:unif-to-pmf}, construct $\hat{f}_{V_i}^{R}: \cD_{\widehat{U}_{\*V}} \rightarrow \cD_{\*U_{\ctm}}$ such that
        \begin{equation}
            \hat{f}_{V_i}^{R}(\widehat{u}_{\*V}) = \*u_{\ctm},
        \end{equation}
        where
        \begin{equation}
            \label{eq:ncm-match-ctm-pu}
            P^{\widehat{M}} \left(\hat{f}_{V_i}^{R}(\widehat{U}_{\*V}) = \*u_{\ctm}\right) = P^{\cM_{\ctm}}(\*u_{\ctm}).
        \end{equation}
        
        \item By Lemma \ref{lem:discrete-to-binary}, construct $\hat{f}_{V_i}^{H}: \cD_{\Pai{V_i}} \times \cD_{\*U_{\ctm}} \rightarrow \cD_{V_i}$ such that
        \begin{align}
            \hat{f}_{V_i}^{H}(\pai{V_i}, \*u_{\ctm}) &= f_{V_i}^{\ctm}(\pai{V_i}, r_{V_i}) \\
            &= h_{V_i}^{(r_V)}(\pai{V_i})
            \label{eq:ncm-match-ctm-f}
        \end{align}
        where $r_{V_i}$ is the value in $\*u_{\ctm}$ corresponding to $V_i$.
    \end{enumerate}
    Combining these two components leads to MLP 
    \begin{equation}
        \hat{f}_{V_i}(\pai{V_i}, \widehat{u}_{\*V}) = \hat{f}_{V_i}^{H}\left(\pai{V_i}, \hat{f}_{V_i}^{R}(\widehat{u}_{\*V})\right)
    \end{equation}
    Although this does not exactly fit the structure in Def.~\ref{def:ff-nn-binary-step} because $\pai{V}$ is not included as an input into $\hat{f}_{V_i}^R$, this can be altered by simply having $\hat{f}_{V_i}^R$ accepting $\pai{V}$ as an input and outputting it alongside $\ui{\ctm}$ without changing it.
    
    By Eqs.~\ref{eq:ncm-match-ctm-pu} and \ref{eq:ncm-match-ctm-f}, the NCM $\widehat{M}$ is constructed to match $\cM_{\ctm}$ on all outputs. Hence, for any counterfactual query $\bm{\upvarphi}$, we have
    $$\cM_{\ctm} \models \bm{\upvarphi} \Leftrightarrow \widehat{M} \models \bm{\upvarphi}$$
    and therefore
    $$\cM^* \models \bm{\upvarphi} \Leftrightarrow \widehat{M} \models \bm{\upvarphi}.$$
\end{proof}

While Thm.~\ref{thm:lrepr} demonstrates the expressive power of an SCM parameterized by neural networks, we now consider its limitations. Notably, we show in the sequel that \ncms{} suffer from the same consequences implied by the CHT.

\begin{fact}[Causal Hierarchy Theorem (CHT) {\citep[Thm.~1]{bareinboim:etal20}}]
\label{fact:cht}
    Let $\Omega^*$ be the set of all SCMs. We say that Layer $j$ of the causal hierarchy for SCMs \emph{collapses} to Layer $i$ ($i < j$) relative to $\cM^* \in \Omega^*$ if $L_i(\cM^*) = L_i(\cM)$ implies that $L_j(\cM^*) = L_j(\cM)$ for all $\cM \in \Omega^*$. Then, with respect to the Lebesgue measure over (a suitable encoding of $L_3$-equivalence classes of) SCMs, the subset in which Layer $j$ of NCMs collapses to Layer $i$ is measure zero.
    \hfill $\blacksquare$ 
\end{fact}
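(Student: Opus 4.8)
The plan is to obtain the N-CHT as an immediate consequence of the non-neural CHT (Fact~\ref{fact:cht}) combined with the expressiveness theorem (Thm.~\ref{thm:lrepr}). The engine of the argument is the observation that, viewed as events parameterized by the ground-truth model $\cM^* \in \Omega^*$, ``Layer $j$ of SCMs collapses to Layer $i$'' and ``Layer $j$ of NCMs collapses to Layer $i$'' are in fact the \emph{same} subset of (equivalence classes of) SCMs; the measure-zero conclusion then transfers verbatim from Fact~\ref{fact:cht}. Note that one cannot argue directly from the CHT alone, since $\Omega \subseteq \Omega^*$ means NCM-collapse is an \emph{a priori} weaker (easier to satisfy) condition than SCM-collapse, so an inclusion in the wrong direction; Thm.~\ref{thm:lrepr} is what supplies the missing direction.

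First I would fix $\cM^* \in \Omega^*$ and indices $i < j$ in $\{1,2,3\}$ and compare the two collapse conditions. One inclusion is trivial: since $\Omega \subseteq \Omega^*$, if the implication $L_i(\cM^*) = L_i(\cM) \Rightarrow L_j(\cM^*) = L_j(\cM)$ holds for every $\cM \in \Omega^*$, then it certainly holds for every $\widehat{M} \in \Omega$, so SCM-collapse relative to $\cM^*$ implies NCM-collapse relative to $\cM^*$. The reverse inclusion is the crux. Suppose Layer $j$ does \emph{not} collapse to Layer $i$ for SCMs relative to $\cM^*$; then there is a witness $\cM \in \Omega^*$ with $L_i(\cM) = L_i(\cM^*)$ but $L_j(\cM) \neq L_j(\cM^*)$. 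Apply Thm.~\ref{thm:lrepr} to $\cM$ to obtain an NCM $\widehat{M} \in \Omega$ that is $L_3$-consistent with $\cM$, hence $L_k(\widehat{M}) = L_k(\cM)$ for every $k \in \{1,2,3\}$. In particular $L_i(\widehat{M}) = L_i(\cM^*)$ while $L_j(\widehat{M}) \neq L_j(\cM^*)$, so $\widehat{M}$ witnesses non-collapse for NCMs as well. Contrapositively, NCM-collapse relative to $\cM^*$ implies SCM-collapse relative to $\cM^*$, so the two collapse sets coincide.

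Finally I would invoke Fact~\ref{fact:cht}, which states that the set of $\cM^* \in \Omega^*$ for which Layer $j$ collapses to Layer $i$ for SCMs has Lebesgue measure zero in the stated encoding of $L_3$-equivalence classes; since this is the same set as the NCM-collapse set, the latter also has measure zero, which is exactly the claim. The only point requiring any care is compatibility with the ``$L_3$-equivalence class'' encoding underlying Fact~\ref{fact:cht}: one must note that replacing a witness $\cM$ by its $L_3$-consistent NCM $\widehat{M}$ does not leave its equivalence class (that is precisely what $L_3$-consistency means), so the two events are literally the same measurable set and no additional null sets are introduced. Beyond this bookkeeping the argument is purely set inclusion plus one appeal to Thm.~\ref{thm:lrepr}, so I do not anticipate a substantive obstacle.
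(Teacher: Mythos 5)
Your argument is correct and is essentially the paper's own proof of the neural CHT: the trivial direction from $\Omega \subseteq \Omega^*$, the reverse direction by turning any SCM witness of non-collapse into an $L_3$-consistent NCM via Thm.~\ref{thm:lrepr}, concluding that the SCM-collapse and NCM-collapse sets coincide and inheriting measure zero from the cited CHT. Your extra remark that the $L_3$-consistent replacement stays in the same $L_3$-equivalence class is a small but welcome piece of bookkeeping the paper leaves implicit; otherwise the two arguments are the same.
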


We prove a similar result for NCMs as a corollary of Fact \ref{fact:cht} and Thm.~{\ref{thm:lrepr}}.

\ncht*

\begin{proof}
    Since all \ncms{} are SCMs, an SCM-collapse with respect to $\cM^*$ also implies an \ncm{}-collapse with respect to $\cM^*$.
    
    If layer $j$ does not SCM-collapse to layer $i$ with respect to $\cM^*$, then there exists an SCM $\cM$ such that $L_i(\cM^*) = L_i(\cM)$ but $L_j(\cM^*) \neq L_j(\cM)$. By Thm.~\ref{thm:lrepr}, this implies that there exists an \ncm{} $\widehat{\cM}$ such that $L_i(\cM^*) = L_i(\widehat{\cM})$ but $L_j(\cM^*) \neq L_j(\widehat{\cM})$, which means that layer $j$ also does not \ncm{}-collapse to layer $i$.
    
    These two statements imply that the set of SCMs that undergo some form of SCM-collapse is equivalent to the set of SCMs that undergo some form of \ncm{}-collapse. Therefore, the result from Fact \ref{fact:cht} must also hold for \ncms{}.
\end{proof}

\subsection{Proof of Theorem \ref{thm:nscm-g-cons}}

The results proven in this section involve the incorporation of structural constraints, as introduced through the graphical treatment provided in \citep{pearl:95a}, and made it explicit and generalized for models with latent variables in \citep[Sec.1.~4]{bareinboim:etal20}. For convenience, we list the basic definitions below, but refer the readers to the references for more detailed explanations and further examples. 

\begin{definition}[Causal Diagram {\citep[Def.~13]{bareinboim:etal20}}]
    \label{def:scm-cg}
    Consider an SCM $\cM = \langle \*U, \*V, \cF, P(\*U)\rangle$. We construct a graph $\cG$ using $\cM$ as follows:
    \begin{enumerate}[label=(\arabic*)]
        \item add a vertex for every variable in $\*V$,
        \item add a directed edge ($V_j \rightarrow V_i$) for every $V_i, V_j \in \*V$ if $V_j$ appears as an argument of $f_{V_i} \in \cF$,
        \item add a bidirected edge ($V_j \xdasharrow[<->]{} V_i$) for every $V_i, V_j \in \*V$ if the corresponding $\Ui{V_i}, \Ui{V_j} \subseteq \*U$ are not independent or if $f_{V_i}$ and $f_{V_j}$ share some $U \in \*U$ as an argument.
    \end{enumerate}
    We refer to $\cG$ as the causal diagram induced by $\cM$ (or ``causal diagram of $\cM$'' for short).
    \hfill $\blacksquare$
\end{definition}

\begin{definition}[Confounded Component {\citep[Def.~14]{bareinboim:etal20}}]
    Let $G$ be a causal diagram. Let $\{\*C_1, \*C_2, \dots, \*C_k\}$ be a partition over the set of variables $\*V$, where $\*C_i$ is said to be a confounded component (C-component for short) of $G$ if for every $V_a, V_b \in \*C_i$, there exists a path made entirely of bidirected edges between $V_a$ and $V_b$ in $G$, and $\*C_i$ is maximal. We denote $\*C(V_a)$ as the C-component containing $V_a$.
    \hfill $\blacksquare$
\end{definition}

\begin{definition}[Semi-Markov Relative {\citep[Def.~15]{bareinboim:etal20}}]
    A distribution $P$ is said to be \emph{semi-Markov relative} to a graph $G$ if for any topological order $<$ of $G$ through its directed edges, $P$ factorizes as
    \begin{equation}
        P(\*v) = \prod_{V_i \in \*V} P(v_i \mid \pai{V_i}^+),
    \end{equation}
    where $\Pai{V_i}^+ = \Pai{}(\{V \in \*C(V_i) : V \leq V_i\})$, with $\leq$ referring to the topological ordering.
    \hfill $\blacksquare$
\end{definition}

\begin{definition}[Causal Bayesian Network (CBN) {\citep[Def.~16]{bareinboim:etal20}}]
    \label{def:cbn}
    Given observed variables $\mathbf{V}$, let $\mathbf{P}^*$ be the collection of all interventional distributions $P(\mathbf{V} \mid do(\mathbf{x}))$, $\mathbf{X} \subseteq \mathbf{V}$, $\mathbf{x} \in \cD_{\mathbf{X}}$. A causal diagram $\cG$ is a Causal Bayesian Network for $\mathbf{P}^*$ if for every intervention $do(\mathbf{X} = \mathbf{x})$ and every topological ordering $<$ of $\cG_{\overline{X}}$ through its directed edges,
    
    \begin{enumerate}[label=(\roman*)]
        \item $P(\mathbf{V} \mid do(\mathbf{X} = \mathbf{x}))$ is semi-Markov relative to $\cG_{\overline{\mathbf{X}}}$.
        
        \item For every $V_i \in \mathbf{V} \setminus \mathbf{X}$, $\mathbf{W} \subseteq \mathbf{V} \setminus (\Pai{i}^{\mathbf{X}+} \cup \mathbf{X} \cup \{V_i\})$:
        $$P(v_i \mid do(\mathbf{x}), \pai{i}^{\mathbf{x}+}, do(\mathbf{w})) = P(v_i \mid do(\mathbf{x}), \pai{i}^{\mathbf{x}+})$$,
        
        \item For every $V_i \in \mathbf{V} \setminus \mathbf{X}$, let $\Pai{i}^{\mathbf{X}+}$ be partitioned into two sets of confounded and unconfounded parents, $\Pai{i}^c$ and $\Pai{i}^u$ in $\cG_{\overline{\mathbf{X}}}$. Then
        \begin{align*}
            & P(v_i \mid do(\mathbf{x}), \pai{i}^c, do(\pai{i}^u)) \\
            &= P(v_i \mid do(\mathbf{x}), \pai{i}^c, \pai{i}^u)
        \end{align*}
    \end{enumerate}
    
    Here, $\Pai{V_i}^{\*x+} = \Pai{}(\{V \in \*C_{\overline{\*X}}(V_i) : V \leq V_i\})$, with $\*C_{\overline{\*X}}$ referring to the corresponding C-component in $\cG_{\overline{\*X}}$ and $\leq$ referring to the topological ordering.
    \hfill $\blacksquare$
\end{definition}

In fact, for any SCM $\cM$, its induced causal diagram and interventional distributions form a CBN. This means that the diagram encodes the qualitative constraints induced over the space of interventional distributions, despite the specific values that these distributions attain and the $\cF$ and $P(\*U)$ of $\cM$.

\begin{fact}[SCM-CBN $L_2$ connection {\citep[Thm.~4]{bareinboim:etal20}}]
    \label{fact:scm-cbn-l2}
    The causal diagram $\cG$ induced by SCM $\cM$ is a CBN for $L_2(\cM)$.
    \hfill $\blacksquare$
\end{fact}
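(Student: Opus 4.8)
The plan is to verify directly that the interventional family $L_2(\cM)$ satisfies the three defining conditions (i)--(iii) of a CBN (Def.~\ref{def:cbn}) relative to the induced diagram $\cG$ (Def.~\ref{def:scm-cg}), working entirely from the structural semantics of Def.~\ref{def:l2-semantics}. The central tool is that conditioning on the exogenous variables $\*U$ makes every endogenous variable a deterministic function of its arguments, so that each post-intervention law can be written as $P^{\cM}(\*v \mid do(\*x)) = \sum_{\*u} P(\*u) \prod_{V_i \in \*V \setminus \*X} \Indicator\{(V_i)_{\*x}(\*u) = v_i\}$. The diagram's edges (Def.~\ref{def:scm-cg}) are precisely the bookkeeping for this representation: a directed edge $V_j \to V_i$ records that $V_j$ is an argument of $f_{V_i}$, and a bidirected edge records that $f_{V_i}, f_{V_j}$ share, or have dependent, exogenous inputs.

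First I would reduce every statement to the observational law of a single mutilated model. For a fixed $do(\*X = \*x)$, the mechanism set $\cF_{\*x}$ defines an SCM $\cM_{\*x}$ whose induced diagram is exactly $\cG_{\overline{\*X}}$ (all edges into $\*X$ removed, since $f_X$ is now a constant and uses neither its parents nor its exogenous input). Hence it suffices to prove each condition for the observational distribution of $\cM_{\*x}$ relative to $\cG_{\overline{\*X}}$ and then quantify over all $\*x$. In parallel I would establish a soundness lemma: $m$-separation in $\cG_{\overline{\*X}}$ implies the corresponding conditional independence in every distribution induced by $\cM_{\*x}$. This holds because each $V_i$ is a function of $\Pai{V_i} \cup \Ui{V_i}$ and the bidirected edges record exactly the shared or dependent exogenous sources, so any tested independence reduces to an independence among disjoint collections of exogenous variables, which is guaranteed by construction.

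The main obstacle is condition (i), the semi-Markov factorization $P(\*v) = \prod_i P(v_i \mid \pai{V_i}^+)$ with $\Pai{V_i}^+ = \Pai{}(\{V \in \*C(V_i): V \le V_i\})$. Here I would partition $\*V$ into the $c$-components of $\cG_{\overline{\*X}}$ and use that the exogenous variables split into groups that are independent across distinct $c$-components while being shared only within a component --- this cross-component independence, argued from the bidirected-edge construction, is the crux. Summing the deterministic indicator product over $\*u$, the cross-component independence lets the sum factorize over $c$-components; within each component, marginalizing the shared exogenous variables along the topological order collapses the factor into the extended-parent conditionals $P(v_i \mid \pai{V_i}^+)$. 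Assembling these yields the claimed factorization. I expect the bulk of the technical work --- tracking the exogenous grouping and justifying the within-component marginalization --- to live in this step.

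Conditions (ii) and (iii) are then comparatively routine consequences of the soundness lemma. For (ii), since $\*W$ is chosen disjoint from the extended parents $\Pai{i}^{\*x+}$, the node $V_i$ is $m$-separated from $\*W$ given $\pai{i}^{\*x+}$ in the further-mutilated graph $\cG_{\overline{\*X}\,\overline{\*W}}$, so intervening on $\*W$ is irrelevant to the already-separated conditional. For (iii), the unconfounded parents $\Pai{i}^u$ share no exogenous variable with $V_i$ (no bidirected edge), so no confounding path exists and the only connection is the direct edge; that edge behaves identically whether $\pai{i}^u$ is observed or set, giving the same conditional for $V_i$ under observation and under $do(\pai{i}^u)$. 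Quantifying the three verified conditions over all interventions $do(\*X=\*x)$ establishes that $\cG$ is a CBN for $L_2(\cM)$.
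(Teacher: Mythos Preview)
The paper does not prove this statement at all: it is stated as a \emph{Fact} and explicitly attributed to \citep[Thm.~4]{bareinboim:etal20}, with the filled tombstone immediately following the statement and no accompanying argument. In other words, the paper's ``proof'' is a citation, not a derivation.

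Your proposal sketches a direct verification of the three CBN conditions from the structural semantics, which is indeed the shape of the argument in the cited source. As a proof plan it is reasonable --- the reduction to the mutilated model $\cM_{\*x}$, the $m$-separation soundness lemma, and the $c$-component factorization for condition (i) are the right ingredients. But none of this appears in the present paper, so there is nothing to compare against here; you have reconstructed a proof that the authors deliberately outsourced. If your goal is to match the paper, a one-line appeal to \citep[Thm.~4]{bareinboim:etal20} is all that is required.
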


We can now show that, indeed, all of the CBN constraints implied by a causal diagram $\cG$ are encoded in a $\cG$-constrained \ncm{} constructed via Def.~\ref{def:g-cons-nscm}.

\begin{lemma}
    \label{lem:nscm-g-match}
    Let $\widehat{M}(\bm{\theta}) = \langle \widehat{\*U}, \*V, \widehat{\cF}, \widehat{P}(\widehat{\*U})\rangle$ be a $\cG$-constrained \ncm{}. Let $\widehat{\cG}$ be the causal diagram induced by $\widehat{M}$. Then $\widehat{\cG} = \cG$.
    \hfill $\blacksquare$
    \begin{proof}
        Considering Def. \ref{def:scm-cg} in the context of $\widehat{\cG}$'s construction, note that by step 1 all of the vertices match, simply having one for each variable in $\*V$.
        
        Step 2 adds a directed edge from $V_i$ to $V_j$ if $\hat{f}_{V_i}$ has $V_j$ as an argument. By step 2 of Def. \ref{def:g-cons-nscm}, $\pai{V_j}$ will contain $V_i$ if and only if there was a directed edge from $V_i$ to $V_j$ in $\cG$. This implies that $\hat{f}_{V_j}$ will contain $V_i$ as an argument if and only if there was a directed edge from $V_i$ to $V_j$ in $\cG$, so the directed edges must also match in $\widehat{\cG}$.
        
        Finally, step 3 of Def. \ref{def:scm-cg} states that a bidirected edge between $V_i$ and $V_j$ is added to $\widehat{\cG}$ when $\hat{f}_{V_i}$ and $\hat{f}_{V_j}$ share some $\widehat{U} \in \widehat{\*U}$ as an argument or have arguments from $\widehat{\*U}$ that are not independent. Def. \ref{def:nscm} ensures that all variables in $\widehat{\*U}$ are independent, so a shared $\widehat{U} \in \widehat{\*U}$ between functions in $\widehat{\cF}$ is the only way a bidirected edge would be generated in $\widehat{\cG}$. Step 1 of Def. \ref{def:g-cons-nscm} constructs $\widehat{U}$ such that it contains some $\widehat{U}_{\*C}$ if and only if $\*C$ is a $C^2$-component in $\cG$. If $V_i, V_j \in \*V$ are connected by a bidirected edge in $\cG$, then there must exist some $C^2$-component $\*C$ in $\cG$ such that $V_i, V_j \in \*C^*$, so there must exist $\widehat{U}_{\*C} \in \widehat{\*U}$. Hence, since $\widehat{U}_{\*C}$ is shared by both $\hat{f}_{V_i}$ and $\hat{f}_{V_j}$, the corresponding bidirected edge in $\cG$ must also match in $\widehat{\cG}$.
        
        Therefore, since all vertices and edges match between $\cG$ and $\widehat{\cG}$, we can conclude that $\cG = \widehat{\cG}$.
    \end{proof}
\end{lemma}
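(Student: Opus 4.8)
The plan is to prove the graph equality componentwise, checking that $\widehat{\cG}$ and $\cG$ agree on vertices, on directed edges, and on bidirected edges, following the three clauses of the causal-diagram construction (Def.~\ref{def:scm-cg}) applied to $\widehat{M}$. Clause~(1) gives both graphs a vertex for each member of $\*V$, so the vertex sets coincide immediately, since $\widehat{M}$ is an \ncm{} over $\*V$. For directed edges, clause~(2) places $V_j \to V_i$ in $\widehat{\cG}$ exactly when $V_j$ is an \emph{endogenous} argument of $\hat{f}_{V_i}$; by construction the endogenous arguments of $\hat{f}_{V_i}$ are precisely $\Pai{V_i}$, and Def.~\ref{def:g-cons-nscm}(2) sets $V_j \in \Pai{V_i}$ iff $V_j \to V_i$ is present in $\cG$. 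Hence the directed edges match one-to-one. The exogenous arguments $\Ui{V_i} \subseteq \widehat{\*U}$ never induce directed edges, so they are irrelevant to this clause.

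The substantive work is clause~(3), the bidirected edges. First I would simplify the disjunctive trigger in Def.~\ref{def:scm-cg}(3): since Def.~\ref{def:nscm} stipulates that all variables in $\widehat{\*U}$ are mutually independent $\unif(0,1)$ draws, the sets $\Ui{V_i}$ and $\Ui{V_j}$ fail to be independent if and only if they literally share a common element. Thus both disjuncts of clause~(3) collapse to the single condition $\Ui{V_i} \cap \Ui{V_j} \neq \emptyset$, i.e.\ $\hat{f}_{V_i}$ and $\hat{f}_{V_j}$ take a common latent $\widehat{U}_{\*C}$ as argument. Unfolding $\Ui{V_i} = \{\widehat{U}_{\*C} \in \widehat{\*U} : V_i \in \*C\}$ together with Def.~\ref{def:g-cons-nscm}(1), which populates $\widehat{\*U}$ with exactly one $\widehat{U}_{\*C}$ per $C^2$-component $\*C$ of $\cG$, this condition becomes: \emph{there exists a $C^2$-component $\*C$ of $\cG$ with $V_i, V_j \in \*C$}. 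It then remains to show this is equivalent to $V_i \leftrightarrow V_j$ being a bidirected edge of $\cG$.

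The forward direction is immediate from the definition of a $C^2$-component as a set whose every pair is bidirected-connected in $\cG$: if $V_i, V_j$ lie in a common $C^2$-component then $V_i \leftrightarrow V_j \in \cG$, so $\widehat{\cG}$ introduces no spurious bidirected edge. The reverse direction is where I expect the one genuine (if small) subtlety: given a single bidirected edge $V_i \leftrightarrow V_j$ of $\cG$, I must produce a $C^2$-component containing both endpoints, so that the shared latent actually exists in $\widehat{\*U}$. The set $\{V_i, V_j\}$ is a size-two clique in the bidirected subgraph of $\cG$, and since $\*V$ is finite one can greedily enlarge it to a maximal bidirected clique $\*C \supseteq \{V_i, V_j\}$; by the definition of $C^2$-component, such a maximal clique \emph{is} a $C^2$-component, and it still contains $V_i$ and $V_j$. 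This clique-extension step is the only place the argument is more than direct substitution of definitions, and it relies solely on finiteness of $\*V$. With all three clauses verified, $\widehat{\cG}$ and $\cG$ share identical vertices and identical directed and bidirected edges, giving $\widehat{\cG} = \cG$.
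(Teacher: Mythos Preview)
Your proposal is correct and follows essentially the same componentwise approach as the paper: verify agreement on vertices, directed edges, and bidirected edges by unfolding Def.~\ref{def:scm-cg} against Defs.~\ref{def:nscm} and~\ref{def:g-cons-nscm}. You are in fact slightly more thorough than the paper's version, since you argue both directions of the bidirected-edge equivalence explicitly (including the clique-extension step showing any bidirected pair lies in some $C^2$-component), whereas the paper only spells out the $\cG \Rightarrow \widehat{\cG}$ direction.
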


\nscmgcons*

\begin{proof}
    This follows directly from Lemma \ref{lem:nscm-g-match} and Fact \ref{fact:scm-cbn-l2}.
\end{proof}

\subsection{Proof of Theorem \ref{thm:nscm-g-uat}}

For this proof, we expand on the technical results provided in \citet{zhang:bareinboim21b}. The paper works with a subclass of SCMs known as \emph{discrete SCMs} and proves strong results about its expressiveness. Similar to this paper, we will assume that in any SCM, the variables in $\mathbf{U}$ are independent, and unobserved confounding is modeled by sharing the same variable from $\mathbf{U}$ in the functions of multiple variables in $\mathbf{V}$.

\begin{definition}[Discrete SCM {\citep[Def.~2]{zhang:bareinboim21b}}]
    An SCM $\cM = \langle \*U, \*V, \cF, P(\*U) \rangle$ is said to be a discrete SCM if $\cD_U$ is discrete for all $U \in \*U$ and $\cD_V$ is both discrete and finite for all $V \in \*V$.
    \hfill $\blacksquare$
\end{definition}

\begin{fact}[{\citep[Thm.~1]{zhang:bareinboim21b}}]
    \label{fact:scm-to-discrete}
    Let $\Omega^*$ be the set of all SCMs and $\Omega'$ the set of discrete SCMs. For any SCM $\cM^* \in \Omega^*$ inducing causal diagram $\cG$, there exists a discrete SCM $\cM' \in \Omega'$ with finite $|\cD_{U}|$ for all $U \in \*U$ such that $\cM'$ is $\cG$-consistent and is $L_3$-consistent with $\cM^*$.
    \hfill $\blacksquare$
\end{fact}

We can now combine these results to achieve an expressiveness result on the counterfactual level for NCMs.

\lgrepr*

\begin{proof}
    By Fact \ref{fact:scm-to-discrete}, there must exist a discrete SCM $\cM' = \langle \*U' , \*V, \cF', P(\*U') \rangle$ such that $\cM'$ is $\cG$-consistent, is $L_3$-consistent with $\cM^*$ (implying $L_2$-consistency), and $|\cD_U|$ is finite for all $U \in \*U$. Let $\widehat{M} = \langle \widehat{\*U}, \*V, \widehat{\cF}, P(\widehat{\*U})\rangle$ be a $\cG$-NCM such that each $\hat{f}_{V} \in \widehat{\cF}$ is an MLP composed of smaller MLPs as defined next.
    
    For $U \in \*U'$, let $\*V_{U} \subseteq \*V$ denote the set of endogenous variables such that for all $V \in \*V_{U}$, $\hat{f}_V$ takes $U$ as an argument. Let $\bbC = \{\*C_1, \dots, \*C_k\}$, with $\*C_1, \dots \*C_k \subseteq \*V$ denote the set of $C^2$-components of $\cG$. Let $\*U'_{\*C_1}, \dots, \*U'_{\*C_k}$ be a partition over $\*U'$ such that if $U \in \*U'_{\*C_i}$, then $\*V_{U} \subseteq \*C_i$. If for any $U \in \*U'$, there exist multiple components such that this applies, one can be chosen arbitrarily. For each $V$, let $\bbC(V) \subseteq \bbC$ denote the set of components that contain $V$. Then we note that
    $$\*U'_{V} \subseteq \bigcup_{\*C \in \bbC(V)}\*U'_{\*C},$$
    where $\*U'_{V} \subseteq \*U'$ denotes the exogenous parents of $V$ in $\cM'$.
    
    By construction of the $\cG$-NCM in Def.~\ref{def:g-cons-nscm}, $\widehat{\*U}$ consists of a $\unif(0, 1)$ random variable $\widehat{U}_{\*C_i}$ for each $C^2$-component $\*C_i$. By Lemma \ref{lem:unif-to-pmf}, we can construct MLP $\hat{f}^{(\*U')}_{\*C_i}$ mapping $\widehat{U}_{\*C_i}$ to $\*U'_{\*C_i}$ for each $\*C_i$. Then by Lemma \ref{lem:discrete-to-binary}, we can construct MLP $\hat{f}_{V}^{(\cF')}$ to map $\*U'_{V}$ and $\Pai{V}$ to $V$, matching $f'_V$. Combining these two results, $\hat{f}_V(\pai{V}, \widehat{\*u}_V)$ is defined as $\hat{f}_V^{(\cF')}(\pai{V}, \hat{f}^{(\*U')}_{{\*C_{i_1}}}(\widehat{u}_{\*C_{i_1}}), \dots , \hat{f}^{(\*U')}_{\*C_{i_{\ell}}}(\widehat{u}_{\*C_{i_{\ell}}}))$, where $\bbC(V) = \{\*C_{i_1}, \dots, \*C_{i_{\ell}}\}$. \footnote{There are a few subtleties here to align with Def.~\ref{def:ff-nn-binary-step}. First, although $\bigcup_{\*C \in \bbC(V)} \*U'_{\*C}$ may contain elements of $\*U'$ not in $\*U'_V$, $\hat{f}_{V}^{(\cF')}$ can be constructed to accept them as input and not use them (weight of 0, identity activation function). Secondly, although Def.~\ref{def:ff-nn-binary-step} does not allow additional inputs inbetween layers, $\pai{V}$ can simply be provided as an input to $\hat{f}^{(\*U')}_{\*C}$ and passed forward to the next layer without modification (weight of 1, identity activation function). Thirdly, while Def.~\ref{def:ff-nn-binary-step} is not defined to have multiple nested MLPs in the same layer, the same result can be achieved by nesting them iteratively and passing the relevant outputs through the nested layers without modification. The presentation in the proof is made for the sake of clarity.}
    
    Altogether, $\widehat{\cF}^{(\*U')} = \{\hat{f}^{(\*U')}_{\*C} : \*C \in \bbC\}$ collectively forms a neural mapping from $\widehat{\*U}$ to $\*U'$, and $\widehat{\cF}^{(\cF')} = \{\hat{f}^{(\cF')}_V : V \in \*V\}$ collectively forms a neural mapping from $\*U'$ to $\*V$. We use the notation $\cF_{\*x}(\*U) \models \*y$ to be equivalent to the statement $\*Y_{\*x}(\*U) = \*y$, which is the random event (w.r.t. $\*U$) that the variables of $\*Y$ under intervention $\*X = \*x$ takes value $\*y$.
    
    Then, for any interventional query $Q = P(\*y \mid do(\*x)) = P \left(\*Y_{\*x} = \*y \right)$, we have
    \begin{align*}
        P^{\widehat{M}}(\*y \mid do(\*x)) &= P\left( \widehat{\*Y}_{\*x}(\widehat{\*U}) = \*y \right) \\
        &= P\left(\widehat{\cF}_{\*x}(\widehat{\*U}) \models \*y \right) \\
        &= P\left(\widehat{\cF}_{\*x}^{(\cF')} \left( \widehat{\cF}^{(\*U')}(\widehat{\*U}) \right) \models \*y \right) \\
        &= P\left(\widehat{\cF}_{\*x}^{(\cF')} \left( \*U' \right) \models \*y \right) \\
        &= P\left(\cF'_{\*x} \left( \*U' \right) \models \*y \right) \\
        &= P\left(\*Y'_{\*x} (\*U') = \*y \right) \\
        &= P^{\cM'}(\*y \mid do(\*x)) \\
        &= P^{\cM^*}(\*y \mid do(\*x)).
    \end{align*}
    
    Hence, we have shown that for any SCM $\cM^*$ inducing causal diagram $\cG$, we can construct a $\cG$-NCM that matches $\cM^*$ on the second layer.
\end{proof}

\subsection{Proofs of Theorem \ref{thm:nscm-id-equivalence} and Corollaries \ref{thm:dual-graph-id} and \ref{thm:nscm-id-correctness}}

\idequivalence*

\begin{proof}
    Since $\Omega \subset \Omega^*$, it must be the case that identifiability in $\cG$ and $P(\*V)$ implies identifiability in $\Omega(\cG)$ and $P(\*V)$. Identifiability in $\cG$ and $P(\*V)$ implies that all pairs of SCMs that match in $\cG$ and $P(\*V)$ will also match in $Q$. This means that for any SCM $\cM^*$ that induces $P(\*V)$ and $\cG$, we must have $P^{\cM_1}(\*y \mid do(\*x)) = P^{\cM_2}(\*y \mid do(\*x))$ for all pairs of SCMs $\cM_1, \cM_2 \in \Omega^*$ that induce $\cG$ such that $L_1(\cM_1) = L_1(\cM_2) = P(\*V)$. Given that all NCMs are SCMs, and $\cG$-constrained NCMs are $\cG$-consistent by Thm.~\ref{thm:nscm-g-cons}, the set of all $\cG$-consistent SCMs includes the set of all $\cG$-constrained NCMs. Hence, this match in $Q$ should also hold if $\cM_1$ and $\cM_2$ are NCMs.
    
    If $Q$ is not identifiable from $\cG$ and $P(\*V)$, there must exist $\cM_1, \cM_2 \in \Omega^*$ such that $\cM_1$ and $\cM_2$ both induce $\cG$, $L_1(\cM_1) = L_1(\cM_2) = P(\*V) > 0$, but $P^{\cM_1}(\*y \mid do(\*x)) \neq P^{\cM_2}(\*y \mid do(\*x))$. Theorem \ref{thm:nscm-g-uat} states that there must exist NCMs $\widehat{M}_1, \widehat{M}_2 \in \Omega(\cG)$ such that $L_2(\widehat{\cM_1}) = L_2(\cM_1)$ and $L_2(\widehat{\cM_2}) = L_2(\cM_2)$. This implies that, for any SCM $\cM^*$ inducing $\cG$ and observational distribution $P(\*V)$, there exist these two $\cG$-constrained NCMs that are $L_1$-consistent with $\cM^*$ (since they are $L_2$-consistent with $\cM_1$ and $\cM_2$ respectively), yet they do not match in $Q$ (i.e. $P^{\widehat{M}_1}(\*y \mid do(\*x)) \neq P^{\widehat{M}_2}(\*y \mid do(\*x))$). In other words, if $Q$ is not identifiable from $\cG$ and $P(\*V)$, then it is also not identifiable from $\Omega(\cG)$ and $P(\*V)$.
\end{proof}

\dualid*

\begin{proof}
    By Thm.~\ref{thm:nscm-g-uat}, there must exist a $\cG$-constrained NCM $\widehat{M}' \in \Omega(\cG)$ such that $\widehat{M}'$ is $L_2$-consistent with $\cM^*$, implying that $\widehat{M}'$ is $L_1$-consistent with $\cM^*$ and $P^{\widehat{M}'}(\*y \mid do(\*x)) = P^{\cM^*}(\*y \mid do(\*x))$. By Thm.~\ref{thm:nscm-id-equivalence}, since $Q$ is identifiable from $\cG$ and $P(\*V)$, it must be also be identifiable from $\Omega(\cG)$ and $P(\*V)$. This means that all $\cG$-constrained NCMs that agree on $P(\*V)$ must also agree on $Q$ with each other, so it must be the case that $P^{\widehat{M}}(\*y \mid do(\*x)) = P^{\widehat{M}'}(\*y \mid do(\*x))$ for any other $\cG$-constrained NCM $\widehat{M} \in \Omega(\cG)$ such that $L_1(\widehat{M}) = P(\*V)$. This means that $P^{\widehat{M}}(\*y \mid do(\*x)) = P^{\cM^*}(\*y \mid do(\*x))$ for arbitrary $\cG$-constrained NCMs $\widehat{M}$ that match $P(\*V)$. Finally, $P^{\widehat{M}}(\*y \mid do(\*x))$ can be computed from $\widehat{M}$ using Def.~\ref{def:l2-semantics}.
\end{proof}

\completeness*

\begin{proof}
    Theorem \ref{thm:nscm-id-equivalence} states that $Q$ must be identifiable from $\cG$ and $P^*(\*V)$ if and only if for all pairs of $\cG$-consistent NCMs, $\widehat{M}_1, \widehat{M}_2 \in \Omega(\cG)$ with $L_1(\widehat{M}_1) = L_1(\widehat{M}_2) = P^*(\*V) > 0$, $P^{\widehat{M}_1}(\*y \mid do(\*x)) = P^{\widehat{M}_2}(\*y \mid do(\*x))$. This holds if and only if $P^{\widehat{M}(\bm{\theta}^*_{\min})}(\*y \mid do(\*x)) = P^{\widehat{M}(\bm{\theta}^*_{\max})}(\*y \mid do(\*x))$. If they are not equal, then $\widehat{M}(\bm{\theta}^*_{\min})$ and $\widehat{M}(\bm{\theta}^*_{\max})$ are a counterexample of two NCMs that do not match for $Q$. Otherwise, if they are equal, then all other NCMs must also induce the same answer for $Q$. A result for $Q$ that is less than $P^{\widehat{M}(\bm{\theta}^*_{\min})}(\*y \mid do(\*x))$ or greater than $P^{\widehat{M}(\bm{\theta}^*_{\max})}(\*y \mid do(\*x))$ would contradict the optimality of $\bm{\theta}^*_{\min}$ and $\bm{\theta}^*_{\max}$.
    
    If $Q$ is identifiable, then Corollary \ref{thm:dual-graph-id} guarantees that any NCM that induces $P^*(\*V)$ and $\cG$ will induce the correct $P^{\cM^*}(\*y \mid do(\*x))$.
\end{proof}

\subsection{Proof of Corollary \ref{thm:markovid}}

In our discussion of the identification problem in Sec.~\ref{sec:neural-id}, we stated Corol.~\ref{thm:markovid} as the solution to a special class of models known as Markovian. In SCMs, Markovianity implies that all variables in $\*U$ are independent and not shared between functions. Correspondingly, this means that no variable in $\widehat{\*U}$ of an NCM can be associated with more than one function. In the causal diagram, this implies that there are no bidirected edges.

We emphasize that Markovianity is a strong constraint in many settings, and the following corollary from \citep{bareinboim:etal20} illustrates that identification in the Markovian setting is quite trivial.

\begin{fact}[{\citep[Corol.~2]{bareinboim:etal20}}]
    \label{fact:adjustment-id}
    In Markovian models (i.e., models without unobserved confounding), for any treatment $\*X$ and outcome $\*Y$, the interventional distribution $P(\*Y \mid do(\*x))$ is always identifiable and given by the expression
    \begin{equation}
        \label{eq:adjustment}
        P(\*Y \mid do(\*x)) = \sum_{\*z} P(\*Y \mid \*x, \*z) P(\*z),
    \end{equation}
    where $\*Z$ is the set of all variables not affected by the action $\*X$ (non-descendants of $\*X$).
    \hfill $\blacksquare$
\end{fact}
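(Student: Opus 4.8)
The plan is to derive the adjustment formula (equivalently, the truncated or $g$-formula) directly from the intervention semantics of Def.~\ref{def:l2-semantics}, using the two factorizations that Markovianity imposes on the observational model and on every mutilated model. The first ingredient is the \emph{causal Markov factorization}: because the exogenous variables of a Markovian model are mutually independent and each $f_{V_i}$ depends only on $\pai{V_i}$ and its private exogenous input, the induced distribution obeys $P(\*v) = \prod_{V_i \in \*V} P(v_i \mid \pai{V_i})$, where $\pai{V_i}$ are the observed parents of $V_i$ in $\cG$. The second ingredient is the \emph{truncated factorization}: applying $do(\*X=\*x)$ via Def.~\ref{def:l2-semantics} replaces each $f_X$ by the constant $x$, which only deletes the arrows into $\*X$ and introduces no new confounding, so the mutilated model is again Markovian and factorizes as $P(\*v \mid do(\*x)) = \prod_{V_i \in \*V \setminus \*X} P(v_i \mid \pai{V_i})$ for every $\*v$ consistent with $\*x$ (and is $0$ otherwise).

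Next I would exploit the graphical structure. I partition $\*V$ into the treatments $\*X$, the non-descendants $\*Z$ of $\*X$, and the strict descendants $\*D$ of $\*X$ (so that $\*Y \subseteq \*D$, up to the boundary case noted below). Two elementary DAG facts drive the argument. First, a parent of a non-descendant of $\*X$ is itself a non-descendant and in particular lies outside $\*X$; hence $\*Z$ is ancestrally closed, with $\pai{Z} \subseteq \*Z$ for every $Z \in \*Z$, so the restriction of the factorization to $\*Z$ is a self-contained DAG factorization and $\prod_{Z \in \*Z} P(z \mid \pai{Z}) = P(\*z)$. Second, for fixed values of $\*X$ and $\*Z$, summing the descendant factors over all configurations of $\*D$ telescopes to $1$, i.e.\ $\sum_{\*d} \prod_{D \in \*D} P(d \mid \pai{D}) = 1$, obtained by peeling off the topologically last descendant and summing it out repeatedly.

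With these in hand I would assemble the formula. Writing the observational factorization as $P(\*x,\*z,\*d) = P(\*z)\left(\prod_{X \in \*X} P(x \mid \pai{X})\right)\prod_{D \in \*D} P(d \mid \pai{D})$ and using the telescoping identity to evaluate $P(\*x,\*z) = P(\*z)\prod_{X \in \*X} P(x \mid \pai{X})$, the conditional collapses to $P(\*y \mid \*x,\*z) = \sum_{\*d \setminus \*y} \prod_{D \in \*D} P(d \mid \pai{D})$, where the sum runs over descendant configurations holding $\*y$ fixed. On the interventional side, grouping the truncated factorization as $P(\*v \mid do(\*x)) = P(\*z)\prod_{D \in \*D} P(d \mid \pai{D})$ and marginalizing everything except $\*Y$ gives $P(\*y \mid do(\*x)) = \sum_{\*z} P(\*z)\sum_{\*d \setminus \*y}\prod_{D \in \*D} P(d \mid \pai{D}) = \sum_{\*z} P(\*y \mid \*x,\*z)\,P(\*z)$, which is exactly Eq.~\ref{eq:adjustment}; identifiability then follows because every factor on the right-hand side is a functional of the observational $P(\*v)$ alone.

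The main obstacle is not any single step but the careful bookkeeping of this three-way partition and of where $\*Y$ sits relative to $\*X$: the clean derivation assumes $\*Y$ consists of strict descendants of $\*X$, and the routine-but-easily-botched part is verifying the telescoping and ancestral-closure sums under a fixed topological order while handling the boundary case in which part of $\*Y$ (or of $\*Z$) is unaffected by the action. A secondary care point is justifying the truncated factorization rigorously from Def.~\ref{def:l2-semantics}: one must first check that mutilation preserves Markovianity (no shared or dependent exogenous variables are created) before invoking the causal Markov factorization on the mutilated model.
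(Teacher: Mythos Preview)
The paper does not prove this statement at all: it is stated as a \emph{Fact} cited from \citep[Corol.~2]{bareinboim:etal20} and invoked verbatim to establish Corollary~\ref{thm:markovid}. There is nothing to compare your argument against within this paper.

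That said, your derivation is the standard one and is correct in outline: establish the causal Markov factorization, apply the truncated factorization under $do(\*x)$, partition $\*V$ into $\*X$, non-descendants $\*Z$, and strict descendants $\*D$, and then match the two expressions using ancestral closure of $\*Z$ and the telescoping sum over $\*D$. The boundary case you flag---where some $Y_i$ is a non-descendant of $\*X$ and hence lies in $\*Z$---is handled automatically by the formula since $P(\*y \mid \*x, \*z)$ vanishes whenever $\*y$ and $\*z$ disagree on such a $Y_i$, so the sum over $\*z$ collapses correctly; you could make this explicit rather than leave it as a caveat. Your remark about verifying that mutilation preserves Markovianity is well placed but immediate: deleting mechanisms cannot introduce shared exogenous inputs.
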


In other words, every query in a Markovian setting can be identified via Eq.~\ref{eq:adjustment}, also known as the backdoor adjustment formula. Naturally, this result extends to identifiability via mutilation in \ncm{}s due to the connection between neural identification and graph identification.

\markovid*

\begin{proof}
    Fact \ref{fact:adjustment-id} shows that any query $P(\*y \mid do(\*x))$ is identifiable from $\cG$ and $P(\*v)$ in Markovian settings. Hence, by Thm.~\ref{thm:nscm-id-equivalence}, it must also be identifiable from $\Omega(\cG)$ and $P(\*v)$. Moreover, Corol.~\ref{thm:dual-graph-id} states that the effect can be computed via the mutilation procedure on a proxy $\cG$-constrained NCM $\widehat{M}$ with $L_1(\widehat{M}) = P(\*v)$.
\end{proof}

Without the Markovianity assumption, the identification problem becomes significantly more challenging. As we show in Example \ref{ex:ncht} from Appendix \ref{app:examples-expr}, a query can be non-ID even in a two variable case whenever unobserved confounding cannot be ruled out.

\subsection{Derivation of Results in Section~\ref{sec:ncm-fitting}}

Recall the description of our choice of NCM architecture from Eq.~\ref{eq:ncm-differentiable}:

\begin{equation}
    \begin{cases}
        \mathbf V &:= \mathbf V, \hspace{+0.05in}
        \widehat{\*U} := \{ U_{\mathbf C}: \mathbf C \in C^2(\mathcal G) \} \cup \{ G_{V_i} : V_i \in \mathbf V \}, \\
        \widehat{\mathcal F} &:= \left\{ f_{V_i} := \arg \max_{j \in \{0, 1\}} g_{j, V_i} + \begin{cases}
            \log \sigma(\phi_{V_i}(\pai{V_i}, \ui{V_i}^c; \theta_{V_i})) & j = 1 \\
            \log (1 - \sigma(\phi_{V_i}(\pai{V_i}, \ui{V_i}^c; \theta_{V_i}))) & j = 0
        \end{cases} \right\}, \\
        P(\widehat{\mathbf U}) &:= \{ U_{\mathbf C} \sim \mathrm{Unif}(0, 1) : U_{\mathbf C} \in \mathbf U \} \; \cup \\
         & \hspace{0.17in}  \{ G_{j, V_i} \sim \mathrm{Gumbel}(0, 1) : V_i \in \mathbf V, j \in \{0, 1\} \}, 
    \end{cases} \tag{\ref{eq:ncm-differentiable}}
\end{equation}

Using this architecture, we show the derivation of Eq.~\ref{eq:ncm-mc-sampling}, starting with sampling from $P(\*v)$. Let $\mathbf U^c$ and $\mathbf G$ denote the latent $C^2$-component variables and Gumbel random variables \cite{gumbel1954statistical}, respectively. The formulation in Eq.~\ref{eq:ncm-differentiable} allows us to compute the probability mass of a datapoint $\mathbf v$  as:
\begin{align*}
    &P^{\widehat M(\mathcal G; \bm{\theta})}(\mathbf v) \\
    &= \mathop{\mathbb E}_{P(\mathbf u)} [ \mathbbm 1[\mathcal F(\mathbf u) = \mathbf v]] \\
    &= \mathop{\mathbb E}_{P(\mathbf u^c) P(\mathbf g)} \left[ \prod_{V_i \in \mathbf V} \mathbbm 1 \left[f_{V_i} (\pai{V_i}, \mathbf u_{V_i}) = v_i \right] \right] \\
    &= \mathop{\mathbb E}_{P(\mathbf u^c)} \left[ \prod_{V_i \in \mathbf V} \tilde{\sigma}_{v_i}(\phi_i(\pai{V_i}, \mathbf{u}_{V_i}^c; \theta_{V_i})) \right], \numberthis
\end{align*}
where $\tilde{\sigma}_{v_i}(x) := \begin{cases} \sigma(x) & v_i = 1 \\ 1 - \sigma(x) & v_i = 0 \end{cases}$. We can then derive a Monte Carlo estimator given samples $\{\ui{j}^c\}_{j=1}^m \sim P(\mathbf U^c)$:
\begin{align}
    \hat P_m^{\widehat M(\mathcal G; \bm{\theta})}(\mathbf v) = \frac{1}{m} \sum_{j = 1}^m \prod_{V_i \in \mathbf V} \tilde{\sigma}_{v_i}(\phi_{V_i}(\pai{V_i}, \ui{j, V_i}^c; \theta_{V_i})).
\end{align}

One may similarly estimate the interventional query, $P(\mathbf y | do(\mathbf x))$, where $\mathbf y, \mathbf x$ are the values of the variable sets $\mathbf Y, \mathbf X \subseteq \mathbf V$. We first compute an estimable expression for $P^{\widehat M}(\mathbf y | do(\mathbf x))$:

\begin{align*}
    &P^{\widehat M(\cG, \bm{\theta})}(\mathbf y | do(\mathbf x)) \\
    &= \sum_{\mathbf V \setminus (\mathbf Y \cup \mathbf X)} P^{\widehat M}(\mathbf v | do(\mathbf x)) \\
    &= \mathop{\mathbb E}_{P(\mathbf u)}\left[\sum_{\mathbf V \setminus (\mathbf Y \cup \mathbf X)} \mathbbm 1\left[ \mathcal F_{\mathbf x}(\mathbf u) = \mathbf v \right] \right] \\
    &= \mathop{\mathbb E}_{P(\mathbf u^c) P(\mathbf g)}\left[\sum_{\mathbf V \setminus (\mathbf Y \cup \mathbf X)} \prod_{V_i \in \mathbf V \setminus \mathbf X} \mathbbm 1 \left[ f_{V_i}(\pai{V_i}, \ui{V_i}) = v_i \right] \right] \\
    &= \mathop{\mathbb E}_{P(\mathbf u^c)}\left[\sum_{\mathbf V \setminus (\mathbf Y \cup \mathbf X)}  \prod_{V_i \in \mathbf V \setminus \mathbf X} \tilde{\sigma}_{v_i}(\phi_{V_i}(\pai{V_i}, \ui{V_i}^c; \theta_{V_i})) \right]. \numberthis
\end{align*}

The interventional distribution may be similarly estimated. We derive a Monte Carlo estimator for the above expression using a submodel of the \ncm{} under $do(\mathbf X=\mathbf x)$ given samples $\{\ui{j}^c\}_{j=1}^m \sim P(\mathbf U^c)$:

\begin{align*}
    & P_m^{\widehat M(\cG, \bm{\theta})}(\mathbf y | do(\mathbf x)) \\
    &= \frac{1}{m}\sum_{j=1}^m \sum_{\mathbf V \setminus (\mathbf Y \cup \mathbf X)} \prod_{V_i \in \mathbf V \setminus \mathbf X} \tilde{\sigma}_{v_i}(\phi_{V_i}(\pai{V_i}, \ui{j, V_i}^c; \theta_{V_i})). \numberthis
\end{align*}

The implementation of Eq.~\ref{eq:ncm-mc-sampling} can be summarized in Alg.~\ref{alg:ncm-pv}, which defines the ``Estimate'' function used in Alg.~\ref{alg:ncm-learn-pv}.

\begin{algorithm}[H]
    \DontPrintSemicolon
    \SetKw{notsymbol}{not}
    \SetKwData{ncmdata}{$\widehat{M}$}
    \SetKwData{pdata}{$\hat{p}$}
    \SetKwFunction{consistent}{Consistent}
    \SetKwFunction{sample}{Sample}
    \SetKwInOut{Input}{Input}
    \SetKwInOut{Output}{Output}
    
    \Input{ NCM \ncmdata, variables $\*V$ in topological order, $\*v \in \cD_{\*V}$, intervention set $\*X \subset \*V$, $\*x \in \cD_{\*X}$, number of Monte Carlo samples $m$}
    \Output{ estimate of $P^{\ncmdata}(\*v \mid do(\*x))$}
    \BlankLine
    \lIf{\notsymbol \consistent{$\*v$, $\*x$}}{\Return 0}
    $\widehat{\*u}^c_{1:m} \gets $ \sample{$P(\widehat{\*U}^c)$}\;
    $\pdata \gets 0$\;
    \For{$j \gets 1$ \KwTo $m$}{
        $\pdata_j \gets 1$\;
        \For{$i \gets 1$ \KwTo $|\*V|$}{
            \If{$V_i \not \in \*X$}{
                $\pdata_j \gets \pdata_j \sigma'_{v_i}(\phi_{V_i}(\pai{V_i}, \ui{j, V_i}^c; \theta_{V_i}))$ \tcp*{From Eq.~\ref{eq:ncm-mc-sampling}}
            }
        }
        $\pdata \gets \pdata + \pdata_j$\;
    }
    \Return $\frac{\pdata}{m}$\;
    \caption{Estimate $P^{\widehat{M}}(\*v \mid do(\*x))$ (Eq. \ref{eq:ncm-mc-sampling})}
    \label{alg:ncm-pv}
\end{algorithm}
\DecMargin{1em}

\section{Experimental Details} \label{app:experiments}

This section provides the detailed information about our experiments. Code for the experiments can be found at \url{https://github.com/CausalAILab/NeuralCausalModels}.
Our models are primarily written in PyTorch \citep{paszke2017automatic}, and training is facilitated using PyTorch Lightning \citep{falcon2020framework}.

\subsection{Data Generation Process}

The NCM architecture follows the description in Eq.~\ref{eq:ncm-differentiable}. For each function, we use a MADE module \citep{pmlr-v37-germain15} following the implementation in \citep{karpathy-made} (MIT license). Each MADE module uses 4 hidden layers of size 32 with ReLU activations. For each complete confounded component's unobserved confounder, we use uniform noise with dimension equal to the sum of the complete confounded component's variables' dimensions.

To generate data for our experiments, we use a version of the canonical SCM similar to Def.~\ref{def:simple-ctm}. Given data from $P(\*V)$ and a graph $\cG$, we construct a canonical SCM $\cM_{\ctm} = \langle \*U_{\ctm}, \*V, \cF_{\ctm}, P(\*U_{\ctm}) \rangle$ such that for each $V \in \*V$, $\pai{V}$ contains the set of variables with a directed edge into $V$ in $\cG$. For any bidirected edge between $V_1$ and $V_2$, we introduce a dependency between the noise generated from $R_{V_1}$ and $R_{V_1}$. Otherwise, variables in $\*U_{\ctm}$ are independent.

The canonical SCM is implemented in PyTorch, with trainable parameters that determine $P(\*U_{\ctm})$. In an experiment trial, we create true model $\cM^*$, which is an instantiation of a canonical SCM with random parameters. Unlike cases where $\cM^*$ comes from a set family of SCMs, which may produce biased results, the expressiveness of the canonical SCM allows us to robustly sample data generating models that can take any behavior consistent with the constraints of $\cG$.

In the estimation experiments, to make things more challenging, we take one additional step after constructing $\cM^*$ in cases where $P(Y \mid do(X))$ is not necessarily equal to $P(Y \mid X)$. To ensure that this inequality holds, we widen the difference between ATE and total variation (TV), computed in this case as $P(Y = 1 \mid X = 1) - P(Y = 1 \mid X = 0)$. We accomplish this by performing gradient descent on the parameters of the canonical SCM until this difference is at least 0.05.

From the perspective of simulating the ground truth, sampling from the canonical SCM can be accomplished using the same procedure from Def.~\ref{def:l2-semantics}.

\subsection{Identification Running Procedure}

For the identification experiments, we test on the 8 graphs shown in the top of Fig.~\ref{fig:id-exp-results}. For each graph, we run 20 trials, each repeated 4 times on the same dataset for the sake of hypothesis testing.

For each trial, we create a canonical SCM from the graph $\cG$ for the data generating model $\cM^*$. We sample 10,000 data points from $P(\*V)$, where $\*V$ is determined by the nodes in $\cG$, each being a binary variable.

With $\cG$ as input, we instantiate two NCMs, $\widehat{M}_{\min}$ and $\widehat{M}_{\max}$, and train them on the given data using a PyTorch Lightning trainer. Both models are trained for 3000 epochs using the minimization and maximization version of the objective in Eq.~\ref{eq:ncm-total-loss}, respectively. $\lambda$ is initialized to 1 at the beginning of training and decreases logarithmically throughout training until it reaches 0.001 at the end.

We use the AdamW optimizer \citep{LoshchilovH19} with the Cosine Annealing Warm Restarts learning rate scheduler \citep{LoshchilovH17}. When using Eq.~\ref{eq:ncm-mc-sampling} to compute probabilities, we choose $m = 20000$.

We log the ATE of both models every 10 iterations and use it to compute the max-min gaps from the l.h.s. of Eq.~\ref{eq:gap-test}. The gaps from each trial are averaged across the 4 runs. The percentiles of these gaps over the 20 trials are plotted across the 3000 epochs in the bottom row of Fig.~\ref{fig:id-exp-results}. We use these gaps in the hypothesis testing procedure described later in Sec.~\ref{sec:id-hyp-test} with $\tau = 0.01, 0.03, 0.05$ and plot the accuracy over the 20 trials in the middle row of Fig.~\ref{fig:id-exp-results}. For smoother plots, we use the running average over 50 iterations.

To ensure reproducibility, each run's random seed is generated through a SHA512 encoding of a key. The key is determined by the parameters of the trial such as the graph, the number of samples, and the trial index.

The NCMs are trained on NVIDIA Tesla V100 GPUs provided by Amazon Web Services. In total, we used about 150 GPU hours for the identification experiments.

\subsection{Estimation Running Procedure}

For the estimation experiments, we test on the 4 identifiable graphs at the top of Fig.~\ref{fig:id-exp-results}. For each graph, we test 15 different amounts of sample data, ranging from $10^3$ to $10^6$ on a logarithmic scale. Each setting is run for 25 trials.

For each trial, we create a canonical SCM from the graph $\cG$ for the data generating model $\cM^*$. We sample the data from $P(\*V)$, where $\*V$ is determined by the nodes in $\cG$, each being a binary variable.

We instantiate an NCM, $\widehat{M}$, given $\cG$, and we train it on the given data using a PyTorch Lightning trainer. The model is trained using the objective in Eq.~\ref{eq:ncm-probability-loss}. We incorporate early stopping, where the model ends training if loss does not decrease by a minimum of $10^{-6}$ within 100 epochs. After training, the parameters of the best model encountered during training are reloaded.

We use the AdamW optimizer \citep{LoshchilovH19} with the Cosine Annealing Warm Restarts learning rate scheduler \citep{LoshchilovH17}. When using Eq.~\ref{eq:ncm-mc-sampling} to compute probabilities, we choose $m = 1.6 \times 10^{6}$.

For each trial, we save the training data and perform the same estimation experiment with the na\"ive likelihood maximization model and WERM \cite{jung2020werm}. The na\"ive model is also optimized with the objective in Eq.~\ref{eq:ncm-probability-loss} to fit $P(\*V)$.  Since code for WERM is not  available, we implement our own version in Python trying to match the code provided in the paper. Notably, we use XGBoost \citep{Chen:2016:XST:2939672.2939785} regressors trained on the data to learn the WERM weights. These regressors are chosen with 20 estimators and a max depth of 10. They are trained with the binary logistic objective for binary outputs, otherwise they are trained using the squared error objective. The regularization parameter $\lambda_{\werm}$ (not to be confused with the $\lambda$ in Eq.~\ref{eq:ncm-total-loss}) is learned through hyperparameter tuning, checking every value from 0 to 10 in increments of 0.2. The regularization parameter $\alpha_{\werm}$ is set to $\lambda_{\werm} / 2$.

We record the KL divergence of the best models of the NCM and na\"ive models after training, computed as
\begin{equation*}
    D_{\kl}\left(P^{\cM^*}(\*V) || P^{\widehat{M}}(\*V)\right) = \sum_{\*v \in \cD_{\*V}} P^{\cM^*}(\*v) \log \left(\frac{P^{\cM^*}(\*v)}{P^{\widehat{M}}(\*v)}\right).
\end{equation*}
The mean and 95\%-confidence intervals of the KL divergence across the 25 trials are plotted over each setting of sample size in the top row of Fig.~\ref{fig:est-samples-results}.

We also record the ATE computed from all three methods after training. In the case of the na\"ive model, we use TV as the ATE calculation. For the NCM and na\"ive model, we use the sampling procedure described in Eq.~\ref{eq:ncm-mc-sampling} with $m = 10^6$ to compute these results. The mean and 95\%-confidence intervals of the ATE computations across the 25 trials are plotted over each setting of sample size in the bottom row of Fig.~\ref{fig:est-samples-results}.

Randomization is performed similarly to the identification experiments.

The NCMs and na\"ive models are trained on NVIDIA Tesla V100 GPUs provided by Amazon Web Services. In total, we used about 300 GPU hours for the estimation experiments.

\subsection{Identification Hypothesis Testing} \label{sec:id-hyp-test}

We incorporate a basic hypothesis testing procedure to test the inequality in Eq.~\ref{eq:gap-test}. In each trial of the identification experiments, we rerun the procedure on the same dataset $r$ times ($r = 4$ in our experiments). Let $\Delta \widehat{Q}$ denote the random variable representing the max-min gap from a run in a specific trial, and let $\{\Delta \hat{q}_i\}_{i=1}^{r}$ denote the empirical set of max-min gaps from all $r$ runs. The randomness in $\Delta \widehat{Q}$ arises from the randomness of parameter initialization in the NCM. $\Delta \widehat{Q}$ is not necessarily normally distributed, but we assume that the mean of $\{\Delta \hat{q}_i\}_{i=1}^{r}$ will be normally distributed given a large enough $r$ due to the central limit theorem.

For a given $\tau$, we test the hypothesis that $\bbE[\Delta \widehat{Q}] < \tau$ with the null hypothesis that $\bbE[\Delta \widehat{Q}] \geq \tau$. We estimate 
$$\bbE[\Delta \widehat{Q}] \approx \overline{\Delta \hat{q}} := \frac{1}{r}\sum_{i=1}^r \Delta \hat{q}_i,$$
the mean of the empirical set of gaps. Then we compute the standard error
$$\se(\Delta \hat{q}) := \frac{1}{r} \sqrt{\sum_{i=1}^r \left(\Delta \hat{q}_i - \overline{\Delta \hat{q}}\right)}.$$

\begin{figure}[t]
\centering
\vspace{-0.1in}
\includegraphics[width=0.9\textwidth,keepaspectratio]{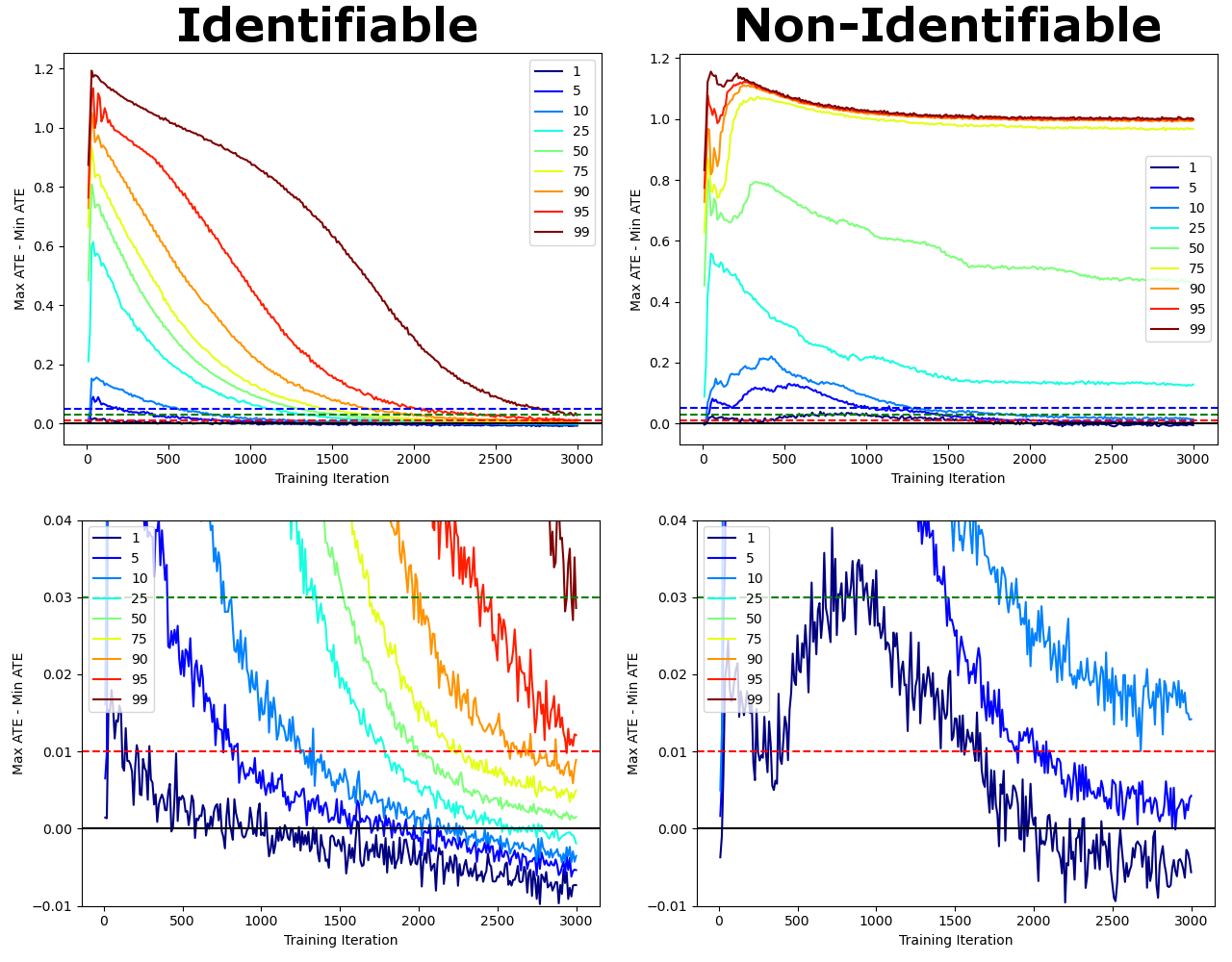}
\caption{Example plots aggregating max-min gaps across multiple settings. Left is ID, right is nonID. Bottom plots are zoomed versions of top plots. Dashed lines represent choices of $\tau$ at 0.01 (red), 0.03 (green) and 0.05 (blue).}
\label{fig:choose-tau}
\end{figure}

Finally, we check
$$\overline{\Delta \hat{q}} + 1.65 \se(\Delta \hat{q}) < \tau.$$
If this quantity holds, we reject the null hypothesis with 95\% confidence, and we return the result that the query is identifiable. If not, we fail to reject the null hypothesis and we return the result that the query is not identifiable.

\begin{wrapfigure}{r}{0.45\textwidth}
 \vspace{-0.4in}
\includegraphics[width=0.45\textwidth,keepaspectratio]{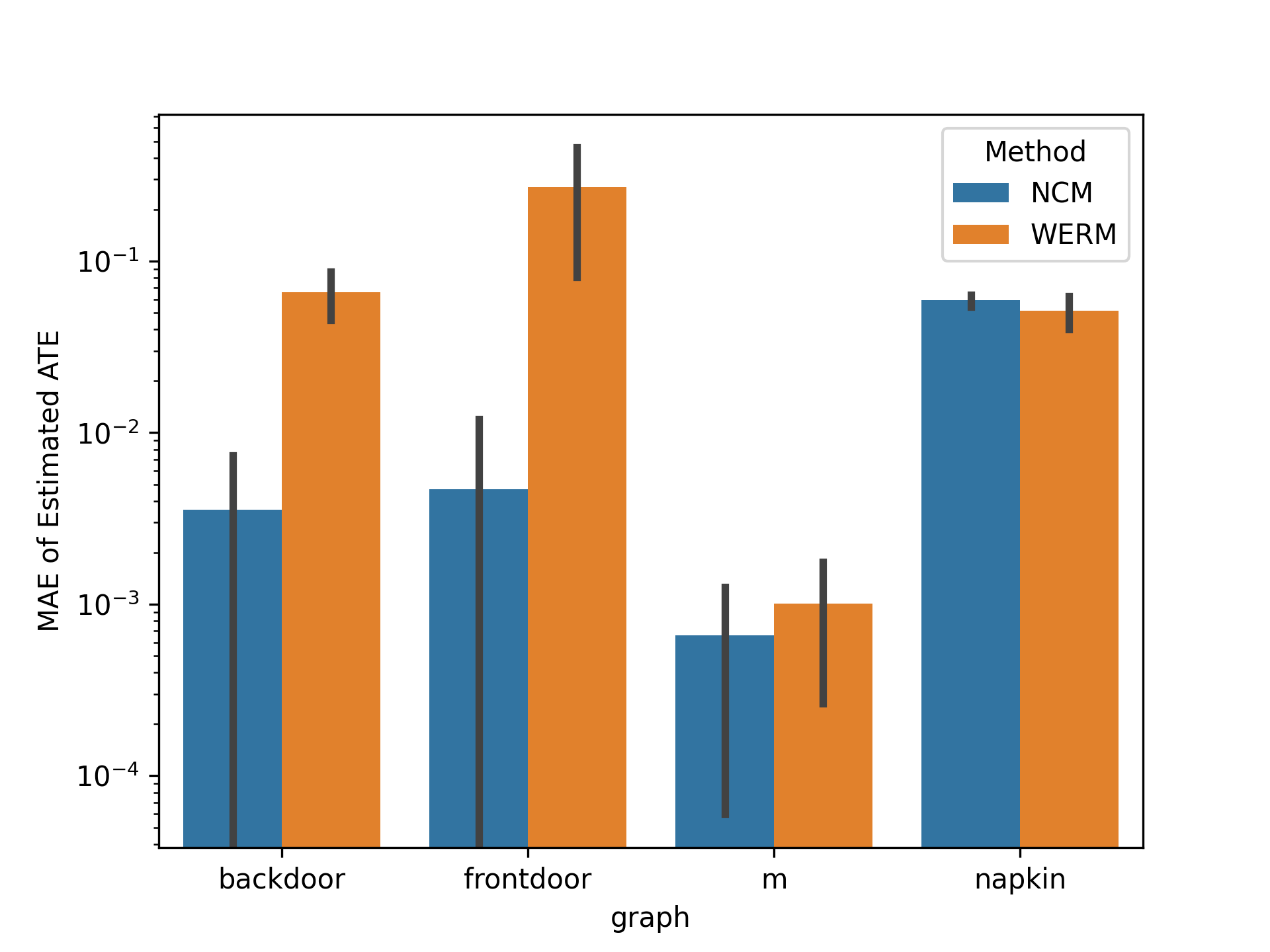}
\caption{NCM estimation results for ID graphs when setting the number of observed samples to 1000000 and the number of dimensions for each multi-dimensional binary covariate (any variable which is not $X$ or $Y$) to 20. The graphs displayed in the plot correspond to the same graphs as a, b, c, d in Fig.~\ref{fig:id-exp-results}. Plot in log scale.}
\vspace{-0.6in}
\label{fig:high-dim-results}
\end{wrapfigure}

We showed the results of three different values of $\tau$ ($0.01, 0.03, 0.05$) in Fig.~\ref{fig:id-exp-results} (main paper). Here, we try to understand how $\tau$ can be determined more systematically by running a preliminary round of identification experiments and observing the percentiles of the resulting gaps; see  Fig.~\ref{fig:choose-tau}. Note that most gaps in the ID case fall below the $0.03$ line by the end of training process, while most gaps in the nonID case stay above it. This can be used to motivate a particular choice of $\tau$.

\subsection{Additional Results}

To demonstrate the potential of NCMs to scale to larger settings, we provide additional estimation results on higher dimensional data in Fig. \ref{fig:high-dim-results}. We use the same canonical SCM architecture but probabilistically map the value of each binary variable which is not $X$ or $Y$ to a 20-dimensional binary vector, such that there exists a deterministic mapping from each covariate's 20-dimensional binary vectors to the original binary value (that is, the original binary value is recoverable from the 20-dimensional mapped vector). This preserves ATE and TV while increasing the difficulty of the NCM's optimization problem by testing the NCM's ability to learn in high-dimensional spaces. In this more challenging high-dimensional setting, the NCM maintains superior or equal performance when compared to WERM.

\subsection{Other Possible Architectures}

In Sec.~\ref{sec:ncm-fitting}, we provided one possible architecture of the NCM (Eq.~\ref{eq:ncm-differentiable}), which produced the empirical results discussed in Sec.~\ref{sec:experiments}. These results show that, in principle, training a proxy model can be used to solve the identification and estimation problems in practice.

Still, other  architectures 
may be more suitable and perform more efficiently than the one shown in  Eq.~\ref{eq:ncm-differentiable} in  larger, more involved domains.  In such settings, it may be challenging to match the distributions well enough to reach the correct identification result,  or to lower the error on estimating the query of interest. While Corol.~\ref{thm:nscm-id-correctness} guarantees the correct result with perfect optimization, this is rarely achieved in practice. It's certainly not fully understood how optimization errors on the learning of $P(\*V)$ will affect the performance of identification/estimation of Layer 2 quantities. 

Naturally, the framework developed here to support proxy SCMs is not limited to the architecture in Eq.~\ref{eq:ncm-differentiable}. For instance, as discussed in Footnote \ref{foot:alternate-ncm}, there are alternative methods of learning $P(\*V)$ such as by minimizing divergences, performing variational inference, generative adversarial optimization, to cite a few prominent choices. Some of these methods may be more scalable and robust to optimization errors. See Appendix \ref{sec:nn-general} for a discussion on how the theory generalizes for other architectures.

Further, we note that if $\*V$ comes from continuous domains, there are many ways the architecture of the NCM could be augmented accordingly. As discussed in Footnote \ref{foot:continuous-ncm}, one could replace the Gumbel-max trick in the architecture in Eq.~\ref{eq:ncm-differentiable} with a model that directly computes a probability density given a data point. Some of the alternative architectures mentioned above may work as well.

We believe this leads to an exciting research agenda that is to understand the tradeoffs involved in the choice of the architectures and how to properly optimize NCMs for specific applications. 

\section{Examples and Further Discussion} \label{app:examples}

In this section, we complement the discussion provided in the main paper through examples. Specifically, in Sec.~\ref{app:examples-expr}, we discuss what expressiveness means and provide examples to show why it's insufficient to perform cross-layer inferences. In Sec.~\ref{app:examples-cg}, we provide an intuitive discussion on why causal diagrams are able to perform cross-layer inferences and basic examples illustrating the structural constraints encoded in such models, which can be seen as an inductive bias as discussed in the body of the paper. In Sec.~\ref{app:examples-id}, we exemplify how this inductive bias can be added to \ncms{} such that they are capable of solving ID instances. In Sec.~\ref{app:examples-symbolic-neural}, we discuss possible reasons why one would prefer to perform ID through neural computation instead the machinery of the do-calculus.

\subsection{Expressiveness Examples} \label{app:examples-expr}

In this section, we try to clarify through examples the notion of expressiveness, as shown in Thm.~\ref{thm:lrepr}, and discuss the reasons it alone is not sufficient for performing cross-layer inferences. 

\begin{example}
    \label{ex:expressiveness-basic}
    Consider a study on the effects of diet ($D$) on blood pressure ($B$), inspired by studies such as \citep{appel_1997}. $D$ and $B$ are binary variables such that $D=1$ represents a high-vegetable diet, and $B=1$ represents high blood pressure. Suppose the true relationship between $B$ and $D$ is modeled by the SCM $\cM^* = \{\*U, \*V, \cF, P(\*U)\}$, where
    \begin{equation}
        \label{eq:ex-expr-basic-scm}
        \cM^* := 
        \begin{cases}
            \*U &:= \{U_D, U_{DB}, U_{B1}, U_{B2}\}, \text{ all binary} \\
            \*V &:= \{D, B\} \\
            \cF &:=
            \begin{cases}
                f_D(U_D, U_{DB}) &= \neg U_D \wedge \neg U_{DB} \\
                f_B(D, U_{B1}, U_{B2}) &= ((\neg D \oplus U_{B1}) \vee U_{DB}) \oplus U_{B2}
            \end{cases} \\
            P(\*U) &:=
            \begin{cases}
                P(U_D = 1) = P(U_{DB} = 1) = P(U_{B1} = 1) = P(U_{B2} = 1) = \frac{1}{4}\\
                \text{ all } U \in \*U \text{ are independent}
            \end{cases}
        \end{cases}
    \end{equation}
The set of exogenous variables $\*U$ affects the set of endogenous variables $\*V$. For example, $U_{DB}$ is an unobserved confounding variable that affects both diet and blood pressure (ethnicity, for example), while the remaining variables in $\*U$ represent other factors unaccounted for in the study. $\cF$ describes the relationship between the variables in $\*V$ and $\*U$, and $P(\*U)$ describes the distribution of $\*U$.
    
    \begin{table}[h]
        \centering
        \begin{tabular}{llll|ll|l}
        \hline \hline
        $U_D$ & $U_{DB}$ & $U_{B1}$ & $U_{B2}$ & $D$ & $B$      & $P(\*U)$        \\ \hline \hline
        0     & 0        & 0        & 0        & 1   & $\neg D$ & $p_0 = 81/256$  \\ \hline
        0     & 0        & 0        & 1        & 1   & $D$      & $p_1 = 27/256$  \\ \hline
        0     & 0        & 1        & 0        & 1   & $D$      & $p_2 = 27/256$  \\ \hline
        0     & 0        & 1        & 1        & 1   & $\neg D$ & $p_3 = 9/256$   \\ \hline
        0     & 1        & 0        & 0        & 0   & 1        & $p_4 = 27/256$  \\ \hline
        0     & 1        & 0        & 1        & 0   & 0        & $p_5 = 9/256$   \\ \hline
        0     & 1        & 1        & 0        & 0   & 1        & $p_6 = 9/256$   \\ \hline
        0     & 1        & 1        & 1        & 0   & 0        & $p_7 = 3/256$   \\ \hline
        1     & 0        & 0        & 0        & 0   & $\neg D$ & $p_8 = 27/256$  \\ \hline
        1     & 0        & 0        & 1        & 0   & $D$      & $p_9 = 9/256$   \\ \hline
        1     & 0        & 1        & 0        & 0   & $D$      & $p_{10} = 9/256$  \\ \hline
        1     & 0        & 1        & 1        & 0   & $\neg D$ & $p_{11} = 3/256$  \\ \hline
        1     & 1        & 0        & 0        & 0   & 1        & $p_{12} = 9/256$ \\ \hline
        1     & 1        & 0        & 1        & 0   & 0        & $p_{13} = 3/256$  \\ \hline
        1     & 1        & 1        & 0        & 0   & 1        & $p_{14} = 3/256$  \\ \hline
        1     & 1        & 1        & 1        & 0   & 0        & $p_{15} = 1/256$  \\ \hline
        \end{tabular}
        \caption{Truth table showing induced values of $\cM^*$ for Example \ref{ex:expressiveness-basic}. Probabilites in $P(\*U)$ are labeled from $p_0$ to $p_{15}$ for convenience.}
        \label{tab:ex-expr-basic-mstar}
    \end{table}
    
    This SCM $\cM^*$ induces three collections of distributions corresponding to the layers of PCH with respect to $\*V$. These can be summarized in Table \ref{tab:ex-expr-basic-mstar}. For example, a layer 1 quantity such as $P(B = 1 \mid D = 1)$, the probability of someone having high blood pressure given a high-vegetable diet, can be computed as
    \begin{eqnarray}
        \label{eq:ex-expr-basic-l1-query}
        P(B = 1 \mid D = 1) &=& \frac{P(D = 1, B = 1)}{P(D = 1)}\\ &=& \frac{p_1 + p_2}{p_0 + p_1 + p_2 + p_3} =  \frac{54}{144} = 0.375.
    \end{eqnarray}
    
    A layer 2 quantity such as $P(B = 1 \mid do(D = 1))$, the probability from $\cM^*$ of someone having high blood pressure when intervened on a high-vegetable diet, can be computed as
    \begin{eqnarray}
        \label{eq:ex-expr-basic-l2-query}
        P(B = 1 \mid do(D = 1)) &=& p_1 + p_2 + p_4 + p_6 + p_9 + p_{10} + p_{12} + p_{14}\\&=& \frac{120}{256} = 0.46875.
    \end{eqnarray}
    
    Layer 3 quantities can also be computed, for example, $P(B_{D = 1} = 1 \mid D = 0, B = 1)$. Given that an individual has high blood pressure and a low-vegetable diet, this quantity represents the probability that they would have high-blood pressure had they instead eaten a high-vegetable diet. This can be computed as
    \begin{align*}
        &P(B_{D = 1} = 1 \mid D = 0, B = 1) \\
        &= \frac{P(B_{D = 1} = 1, D = 0, B = 1)}{P(D = 0, B = 1)} = \frac{p_4 + p_6 + p_{12} + p_{14}}{p_4 + p_6 + p_8 + p_{11} + p_{12} + p_{14}} = \frac{48}{78} \approx 0.6154. \numberthis \label{eq:ex-expr-basic-l3-query}
    \end{align*}

    Following the construction in the proof for Thm.~\ref{thm:lrepr}, we will show the (somewhat tedious) construction of the corresponding NCM that induces the same distributions. First, we build a simple canonical SCM from Def.~\ref{def:simple-ctm} to match the functionality in $\cM^*$. We build $\cM_{\ctm} = \langle \*U_{\ctm}, \*V, \cF_{\ctm}, P(\*U_{\ctm})$ such that
    \begin{equation}
        \label{eq:ex-expr-basic-ctm}
        \cM_{\ctm} := 
        \begin{cases}
            \*U_{\ctm} &:= \{R_D, R_B\}, \cD_{R_D} = \{0, 1\}, \cD_{R_B} = \{0, 1, 2, 3\} \\
            \*V &:= \{D, B\} \\
            \cF_{\ctm} &:=
            \begin{cases}
                f^{\ctm}_D(R_D) &= R_D \\
                f^{\ctm}_B(D, R_B) &=
                \begin{cases}
                    0 & \text{ if } R_B = 0 \\
                    D & \text{ if } R_B = 1 \\
                    \neg D & \text{ if } R_B = 2 \\
                    1 & \text{ if } R_B = 3
                \end{cases}
            \end{cases}
        \end{cases}
    \end{equation}
    Looking at Table \ref{tab:ex-expr-basic-mstar}, we mimic the same distributions of $\cM^*$ in $\cM_{\ctm}$ by choosing
    \begin{equation}
        \label{eq:ex-expr-basic-ctm-pu}
        P(\*U_{\ctm}) = P(R_D, R_B) = 
        \begin{cases}
            p_5 + p_7 + p_{13} + p_{15} = 16/256 & R_D = 0, R_B = 0 \\
            p_9 + p_{10} = 18/256 & R_D = 0, R_B = 1 \\
            p_8 + p_{11} = 30/256 & R_D = 0, R_B = 2 \\
            p_4 + p_6 + p_{12} + p_{14} = 48/256 & R_D = 0, R_B = 3 \\
            p_1 + p_2 = 54/256 & R_D = 1, R_B = 1 \\
            p_0 + p_3 = 90/256 & R_D = 1, R_B = 2 \\
            0 & \text{otherwise.}
        \end{cases}
    \end{equation}
    We state the following useful neural network functions defined in the proof of Thm.~\ref{thm:lrepr}:
    \begin{itemize}
        \item $\hat{f}_{\scand}(x_1, \dots, x_n) = \sigma \left( \sum_{i=1}^n x_i - n \right)$
        \item $\hat{f}_{\scor}(x_1, \dots, x_n) = \sigma \left( \sum_{i=1}^n x_i - 1 \right)$
        \item $\hat{f}_{\scnot}(x) = \sigma \left( -x \right)$
        \item $\hat{f}_{\geq z}(x) = \sigma \left(x - z \right)$
    \end{itemize}
    For this appendix, $\sigma$ will be the binary step activation function. $\hat{f}_{\scand}$, $\hat{f}_{\scor}$, and $\hat{f}_{\scnot}$ compute the bitwise AND, OR, and NOT of the inputs (if binary) respectively. $\hat{f}_{\geq z}$ will output 1 if its input is greater than or equal to $z$ and will output 0 otherwise.
    
    We begin the construction of our NCM $\widehat{M} = \langle \widehat{\*U}, \*V, \widehat{\cF}, P(\widehat{\*U})$ as follows:
    \[\widehat{M} := 
    \begin{cases}
        \widehat{\*U} &:= \{\widehat{U}\}, \cD_{\widehat{U}} = [0, 1] \\
        \*V &:= \{D, B\} \\
        \widehat{\cF} &:= 
        \begin{cases}
            \hat{f}_D(\widehat{U}) &= \text{ ?} \\
            \hat{f}_B(D, \widehat{U}) &= \text{ ?}
        \end{cases}\\
        P(\widehat{\*U}) &:= P(\widehat{U}) \sim \unif(0, 1)
    \end{cases}
    \]
    We must then construct the neural networks in $\widehat{\cF}$. With the goal of reproducing the behavior of $\cM_{\ctm}$, we first start by converting a uniform distribution into $P(R_D, R_B)$. We can accomplish this by dividing the $[0, 1]$ interval into chunks for each individual value of $(R_D = r_D, R_B = r_B)$ with the size of $P(R_D = r_D, R_B = r_B)$. For instance, considering values of $P(R_D, R_B)$ in the order in Eq.~\ref{eq:ex-expr-basic-ctm-pu}, the interval $[0, 16/256)$ would correspond to $(R_D = 0, R_B = 0)$, the interval $[16/256, 34/256)$ would correspond to $(R_D = 0, R_B = 1)$, and so on. In this case, $\widehat{U}$ would be broken into six intervals corresponding to the 6 cases of Eq.~\ref{eq:ex-expr-basic-ctm-pu}: $[0, 16/256)$, $[16/256, 34/256)$, $[34/256, 64/256)$, $[64/256, 112/256)$, $[112/256, 166/256)$, and $[166/256, 1)$. For the sake of convenience, we will label these $I_1$ to $I_6$ respectively
    
    We can check if $U$ is contained within an interval like $[16/256, 34/256)$ with the following neural network function:
    \begin{align}
    \hat{f}_{I_2} = \hat{f}_{[16/256, 34/256)}(u) = \hat{f}_{\scand}\left(\hat{f}_{\geq 16/256}(u), \hat{f}_{\scnot}\left(\hat{f}_{\geq 34/256}(u) \right)\right)
    \end{align}
    Observe that $f_{I_1}$ to $f_{I_6}$ provide a one-hot encoding of the location of $\widehat{U}$. Mapping this encoding to $R_D$ and $R_B$ is simply multiplying the corresponding values of $r_D$ and $r_B$ as weights to the encoding:
    \begin{align}
        \hat{f}_{R_D}(u) &= 0 \cdot \hat{f}_{I_1}(u) + 0 \cdot \hat{f}_{I_2}(u) + 0 \cdot \hat{f}_{I_3}(u) + 0 \cdot \hat{f}_{I_4}(u) + 1 \cdot \hat{f}_{I_5}(u) + 1 \cdot \hat{f}_{I_6}(u) \\
        \hat{f}_{R_B}(u) &= 0 \cdot \hat{f}_{I_1}(u) + 1 \cdot \hat{f}_{I_2}(u) + 2 \cdot \hat{f}_{I_3}(u) + 3 \cdot \hat{f}_{I_4}(u) + 1 \cdot \hat{f}_{I_5}(u) + 2 \cdot \hat{f}_{I_6}(u)  
    \end{align}
    Note that $\hat{f}_{R_D}$ and $\hat{f}_{R_B}$ are indeed neural networks, and this final layer does not contain an activation function.
    
    Since $D = R_D$ in $\cM_{\ctm}$, we can already construct
    \begin{align}
     \hat{f}_D(u) = \hat{f}_{R_D}(u). 
    \end{align}
    The function $\hat{f}_B$ is somewhat more involved since it must take $D$ as an input. Since $R_B$ is not a binary variable, we first convert it to binary form to simplify the mapping. We can accomplish this by using the same strategy of obtaining the one-hot encoding of $R_B$. For instance, to check if $R_B = 1$, we can build the neural network function
    \begin{align}
     \hat{f}_{=1}(r_B) = \hat{f}_{\scand}\left(\hat{f}_{\geq 1} (r_B), \hat{f}_{\scnot}\left(\hat{f}_{\geq 2} (r_B) \right)\right).
    \end{align}

    The encoding is formed with $\hat{f}_{=0}$, $\hat{f}_{=1}$, $\hat{f}_{=2}$, and $\hat{f}_{=3}$.
    
    Using this, we can construct $\hat{f}_B(u)$ following the desired properties in Eq.~\ref{eq:ex-expr-basic-ctm}:
    \begin{align}
     \hat{f}_{B}(d, u) = \hat{f}_{\scor}\left( \hat{f}_{\scand}\left(\hat{f}_{=1}(r_B), d\right), \hat{f}_{\scand}\left(\hat{f}_{=2}(r_B), \hat{f}_{\scnot}(d)\right), \hat{f}_{=3}(r_B)\right)
    \end{align}
     where $r_B = \hat{f}_{R_B}(u)$.
    \begin{table}[h]
        \centering
        \begin{tabular}{l|ll|l}
        \hline
        \hline
        $\widehat{U}$        & $D$ & $B$      & $P(\*U)$       \\ \hline \hline
        $[0, 16/256)$        & 0   & 0        & $q_0 = 16/256$ \\ \hline
        $[16/256, 34/256)$   & 0   & $D$      & $q_1 = 18/256$ \\ \hline
        $[34/256, 64/256)$   & 0   & $\neg D$ & $q_2 = 30/256$ \\ \hline
        $[64/256, 112/256)$  & 0   & 1        & $q_3 = 48/256$ \\ \hline
        $[112/256, 166/256)$ & 1   & $D$      & $q_4 = 54/256$ \\ \hline
        $[166/256, 1)$       & 1   & $\neg D$ & $q_5 = 90/256$ \\ \hline
        \end{tabular}
        \caption{Truth table showing induced values of $\widehat{M}$ for Example \ref{ex:expressiveness-basic}. Probabilites in $P(\*U)$ are labeled from $q_0$ to $q_5$ for convenience.}
        \label{tab:ex-expr-basic-mhat}
    \end{table}
    With $\hat{f}_D$ and $\hat{f}_B$ defined, our construction of $\widehat{M}$ is complete. 
    
    We can now verify that the distributions induced by $\widehat{M}$, as represented in Table \ref{tab:ex-expr-basic-mhat}, matches the distributions induced by $\cM^*$. Consider the same three queries from Eqs.~\ref{eq:ex-expr-basic-l1-query}, \ref{eq:ex-expr-basic-l2-query}, and \ref{eq:ex-expr-basic-l3-query}:
    $$P^{\widehat{M}}(B = 1 \mid D = 1) = \frac{P^{\widehat{M}}(D = 1, B = 1)}{P^{\widehat{M}}(D = 1)} = \frac{q_4}{q_4 + q_5} = \frac{54}{144} = 0.375$$
    $$P^{\widehat{M}}(B = 1 \mid do(D = 1)) = q_1 + q_3 + q_4 = \frac{120}{256} = 0.46875$$
    \begin{align*}
        &P^{\widehat{M}}(B_{D = 1} = 1 \mid D = 0, B = 1) \\
        &= \frac{P^{\widehat{M}}(B_{D = 1} = 1, D = 0, B = 1)}{P^{\widehat{M}}(D = 0, B = 1)} = \frac{q_3}{q_2 + q_3} = \frac{48}{78} \approx 0.6154
    \end{align*}
    
    This demonstrates that the NCM is indeed expressive enough to model this setting on all three layers of PCH. While a more basic model might be fitted to answer $L_1$ queries (such as $P(B = 1 \mid D = 1)$), a well-parameterized NCM is expressive enough to represent distributions on higher layers.
    \hfill $\blacksquare$
\end{example}

Example \ref{ex:expressiveness-basic} shows a scenario where the expressiveness of the NCM class allowed us to fit an NCM instance that could completely match a complex SCM and answer any question of interest. However, we also highlight that expressiveness alone does not necessarily mean that the NCM can solve any causal problem. Experimental data, data from layer 2, can be difficult to obtain in practice, so it would be convenient if we could fit a model $\widehat{M}$ only on observational data from layer 1, then deduce the same causal results using $\widehat{M}$ as if it were the true SCM, $\cM^*$. Unfortunately, Corol.~\ref{cor:ncht} states that this deduction typically cannot be made no matter the expressiveness of the model. The next example will illustrate this point more concretely.

\begin{example}
    \label{ex:ncht}
    
    \begin{table}[h]
        \centering
        \begin{tabular}{ll|l}
        \hline \hline
        $D$ & $B$ & $P(D, B)$ \\ \hline \hline
        0   & 0   & $34/256$  \\ \hline
        0   & 1   & $78/256$  \\ \hline
        1   & 0   & $90/256$  \\ \hline
        1   & 1   & $54/256$  \\ \hline
        \end{tabular}
        \caption{Observational distribution $P(\*V)$ induced by $\cM^*$ from Example \ref{ex:expressiveness-basic}.}
        \label{tab:ex-ncht-pv}
    \end{table}

    Consider the same study of the effects of diet on blood pressure in Example \ref{ex:expressiveness-basic} in which the two variables are described by the SCM, $\cM^*$, in Eq.~\ref{eq:ex-expr-basic-scm}.
    
    While in the previous example we were able to construct NCM $\widehat{M}$ which matched $\cM^*$'s behavior on all three layers of the PCH, this was contingent on having access to the true SCM. This is unlikely to happen in practice. Instead, we typically only have partial information from the SCM. In many cases, we may only have observational data from layer 1. The data in  Table \ref{tab:ex-ncht-pv} exemplifies the aspect of $\cM^*$ we may have access to. 
    
Naturally, we should construct an NCM such that matches on the given observed dataset. The NCM $\widehat{M}$ from Example \ref{ex:expressiveness-basic} will indeed induce the same observational data $P(\*V)$, but there are several other ways we could construct a model that fits such criterion.
    For illustration purposes, consider four NCMs $\widehat{M}_1$, $\widehat{M}_2$, $\widehat{M}_3$, and $\widehat{M}_4$ such that $\widehat{M}_i = \langle \widehat{\*U}, \*V, \widehat{\cF}_i, P(\widehat{\*U}) \rangle$ is defined as follows:
    
    \begin{equation*}
        \begin{cases}
            \widehat{\*U} &:= \{\widehat{U}_D, \widehat{U}_B\}, \cD_{\widehat{U}_D} = \cD_{\widehat{U}_B} = [0, 1] \\
            \*V &:= \{D, B\} \\
            \widehat{\cF}_i &:= \{\hat{f}_D^i, \hat{f}_B^i\} \\
            P(\widehat{\*U}) &:= \widehat{U}_D, \widehat{U}_B \sim \unif(0, 1), \widehat{U}_D \indep \widehat{U}_B
        \end{cases}
    \end{equation*}
    
    We use two variables in $\widehat{\*U}$ with the intent of simplifying some of the expressions, but it is feasible to construct NCMs with the same behavior using one uniform random variable, similar to Example \ref{ex:expressiveness-basic}.
    
    Concretely, we construct $\widehat{\cF}_1$ such that  $\widehat{M}_1$ has no unobserved confounding, 
    \begin{equation}
        \widehat{\cF}_1 := 
        \begin{cases}
            \hat{f}^1_D(u_d) &= 
            \begin{cases}
                1 & u_d \geq \frac{112}{256} \\
                0 & \text{otherwise}
            \end{cases}
            \\
            \hat{f}^1_B(d, u_b) &= 
            \begin{cases}
                1 & (d = 0) \wedge (u_b \geq \frac{34}{112}) \\
                1 & (d = 1) \wedge (u_b \geq \frac{90}{144}) \\
                0 & \text{otherwise}
            \end{cases}
        \end{cases}
    \end{equation}
    The corresponding neural network for $\widehat{\cF}_1$ can be seen as 
    \begin{align}
        \hat{f}^1_D(u_d) &= \hat{f}_{\geq 112/256}(u_d) \\
        \hat{f}^1_B(d, u_b) &= \hat{f}_{\scor}\left( \hat{f}_{\scand} \left( \hat{f}_{\scnot}(d), \hat{f}_{\geq 34/112}(u_b)\right), \hat{f}_{\scand} \left(d, \hat{f}_{\geq 90/144}(u_b)\right) \right).
    \end{align}
    
    $\widehat{\cF}_2$ is constructed such that the causal direction of $D$ and $B$ in $\widehat{M}_2$ is reversed, i.e., 
    \begin{equation}
        \widehat{\cF}_2 :=
        \begin{cases}
            \hat{f}^2_D(b, u_d) &= 
            \begin{cases}
                1 & (b = 0) \wedge (u_d \geq \frac{34}{124}) \\
                1 & (b = 1) \wedge (u_d \geq \frac{78}{132}) \\
                0 & \text{otherwise}
            \end{cases}
            \\
            \hat{f}^2_B(u_b) &= 
            \begin{cases}
                1 & u_b \geq \frac{124}{256} \\
                0 & \text{otherwise}
            \end{cases}
        \end{cases}
    \end{equation}
    The corresponding neural network for $\widehat{\cF}_2$ can be written as
    \begin{align}
        \hat{f}^2_D(b, u_d) &= \hat{f}_{\scor}\left( \hat{f}_{\scand} \left( \hat{f}_{\scnot}(b), \hat{f}_{\geq 34/124}(u_d)\right), \hat{f}_{\scand} \left(b, \hat{f}_{\geq 78/132}(u_d)\right) \right) \\
        \hat{f}^2_B(u_b) &= \hat{f}_{\geq 124/256}(u_b).
    \end{align}
    
    $\widehat{\cF}_3$ is constructed such that $P^{\widehat{M}_3}(B = 1 \mid do(D = 1))$ is maximized, namely, 
    \begin{equation}
        \widehat{\cF}_3 := 
        \begin{cases}
            \hat{f}^3_D(u_d) &= 
            \begin{cases}
                1 & u_d \geq \frac{112}{256} \\
                0 & \text{otherwise}
            \end{cases}
            \\
            \hat{f}^3_B(d, u_b, u_d) &= 
            \begin{cases}
                1 & (d = 1) \wedge (u_b < \frac{34}{112}) \wedge (u_d < \frac{112}{256}) \\
                1 & (u_b \geq \frac{34}{112}) \wedge (u_d < \frac{112}{256}) \\
                1 & (d = 1) \wedge (u_b < \frac{54}{144}) \wedge (u_d \geq \frac{112}{256}) \\
                0 & \text{otherwise}
            \end{cases}
        \end{cases}
    \end{equation}
        The corresponding neural network for $\widehat{\cF}_3$ can be written as
    \begin{align}
        \hat{f}^3_D(u_d) &= \hat{f}_{\geq 112/256}(u_d) \\
        \hat{f}^3_B(d, u_b, u_d) &= 
        \hat{f}_{\scor}
        \begin{cases}
            \hat{f}_{\scand} \left( d, \hat{f}_{\scnot} \left(\hat{f}_{\geq 34/112}(u_b)\right), \hat{f}_{\scnot} \left(\hat{f}_{\geq 112/256}(u_d)\right) \right) \\
            \hat{f}_{\scand} \left(\hat{f}_{\geq 34/112}(u_b), \hat{f}_{\scnot} \left(\hat{f}_{\geq 112/256}(u_d)\right) \right) \\
            \hat{f}_{\scand} \left( d, \hat{f}_{\scnot} \left(\hat{f}_{\geq 54/144}(u_b)\right), \hat{f}_{\geq 112/256}(u_d)\right).
        \end{cases}
    \end{align}
    
    Finally, $\widehat{\cF}_4$ is constructed such that $P^{\widehat{M}_4}(B = 1 \mid do(D = 1))$ is minimized, 
    \begin{equation}
        \widehat{\cF}_4 := 
        \begin{cases}
            \hat{f}^4_D(u_d) &= 
            \begin{cases}
                1 & u_d \geq \frac{112}{256} \\
                0 & \text{otherwise}
            \end{cases}
            \\
            \hat{f}^4_B(d, u_b, u_d) &= 
            \begin{cases}
                1 & (d = 0) \wedge (u_b \geq \frac{34}{112}) \wedge (u_d < \frac{112}{256}) \\
                1 & (d = 1) \wedge (u_b < \frac{54}{144}) \wedge (u_d \geq \frac{112}{256}) \\
                0 & \text{otherwise}
            \end{cases}
        \end{cases}
    \end{equation}
    The corresponding neural network for $\widehat{\cF}_4$ can be written as
    \begin{align}
        \hat{f}^4_D(u_d) &= \hat{f}_{\geq 112/256}(u_d) \\
        \hat{f}^4_B(d, u_b, u_d) &= 
        \hat{f}_{\scor}
        \begin{cases}
            \hat{f}_{\scand} \left( \hat{f}_{\scnot} (d), \hat{f}_{\geq 34/112}(u_b), \hat{f}_{\scnot} \left(\hat{f}_{\geq 112/256}(u_d)\right) \right) \\
            \hat{f}_{\scand} \left( d, \hat{f}_{\scnot} \left(\hat{f}_{\geq 54/144}(u_b)\right), \hat{f}_{\geq 112/256}(u_d)\right).
        \end{cases}
    \end{align}
    
    Even though the functions of these NCMs are constructed in very different ways, one can verify that they indeed induce the same layer 1 as shown in Table \ref{tab:ex-ncht-pv}.
    
    Despite this match, they all disagree on a simple layer 2 quantity such as $P(B = 1\mid do(D = 1))$; to witness note that  
    \begin{align}
        P^{\widehat{M}_1}(B = 1 \mid do(D = 1)) &= P(B = 1 \mid D = 1) = \frac{54}{144} = 0.375 \\
        P^{\widehat{M}_2}(B = 1 \mid do(D = 1)) &= P(B = 1) = \frac{132}{256} \approx 0.5156 \\
        P^{\widehat{M}_3}(B = 1 \mid do(D = 1)) &= \frac{166}{256} \approx 0.6484 \\
        P^{\widehat{M}_4}(B = 1 \mid do(D = 1)) &= \frac{54}{256} \approx 0.2109
    \end{align}
    
    Without further information, it's not possible to distinguish which model (if any) is the correct one. Na\"ievely, choosing an arbitrary model is likely to result in misleading results, even if the model can reproduce the given $L_1$ data with high fidelity.
    
    In practice, depending on the chosen model, the conclusion of the study could be dramatically different. For instance, someone who fits models $\widehat{M}_1$ or $\widehat{M}_4$ may conclude that a high-vegetable diet is beneficial for lowering blood pressure, while someone who fits model $\widehat{M}_2$ may conclude that it has no effect at all. Someone who fits model $\widehat{M}_3$ may even conclude that eating more vegetables is harmful for regulating blood pressure.
    
    \hfill $\blacksquare$
\end{example}

\subsection{Structural Constraints Embedded in Causal Bayesian Networks} \label{app:examples-cg} 

In this section, we provide an example to illustrate the inductive bias embedded in causal diagrams, building on the more comprehensive treatment provided in \cite[Sec.~1.4]{bareinboim:etal20}. 

\begin{example}
    \label{ex:cg}
    Consider the following two SCMs:
    \begin{equation*}
        \cM_1 := 
        \begin{cases}
            \*U &:= \{U_X, U_Y\}, \text{ all binary} \\
            \*V &:= \{X, Y\}, \text{ all binary} \\
            \cF_1 &:=
            \begin{cases}
                f^1_X(U_X) &= U_X \\
                f^1_Y(X, U_Y) &= X \oplus U_Y
            \end{cases} \\
            P(\*U) &:= P(U_X = 1) = \frac{1}{2}, P(U_Y = 1) = \frac{1}{4}, U_X \indep U_Y
        \end{cases}
    \end{equation*}
    \begin{equation*}
        \cM_2 := 
        \begin{cases}
            \*U &:= \{U_X, U_Y\}, \text{ all binary} \\
            \*V &:= \{X, Y\}, \text{ all binary} \\
            \cF_2 &:=
            \begin{cases}
                f^2_X(Y, U_X) &= Y \oplus U_X \\
                f^2_Y(U_Y) &= U_Y
            \end{cases} \\
            P(\*U) &:= P(U_X = 1) = \frac{1}{4}, P(U_Y = 1) = \frac{1}{2}, U_X \indep U_Y
        \end{cases}
    \end{equation*}
    
    Note that both $\cM_1$ and $\cM_2$ induce the same $L_1$ distributions. 
    
    For example, in both models, $P(Y = 1) = \frac{1}{2}$, and $P(Y = 1 \mid X = 1) = \frac{3}{4}$. However, even without making any computation, it seems intuitive from the structure of the functions and noise that a causal query like $P(Y = 1 \mid do(X = 1))$ would have different answers in both models. First, note that $f^2_Y$ in $\cM_2$ does not  have $X$ as an argument, and there are no other variables in $\*V$, so intervening on $X$ would have no causal effect on $Y$. In other words, the constraint $P^{\cM_2}(Y = 1 \mid do(X = 1)) = P(Y = 1)$ is implied. More subtly in the case of $\cM_1$,  we note that $X$ directly affects $Y$ and there is no unobserved confounding between $X$ and $Y$. This means that the observed association between $X$ and $Y$ must be due to the causal effect (i.e.\ $P^{\cM_1}(Y = 1 \mid do(X = 1)) = P(Y = 1 \mid X = 1)$). The relationship between $X$ and $Y$ in these two cases can be seen graphically, as shown in Fig.~\ref{fig:simple-graphs}.
    
    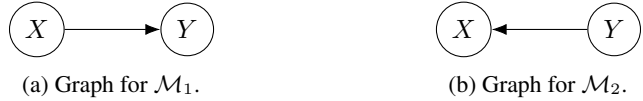
\begin{figure}[h]
    	\centering
    	\begin{subfigure}{0.4\textwidth}
    	    \centering
    	    \begin{tikzpicture}[xscale=1, yscale=1.5]
        		\node[draw, circle] (X) at (-1, -1) {$X$};
        		\node[draw, circle] (Y) at (1, -1) {$Y$};
        
        		\path [-{Latex[length=2mm]}] (X) edge (Y);
        	\end{tikzpicture}
        	\caption{Graph for $\cM_1$.}
        	\label{fig:simple-graphs-a}
    	\end{subfigure}
    	\begin{subfigure}{0.4\textwidth}
    	    \centering
    	    \begin{tikzpicture}[xscale=1, yscale=1.5]
        		\node[draw, circle] (X) at (-1, -1) {$X$};
        		\node[draw, circle] (Y) at (1, -1) {$Y$};
        
        		\path [-{Latex[length=2mm]}] (Y) edge (X);
        	\end{tikzpicture}
        	\caption{Graph for $\cM_2$.}
        	\label{fig:simple-graphs-b}
    	\end{subfigure}
    	\caption{Causal diagrams for SCMs in Example \ref{ex:cg}. A directed edge from $A$ to $B$ indicates that $A$ is an argument of the function for $B$.}
    	\label{fig:simple-graphs}
    \end{figure}
    
	\hfill $\blacksquare$
\end{example}

As illustrated in the previous example, and discussed more formally in \citep{bareinboim:etal20}, these equality constraints across distributions arise from the qualitative structural properties of the SCM such as which variable is an argument of which function, and which variables share exogenous influence. In fact, these constraints are  invariant to the details of the functions and the distribution of the exogenous variables. 

Moreover, there are exponentially many constraints implied by an SCM in the collection of  $L_1$ and $L_2$ distributions, which can be parsimoniously represented in the form of a \textit{causal bayesian network} (Def.~\ref{def:cbn})\footnote{This is a generalization of the notion of markov compatibility used in  $L_1$ models, such as bayesian networks \cite{pearl:88a}. In our case, there are multiple distributions of different nature, observational and experimental, and the constraints are among them (i.e., not conditional independences). } In fact, the  graphical component of this object provides an intuitive interpretation for these constraints. It can be shown that the closure of such constraints entails the do-calculus \citep[Thm.~5]{bareinboim:etal20}, which means that all identifiable effects can be computed from them (due to the completeness shown in \cite{lee:etal19}). 

\begin{figure}[h]
    \centering
    \begin{tikzpicture}[xscale=1, yscale=1.5]
		\node[draw, circle] (D) at (-1, 0) {$D$};
		\node[draw, circle] (B) at (1, 0) {$B$};

		\path [-{Latex[length=2mm]}] (D) edge (B);
		\path [dashed, {Latex[length=2mm]}-{Latex[length=2mm]}, bend left] (D) edge (B);
	\end{tikzpicture}
	\caption{Causal diagram $\cG$ of $\cM^*$ from Example \ref{ex:expressiveness-basic}}
	\label{fig:ex-expr-basic-cg}
\end{figure}

Revisiting Examples \ref{ex:expressiveness-basic} and \ref{ex:ncht} and applying the definition of causal diagram (Def.~\ref{def:scm-cg}), we can see from the structure of $\cM^*$ that the causal diagram $\cG$ in Fig.~\ref{fig:ex-expr-basic-cg} is induced. If we had this information, we could immediately eliminate $\widehat{M}_2$ from Example \ref{ex:ncht} as a possible model. The reason is that we can read off from diagram $\cG$ that $B$ cannot be an argument of $f_D$. 

With an SCM-like structure, the NCM naturally induces its own structural constraints. Def.~\ref{def:g-cons-nscm} in the main paper shows how to construct an NCM to fit the same constraints entailed by a given causal diagram from another SCM. In some way, the original model induces mark in the collection of observational and experimental distributions. In the case of the NCM, these constraints are used as input, inductive bias to constrain its functions.  This is powerful because any inferences performed on a $\cG$-constrained NCM will automatically satisfy the constraints of $\cG$.

\subsection{Solving Identification through NCMs} \label{app:examples-id}

\begin{figure}[h]
    \centering
	\begin{subfigure}[t]{0.4\textwidth}
	    \centering
	    \includegraphics[width=\textwidth,height=0.5\textheight,keepaspectratio]{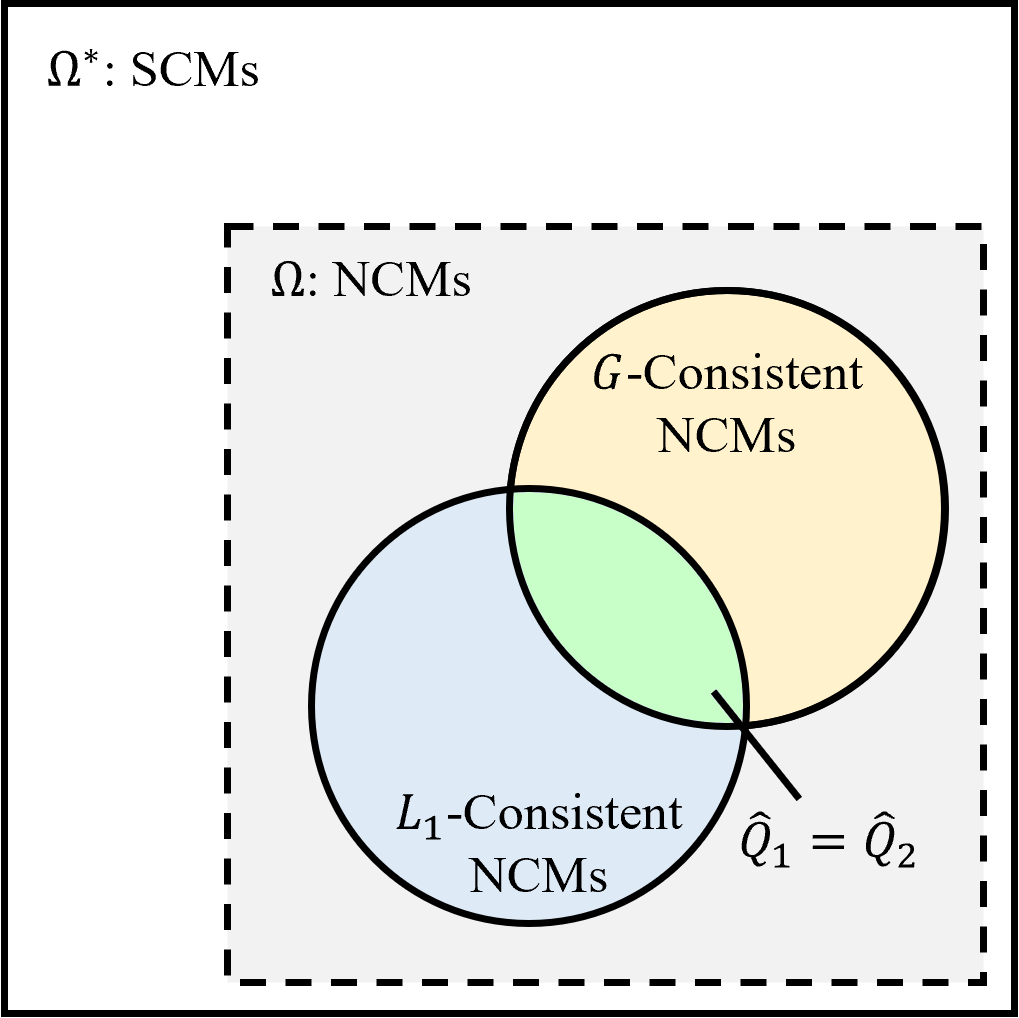}
        \caption{In the identifiable case, all NCMs that are $\cG$-consistent and $L_1$-consistent with $\cM^*$ will also match in $Q$.}
        \label{fig:id-venn-diagram-idcase}
	\end{subfigure}
	\quad
	\begin{subfigure}[t]{0.4\textwidth}
	    \centering
	    \includegraphics[width=\textwidth,height=0.5\textheight,keepaspectratio]{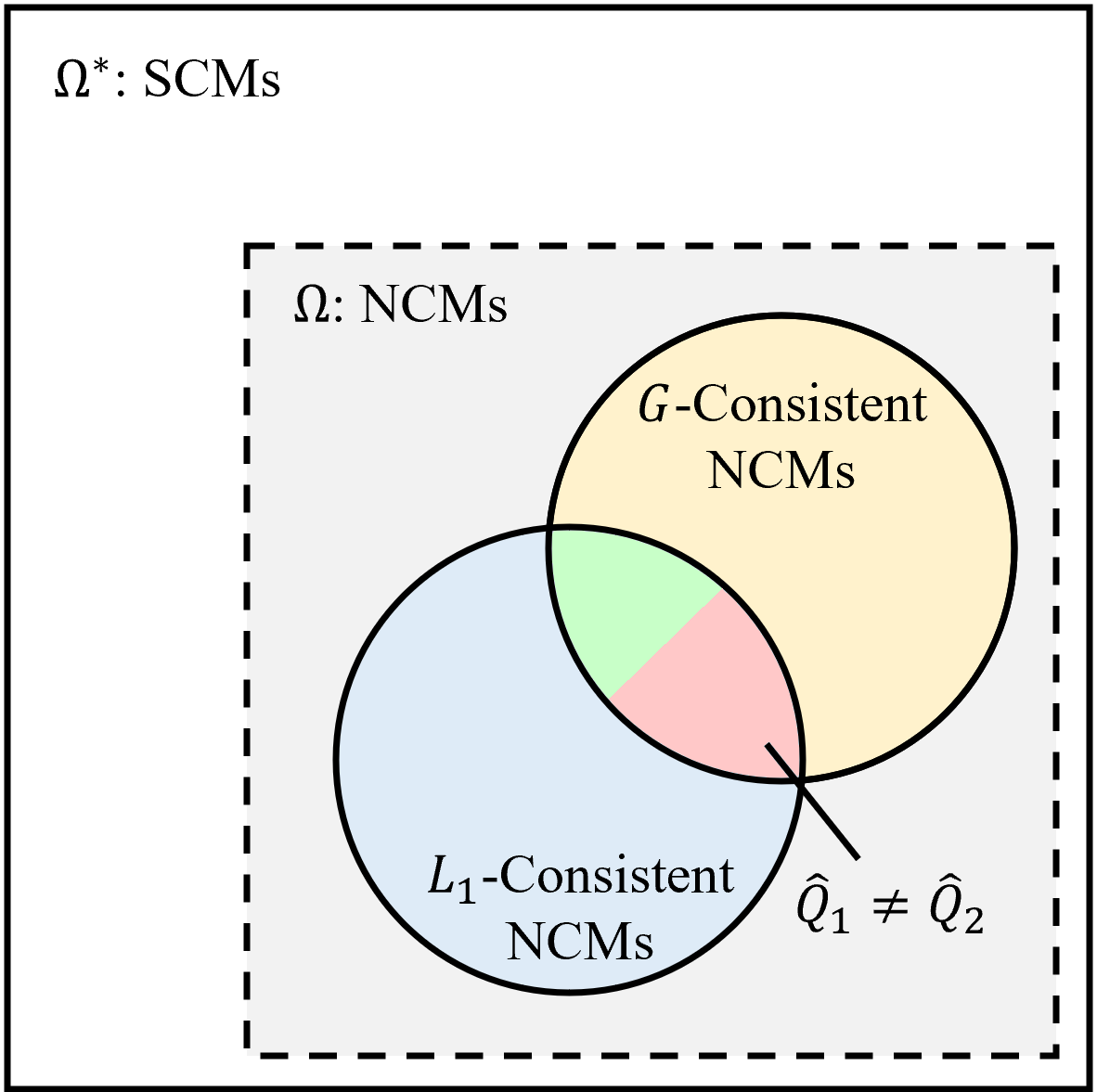}
        \caption{In the non-identifiable case, there could exist two NCMs, $\widehat{M}_1$ and $\widehat{M}_2$, that are both $\cG$-consistent and $L_1$-consistent but still disagree in $Q$.}
        \label{fig:id-venn-diagram-nonidcase}
	\end{subfigure}
	\caption{A visual representation of the ID problem. Here, $Q$ is a query of interest, and $\widehat{Q}_i$ is the answer for that query induced by NCM $\widehat{M}_i$. The goal is to check if all NCMs that are $\cG$-consistent and $L_1$-consistent are also consistent in $Q$.}
	\label{fig:id-venn-diagram}
\end{figure}

The notion of identification requires that a certain effect is identifiable by all (unobserved) SCMs compatible with the corresponding structural constraints. 
This was extended to NCMs through Def. ~\ref{def:nscm-id}; see also Fig.~\ref{fig:id-venn-diagram}. 
In fact, Alg.~\ref{alg:nscm-solve-id} helps to solve the neural identification problem such that if there are two NCMs compatible with the constraints but with different predictions for the causal effect, it will return ``\texttt{FAIL}''. Otherwise, it will returns the estimation of the query in every ID case. 
Note that identification of an effect does not require that the NCM and the true SCM have the same functions, just the effects need to match. 
The following two examples illustrate both positive and negative instances of such  operation.  

\begin{example}
    \label{ex:non-id}
    Consider once again the study of diet on blood pressure introduced in Example \ref{ex:expressiveness-basic}. Suppose we are particularly interested in the $L_2$ expression of $Q = P(B = 1 \mid do(D = 1))$, the causal effect of diet on blood pressure. The true SCM $\cM^*$ is not available, but $L_1$ data ($P(\*v)$) presented in Table \ref{tab:ex-ncht-pv} is (as shown Example \ref{ex:ncht}). Furthermore, we are informed by a doctor that the two variables follow the relation specified by the causal diagram $\cG$ in Fig.~\ref{fig:ex-expr-basic-cg}. Is $Q$ identifiable from $P(\*v)$ and $\Omega(\cG)$, where $\Omega(\cG)$ is the set of all $\cG$-constrained NCMs?
    
    Unfortunately, the answer is no for this particular case. To see why, example \ref{ex:ncht} provides 4 NCMs whose $L_1$ distributions match $P(\*v)$ but disagree on $Q$. Even though $\widehat{M}_2$ can be eliminated as a possible proxy since it is not $\cG$-consistent, the other three models are compatible with $\cG$, so we still cannot pinpoint the correct answer for $Q$. 
    
    Running Alg.~\ref{alg:nscm-solve-id} on this setting would result in the algorithm generating two sets of parameters, $\bm{\theta}^*_{\min}$ and $\bm{\theta}^*_{\max}$, that minimize and maximize $Q$. The behaviors of an NCM with these parameters may reflect the behaviors of $\widehat{M}_3$ and $\widehat{M}_4$ from Example \ref{ex:ncht}, and they do not agree on $Q$.
    	\hfill $\blacksquare$
\end{example}

\begin{example}
    \label{ex:id}
    Suppose now we return to the data collection process and we receive new information about a third variable, sodium intake ($S$) ($S=1$ represents a high sodium diet, $0$ otherwise). 
    
    With the introduction of $S$ in the endogenous set $\*V$, $\cM^*$ can be written as follows:
    \vspace{-0.15in}
    
    \begin{equation}
        \label{eq:ex-id-scm}
        \cM^* := 
        \begin{cases}
            \*U &:= \{U_D, U_{DB}, U_{S}, U_{B}\}, \text{ all binary} \\
            \*V &:= \{D, S, B\} \\
            \cF &:=
            \begin{cases}
                f_D(U_D, U_{DB}) &= \neg U_D \wedge \neg U_{DB} \\
                f_S(D, U_S) &= \neg D \oplus U_{S} \\
                f_B(S, U_{B}) &= (S \vee U_{DB}) \oplus U_{B}
            \end{cases} \\
            P(\*U) &:=
            \begin{cases}
                P(U_D = 1) = P(U_{DB} = 1) = P(U_{S} = 1) = P(U_{B} = 1) = \frac{1}{4}\\
                \text{ all } U \in \*U \text{ are independent}
            \end{cases}
        \end{cases}
    \end{equation}
    
    As reflected in the updated $\cM^*$, diet only affects blood pressure through sodium content, which tends to be higher in low-vegetable diets. Note that the $L_1$, $L_2$, and $L_3$ distributions relating $D$ and $B$ are unchanged with this modification of $\cM^*$. In this case, the query of interest is evaluated as
    \begin{align*}
        P(B = 1 \mid do(D = 1)) &= P\left((S_{D = 1} \vee U_{DB}) \oplus U_{B} = 1\right) \\
        &= P\left(((\neg 1 \oplus U_{S}) \vee U_{DB}) \oplus U_{B} = 1\right) \\
        &= P\left((U_S \vee U_{DB}) \oplus U_{B} = 1\right) \\
        &= \frac{120}{256} = 0.46875, \numberthis \label{eq:ex-id-l2-query}
    \end{align*}
    which matches the result in Eq.~\ref{eq:ex-expr-basic-l2-query}.
    
    \begin{table}[h]
        \centering
        \begin{tabular}{lll|l}
        \hline \hline
        $D$ & $S$ & $B$ & $P(D, S, B)$ \\ \hline \hline
        0   & 0   & 0   & $13/256$     \\ \hline
        0   & 0   & 1   & $15/256$     \\ \hline
        0   & 1   & 0   & $21/256$     \\ \hline
        0   & 1   & 1   & $63/256$     \\ \hline
        1   & 0   & 0   & $81/256$     \\ \hline
        1   & 0   & 1   & $27/256$     \\ \hline
        1   & 1   & 0   & $9/256$      \\ \hline
        1   & 1   & 1   & $27/256$     \\ \hline
        \end{tabular}
        \caption{Updated observational distribution $P(\*V)$ induced by $\cM^*$ for Example \ref{ex:id}.}
        \label{tab:ex-id-pv}
    \end{table}
    
    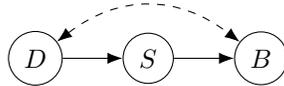
\begin{figure}[h]
        \centering
        \begin{tikzpicture}[xscale=1.5, yscale=2]
    		\node[draw, circle] (D) at (-1, 0) {$D$};
    		\node[draw, circle] (S) at (0, 0) {$S$};
    		\node[draw, circle] (B) at (1, 0) {$B$};
    
    		\path [-{Latex[length=2mm]}] (D) edge (S);
    		\path [-{Latex[length=2mm]}] (S) edge (B);
    		\path [dashed, {Latex[length=2mm]}-{Latex[length=2mm]}, bend left] (D) edge (B);
    	\end{tikzpicture}
    	\caption{Causal diagram $\cG$ of $\cM^*$ with additional $S$ variable for Example \ref{ex:id}}
    	\label{fig:ex-id-cg}
    \end{figure}
    
    This is just the computation of the distribution from Nature's perspective. As the previous examples, we do not have access to the true model ($\cM^*$), just the observational distribution $P(\*v)$, as shown in  Table \ref{tab:ex-id-pv}. We do have access to the updated causal diagram as shown in Fig.~\ref{fig:ex-id-cg}. 
    
    The addition of the new variable and the refinement of the model change the identifiability status of the query $Q = P(B = 1 \mid do(D = 1))$. No matter how Alg.~\ref{alg:nscm-solve-id}
    chooses the parameters for $\bm{\theta}^*_{\min}$ and $\bm{\theta}^*_{\max}$, $\widehat{M}$ will always induce the same result for $Q$ as long as it induces the correct $P(\*v)$. Specifically, through standard do-calculus derivation we would obtain: 
    \begin{align*}
        &P(B = 1 \mid do(D = 1)) &\\
        &= \sum_{s \in \cD_{S}} P(s \mid do(D = 1)) P(B = 1 \mid do(D = 1), s) & \text{Marginalization} \\ 
        &= \sum_{s \in \cD_{S}} P(s \mid do(D = 1)) P(B = 1 \mid do(D = 1), do(s)) & \text{Rule 2} \\ 
        &= \sum_{s \in \cD_{S}} P(s \mid do(D = 1)) P(B = 1 \mid  do(s)) & \text{Rule 3} \\ 
        &= \sum_{s \in \cD_{S}} P(s \mid D = 1) P(B = 1 \mid  do(s)) & \text{Rule 2} \\ 
        &= \sum_{s \in \cD_{S}} P(s \mid D = 1) \sum_{d \in \cD_{D}} P(B = 1 \mid d, do(s)) P(d \mid do(s)) & \text{Marginalization} \\
        &= \sum_{s \in \cD_{S}} P(s \mid D = 1) \sum_{d \in \cD_{D}} P(B = 1 \mid d, do(s)) P(d) & \text{Rule 3} \\
        &= \sum_{s \in \cD_{S}} P(s \mid D = 1) \sum_{d \in \cD_{D}} P(B = 1 \mid d, s) P(d) & \text{Rule 2}\\
        \end{align*}
        Finally, we can replace the corresponding probabilities with the actual values, i.e., 
        \begin{align*}
        P(B = 1 \mid do(D = 1))
        &= P(S = 0 \mid D = 1)P(B = 1 \mid D = 0, S = 0) P(D = 0) \\
        &+ P(S = 0 \mid D = 1)P(B = 1 \mid D = 1, S = 0) P(D = 1) \\
        &+ P(S = 1 \mid D = 1)P(B = 1 \mid D = 0, S = 1) P(D = 0) \\
        &+ P(S = 1 \mid D = 1)P(B = 1 \mid D = 1, S = 1) P(D = 1) \\
        &= \left(\frac{108}{144}\right) \left(\frac{15}{28}\right) \left(\frac{112}{256}\right) + \left(\frac{108}{144}\right) \left(\frac{27}{108}\right) \left(\frac{144}{256}\right) \\
        &+
        \left(\frac{36}{144}\right) \left(\frac{63}{84}\right) \left(\frac{112}{256}\right) + \left(\frac{36}{144}\right) \left(\frac{27}{36}\right) \left(\frac{144}{256}\right) \\
        &= \frac{15}{32} = 0.46875 \numberthis \label{eq:ex-id-docalc}
    \end{align*}
    On the other hand, \textit{any} $\cG$-constrained NCM that induces $P(\*v)$ will produce the same result, matching Eq.~\ref{eq:ex-id-docalc}. For instance, consider the following NCM construction:
    
    \begin{equation}
        \label{eq:ex-id-ncm}
        \widehat{M} := 
        \begin{cases}
            \widehat{\*U} &:= \{\widehat{U}_{DB}, \widehat{U}_S\}, \cD_{\widehat{U}_{DB}} = \cD_{\widehat{U}_S} = [0, 1] \\
            \*V &:= \{D, S, B\} \\
            \widehat{\cF} &:= 
            \begin{cases}
                \hat{f}_D(u_{DB}) &=
                \begin{cases}
                    1 & u_{DB} \geq \frac{112}{256} \\
                    0 & \text{otherwise}
                \end{cases}
                \\
                \hat{f}_S(d, u_S) &=
                \begin{cases}
                    1 & (d = 0) \wedge (u_S \geq \frac{1}{4}) \\
                    1 & (d = 1) \wedge (u_S \geq \frac{3}{4}) \\
                    0 & \text{otherwise}
                \end{cases}
                \\
                \hat{f}_B(b, u_{DB}) &=
                \begin{cases}
                    1 & (s = 0) \wedge \left((\frac{52}{256} \leq u_{DB} < \frac{112}{256}) \vee (u_{DB} \geq \frac{220}{256}) \right) \\
                    1 & (s = 1) \wedge \left((\frac{28}{256} \leq u_{DB} < \frac{112}{256}) \vee (u_{DB} \geq \frac{148}{256}) \right) \\
                    0 & \text{otherwise}
                \end{cases}
            \end{cases}\\
            P(\widehat{\*U}) &:= \widehat{U}_{DB}, \widehat{U}_S \sim \unif(0, 1), \widehat{U}_{DB} \indep \widehat{U}_S
        \end{cases}
    \end{equation}
    
    The corresponding neural network for  $\widehat{\cF}$ can be written as 
    \vspace{-0.15in}
    \begin{align}
        \hat{f}_D(u_{DB}) &= \hat{f}_{\geq 112/256}(u_{DB}) \\
        \hat{f}_S(d, u_S) &= \hat{f}_{\scor}\left(\hat{f}_{\scand} \left(\hat{f}_{\scnot}(d), \hat{f}_{\geq 1/4} (u_S) \right), \hat{f}_{\scand} \left(d, \hat{f}_{\geq 3/4} (u_S)\right)\right)\\
        \hat{f}_B(s, u_{DB}) &= 
        \hat{f}_{\scor}
        \begin{cases}
            \hat{f}_{\scand} \left(\hat{f}_{\scnot}(s), \hat{f}_{\geq 52/256}(u_{DB}), \hat{f}_{\scnot}\left(\hat{f}_{\geq 112/256}(u_{DB}) \right) \right) \\
            \hat{f}_{\scand} \left(\hat{f}_{\scnot}(s), \hat{f}_{\geq 220/256}(u_{DB}) \right) \\
            \hat{f}_{\scand} \left(s, \hat{f}_{\geq 28/256}(u_{DB}), \hat{f}_{\scnot}\left(\hat{f}_{\geq 112/256}(u_{DB}) \right) \right) \\
            \hat{f}_{\scand} \left(s, \hat{f}_{\geq 148/256}(u_{DB}) \right). \\
        \end{cases}
    \end{align}
    
    Note that $\widehat{M}$ follows the format of a $\cG$-constrained NCM defined in Def.~\ref{def:g-cons-nscm}. Additionally, one can verify that $\widehat{M}$ induces the same $L_1$ quantities shown in Table \ref{tab:ex-id-pv}. Applying the mutilation procedure on $\widehat{M}$ to compute the query, 
    \begin{align*}
        &P^{\widehat{M}}(B = 1 \mid do(D = 1)) \\
        &= P(S_{D = 1} = 0)P \left( \left(\frac{52}{256} \leq U_{DB} < \frac{112}{256} \right) \vee \left(U_{DB} \geq \frac{220}{256} \right) \right) \\
        &+ P(S_{D = 1} = 1)P\left( \left(\frac{28}{256} \leq U_{DB} < \frac{112}{256}\right) \vee \left(U_{DB} \geq \frac{148}{256}\right)\right) \\
        &= \left(\frac{96}{256}\right)P(S_{D = 1} = 0) + \left(\frac{192}{256}\right)P(S_{D = 1} = 1) \\
        &= \left(\frac{96}{256}\right)P\left(U_S < \frac{3}{4}\right) + \left(\frac{192}{256}\right)P\left(U_S \geq \frac{3}{4}\right) \\
        &= \frac{72}{256} + \frac{48}{256} = \frac{120}{256} = 0.46875.
    \end{align*}
    This result indeed matches Eq.~\ref{eq:ex-id-l2-query}. We further note $\widehat{M}$ is constructed to fit $P(\*v)$ and not necessarily to match $\cM^*$.
     This is evident when comparing the inputs and outputs of the functions in $\widehat{\cF}$ to those from $\cM^*$. Still, due to the structural constraints encoded in the NCM, it so happens that $\widehat{M}$ also matches in our $L_2$ query of interest even though it is only constructed to match only on layer 1. It is  remarkable that incorporating these family of constraints into the NCM structure allows one to successfully perform cross-layer inferences.
    
    \hfill $\blacksquare$
\end{example}

Thm.~\ref{thm:nscm-id-equivalence} is powerful because it says that performing the identification task in the class of \ncm{}s will yield the correct result, even if the true model can be any SCM. We note that this is not the case for every class of models, even if a model from that class can be both $L_1$-consistent and $\cG$-consistent with the true model. In particular, the expressivity of the \ncm{} allows for this result, and using a less expressive model may produce incorrect results. We illustrate this with the following examples.

\begin{example}
    \label{ex:nonexpr-markovian}
    Suppose we attempt to identify $P(B = 1 \mid do(D = 1))$ from the problem in Example \ref{ex:non-id} using a less expressive model class such as the set of all Markovian models. Recall that the given observational $P(\*V)$ is shown in Table \ref{tab:ex-ncht-pv}, and the given causal diagram $\cG$ is shown in Fig.~\ref{fig:ex-expr-basic-cg}. An example model from the class of Markovian models that achieve $L_1$ and $\cG$ consistency with $\cM^*$ is $\widehat{M}_1$ from Example \ref{ex:ncht}.
    
    Note that due to the lack of unobserved confounding in Markovian models, the induced value for $P(B = 1 \mid do(D = 1))$ for any $L_1$ and $\cG$-consistent Markovian model including $\widehat{M}_1$ is $P(B = 1 \mid do(D = 1)) = P(B = 1 \mid D = 1) = \frac{54}{144} = 0.375$. Since all models from this class (Markovian) agree on the same value for this quantity, the conclusion reached is that it must be identifiable. However, this is clearly not the case, as we know the true value, $P^{\cM^*}(B = 1 \mid do(D = 1)) = 0.46875$. In this case, the reason we reach the wrong result is that the true model is not Markovian, and the set of all Markovian models is not expressive enough to account for all possible SCMs.     \hfill $\blacksquare$
\end{example}

More interestingly, another example using a different class of models follows.

\begin{example}
    \label{ex:nonexpr}
    Consider a special class of SCMs which, given variables $\*V$ and the graph $\cG$, take the following structure.
    \begin{equation}
        \label{eq:ex-nonexpr-scm}
        \widehat{M} =
        \begin{cases}
            \widehat{\*U} &:= \{\widehat{U}_{\*C} : \*C \in C^2(\cG)\}, \cD_{\widehat{U}} = [0, 1] \text{ for all } \widehat{U} \in \widehat{\*U}, \\
            \widehat{\*V} &:= \*V, \\
            \widehat{\cF} &:= \left\{\hat{f}_{V}(\pai{V}, \ui{V}) = \left(\underset{x \in \pai{V}}{\bigoplus} x\right) \oplus \left(\underset{u_i \in \ui{V}}{\bigoplus} \mathbbm{1}(a_{V,i} \leq u_i < b_{V, i})\right) : V \in \*V\right\}, \\
            & a_{V, i}, b_{V, i} \in [0, 1], a_{V, i} \leq b_{V, i}, \\
            & \pai{V} \text{ is the set of parents of } V \text{ in } \cG, \\
            & \ui{V} = \{\widehat{U}_{\*C} : \*C \in C^2(\cG) \text{ such that } V \in \*C\}, \\
            P(\widehat{\*U}) &:= \widehat{U} \sim \unif(0, 1) \text{ for all } \widehat{U} \in \widehat{\*U}.
        \end{cases}
    \end{equation}
    Here, $C^2(\cG)$ denotes the set of all $C^2$-components of $\cG$. The $\oplus$ operator denotes bitwise XOR, with the larger version representing the bitwise XOR of all elements of a set (0 if empty).
    
    \begin{minipage}{\textwidth}
      \begin{minipage}[b]{0.49\textwidth}
        \centering
        \begin{tikzpicture}
    		\node[draw, circle] (X) at (-1, 0) {$X$};
    		\node[draw, circle] (Y) at (1, 0) {$Y$};
    
    		\path [-{Latex[length=2mm]}] (X) edge (Y);
    		\path [dashed, {Latex[length=2mm]}-{Latex[length=2mm]}, bend left] (X) edge (Y);
    	\end{tikzpicture}
    	\captionof{figure}{Causal diagram $\cG$ of $\cM^*$ from Example \ref{ex:nonexpr}}
    	\label{fig:ex-nonexpr-cg}
      \end{minipage}
      \hfill
      \begin{minipage}[b]{0.49\textwidth}
        \centering
        \begin{tabular}{ll|l}
        \hline \hline
        $X$ & $Y$ & $P(X, Y)$ \\ \hline \hline
        0   & 0   & $p_0$  \\ \hline
        0   & 1   & $p_1$  \\ \hline
        1   & 0   & $p_2$  \\ \hline
        1   & 1   & $p_3$  \\ \hline
        \end{tabular}
        \captionof{table}{General observational distribution $P(\*V)$ induced by $\cM^*$ from Example \ref{ex:nonexpr}. We assume positivity (i.e.~$p_i > 0$ for all $i$).}
        \label{tab:ex-nonexpr-pv}
        \end{minipage}
      \end{minipage}
    
We may opt to use a model like this for its simplicity, since having fewer parameters allows for easier optimization. Suppose we try to use this model class to decide if $P(Y \mid do(X))$ is identifiable in the case where the true model, $\cM^*$, induces the graph $\cG$ in Fig.~\ref{fig:ex-nonexpr-cg}. We are given $\cG$ along with $P(\*v)$, which can be represented as shown in Table \ref{tab:ex-nonexpr-pv}. If we construct a model $\widehat{M}$ from Eq.~\ref{eq:ex-nonexpr-scm}, it would have the following form.
    
    \begin{equation*}
        \widehat{M} =
        \begin{cases}
            \widehat{\*U} &:= \{\widehat{U}_{XY}\}, \cD_{\widehat{U}_{XY}} = [0, 1],  \\
            \widehat{\*V} &:= \{X, Y\} \\
            \widehat{\cF} &:=
            \begin{cases}
                \hat{f}_X(\widehat{u}_{XY}) &= \mathbbm{1}(a_{X} \leq \widehat{u}_{XY} < b_{X}), \\
                \hat{f}_Y(x, \widehat{u}_{XY}) &= x \oplus \mathbbm{1}(a_{Y} \leq \widehat{u}_{XY} < b_{Y})
            \end{cases}
            \\
            P(\widehat{\*U}) &:= U_{XY} \sim \unif(0, 1).
        \end{cases}
    \end{equation*}
    
    The parameters we choose to fit $P(\*V)$ are the values of $a_X, b_X, a_Y, b_Y$. However, note that there in order for $\widehat{M}$ to attain $L_1$-consistency with $\cM^*$, we must have:
    
    \begin{enumerate}
        \item $b_X - a_X = p_2 + p_3$.
        \item ($a_X - a_Y = p_1$ and $b_Y - a_X = p_2$) or ($b_Y - b_X = p_1$ and $b_X - a_Y = p_2$)
    \end{enumerate}
    
    2 implies $b_Y - a_Y = p_1 + p_2$, so in all cases, $P^{\widehat{M}}(Y = 1 \mid do(X = 0)) = p_1 + p_2$ and $P^{\widehat{M}}(Y = 1 \mid do(X = 1)) = p_0 + p_3$. In other words, since $P^{\widehat{M}}(Y \mid do(X))$ matches for all models from this class that are $L_1$ and $\cG$-consistent, the conclusion reached is that it must be identifiable.

    However, suppose $\cM^*$ takes the following form:
    \begin{equation*}
        \cM^* =
        \begin{cases}
            \*U &:= \{U_X, U_Y\}, \cD_{U_X} = \cD_{U_Y} = \{0, 1\},  \\
            \*V &:= \{X, Y\} \\
            \cF &:=
            \begin{cases}
                f_X(u_{X}) &= u_X, \\
                f_Y(x, u_{Y}) &= u_Y,
            \end{cases}
            \\
            P(\*U) &:=
            \begin{cases}
                P(U_X = 0, U_Y = 0) &= p_0 \\
                P(U_X = 0, U_Y = 1) &= p_1 \\
                P(U_X = 1, U_Y = 0) &= p_2 \\
                P(U_X = 1, U_Y = 1) &= p_3
            \end{cases}
        \end{cases}
    \end{equation*}
    One can quickly verify that $\cM^*$ does indeed induce the $P(\*V)$ in Table \ref{tab:ex-nonexpr-pv} and $\cG$ from Fig.~\ref{fig:ex-nonexpr-cg}. Note that in this $\cM^*$, $P(Y = 1 \mid do(X = 0)) = P(Y = 1 \mid do(X = 1)) = p_1 + p_3$. In fact, this means there is a complete mismatch from any possible choice of $\widehat{M}$. Like shown in Example \ref{ex:non-id}, $P(Y \mid do(X))$ is actually a non-identifiable query when $\cG$ takes the form in Fig.~\ref{fig:ex-nonexpr-cg}.
    Simply put, $\cM^*$ could be an SCM for which the model class in Eq.~\ref{eq:ex-nonexpr-scm} is not expressive enough to capture.
    
    \hfill $\blacksquare$
\end{example}

\newpage
\subsection{Symbolic versus Optimization-based Approaches for Identification}\label{app:examples-symbolic-neural}

In this section, we discuss the relationship between current approaches to causal identification/estimation versus the new, neural/optimization-based approach proposed in this work. 

The problem of effect identification has been extensively studied in the literature, and \citep{pearl:95a} introduced the \textit{do-calculus} (akin to differential or integral calculus), a set of symbolic rules that can be applied to any expression and evaluate whether a certain invariance across interventional distributions hold given local assumptions encoded in a causal diagram. Multiple applications of the rules can be combined to search for a reduction from a target effect to the distributions in which data is available. 
 There exist  algorithms capable of utilizing the structural constraints encoded in the causal diagram, based on what is known as \textit{c-component factorization} \citep{tian:pea02-general-id}, to find a symbolic expression of the $L_2$ query in terms of the $L_1$ distribution efficiently (for more general tasks, see, e.g.,  \cite{lee:etal19,correa2020gtr}). We call this approach  \textit{symbolic} since it aims to find a closed-form expression of the target effect in terms of the input distributions.

Alternatively, one could take an optimization-based route to tackle the identification problem, such as in Alg.~\ref{alg:nscm-solve-id} discussed in Sec.~\ref{sec:neural-id}. This approach entails a search through the space of possible structural models while trying to maximize/minimize the value of the target query subject to the constraints found in the inputted data. We call this an \textit{optimization-based approach}.  These two approaches are indeed linked as evident from Thm.~\ref{thm:dual-graph-id}, which establishes a duality  saying that the identification status of a target query is shared across symbolic and optimization-based approaches.  

We discuss next some of the possible trade-offs and synergies between these two families of methods. We start with the symbolic approach and re-stating the definition of identification \citep{pearl:2k}, with minor modifications to highlight its model-theoretic perspective regarding the space of SCMs:

\begin{definition}[Causal Effect Identification]
    \label{def:classic-id}
    Let $\Omega^*$ be the space containing all SCMs defined over endogenous variables $\*V$. We say that a causal effect $P(\*y \mid do(\*x))$ is identifiable from the observational distribution $P(\*v)$ and the causal diagram $\cG$ if $P^{\cM_1}(\*y \mid do(\*x)) = P^{\cM_2}(\*y \mid do(\*x))$ for every pair of models $\cM_1, \cM_2 \in \Omega^*$ such that $\cM_1$ and $\cM_2$ both induce $\cG$ and $P^{\cM_1}(\*v) = P^{\cM_2}(\*v)$
    \hfill $\blacksquare$
\end{definition}

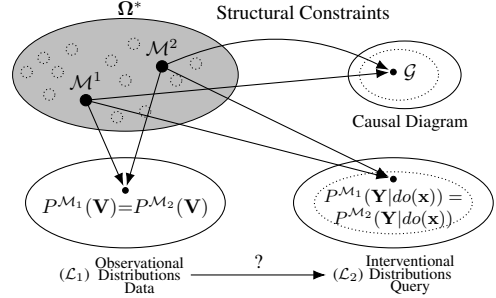
\begin{wrapfigure}{r}{0.45\textwidth}
    \begin{tikzpicture}[scale=0.75, every node/.append style={transform shape}]
        \filldraw [fill=gray!50, line width=0.01mm] (-2.05,-0.05) ellipse (2.0 and 1.0);
      	\node [align=center, font=\fontsize{10}{0}\selectfont] at (-2.1,,-0.35) {$\bm{\Omega}^*$};
      	\node [inner sep=0] (omg) at (-1.0,0.85) {};
      	\node [inner sep=0] (omgspace) at (-1.0,0.4) {};
      	
      	\node at (-2.75,-0.15) {$\cM^1$};
      	\draw [fill=black] (-2.75,-0.5) circle (0.1);
      	\node [inner sep=0] (ms) at (-2.75,-0.5) {\ };
      	\node at (-1.4,0.45) {$\cM^2$};
      	\draw [fill=black] (-1.4,0.1) circle (0.1);
      	\node [inner sep=0] (mp) at (-1.4,0.1) {};
      	
      	\draw [densely dotted] (-1.9,0.5) circle (0.1);
      	\draw [densely dotted] (-2.3,0.2) circle (0.1);
      	\draw [densely dotted] (-2.05,-0.15) circle (0.1);
      	\draw [densely dotted] (-0.8,0.15) circle (0.1);
      	\draw [densely dotted] (-1.15,-0.15) circle (0.1);
      	
      	\draw [densely dotted] (-3.0,0.4) circle (0.1);
        \draw [densely dotted] (-3.5,0.25) circle (0.1);
        \draw [densely dotted] (-3.25,0.0) circle (0.1);
        \draw [densely dotted] (-3.8, 0.0) circle (0.1);
        \draw [densely dotted] (-3.4,-0.4) circle (0.1);
        \draw [densely dotted] (-2.2,-0.8) circle (0.1);
        \draw [densely dotted] (-1.7,-0.7) circle (0.1);

      	\draw (-2.05,-2.32) ellipse (1.77 and 0.8);
      	\node [align=center, font=\fontsize{8}{0}\selectfont] (l1) at (-2.95,-3.6) {($\Ll_1$)};
      	\node [align=center, font=\fontsize{8}{0}\selectfont] (l1) at (-1.8,-3.6) {Observational\\ Distributions\\ Data};
      	\node at (-2.05,-2.4) {$P^{\cM_1}(\*V){=}P^{\cM_2}(\*V)$};
      	\draw [fill=black] (-2.05,-2.1) circle (0.05);
      	\node [inner sep=0.2em] (pl1) at (-2.05,-2.1) {\ };
    
      	\draw (2.7,-2.32) ellipse (1.77 and 0.8);
      	\node [align=center, font=\fontsize{8}{0}\selectfont] (l2) at (1.9,-3.6) {($\Ll_2$)};
      	\node [align=center, font=\fontsize{8}{0}\selectfont] (l2t) at (3.0,-3.6) {Interventional\\ Distributions\\ Query};
      	\draw [densely dotted] (2.7,-2.32) ellipse (1.4 and 0.55);
      	\node [align=center] at (2.7,-2.4) {{\small $P^{\cM_1}(\*Y | do(\*x))=$} \\ {\small $P^{\cM_2}(\*Y | do(\*x))$}};
    
      	\draw [fill=black] (2.7,-1.9) circle (0.05);
      	\node [inner sep=0.2em] (pl21) at (2.7,-1.9) {\ };

      	\path [-Latex] (ms) edge (pl1);
      	\path [-Latex] (ms) edge (pl21);
      	\path [-Latex] (mp) edge (pl1);
      	\path [-Latex] (mp) edge (pl21);
      	\path [-Latex] (l1) edge node[above]  {?} (l2);

      	\draw (2.9,-0.05) ellipse (1 and 0.6);
      	\node [align=center, font=\fontsize{10}{0}\selectfont] at (1.1,1.05) {Structural Constraints};
      	\node [align=center, font=\fontsize{9}{0}\selectfont] at (3.0,-0.9) {Causal Diagram};
      	\draw [densely dotted] (2.8,-0.05) ellipse (0.7 and 0.45);
      	\node [align=center] at (3.0,0) {$\cG$};
      	\draw [fill=black] (2.7,0) circle (0.05);
      	\node [inner sep=0.2em] (g1) at (2.7,0) {\ };
      	\path [-Latex] (mp) edge [bend left=25] (g1);
      	\path [-Latex] (ms) edge (g1);
    \end{tikzpicture}
    \caption{$P(\*Y \mid do(\*x))$ is identifiable from $P(\*V)$  and $\cG$ if for all SCM $\cM^1, \cM^2$ (top left) such that $\cM^1, \cM^2$ match in $P(\*V)$ (bottom left) and $\cG$ (top right), then they also match in the target distribution $P(\*Y \mid do(\*x))$ (bottom right).}
    \label{fig:omegastar2}
    \vspace{-0.2in}
\end{wrapfigure}

Fig.~\ref{fig:omegastar2} provides an illustration of the many parts involved in this  definition. In words, an interventional distribution $Q = P(\*Y \mid do(\*x))$ is said to be  identifiable if for all  SCMs in $\Omega^*$ (top-left) that share the same causal diagram $\cG$ (top-right), and induce the same probability distribution $P(\*V)$ (bottom-left), they generate the same  distribution to the target query $Q$ (bottom-right). In fact, identifiability can be understood as if the details of the specific form of the true SCM $\cM^*$ -- its functions and probability distribution over the exogenous variables -- are irrelevant,  
 and the constraints encoded in the causal diagram are sufficient to perform the intended cross-layer inference  from the source to the target distributions.

One important observation that follows is that, operationally, symbolic methods do not work directly in the $\Omega^*$ space, but on top of the constraints implied by the true SCM on the causal diagram.
There are a number of reasons for this, but we note that if all that is available about $\cM^*$ are the constraints encoded in $\cG$, it is somewhat expected and reasonable to operate directly on those, instead of considering the elusive, underlying structural causal model. Further, we already know through the CHT that we will never be able to recover $\cM^*$ anyways, so one could see it  as unreasonable to consider $\cM^*$ as the target of the analysis. 
Still, even if we wanted to refer directly to $\cM^*$, 
practically speaking, 
searching through the space $\Omega^*$ is a daunting task since each candidate SCM $\cM = \langle \cF, P(\*U) \rangle \in \Omega^*$ is a complex, infinite dimensional object.  
Note that the very definition of identification (Def.~\ref{def:classic-id}) does not even mention the true SCM $\cM^*$, since it is completely out of reach in most practical situations.  The task here is indeed about whether one can get by without having to know much about the  underlying collection of mechanisms and still answer the query of interest. (In causal terminology, an identification instance is called \textit{non-parametric}
 whenever no constraints are imposed over the functional class $\cF$ or exogenous distribution $P(\*U)$.

The full causal pipeline encompassing both tasks of effect identification and estimation is illustrated in Fig.~\ref{fig:causal-pipeline}. Note that the detachment with respect to the true Nature, alluded to above, is quite prominent in the figure. The very left part contains  $\cM^*$, which is abstracted away in the form of the causal diagram $\cG$ and through its marks imprinted into the observational distribution $P(\*V)$. The symbolic methods takes the pair $\langle \cG, P(\*V) \rangle$ as input (instead of the unobservable $\langle \cF^*, P^*(\*U) \rangle$), and attempt to determine whether the target distribution $Q$ can be expressed in terms of the available input distributions. Formally, the question asked is about whether there exists a mapping $f$ such that
\begin{eqnarray} \label{eq:mapid} 
Q = f_{\cG}(P(\*V)). 
\end{eqnarray}

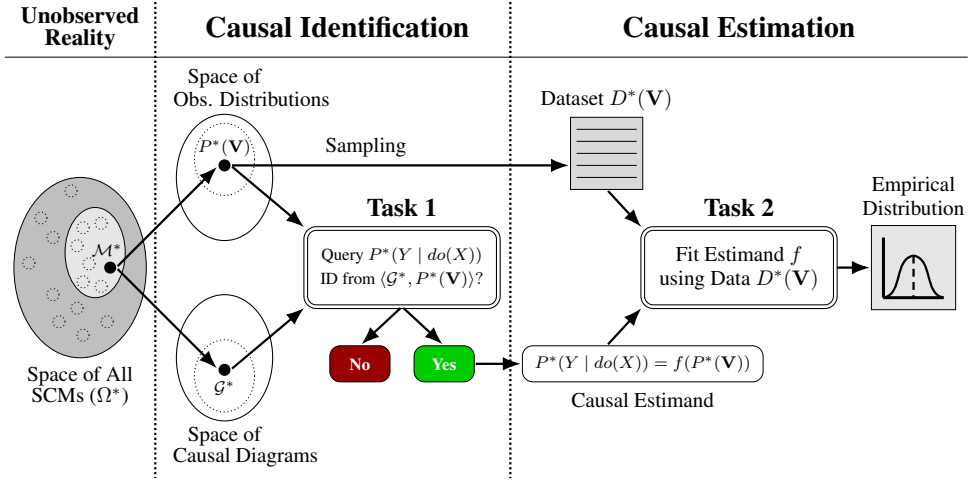
\begin{figure}[t]
    \begin{center}
        \begin{tikzpicture}[-, scale=0.8, every node/.append style={transform shape}]
            \draw [line width=0.1mm] (-8,3.5) -- (8,3.5);
            \draw [line width=0.3mm, densely dotted] (-5.5,4.5) -- (-5.5,-3.5);
            \draw [line width=0.3mm, densely dotted] (0.4,4.5) -- (0.4,-3.5);
            
            \node [align=center, font=\fontsize{11}{0}\selectfont] (unobservedreality1) at (-6.75, 4.2) {\textbf{Unobserved}};
            \node [align=center, font=\fontsize{11}{0}\selectfont] (unobservedreality2) at (-6.75, 3.8) {\textbf{Reality}};
            \node [align=center, font=\fontsize{14}{0}\selectfont] (causalidentification) at (-2.475, 4) {\textbf{Causal Identification}};
            \node [align=center, font=\fontsize{14}{0}\selectfont] (causalestimation) at (4.2, 4) {\textbf{Causal Estimation}};

            \filldraw [fill=gray!50, line width=0.01mm] (-6.75, 0) ellipse (1.1 and 1.5);
            \filldraw [fill=gray!20, line width=0.01mm] (-6.5, 0.25) ellipse (0.5 and 0.75);
            
            \fill [fill=black] (-6.25, 0) circle (0.1);
            \node [inner sep=0] (mstarright) at (-6.15,0) {};
            \node [align=center, font=\fontsize{8}{0}\selectfont] (mstarlabel) at (-6.3,0.3) {$\cM^*$};
            
            \draw [densely dotted] (-6.9, 1.275) circle (0.1);
            \draw [densely dotted] (-7.4, 0.825) circle (0.1);
            \draw [densely dotted] (-7.2, 0.5) circle (0.1);
            \draw [densely dotted] (-7.6, 0.075) circle (0.1);
            \draw [densely dotted] (-7.2, -0.45) circle (0.1);
            \draw [densely dotted] (-6.8, -0.675) circle (0.1);
            \draw [densely dotted] (-7.1, -0.9) circle (0.1);
            \draw [densely dotted] (-6.3, -0.9) circle (0.1);
            \draw [densely dotted] (-6.7, -1.125) circle (0.1);
            
            \draw [densely dotted] (-6.4, 0.7375) circle (0.1);
            \draw [densely dotted] (-6.7, 0.6625) circle (0.1);
            \draw [densely dotted] (-6.8, 0.3625) circle (0.1);
            \draw [densely dotted] (-6.6, 0.0625) circle (0.1);
            \draw [densely dotted] (-6.4, -0.3125) circle (0.1);
            \draw [densely dotted] (-6.7, -0.2375) circle (0.1);
            
            \node [align=center, font=\fontsize{10}{0}\selectfont] (spaceofall) at (-6.75,-1.8) {Space of All};
            \node [align=center, font=\fontsize{10}{0}\selectfont] (scmsomegastar) at (-6.75,-2.15) {SCMs ($\Omega^*$)};

            \draw (-4.35, 1.5) ellipse (0.8 and 1.05);
            \draw [densely dotted] (-4.35, 1.8) ellipse (0.5 and 0.6);
            \fill [fill=black] (-4.35, 1.7) circle (0.1);
            \node [inner sep=0] (pstarvleft) at (-4.45, 1.7) {};
            \node [inner sep=0] (pstarvright) at (-4.25, 1.7) {};
            \node [align=center, font=\fontsize{8}{0}\selectfont] (pstarvlabel) at (-4.35, 2.0) {$P^*(\*V)$};
            \node [align=center, font=\fontsize{10}{0}\selectfont] (spaceofpv1) at (-4.35, 3.15) {Space of};
            \node [align=center, font=\fontsize{10}{0}\selectfont] (spaceofpv2) at (-3.9, 2.8) {Obs. Distributions};
            
            \draw(-4.35, -1.5) ellipse (0.8 and 1.05);
            \draw [densely dotted] (-4.35, -1.8) ellipse (0.5 and 0.6);
            \fill [fill=black] (-4.35, -1.7) circle (0.1);
            \node [inner sep=0] (gstarleft) at (-4.45, -1.7) {};
            \node [inner sep=0] (gstarright) at (-4.25, -1.7) {};
            \node [align=center, font=\fontsize{8}{0}\selectfont] (gstarlabel) at (-4.35, -2.0) {$\cG^*$};
            \node [align=center, font=\fontsize{10}{0}\selectfont] (spaceofg1) at (-4.35, -2.8) {Space of};
            \node [align=center, font=\fontsize{10}{0}\selectfont] (spaceofg2) at (-4.0, -3.15) {Causal Diagrams};
            
            \draw [rounded corners, double, double distance=1] (-3, 0.65) rectangle (0.2, -0.65);
            \node [inner sep=0] (task1left) at (-3, 0) {};
            \node [inner sep=0] (task1topleft) at (-3, 0.65) {};
            \node [inner sep=0] (task1bottomleft) at (-3, -0.65) {};
            \node [inner sep=0] (task1bottom) at (-1.4, -0.65) {};
            \node [align=center, font=\fontsize{8}{0}\selectfont] (querypydox) at (-1.4, 0.2) {Query $P^*(Y \mid do(X))$};
            \node [align=center, font=\fontsize{8}{0}\selectfont] (idfromgpv) at (-1.4, -0.2) {ID from $\langle \cG^*, P^*(\*V) \rangle?$};
            \node [align=center, font=\fontsize{12}{0}\selectfont] (task1label) at (-1.4, 1) {\textbf{Task 1}};
            \filldraw [fill=black!40!red, rounded corners, line width=0.1mm] (-2.6, -1.3) rectangle (-1.6, -1.9);
            \node [inner sep=0] (notop) at (-2.1, -1.3) {};
            \node [align=center, font=\fontsize{8}{0}\selectfont, text=white] (textno) at (-2.1, -1.6) {\textbf{No}};
            
            \filldraw [fill=black!20!green, rounded corners, line width=0.1mm] (-1.2, -1.3) rectangle (-0.2, -1.9);
            \node [inner sep=0] (yestop) at (-0.7, -1.3) {};
            \node [inner sep=0] (yesright) at (-0.2, -1.6) {};
            \node [align=center, font=\fontsize{8}{0}\selectfont, text=white] (textyes) at (-0.7, -1.6) {\textbf{Yes}};
            
            \node [align=center, font=\fontsize{10}{0}\selectfont] (sampling) at (-2, 2.0) {Sampling};
            
            \path [-Latex, line width=0.3mm] (mstarright) edge (pstarvleft);
            \path [-Latex, line width=0.3mm] (mstarright) edge (gstarleft);
            \path [-Latex, line width=0.3mm] (pstarvright) edge (task1topleft);
            \path [-Latex, line width=0.3mm] (gstarright) edge (task1bottomleft);
            \path [-Latex, line width=0.3mm] (task1bottom) edge (notop);
            \path [-Latex, line width=0.3mm] (task1bottom) edge (yestop);

            \filldraw [fill=gray!30, line width=0.1mm] (1.4, 2.5) rectangle (2.6, 1.3);
            \draw [line width=0.1mm] (1.5, 2.3) -- (2.5, 2.3);
            \draw [line width=0.1mm] (1.5, 2.1) -- (2.5, 2.1);
            \draw [line width=0.1mm] (1.5, 1.9) -- (2.5, 1.9);
            \draw [line width=0.1mm] (1.5, 1.7) -- (2.5, 1.7);
            \draw [line width=0.1mm] (1.5, 1.5) -- (2.5, 1.5);
            \node [inner sep=0] (databottom) at (2, 1.3) {};
            \node [inner sep=0] (dataleft) at (1.4, 1.7) {};
            \node [align=center, font=\fontsize{10}{0}\selectfont] (causalestimand) at (2, 2.8) {Dataset $D^*(\*V)$};
            
            \draw [rounded corners, line width=0.1mm] (0.6, -1.3) rectangle (4.6, -1.9);
            \node [inner sep=0] (estimandleft) at (0.6, -1.6) {};
            \node [inner sep=0] (estimandtop) at (2, -1.3) {};
            \node [align=center, font=\fontsize{8}{0}\selectfont] (pydoxfpv) at (2.6, -1.6) {$P^*(Y \mid do(X)) = f(P^*(\*V))$};
            \node [align=center, font=\fontsize{10}{0}\selectfont] (causalestimand) at (2.6, -2.2) {Causal Estimand};
            
            \draw [rounded corners, double, double distance=1] (2.6, 0.65) rectangle (5.8, -0.65);
            \node [inner sep=0] (task2left) at (2.6, 0) {};
            \node [inner sep=0] (task2topleft) at (2.6, 0.65) {};
            \node [inner sep=0] (task2bottomleft) at (2.6, -0.65) {};
            \node [inner sep=0] (task2right) at (5.8, 0) {};
            \node [align=center, font=\fontsize{10}{0}\selectfont] (fitestimand) at (4.2, 0.2) {Fit Estimand $f$};
            \node [align=center, font=\fontsize{10}{0}\selectfont] (usingdata) at (4.2, -0.2) {using Data $D^*(\*V)$};
            \node [align=center, font=\fontsize{12}{0}\selectfont] (task2label) at (4.2, 1) {\textbf{Task 2}};
            
            \filldraw [fill=gray!20, line width=0.1mm] (6.4, 0.7) rectangle (7.8, -0.7);
            \node [inner sep=0] (empdistleft) at (6.4, 0) {};
            \node [align=center, font=\fontsize{10}{0}\selectfont] (empdist1) at (7.1, 1.35) {Empirical};
            \node [align=center, font=\fontsize{10}{0}\selectfont] (empdist2) at (7.1, 1) {Distribution};
            \draw [line width=0.4mm] (6.6, 0.5) -- (6.6, -0.5);
            \draw [line width=0.4mm] (6.6, -0.5) -- (7.6, -0.5);
            \draw [line width=0.3mm, densely dashed] (7.1, 0.2) -- (7.1, -0.5);
            \draw [line width=0.3mm] (6.6, -0.5) to[out=10, in=190] (7.1, 0.2);
            \draw [line width=0.3mm] (7.6, -0.5) to[out=170, in=-10] (7.1, 0.2);
            
            \path [-Latex, line width=0.3mm] (pstarvright) edge (dataleft);
            \path [-Latex, line width=0.3mm] (yesright) edge (estimandleft);
            \path [-Latex, line width=0.3mm] (databottom) edge (task2topleft);
            \path [-Latex, line width=0.3mm] (estimandtop) edge (task2bottomleft);
            \path [-Latex, line width=0.3mm] (task2right) edge (empdistleft);
        \end{tikzpicture}
    \end{center}
        
    \caption{Causal Pipeline with unobserved SCM $\cM^*$ in the left, generating both $\cG$ and $P(\*v)$, which is taken as input for the identification task (1), which generates input to the estimation task (2). } 
    \label{fig:causal-pipeline}
\end{figure}

The decision problem regarding the existence of $f(.)$ is certainly within the domain of causal reasoning since it takes an arbitrary causal diagram as input, which encodes causal assumptions about the underlying SCM, and reason through causal axioms on whether this is sufficient to perform the intended cross-layer inference. Causal reasoning is nothing more than the manipulation and subsequent derivation of new causal facts from known causal invariances.
Whenever this step is successfully realized, one can  then ignore the causal invariances (or assumptions) entirely, and simply 
 \begin{wrapfigure}{r}{0.45\textwidth}
    \vspace{-0.15in}
    \begin{center}
        \begin{tikzpicture}[-, scale=0.9, every node/.append style={transform shape}]
            \filldraw [fill=gray!50, line width=0.01mm] (-6.75, 0) ellipse (1.3 and 1.8);
            \filldraw [fill=gray!20, line width=0.01mm] (-6.5, 0.25) ellipse (0.9 and 1.2);
            
            \fill [fill=black] (-6.2, 0.8375) circle (0.1);
            \node [inner sep=0] (mstarright) at (-6.1,0.8375) {};
            
            \draw [densely dotted] (-6.9, 1.5) circle (0.1);
            \draw [densely dotted] (-7.5, 0.925) circle (0.1);
            \draw [densely dotted] (-7.8, 0.075) circle (0.1);
            \draw [densely dotted] (-7.45, -0.45) circle (0.1);
            \draw [densely dotted] (-7.7, -0.775) circle (0.1);
            \draw [densely dotted] (-7.2, -1) circle (0.1);
            \draw [densely dotted] (-6.3, -1.2) circle (0.1);
            \draw [densely dotted] (-6.7, -1.425) circle (0.1);
            
            \draw (-6.7, 0.7625) circle (0.1);
            \draw (-7.0, 0.3625) circle (0.1);
            \draw (-5.9, 0.2) circle (0.1);
            \node [inner sep=0] (mright) at (-5.8,0.2) {};
            \draw (-6.6, 0.0625) circle (0.1);
            \draw (-6.9, -0.2375) circle (0.1);
            \draw (-6, -0.3) circle (0.1);
            \node [inner sep=0, minimum width=0.2cm, minimum height=0.2cm] (mprimeright) at (-6,-0.3) {};
            \draw (-6.4, -0.5125) circle (0.1);
            \node [inner sep=0] (subspacebottom) at (-6.5,-0.8) {};
            
            \draw [rounded corners] (-4.9, 1.2) rectangle (-2.1, 0.8);
            \node [align=center, font=\fontsize{8}{0}\selectfont] (mstarlabel) at (-3.5, 1) {$\cM^* = \langle \cF^*, P^*(\*U) \rangle$};
            \node [align=center, font=\fontsize{8}{0}\selectfont] (mstarlabel2) at (-3.5, 0.6) {(unobserved truth)};
            \node [align=center, font=\fontsize{8}{0}\selectfont] (mlabel) at (-3.5, 0.1) {$\cM = \langle \cF, P(\*U) \rangle$};
            \node [align=center, font=\fontsize{8}{0}\selectfont] (mprimelabel) at (-3.5, -0.5) {$\cM' = \langle \cF', P'(\*U) \rangle$};
            
            \node [align=center, font=\fontsize{10}{0}\selectfont] (omegastarlabel) at (-6.75, 2.1) {Space of All SCMs ($\Omega^*$)};
            
            \node [align=center, font=\fontsize{8}{0}\selectfont] (subspacelabel) at (-3.5, -1.3) {Space of SCMs that};
            \node [align=center, font=\fontsize{8}{0}\selectfont] (subspacelabel2) at (-3.5, -1.6) {induce $\cG^*$ and $P^*(\*V)$};
            \node [align=center, font=\fontsize{8}{0}\selectfont] (subspacelabel3) at (-3.5, -1.9) {($\Omega^*(\cG^*, P^*(\*V))$)};
            
            \path [-, line width=0.2mm] (mstarright.center) edge (mstarlabel.west);
            \path [-, line width=0.2mm] (mright.center) edge (mlabel.west);
            \path [-, line width=0.2mm] (mprimeright.-10) edge (mprimelabel.178);
            \path [-, line width=0.2mm] (subspacebottom) edge (subspacelabel.west);
        \end{tikzpicture}
        \caption{
        In the case where $Q$ is identifiable, any two SCMs in $\Omega^*(\cG^*, P^*(\*V))$ -- the space of SCMs matching $\cM^*$ in $P^*(\*V)$ and $\cG^*$ -- will also match $\cM^*$ in $Q$.
        }
        \label{fig:ncm-id1}
    \end{center}
    \vspace{-0.3in}
\end{wrapfigure}
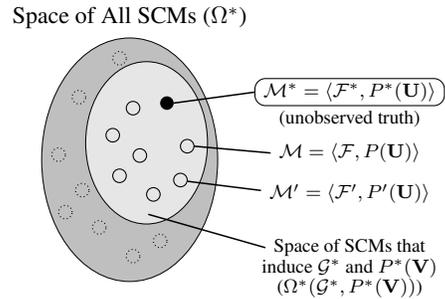
try to evaluate the r.h.s. of Eq.~\ref{eq:mapid} to compute $Q$.

We note that in most practical settings, only a dataset with samples collected from  $P(\*V)$ is available, say $D(\*V)$, as opposed to the distribution itself.
 This entails the second task that asks for a computationally and statistically attractive way of evaluating the r.h.s. of Eq.~\ref{eq:mapid} from the finite  samples contained in the dataset $D(\*V)$. Even though the l.h.s. of Eq.~\ref{eq:mapid} is a causal distribution, the evaluation is complete 
 oblivious to the semantics of such quantity and is entirely about fitting $f$ using the data $D(\*V)$ in the best possible way. 

We now turn our attention to the optimization-based approach as outlined in Alg.~\ref{alg:nscm-solve-id} (Sec.~\ref{sec:neural-id}) and note that it will have a very different interpretation of the identifiability definition.  
In fact, instead of avoiding the SCM altogether and focusing on the
causal diagrams' constraints, as done by the symbolic approach, it will put the SCM at the front and center of the analysis.
Fig.~\ref{fig:ncm-id1} illustrates this point by showing the space of all SCMs called $\Omega^*$ (in dark gray). The true, unknown SCM $\cM^*$ is shown as a black dot and generates the pair $\langle \cG^*, P^*(\*V)$, the latter is taken as the input of the identification analysis. The approach will then focus on the set of SCMs that have the same interventional constraints as $\cG^*$ (Def.~\ref{def:g-consistency}) and that also have the capability of generating the same observational distribution $P^*(\*V)$ (Def.~\ref{def:li-consistency}). This subspace is called $\Omega^*(\cG^*, P^*(\*V))$ and marked in light gray. Since $\cM^*$ is almost never inferrable, the optimization-approach will search for two SCMs $\cM, \cM' \in \Omega^*(\cG^*, P^*(\*V))$ such that the former will try to maximize the target query ($Q_{max}$), while the latter minimizes it ($Q_{min}$). Two candidate SCMs for this job are shown in the figure. Whenever the search ends and two of such SCMs are discovered, they will predict exactly the same interventional distribution (i.e., $Q_{min} =Q_{max}$) if $Q$ is identifiable. This is because by the definition of identifiability (Def.~\ref{def:classic-id}), all SCMs in the light gray area will necessarily exhibited the same interventional behavior. 

\begin{wrapfigure}{r}{0.45\textwidth}
    \vspace{-0.15in}
    \begin{center}
        \begin{tikzpicture}[-, scale=0.9, every node/.append style={transform shape}]
            \filldraw [fill=gray!50, line width=0.01mm] (-6.75, 0) ellipse (1.3 and 1.8);
            \filldraw [fill=gray!20, line width=0.01mm] (-6.5, 0.25) ellipse (0.9 and 1.2);
            
            \fill [fill=black] (-6.2, 0.8375) circle (0.1);
            \node [inner sep=0] (mstarright) at (-6.1,0.8375) {};
            
            \draw [densely dotted] (-6.9, 1.5) circle (0.1);
            \draw [densely dotted] (-7.5, 0.925) circle (0.1);
            \draw [densely dotted] (-7.8, 0.075) circle (0.1);
            \draw [densely dotted] (-7.45, -0.45) circle (0.1);
            \draw [densely dotted] (-7.7, -0.775) circle (0.1);
            \draw [densely dotted] (-7.2, -1) circle (0.1);
            \draw [densely dotted] (-6.3, -1.2) circle (0.1);
            \draw [densely dotted] (-6.7, -1.425) circle (0.1);
            
            \filldraw [fill=red](-6.7, 0.7625) circle (0.1);
            \draw (-7.0, 0.3625) circle (0.1);
            \draw (-5.9, 0.2) circle (0.1);
            \node [inner sep=0] (mright) at (-5.8,0.2) {};
            \filldraw [fill=red] (-6.6, 0.0625) circle (0.1);
            \filldraw [fill=red] (-6.9, -0.2375) circle (0.1);
            \filldraw [fill=red] (-6, -0.3) circle (0.1);
            \node [inner sep=0, minimum width=0.2cm, minimum height=0.2cm] (mprimeright) at (-6,-0.3) {};
            \draw (-6.4, -0.5125) circle (0.1);
            \node [inner sep=0] (subspacebottom) at (-6.5,-0.8) {};
            
            \draw [rounded corners] (-4.9, 1.2) rectangle (-2.1, 0.8);
            \node [align=center, font=\fontsize{8}{0}\selectfont] (mstarlabel) at (-3.5, 1) {$\cM^* = \langle \cF^*, P^*(\*U) \rangle$};
            \node [align=center, font=\fontsize{8}{0}\selectfont] (mstarlabel2) at (-3.5, 0.6) {(unobserved truth)};
            \node [align=center, font=\fontsize{8}{0}\selectfont] (mlabel) at (-3.5, 0.1) {$\cM = \langle \cF, P(\*U) \rangle$};
            \node [align=center, font=\fontsize{8}{0}\selectfont] (mprimelabel) at (-3.5, -0.5) {$\cM' = \langle \cF', P'(\*U) \rangle$};
            
            \node [align=center, font=\fontsize{10}{0}\selectfont] (omegastarlabel) at (-6.75, 2.1) {Space of All SCMs ($\Omega^*$)};
            
            \node [align=center, font=\fontsize{8}{0}\selectfont] (subspacelabel) at (-3.5, -1.3) {Space of SCMs that};
            \node [align=center, font=\fontsize{8}{0}\selectfont] (subspacelabel2) at (-3.5, -1.6) {induce $\cG^*$ and $P^*(\*V)$};
            \node [align=center, font=\fontsize{8}{0}\selectfont] (subspacelabel3) at (-3.5, -1.9) {($\Omega^*(\cG^*, P^*(\*V))$)};
            
            \path [-, line width=0.2mm] (mstarright.center) edge (mstarlabel.west);
            \path [-, line width=0.2mm] (mright.center) edge (mlabel.west);
            \path [-, line width=0.2mm] (mprimeright.-10) edge (mprimelabel.178);
            \path [-, line width=0.2mm] (subspacebottom) edge (subspacelabel.west);
        \end{tikzpicture}
        \caption{
        In the case where $Q$ is non-identifiable, there exist two SCMs in $\Omega^*(\cG^*, P^*(\*V))$ (the space of SCMs matching $\cM^*$ in $P^*(\*V)$ and $\cG^*$) that do not match $\cM^*$ in $Q$.
        }
        \label{fig:ncm-id2}
    \end{center}
\end{wrapfigure}
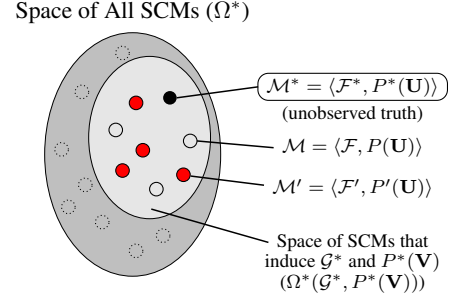
On the other hand, the situation is qualitatively different when considering non-identifiable effects. To understand how, the first observation comes from the contrapositive of Def.~\ref{def:classic-id}, which says that non-identifiability implies there exists (at least) two SCMs $\cM, \cM'$ within the $\Omega^*(\cG^*, P^*(\*V))$ subspace that share the constraints as in $\cG^*$ and generate the same observational distribution ($P(\*V) = P'(\*V)$), but induce different interventional distributions($P(Y | do(X)) \neq P'(Y | do(X))$). The optimization-based approach will try to exploit precisely this fact, searching for non-identifiability witnesses, possibly different than the true $\cM^*$. Those are illustrated as red dots and hollow circles in Fig.~\ref{fig:ncm-id2}. Still, this is all that is needed to determine whether an effect is not identifiable. 
In practice, each distribution $\cQ, \cQ'$ will only be approximations and it may be hard to detect non-identifiability depending on how close they are from each other. This is indeed what leads to the probabilistic nature of such identifiability statements when using optimization-based methods (as discussed in Appendix \ref{app:experiments}), as opposed to the deterministic ones entailed by the do-calculus and symbolic family. 

By and large, both approaches take the causal assumptions and try to infer something about the unknown, target quantity $P^*(Y | do(X=x))$ that is defined by the unobserved $\cM^*$. In the case of the symbolic approach, the structural assumptions encoded in the causal diagram $\cG$ are used, while an  optimization-based approach will use constraints encoded through the scope of the functions and independence among the exogenous variables $\*U$. Evaluating $P^*(Y \mid do(X = x))$ is possible in identifiable cases, which means that our predictions will match what the true $\cM^*$ would say, while the assumptions would be too weak in others cases, and non-identifiability will take place, which means we are unable to make any statement about $\cM^*$ without  strengthening the assumptions.

After having understood the different takes of both approaches to the problem of identification, we consider again the entire pipeline shown in Fig.~\ref{fig:causal-pipeline}. We note that the two tasks, identification and estimation, are both necessary in any causal analysis, but they are usually studied separately in the symbolic literature. Symbolic methods returns the identifiability status of a query and include the mapping $f$ whenever the effect is identifiable. The target quantity can then be evaluated by standard statistical methods (plug-in) or more refined learning procedures, e.g., multi-propensity score/inverse probability weighting \cite{jung2020estimating}, empirical risk minimization \cite{jung2020werm}, and double machine learning  \cite{jung2021dml}.  

On the other hand, the optimization-based framework proposed here is capable of identifying and estimating the target quantities in an integrated manner. 
As shown through simulations (Sec.~\ref{sec:experiments}), given the sampling nature of the optimization procedure, the performance of the method relies not only on the causal assumptions, but also on accurate optimization, which may require a large amount of samples and computation for training.

In practice, one may consider a hybrid approach where a symbolic algorithm for the identification step is ran first, since it is deterministic and always returns the right answer, and then the estimation step is performed through an NCM.  With this use case in mind, we define Alg.~\ref{alg:nscm-solve-id-symbolic} in Fig.~\ref{fig:causal-statistical-inference} (left) as this hybrid alternative to the more pure Alg.~\ref{alg:nscm-solve-id}. In some sense, this approach combines the best of both worlds -- it relies on an ideal ID algorithm, which is correct with probability one, and the powerful computational properties of neural networks, which may perform well and scale to complex settings, for estimation. We illustrate this using the two figures in Fig.~\ref{fig:causal-statistical-inference} (right).

\begin{figure}[t]
    \begin{minipage}{.54\textwidth}
    \IncMargin{1em}
    \hspace{1em}
    \begin{algorithm}[H]
        \DontPrintSemicolon
        \SetKwData{ncmdata}{$\widehat{M}$}
        \SetKwData{graphdata}{$\cG$}
        \SetKwData{variabledata}{$\*V$}
        \SetKwData{pvdata}{$P(\*v)$}
        \SetKwData{thetastar}{$\bm{\theta}^*$}
        \SetKwFunction{ncmfunc}{NCM}
        \SetKwFunction{symbolicid}{symbolicID}
        \SetKwInOut{Input}{Input}
        \SetKwInOut{Output}{Output}
        
        \Input{ causal query $Q = P(\*y \mid do(\*x))$, $L_1$ data \pvdata, and causal diagram \graphdata}
        \Output{ $P^{\cM^*}(\*y \mid do(\*x))$ if identifiable, \texttt{FAIL} otherwise}
        \BlankLine
        \eIf{$\symbolicid{Q}$}{
            $\ncmdata \gets \ncmfunc{\variabledata, \graphdata}$ \tcp*{from Def.\ \ref{def:g-cons-nscm}}
            $\thetastar \gets \arg \min_{\bm{\theta}} D(P^{\ncmdata(\bm{\theta})}(\*v), \pvdata)$ \tcp*{for some divergence $D$}
            \Return $P^{\ncmdata(\thetastar)}(\*y \mid do(\*x))$\;
        }{
            \Return \texttt{FAIL}
        }
        \caption{Identifying queries with a symbolic ID procedure and then estimating with NCMs.}
        \label{alg:nscm-solve-id-symbolic}
    \end{algorithm}
    \DecMargin{1em}
    \end{minipage}
    \hfill
    \begin{minipage}{.42\textwidth}
        \begin{center}
            \begin{subfigure}{\textwidth}
                \centering
                \includegraphics[width=.6\linewidth]{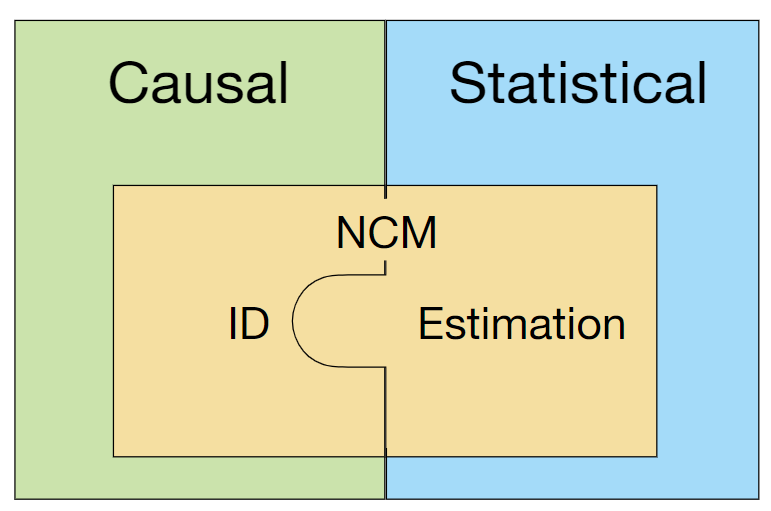}
                \caption{Neural ID + Neural Estimation (Alg.~\ref{alg:nscm-solve-id})}
                \label{fig:neural-id-neural-est}
            \end{subfigure}
            
            \begin{subfigure}{\textwidth}
                \centering
                \includegraphics[width=.6\linewidth]{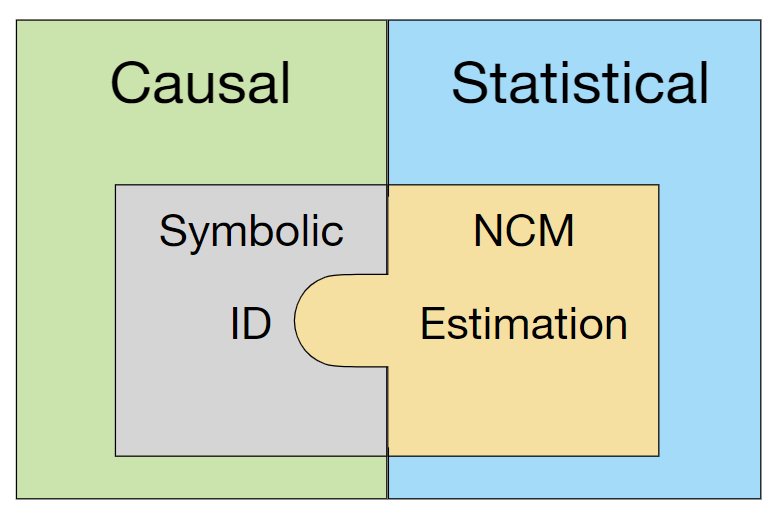}
                \caption{Symbolic ID + Neural Estimation (Alg.~\ref{alg:nscm-solve-id-symbolic})}
                \label{fig:symbolic-id-neural-est}
            \end{subfigure}
        \end{center}
    \end{minipage}
    \caption{(Left panel) Algorithm for solving identification problem with symbolic solvers and estimation with NCMs. (Right) Schematic illustrating differences between Alg.~\ref{alg:nscm-solve-id} (a) and Alg.~\ref{alg:nscm-solve-id-symbolic} (b).}
    \label{fig:causal-statistical-inference}
\end{figure}

After all, the NCM approach provides a cohesive framework that unifies both the causal and the statistical components involved in evaluating interventional distributions. In the case of the hybrid approach, we note that the causal reasoning is shifted to the symbolic identification methods. It is remarkable but somewhat unsurprising that neural methods, which are the state-of-the-art for function approximations, are capable of performing the statistical step required for causal estimation. More interestingly, we demonstrate in this work that neural nets (and proxy SCMs, in general) are also capable of performing causal reasoning, such as in the inferences required to solve the ID problem. Many other works have studied using neural networks as an estimation tool, as listed in the introduction, but to the best of our knowledge, our work is the first one that utilizes neural networks to provide a complete solution within the neural framework to the identification problem, and causal reasoning more broadly.

In addition to showing that neural nets are also capable of performing causal reasoning, there are other implications for using a proxy SCM in place of the true SCM, as opposed to abstracting the true SCM entirely. Firstly, the generative capabilities of proxy SCMs, like the \ncm{}, can be useful if the user desires a source of infinite data, which is a quite common use case found in the literature. Secondly, it provides quick estimation results for multiple queries via the mutilation procedure without the need for retraining. Whenever we apply a symbolic approach we need to derive (and then train) a specialized estimand, which can be time consuming. Thirdly, working in the space of SCMs tends to provide straightforward interpretation of the causal problems using its semantics, which implies that it has the potential to be more easily extensible to other related problems. One may be interested in generalizations of the ID problem such as identifying other types of queries or identifying and fusing from different sources and experimental conditions \citep{bareinboim:etal12,bareinboim:etal15,lee:etal19}, such as in reinforcement learning. In general, extending the line of reasoning of Alg.~\ref{alg:nscm-solve-id} to other identification cases is much simpler than deriving a new symbolic solution under different constraints.

\newpage
\section{Generalizations} \label{app:generalizations}

\subsection{NCMs with other Functions and Noise Distributions  (Proofs)}\label{sec:nn-general}

For the sake of concreteness and simplicity of exposition, the NCM from Def.~\ref{def:nscm} specifically uses $\unif(0, 1)$ noise for the variables in $\*U$ and MLPs as functions. This leads to easy implementation and concrete examples for discussion (as in Appendix \ref{app:examples}).

We note that these are not meant to be limitations for the NCM framework. The NCM object can be extended to use other function or noise types, provided they exhibit certain properties. We will henceforth refer to specific versions of the NCM with their particular implementation of the noise and functions (e.g., Def.~\ref{def:nscm} will be called NCM-FF-Unif). 

Consider a more general version of the NCM below:

\begin{definition}[\genncm{} (General Case)]
    \label{def:gen-ncm}
    A Neural Causal Model (for short, \genncm{}) $\widehat{M}(\bm{\theta})$ over variables $\*V$ with parameters $\bm{\theta} = \{\theta_{V_i} : V_i \in \*V\}$ is an SCM $\langle \widehat{\*U}, \*V, \widehat{\cF}, \widehat{P}(\widehat{\*U}) \rangle$ such that 
    \begin{itemize}
    \setlength\itemsep{-0.5em}
        \item $\widehat{\*U} \subseteq \{\widehat{U}_{\*C} : \*C \subseteq \*V\}$, where each $\widehat{U}$ is associated with some subset of variables $\*C \subseteq \*V$;
        
        \item $\widehat{\cF} = \{\hat{f}_{V_i} : V_i \in \*V\}$, where each $\hat{f}_{V_i}$, parameterized by $\theta_{V_i} \in \bm{\theta}$, maps values of $\Ui{V_i} \cup \Pai{V_i}$ to values of $V_i$ for some $\Pai{V_i} \subseteq \*V$ and $\Ui{V_i} = \{\widehat{U}_{\*C} : \widehat{U}_{\*C} \in \widehat{\*U}, V_i \in \*C\}$;
    \end{itemize}
    Furthermore, a \genncm{} $\widehat{M}(\bm{\theta})$ should have the following properties:
    \\
    (P1) For all $\widehat{U} \in \widehat{\*U}$, $\widehat{U}$ has a well-defined probability density function $P(\widehat{U})$ (i.e. its cumulative density function is absolutely continuous).
    \\  
    (P2) For all functions $\hat{f}_{V_i} \in \widehat{\cF}$ and all functions $g: \bbR \rightarrow \bbS$, where $\bbS \subset \bbR$ is a countable set, there exists parameterization $\theta^*_{V_i}$ such that $\hat{f}_{V_i}(\cdot; \theta^*_{V_i}) = g$ (universal representation).
    \hfill $\blacksquare$
\end{definition}

We can use these properties to prove a more general version of Thm.~\ref{thm:lrepr}. We first state the following result to aid the proof.

\begin{fact}[Probability Integral Transform {\citep[Thm.~2.1.10]{CaseBerg:01}}]
    \label{lem:pdf-to-unif}
    For any random variable $X$ with a well-defined probability density function $P(X)$, there exists a function $f: \cD_{X} \rightarrow [0, 1]$ such that $f(X)$ is a $\unif(0, 1)$ random variable.
    \hfill $\blacksquare$
\end{fact}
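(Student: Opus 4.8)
\emph{Proof proposal.} The plan is to produce the transform explicitly, taking $f$ to be the cumulative distribution function $F_X(x)=P(X\le x)$ of $X$ restricted to $\cD_X$, and then to verify that $F_X(X)$ is uniformly distributed on $[0,1]$. The hypothesis that $X$ has a well-defined probability density function is used only to guarantee that $F_X$ is absolutely continuous, hence continuous; continuity of $F_X$ is the single property that makes the construction go through, so I would isolate it at the outset.

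First I would introduce the quantile (generalized) inverse $F_X^{-1}(y):=\inf\{x\in\bbR: F_X(x)\ge y\}$ for $y\in(0,1)$, which is well-defined because $F_X$ is non-decreasing with $\lim_{x\to-\infty}F_X(x)=0$ and $\lim_{x\to+\infty}F_X(x)=1$. From continuity of $F_X$ one gets $F_X\big(F_X^{-1}(y)\big)=y$, and from monotonicity one gets the inclusion $\{x: x\le F_X^{-1}(y)\}\subseteq\{x: F_X(x)\le y\}$, where the difference is contained in a maximal interval $(F_X^{-1}(y),c]$ on which $F_X$ is constantly equal to $y$; such an interval carries probability $F_X(c)-F_X(F_X^{-1}(y))=y-y=0$, so the two events $\{F_X(X)\le y\}$ and $\{X\le F_X^{-1}(y)\}$ differ only by a $P$-null set.

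Second I would compute the distribution function of $Y:=F_X(X)$. For $y\in(0,1)$,
\[
  P(Y\le y)=P\big(F_X(X)\le y\big)=P\big(X\le F_X^{-1}(y)\big)=F_X\big(F_X^{-1}(y)\big)=y,
\]
using the previous paragraph for the second equality, the definition of $F_X$ for the third, and continuity of $F_X$ for the fourth; moreover $P(Y\le y)=0$ for $y\le 0$ and $P(Y\le y)=1$ for $y\ge 1$. This is exactly the distribution function of $\unif(0,1)$, so $f(X)=F_X(X)$ is a $\unif(0,1)$ random variable, which is the claim.

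The only delicate point, and hence the main obstacle, is that $F_X$ need not be strictly increasing when the density vanishes on an interval, so the ordinary inverse is not a function; this is handled by passing to the quantile inverse and using continuity of $F_X$ to show that the ambiguous flat pieces of $F_X$ carry zero probability. No deeper machinery is needed — this is the classical probability integral transform — and the statement is recorded here precisely because, combined with property~(P2) of Definition~\ref{def:gen-ncm}, it lets one realize $\unif(0,1)$ noise inside any \genncm{} whose exogenous variables satisfy~(P1), so that the construction underlying Theorem~\ref{thm:lrepr} transfers to the general \genncm{} class.
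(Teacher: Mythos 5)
Your proof is correct: taking $f = F_X$ and disposing of the flat stretches of $F_X$ via the quantile inverse (using continuity of $F_X$, which follows from the existence of a density) is exactly the classical probability-integral-transform argument. The paper does not prove this statement at all---it records it as a Fact and cites Casella and Berger, Thm.~2.1.10, whose proof is essentially the one you give---so your write-up simply supplies the standard details that the paper delegates to the reference.
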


The generalized proof follows.

\begin{theorem}[\genncm{} Expressiveness]
    For any SCM $\cM^* = \langle \*U, \*V, \cF, P(\*U) \rangle$, there exists a \genncm{} $\widehat{M}(\bm{\theta}) = \langle \widehat{\*U}, \*V, \widehat{\cF}, \widehat{P}(\widehat{\*U}) \rangle$ s.t. $\widehat{M}$ is $L_3$-consistent w.r.t. $\cM^*$.
    \hfill $\blacksquare$
\end{theorem}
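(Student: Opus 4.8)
The plan is to follow the same two-stage strategy as the proof of Theorem~\ref{thm:lrepr}: first replace $\cM^*$ by an $L_3$-equivalent \emph{canonical} SCM, and then show that this canonical SCM can be realized inside the general NCM class of Def.~\ref{def:gen-ncm}. By Lemma~\ref{lem:scm-to-ctm} there is a canonical SCM $\cM_{\ctm} = \langle \*U_{\ctm}, \*V, \cF_{\ctm}, P(\*U_{\ctm})\rangle$ (in the sense of Def.~\ref{def:simple-ctm}) that is $L_3$-consistent with $\cM^*$; since $L_3$-consistency is transitive, it suffices to build a \genncm{} $\widehat{M}$ that is $L_3$-consistent with $\cM_{\ctm}$.

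For the construction, I would take a single exogenous node $\widehat{\*U} = \{\widehat{U}_{\*V}\}$ associated with the whole set $\*V$ (this is permitted by Def.~\ref{def:gen-ncm}, and the shared noise is exactly what allows confounding when needed). By property (P1), $\widehat{U}_{\*V}$ has a well-defined density, so Fact~\ref{lem:pdf-to-unif} (the probability integral transform) gives a deterministic map $t$ with $t(\widehat{U}_{\*V})$ distributed as $\unif(0,1)$. Composing $t$ with the quantile map of the discrete distribution $P(\*U_{\ctm})$ — i.e., slicing $[0,1]$ into intervals of length $P^{\cM_{\ctm}}(\*u_{\ctm})$, one per value $\*u_{\ctm} \in \cD_{\*U_{\ctm}}$ — yields a deterministic function whose pushforward of $\widehat{U}_{\*V}$ is exactly $P(\*U_{\ctm})$. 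Finally, composing with the canonical mechanism $f^{\ctm}_{V}(\pai{V}, \cdot)$ of Def.~\ref{def:simple-ctm} gives, for each $V_i \in \*V$, the target function $\widehat{u}_{\*V} \mapsto f^{\ctm}_{V_i}(\pai{V_i}, r_{V_i})$. Since the endogenous domains are countable (the standing assumption of Appendix~\ref{app:proofs}) and $\cD_{\Pai{V_i}}$ is countable, this target function — after encoding the tuple $(\pai{V_i}, \widehat{u}_{\*V})$ of a countable value and a real value as a single real — is a function of the form $g:\bbR \to \bbS$ with $\bbS = \cD_{V_i}$ countable, so property (P2) furnishes a parameter $\theta^*_{V_i}$ realizing it exactly. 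Setting $\bm{\theta} = \{\theta^*_{V_i}\}$ defines $\widehat{M}$.

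It then remains to verify $L_3$-consistency. By construction $\widehat{M}$ agrees with $\cM_{\ctm}$ on the law of the (recoded) exogenous variable and on every structural function, so $\widehat{M}(\widehat{u}_{\*V}) \models \bm{\upvarphi} \iff \cM_{\ctm}(\*r) \models \bm{\upvarphi}$ for the corresponding $\*r$ and every counterfactual event $\bm{\upvarphi}$ of the form used in Lemma~\ref{lem:scm-to-ctm}; summing the matching exogenous probabilities gives $P^{\widehat{M}}(\bm{\upvarphi}) = P^{\cM_{\ctm}}(\bm{\upvarphi})$ for all such $\bm{\upvarphi}$, hence $L_3$-consistency with $\cM_{\ctm}$, and therefore with $\cM^*$ via Def.~\ref{def:l3-semantics}.

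The main obstacle — and the reason the general statement needs a separate argument rather than a verbatim copy of Theorem~\ref{thm:lrepr} — is that the explicit MLP gadgets of Lemmas~\ref{lem:binary-uat}--\ref{lem:unif-to-pmf} (binary encodings, the neural inverse probability integral transform, etc.) are no longer available; all of that combinatorial work must be absorbed into the two abstract hypotheses, with (P1) used solely to normalize the noise to $\unif(0,1)$ and (P2) used to realize the single composite deterministic map. Care is needed at the interface: (P2) as stated concerns scalar maps $\bbR \to \bbS$, so one has to commit to a concrete measurable injection of $(\pai{V_i},\widehat{u}_{\*V})$ into $\bbR$ and check that measurability is preserved, so that the pushforward laws — and hence all layer valuations — are unchanged. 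Everything else is a routine re-run of the canonical-SCM bookkeeping already carried out for Theorem~\ref{thm:lrepr}.
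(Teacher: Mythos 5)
Your proposal is correct and follows essentially the same route as the paper's proof: reduce to the canonical SCM via Lemma~\ref{lem:scm-to-ctm}, use a single shared exogenous variable normalized to $\unif(0,1)$ through property (P1) and the probability integral transform, push it forward onto $P(\*U_{\ctm})$ by the quantile/interval construction, and invoke (P2) to realize the composite maps $f^{\ctm}_{V_i}(\pai{V_i}, \cdot)$ exactly, concluding $L_3$-consistency by outputwise agreement. Your explicit remark about encoding the pair $(\pai{V_i},\widehat{u}_{\*V})$ as a single real to fit the scalar statement of (P2) is a reasonable bit of added care that the paper leaves implicit, but it does not change the argument.
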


\begin{proof}
    Lemma \ref{lem:scm-to-ctm} guarantees that there exists a canonical SCM $\cM_{\ctm} = \langle \*U_{\ctm}, \*V, \cF_{\ctm}, P(\*U_{\ctm})\rangle$ that is $L_3$-consistent with $\cM^*$. Hence, to construct $\widehat{M}$, it suffices to show how to construct $\cM_{\ctm}$ using the architecture of a \genncm{}.
    
    Following Def.~\ref{def:gen-ncm}, we choose $\widehat{\*U} = \{\widehat{U}_{\*V}\}$. By property P1 and Fact \ref{lem:pdf-to-unif}, there exists a function $g^{U}$ such that $g^{U}(\widehat{U}_{\*V})$ is a $\unif(0, 1)$ random variable.
    
    Using the construction in Lem.~\ref{lem:unif-to-pmf}, there must also exist a function $g^{R}$ such that
    \begin{equation}
        \label{eq:genncm-match-ctm-pu}
        P(g^{R}(g^{U}(\widehat{U_{\*V}}))) = P(\*U_{\ctm})
    \end{equation}
    
    To choose the functions of $\widehat{\cF}$, consider each $\hat{f}_{V_i} \in \widehat{\cF}$. Combining the above results, there must exist
    \begin{equation}
        \label{eq:genncm-match-ctm-f}
        g_{V_i} = f_{V_i}^{\ctm}\left(\pai{V_i}, g^{R}\left(g^{U}(\widehat{U}_{\*V}) \right)\right)
    \end{equation}
    By property P2, we can choose parameterization $\theta^*_{V_i}$ such that $\hat{f}_{V_i}(\cdot; \theta^*_{V_i}) = g_{V_i}$.
    
    By Eqs.~\ref{eq:genncm-match-ctm-pu} and \ref{eq:genncm-match-ctm-f}, the \genncm{} $\widehat{M}$ is constructed to match $\cM_{\ctm}$ on all outputs. Hence, for any counterfactual query $\bm{\upvarphi}$, we have
    $$\cM_{\ctm} \models \bm{\upvarphi} \Leftrightarrow \widehat{M} \models \bm{\upvarphi}$$
    and therefore
    $$\cM^* \models \bm{\upvarphi} \Leftrightarrow \widehat{M} \models \bm{\upvarphi}.$$
\end{proof}

The general $\cG$-constrained version of the \genncm{} follows the same procedure as Def.~\ref{def:g-cons-nscm}.

\begin{definition}[$\cG$-Constrained \genncm{} (General Case)]
    \label{def:g-cons-gen-ncm}
    Let $\cG$ be the causal diagram induced by SCM $\cM^*$. Construct \genncm{} $\widehat{M}$ as follows. \textbf{(1)} Choose $\widehat{\*U}$ s.t. $\widehat{U}_{\*C} \in \widehat{\*U}$ if and only if $\*C$ is a $C^2$-component in $\cG$. Each $\widehat{U}_{\*C}$ has its own distribution $P(\widehat{U}_{\*C})$ independent of other variables in $\widehat{\*U}$, but it is shared as input to the functions of all variables in $\*C$. \textbf{(2)} For each variable $V_i \in \*V$, choose $\Pai{V_i} \subseteq \*V$ s.t. for every $V_j \in \*V$, $V_j \in \Pai{V_i}$ if and only if there is a directed edge from $V_j$ to $V_i$ in $\cG$.
    Any \genncm{} in this family is said to be $\cG$-constrained.
     $\blacksquare$
\end{definition}

We prove a general version of Thm.~\ref{thm:nscm-g-uat} similarly.

\begin{theorem}[\genncm{} $L_2$-$\cG$ Representation]
    For any SCM $\cM^*$ that induces causal diagram $\cG$, there exists a $\cG$-constrained \genncm{} $\widehat{M}(\bm{\theta}) = \langle \widehat{\*U}, \*V, \widehat{\cF}, \widehat{P}(\widehat{\*U}) \rangle$ that is $L_2$-consistent w.r.t. $\cM^*$.
    \hfill $\blacksquare$
\end{theorem}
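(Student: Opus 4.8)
The plan is to mirror the proof of Theorem~\ref{thm:nscm-g-uat} (the MLP-based case), replacing the appeals to the concrete neural-network lemmas (Lemmas~\ref{lem:binary-uat}--\ref{lem:unif-to-pmf}) with the abstract properties (P1) and (P2) from Definition~\ref{def:gen-ncm}, exactly as was done in the general version of Theorem~\ref{thm:lrepr} proven just above. First I would invoke Fact~\ref{fact:scm-to-gctm}: since $\cM^*$ induces the causal diagram $\cG$, there exists a $\cG$-canonical SCM $\cM_{\gctm} = \langle \*U_{\gctm}, \*V, \cF_{\gctm}, P(\*U_{\gctm}) \rangle$ that is $L_2$-consistent with $\cM^*$. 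It therefore suffices to realize $\cM_{\gctm}$ in the form of a $\cG$-constrained \genncm{}.

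Next I would set up the latent structure as dictated by Definition~\ref{def:g-cons-gen-ncm}: choose $\widehat{\*U} = \{\widehat{U}_{\*C} : \*C \in \bbC(\cG)\}$ with one latent per $C^2$-component, matching the $C^2$-component structure of the $\*R_{\*C}$ variables in Definition~\ref{def:gctm}. For each $\*C$, property (P1) guarantees $\widehat{U}_{\*C}$ has a well-defined density, so by the Probability Integral Transform (Fact~\ref{lem:pdf-to-unif}) there is a function $g^U_{\*C}$ mapping $\widehat{U}_{\*C}$ to a $\unif(0,1)$ variable; then, exactly as in Lemma~\ref{lem:unif-to-pmf}, there is a further map $g^R_{\*C}$ so that $g^R_{\*C}(g^U_{\*C}(\widehat{U}_{\*C}))$ is distributed as $\*R_{\*C}$ under $P(\*U_{\gctm})$. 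For each $V_i \in \*V$, I would then define the target function $g_{V_i}$ as the composition that first reconstructs $\*r_{V_i}$ from $\widehat{\*u}_{V_i}$ (the collection of $\widehat{U}_{\*C}$ with $V_i \in \*C$) via the $g^R_{\*C}$'s, and then applies $f^{\gctm}_{V_i}(\pai{V_i}, \cdot)$ from Definition~\ref{def:gctm}; since the composite has codomain $\cD_{V_i}$, which is a countable subset of $\bbR$, property (P2) lets me pick $\theta^*_{V_i}$ with $\hat f_{V_i}(\cdot;\theta^*_{V_i}) = g_{V_i}$. One small bookkeeping point, inherited from the earlier proofs, is that $\pai{V_i}$ must be carried through the ``reconstruction'' stage unchanged so the signatures line up; this is handled exactly as in the proof of Theorem~\ref{thm:lrepr} and causes no difficulty.

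With these choices, the constructed \genncm{} $\widehat{M}$ reproduces $\cM_{\gctm}$ on every input: the induced distribution over $\{\widehat{\*u}_{V_i}\}$-reconstructed $\*R$ variables matches $P(\*U_{\gctm})$, and the functions agree with $\cF_{\gctm}$. Hence for every $L_2$ (indeed every layer-1 or layer-2) event $\bm{\upvarphi}$ we get $\cM_{\gctm} \models \bm{\upvarphi} \Leftrightarrow \widehat{M} \models \bm{\upvarphi}$, so $\widehat{M}$ is $L_2$-consistent with $\cM_{\gctm}$ and therefore with $\cM^*$. Finally, $\widehat{M}$ is $\cG$-constrained by construction, since its latents are indexed by the $C^2$-components of $\cG$ and its directed parent sets are read off $\cG$ per Definition~\ref{def:g-cons-gen-ncm}. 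The main obstacle — though it is a mild one — is purely verifying that properties (P1) and (P2) suffice to recover everything that Lemmas~\ref{lem:binary-uat}--\ref{lem:unif-to-pmf} gave in the concrete case: (P1) plus Fact~\ref{lem:pdf-to-unif} replaces the inverse-probability-integral-transform lemma, and (P2) replaces the MLP universal-representation lemmas, with the one subtlety being that (P2) is stated for functions into a \emph{countable} codomain, which is exactly why the canonical-SCM reduction (Fact~\ref{fact:scm-to-gctm}) — giving discrete $\*R$-variables and discrete $\cD_{\*V}$ — is essential to make the argument go through.
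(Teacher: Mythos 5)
Your proposal matches the paper's proof essentially step for step: reduce to a $\cG$-canonical SCM via Fact~\ref{fact:scm-to-gctm}, use property (P1) together with the probability integral transform (Fact~\ref{lem:pdf-to-unif}) and the construction of Lem.~\ref{lem:unif-to-pmf} to turn the shared latents into the canonical $\*R$-variables, and then invoke (P2) on the composite function with countable codomain $\cD_{V_i}$ to obtain the parameterization. The only minor difference is that you perform the noise transformation per $C^2$-component (sharing $g^R_{\*C}\circ g^U_{\*C}(\widehat{U}_{\*C})$ across all variables in $\*C$) rather than per variable as the paper does, which if anything makes the realization-level agreement of the shared $\*R_{\*C}$ values across functions in the same component more explicit.
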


\begin{proof}
    Fact \ref{fact:scm-to-discrete} states that there exists a discrete SCM $\cM' = \langle \*U', \*V, \cF', P(\*U')\rangle$ with finite domains of $\*U'$ that is $\cG$-consistent and is $L_2$-consistent with $\cM^*$. Hence, to construct $\widehat{M}$, we can simply show how to construct $\cM'$ using the architecture of a \genncm{}.
    
    Following Def.~\ref{def:g-cons-gen-ncm}, we choose $\widehat{\*U} = \{\widehat{U}_{\*C} : \*C \in \bbC(\cG)\}$, where $\bbC(\cG)$ is the set of all $C^2$-components of $\cG$. Denote $\widehat{\*U}_{V} = \{\widehat{U}_{\*C}: \*C \in \bbC(\cG) \text{ s.t. } V \in \*C\}$.
    
    By property P1 and Fact \ref{lem:pdf-to-unif}, there exists a function $g_{V}^{U}$ such that $g_{V}^{U}(\widehat{\*U}_{V})$ is a $\unif(0, 1)$ random variable for each $V \in \*V$.
    
    Using the construction in Lem.~\ref{lem:unif-to-pmf}, there must also exist a function $g_V^{R}$ such that
    \begin{equation}
        \label{eq:g-genncm-match-ctm-pu}
        P(g_V^{R}(g_{V}^{U}(\widehat{\*U}_{V}))) = P(\*U'_V)
    \end{equation}
    for each $V \in \*V$.
    
    To choose the functions of $\widehat{\cF}$, consider each $\hat{f}_{V_i} \in \widehat{\cF}$. Combining the above results, there must exist
    \begin{equation}
        \label{eq:g-genncm-match-ctm-f}
        g_{V_i} = f_{V_i}' \left(\pai{V_i}, g_{V_i}^{R}\left(g_{V_i}^{U}(\widehat{\*U}_{V}) \right)\right)
    \end{equation}
    By property P2, we can choose parameterization $\theta^*_{V_i}$ such that $\hat{f}_{V_i}(\cdot; \theta^*_{V_i}) = g_{V_i}$.
    
    By Eqs.~\ref{eq:g-genncm-match-ctm-pu} and \ref{eq:g-genncm-match-ctm-f}, the \genncm{} $\widehat{M}$ is constructed to match $\cM'$ on all outputs. Hence, $\widehat{M}$ must be $L_2$-consistent with $\cM^*$.
\end{proof}

The remaining results, including Corol.~\ref{cor:ncht}, Thm.~\ref{thm:nscm-g-cons}, Thm.~\ref{thm:nscm-id-equivalence}, Corol.~\ref{thm:dual-graph-id}, Corol.~\ref{thm:markovid}, and Corol.~\ref{thm:nscm-id-correctness} can be proven for \genncm{}s with minimal changes to the proofs for \ncm{}s.

\subsection{Pearl's Causal Hierarchy and Other Classes of Models} \label{sec:pch-other-models}

As developed in the paper, we note that the expressiveness power of \ncms{} is one desirable features of this class as models, which makes them comparable to the SCM class when considering the PCH's capabilities. Broadly speaking, neural networks are well understood to be maximally expressive due to the Universal Approximation Theorem \citep{Cybenko89}. We note that Theorem \ref{thm:lrepr} follows the same spirit when considering causal inferences, i.e., \ncms{} is a data structure built from neural networks that are maximally expressive on
 \emph{every} distribution in \emph{every} layer of the PCH generated by the underlying SCM $\cM^*$, and which has the potential of acting as a proxy under some conditions. This capability is not shared across all neural architectures, and to understand how this discussion extends to these other classes, we define expressivity more generally (Fig.~\ref{fig:expressiveness}): 

\begin{definition}[$L_i$-Expressiveness]
    Let $\Omega^*$ be the set of SCMs and $\Omega$ be a model class of interest. We say that $\Omega$ is $L_i$ expressive if for all $\cM^* \in \Omega^*$, there exists $\cM \in \Omega$ s.t. $L_i(\cM^*) = L_i(\cM)$.
    \hfill $\blacksquare$
\end{definition}

It should be emphasized that expressiveness on layers higher than Layer 1 is a highly nontrivial property (e.g., Example \ref{ex:expressiveness-basic} in Appendix \ref{app:examples-expr} shows the complications involved in constructing an NCM to replicate another SCM on all 3 layers). Even a model class that is $L_2$-expressive can represent far more settings than model classes that are only $L_1$-expressive, since there are many more ways in which SCMs can differ on $L_2$ than on $L_1$. Theorem \ref{thm:lrepr} states that \ncm{}s are $L_3$-expressive, which leads to the question: what model classes are not $L_3$-expressive?

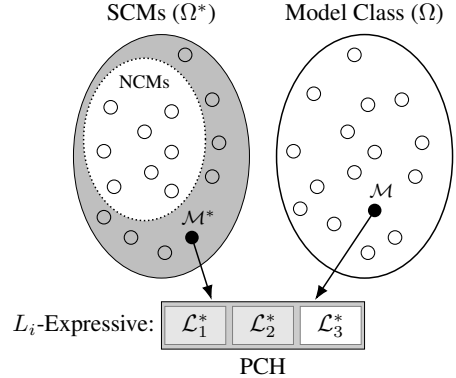
\begin{wrapfigure}{r}{0.45\textwidth}
    \vspace{-0.15in}
    \begin{center}
        \begin{tikzpicture}[-, scale=0.9, every node/.append style={transform shape}]
        
            \filldraw [fill=gray!50, line width=0.01mm] (-6.75, 0) ellipse (1.3 and 1.8);
            \fill [fill=gray!0] (-7, 0.25) ellipse (0.9 and 1.2);
            \filldraw [fill=white, line width=0.2mm, densely dotted] (-7, 0.25) ellipse (0.9 and 1.2);
            
            \node [align=center, font=\fontsize{10}{0}\selectfont] (omegastarlabel) at (-6.75, 2.1) {SCMs ($\Omega^*$)};
            \node [align=center, font=\fontsize{8}{0}\selectfont, fill=white, inner sep=0.7mm] (omegastarlabel) at (-7, 1.1) {NCMs};
            \node [align=center, font=\fontsize{10}{0}\selectfont] (omegastarlabel) at (-3.75, 2.1) {Model Class ($\Omega$)};
            
            \draw (-6, 0.8375) circle (0.1);
            
            \draw (-6.4, 1.5) circle (0.1);
            \filldraw [fill=white] (-7.5, 0.725) circle (0.1);
            \filldraw [fill=white] (-7.6, 0.075) circle (0.1);
            \filldraw [fill=white] (-7.45, -0.45) circle (0.1);
            \draw (-7.6, -0.9) circle (0.1);
            \draw (-7.2, -1.2) circle (0.1);
            \fill [fill=black] (-6.3, -1.2) circle (0.1);
            \node [inner sep=0] (mstarbottom) at (-6.3,-1.2) {};
            \node [align=center, font=\fontsize{8}{0}\selectfont] (mstarlabel) at (-6.2, -0.9) {$\cM^*$};
            \draw (-6.7, -1.425) circle (0.1);
            
            \filldraw [fill=white] (-6.6, 0.6625) circle (0.1);
            \filldraw [fill=white] (-7.0, 0.3625) circle (0.1);
            \draw (-5.9, 0.2) circle (0.1);
            \filldraw [fill=white] (-6.6, 0.0625) circle (0.1);
            \filldraw [fill=white] (-6.9, -0.2375) circle (0.1);
            \draw (-6, -0.3) circle (0.1);
            \filldraw [fill=white] (-6.6, -0.5125) circle (0.1);
            
            \draw [line width=0.2mm] (-3.75, 0) ellipse (1.3 and 1.8);
            
            \draw(-3.2, 0.8375) circle (0.1);
            
            \draw (-3.9, 1.5) circle (0.1);
            \draw (-4.5, 0.925) circle (0.1);
            \draw (-4.8, 0.075) circle (0.1);
            \draw (-4.45, -0.45) circle (0.1);
            \draw (-4.7, -0.775) circle (0.1);
            \draw (-4.2, -1) circle (0.1);
            \draw (-3.3, -1.2) circle (0.1);
            \draw (-3.7, -1.425) circle (0.1);
            
            \draw (-3.7, 0.7625) circle (0.1);
            \draw (-4.0, 0.3625) circle (0.1);
            \draw (-2.9, 0.2) circle (0.1);
            \draw (-3.6, 0.0625) circle (0.1);
            \draw (-3.9, -0.2375) circle (0.1);
            \draw (-3, -0.3) circle (0.1);
            \fill[fill=black] (-3.6, -0.8) circle (0.1);
            \node [inner sep=0] (mbottom) at (-3.6,-0.8) {};
            \node [align=center, font=\fontsize{8}{0}\selectfont] (mlabel) at (-3.5, -0.5) {$\cM$};

            \filldraw [fill=gray!40,line width=0.01mm] (-6.75, -2.15) rectangle (-3.75, -2.85);
            
            \node (liexpressive) at (-7.9, -2.5) {$L_i$-Expressive:};
            \node (pch) at (-5.25, -3.1) {PCH};
        
           	\node [fill=gray!20, draw=gray, line width=0.08mm, align=center, minimum width=0.9cm] (L1Pv) at (-6.25,-2.5) {$\Ll_1^*$};
           	\node [fill=gray!20, draw=gray, line width=0.08mm, align=center, minimum width=0.9cm] (L2Pv) at (-5.25,-2.5) {$\Ll_2^*$};
           	\node [fill=gray!0, draw=gray, line width=0.08mm, align=center, minimum width=0.9cm] (L3Pv) at (-4.25,-2.5) {$\Ll_3^*$};
           	
           	\node [inner sep=0] (layerstop1) at (-6,-2.15) {};
           	\node [inner sep=0] (layerstop2) at (-4.5,-2.15) {};
           	
           	\path [-Latex, line width=0.2mm] (mstarbottom) edge (layerstop1);
           	\path [-Latex, line width=0.2mm] (mbottom) edge (layerstop2);
        \end{tikzpicture}
        \caption{The SCM class $\Omega^*$ is shown in the l.h.s. and another class (non-SCM) $\Omega$ class is shown in the r.h.s.. If we consider $M^* \in \Omega^*$ that generates a specific PCH, $\Omega$ is $L_i$ representative if there is a model $M \in \Omega^*$ that exhibit the same behavior under $L_i$. 
        }
        \label{fig:expressiveness}
    \end{center}
\end{wrapfigure}

Many neural model classes are $L_1$-expressive, which can be seen as the property of modeling an unique probability distribution over a set of observed variables. Additionally, this model should be generative, i.e., one should be able to draw samples from the model's fitted distribution. Model classes that have this property include variational models with normalizing flows \citep{pmlr-v37-rezende15}, which are generative models and are proven to be able to fit any data distribution. It is also believed that popular generative models like generative adversarial networks \citep{NIPS2014_5ca3e9b1} have this property as well.

Interesting enough, being generative does not imply being causal or counterfactual. For instance, these model classes are not well defined for valuating any causal distributions. A model from one of these classes that is fitted on $P(\*V)$ can only provide samples from $P(\*V)$. On the other hand, distributions like $P(\*V_{\*x})$, the distribution of $\*V$ under the intervention $do(\*X = \*x)$ for some $\*X \subseteq \*V$, are entirely different distributions from Layer 2 of the PCH. A model fitted on $P(\*V)$ cannot output samples from $P(\*V_{\*x})$, and it certainly cannot output samples from \emph{every} Layer 2 or Layer 3 distributions. These models, therefore, are not even $L_2$-expressive, let alone $L_3$-expressive.

It is somewhat natural, and perhaps obvious, that model classes that are not defined on higher layers cannot be expressive on higher layers. Still, we can also consider this property on model classes proposed by other deep learning works which include a causal component. Works such as \citep{yoon2018ganite} and \citep{shi2019adapting} have had great success estimating higher layer quantities like average treatment effects under backdoor/conditional ignorability conditions \cite[Sec.~3.3.1]{pearl:2k}. Perhaps these models can always succeed at modeling Layer 2 expressions as long as the ground truth can be modeled by an SCM that also meets those same backdoor conditions. However, since there are SCMs that do not fit the backdoor setting, we cannot say that these models are $L_2$ or $L_3$-expressive. We illustrate this point by providing an example (Example \ref{ex:id}) in Appendix \ref{app:examples-id}, where the \ncm{} estimates a causal quantity in a setting that does not meet the backdoor condition.

In some way, these works and most of the literature are concerned with estimating causal effects that are identifiable from observational data $P(V)$, usually in the form of the backdoor formula, which means that the ``causal reasoning'' already happened outside the network itself. In other words, a causal distribution $Q = P(\*Y | do(\*X=\*x))$ being identifiable means that there exists a function $f$ such that $Q = f(P(V))$. The causal ``inference'' means determining from causal assumptions (for example the causal diagram $G$) whether $Q$ is identifiable, or whether $f$ exists. Whenever this is the case, and $f$ has a closed-expression, the inference is done, and the task that remains is to estimate $f$ efficiently from finite samples obtained from $P(V)$. 

Even though many existing works do not have the $L_2$ or $L_3$-expressiveness properties, this is not to say that the \ncm{} is the only model class that is expressive enough to represent all the three layers of the PCH. Consider the following example of a model class artificially constructed to be $L_2$-expressive:

\begin{definition}[$L_2$-Expression Wrapper]
    \label{def:l2-expression-wrapper}
    Let $M^{L_1}$ be an $L_1$-expressive model class such as a normalizing flow model. Denote $M^{L_1}_{P}$ as a model from this class fitted on distribution $P$. For a collection of $L_2$ distributions $\*P^*$, define $L_2$-expression wrapper $M^{L_2} := \{M^{L_1}_P : P \in \*P^*\}$.
    \hfill $\blacksquare$
\end{definition}
It is fairly straightforward to show that the $L_2$-expression wrapper is $L_2$-expressive. For any SCM $\cM^*$, we can simply construct $L_2$-expression wrapper $M^{L_2}$ with $\*P^* = L_2(\cM^*)$. Then, for any $L_2$-query $Q$, $M^{L_2}$ answers $Q$ by providing the distribution of $M^{L_1}_Q$. In other words, we could simply train a generative model for \textit{every} $L_2$ distribution of an SCM, and collectively, these models induce all $L_2$ distributions of the SCM.

Although expressive, this model is highly impractical for most practical uses. First, there are typically too many distributions in Layer 2 to effectively learn all of them. A new distribution would be created for every possible intervention of every subset of variables. In fact, this amount grows exponentially with the number of variables, and in continuous cases, it is infinite.

Second, unlike the NCM data structure (Def.~\ref{def:g-cons-nscm}), there is no clear way to incorporate the constraints of a CBN in the expression wrapper. Without these constraints, it is obvious that the expression wrapper suffers from the same limitations from the CHT as NCMs (Corol.~\ref{cor:ncht}). Suppose we are given $P(\*V)$ from true SCM $\cM^*$, but we are interested in the distribution $P(\*V_{\*x})$. We could guarantee an expression wrapper $M^{L_2}$ such that $M^{L_1}_{P(\*V)} = P^{\cM^*}(\*V)$, but $M^{L_1}_{P(\*V_{\*x})}$ could be any distribution over $\*V$ that is consistent with $\*x$, and we would not be able to guarantee that it matches $P^{\cM^*}(\*V_{\*x})$.

Attempting to incorporate the constraints of a CBN in an expression wrapper would scale poorly and be difficult to realize in practice. Consider the following example.

\begin{example}
    \label{ex:G-expression-wrapper}
    \begin{figure}[h]
        \centering
        \begin{tikzpicture}[xscale=1.5, yscale=2]
    		\node[draw, circle] (D) at (-1, 0) {$D$};
    		\node[draw, circle] (S) at (0, 0) {$S$};
    		\node[draw, circle] (B) at (1, 0) {$B$};
    
    		\path [-{Latex[length=2mm]}] (D) edge (S);
    		\path [-{Latex[length=2mm]}] (S) edge (B);
    		\path [dashed, {Latex[length=2mm]}-{Latex[length=2mm]}, bend left] (D) edge (B);
    	\end{tikzpicture}
    	\caption{Causal diagram $\cG$ of $\cM^*$ from Example \ref{ex:id}.}
    	\label{fig:wrapper-cg}
    \end{figure}
    Consider the same setting from Example \ref{ex:id}, where we would like to evaluate a query like $Q = P(B = 1 \mid do(D = 1))$ given $P(\*V)$ and the graph $\cG$ (displayed again in Fig.~\ref{fig:wrapper-cg} for convenience).
    If we were using the expression wrapper, we would need an estimator for each of the $L_1$ and $L_2$ distributions such $P(\*V), P(\*V_{D = 1}), P(\*V_{S = 1})$, and so on.
    
    However, unlike the NCM, the expression wrapper does not automatically incorporate the constraints implied by $\cG$. Here is a list of some  constraints used in the do-calculus derivation in Example \ref{ex:id}:
    \begin{enumerate}
        \item $P(B \mid do(D), S) = P(B \mid do(D, S))$
        \item $P(B \mid do(D, S)) = P(B \mid do(S))$
        \item $P(S \mid do(D)) = P(S \mid D)$
        \item $P(D \mid do(S)) = P(D)$
        \item $P(B \mid do(S), D) = P(B \mid S, D)$
    \end{enumerate}
    
    This list is not exhaustive, and there are more constraints induced by $\cG$. This list is also compressed, since intervening on different values for the same variables result in different distributions. Note that even with just 3 binary variables, the number of constraints is quite long when enumerated.
    
    Maintaining all of these constraints while fitting all of the desired distributions is the key difficulty of using the $L_2$-expression wrapper. While fitting $P(\*V)$, the expression wrapper would have to also generate some other distributions from layer 2 like $P(\*V_{D = 1})$, keeping constraints in mind. One cannot make such a choice arbitrarily. For instance, setting $P(\*V_{D = 1})$ to $P(\*V)$, but with all values of $D$ set to 1, would still violate constraint 3 from the above list. This would be infeasible for any tractable optimization procedure.
    \hfill $\blacksquare$
\end{example}

This illustrates the benefits of the properties of the NCM. Not only is the NCM $L_3$-expressive, a non-trivial property to achieve, but it also is easily able to incorporate structural constraints. 
\newpage
\section{Frequently Asked Questions}

\begin{enumerate}[label=Q\arabic*.]
    \item Why are the sets of noise variables for each function not necessarily independent in the SCM and NCM?
    \vspace{+0.03in}
    \\ \textbf{Answer.} 
Each structural causal model includes a set of observed (endogenous) and unobserved (exogenous) variables, $\*V$ and $\*U$, respectively. The set $\*U$  accounts for the unmodeled sources of variation that generate randomness in the system. On the one hand, an exogenous variable $U \in \*U$ could affect all endogenous variables in the system. On the other hand, each $U_i \in \*U$ could affect only one observable $V_i \in \*V$ and be independent to all other variables in $\*U \setminus \{U_i\}$.  These two represent opposite sides of a spectrum, where the former means zero assumptions on the presence of unobserved confounding, while the latter represents the strongest assumption known as Markovianity. We operate in between, allowing for all possible cases, which is represented in the causal diagram through bidirected edges. 

In the context of identification, Markovianity is usually known as the lack of unobserved confounding. As shown in Corol.~\ref{thm:markovid}, the identification problem becomes trivial and causation is always derivable from association in this case, i.e., \emph{all} layer 2 queries are computable from layer 1 data.
In practice, identification becomes somewhat more involved in non-Markovian settings. Example \ref{ex:non-id} in Appendix \ref{app:examples} shows a case where a causal query is not identifiable even in a 2 variable case. More discussion and examples can be found in \citep[Sec.~1.4]{bareinboim:etal20}.
    \\
    
    \item How come the true SCM cannot be learned from observational data? If the functions are universal approximators, this seems to be easily accomplished. 
    \vspace{+0.05in}
    \\ \textbf{Answer.} 
    This is not possible in general, as highlighted by Corol.~\ref{cor:ncht}. The issue is that the learned model, $\widehat{M}$, could generate the distribution of observational data perfectly yet still differ in interventional distributions from the true SCM, $\cM^*$. Searching for (or randomly choosing) some SCM that matches the observational distribution will almost surely fail to match the interventional distributions. Since the true SCM, $\cM^*$, is unobserved, one would not be able to determine whether the inferences made on the learned model $\widehat{M}$ preserve the validity of the causal claims. See Example \ref{ex:ncht} in Appendix \ref{app:examples} for a more detailed discussion.
    \\
    
    \item Why are feedforward neural networks and uniform distributions used  in Def.~\ref{def:nscm}? Would these results not hold for other function approximators and noise distributions?
    \vspace{+0.05in}
    \\ \textbf{Answer.} 
    We implemented in the body of the paper NCMs with feedforward neural networks and uniform distributions for the sake of clarity of exposition and presentation. The results of the NCM are not limited to this particular architecture choice, and other options may offer different benefits in practice. A general definition of the NCM is introduced in Def.~\ref{def:gen-ncm}, and the corresponding results  in this generalized setting are provided in Appendix \ref{sec:nn-general}.
    \\
    
    \item What is a causal diagram? What is a CBN? Is there any difference between these objects?
    \vspace{+0.05in}
    \\ \textbf{Answer. } 
    A causal diagram $\cG$ is a  non-parametric  representation of an SCM in the form of a graph, which follows Def.~\ref{def:scm-cg}, such that nodes represent variables, a directed edge from $X$ to $Y$ indicates that $X$ is an input to $f_Y$, and a bidirected edge from $X$ to $Y$ indicates that there is unobserved confounding between $X$ and $Y$. The causal diagram $\cG$ is strictly weaker than the generating SCM $\cM^*$. Talking about the diagram itself is usually ill-defined, since it is relevant in the context of the constraints it imposes over the space of distributions generated by $\cM^*$ (both observed and unobserved). 
    
    A Causal Bayesian Network (CBN, Def.~\ref{def:cbn}) is a pair consisting of a causal diagram $\cG$ and a collection of layer 2 (interventional) distributions $\*P^*$, where the distributions follow a set of constraints represented in the diagram (i.e., they are ``compatible''). Remarkably,  the causal diagram induced by any SCM, along with its induced layer 2 distributions, together form a valid CBN (Fact \ref{fact:scm-cbn-l2}). Causal diagrams and the constraints encoded in CBNs are often the fuel to causal inference methods (e.g., do-calculus) and used to bridge the gap between layer 1 and layer 2 quantities. We refer readers to Appendix \ref{app:examples-cg} for a further discussion on these inductive biases and their motivation, and \citep[Sec.~1.3]{bareinboim:etal20} for further elaborations. 
    \\
    
    \item Based on the answer to Q4, it is not entirely clear what an NCM really represents. Is an NCM a causal diagram? Or is it an SCM? Or should I think about it in a different way? 
    \vspace{+0.05in}
    \\ \textbf{Answer} 
    An NCM is not a causal diagram, and although an NCM is technically an SCM, by definition, it may be helpful to think of it as a separate data structure. As discussed above, SCMs contain strictly more information than causal diagrams. Still, NCMs without $\cG$-consistency (Def.~\ref{def:g-consistency}) do not even ascertain the constraints represented in $\cG$ about layer 2 distributions. So, even though an NCM seems to appear like an SCM, it is an ``empty'' data structure on its own. Whenever we impose $\cG$-consistency, NCMs may have the capability of generating interventional distributions, contingent on $L_1$-consistency and the identifiability status of the query. 
    
 More broadly, we think it is useful to think about an NCM as a candidate for a proxy model for the true SCM. 
    Unlike the SCM, the NCM's concrete definitions for functions and noise allow the NCM to be optimized (via neural optimization procedures) for learning in practical settings.
    One additional difference between the NCM and SCM is their purpose in causal inference. The SCM data structure is typically used to represent reality, so in some sense, the SCM has all information of interest from all layers of the PCH. Of course, this reality is unobserved, so the SCM is typically unavailable for use. The NCM is then trained as a proxy for the true SCM, so it has all of the functionality of an SCM while being readily available. However, unlike the SCM, the ``quality'' of the information in the NCM is dependent on the amount of input information. If an NCM is only trained on data from layer 1, the Neural CHT (Corol.~\ref{cor:ncht}) says that distributions induced by the NCM from higher layers will, in general, be incorrect. For that reason, while the NCM has the same interface as the SCM for the PCH, the NCM's higher layers may be considered ``underdetermined'', in the sense that they do not contain any meaningful information about reality.
    \\
    
    \item Why do you assume the causal diagram exists instead of learning it? What if the causal diagram is not available? 
    \vspace{+0.05in}
    \\ \textbf{Answer. } 
    As implied by the N-CHT (Corol.~\ref{cor:ncht}), cross-layer inferences cannot be accomplished in the general case without additional information. 
    In other words, the causal diagram assumption is out of necessity. In general, one cannot hope to learn the whole causal diagram from observational data either.
    However, it is certainly not the case that the causal diagram will always be available in real world settings. In such cases, other assumptions may be necessary which may help structural learning, or perhaps the causal diagram can be partially specified to a degree that allows answering the query of interest (see also Footnotes \ref{footnote:structural-learning} and \ref{footnote:other-ncms}).
    \\
    
    \item Are NCMs assumed to be acyclic?
    \vspace{+0.05in}
    \\ \textbf{Answer.}
    We assume for this work that the true SCM $\cM^*$ is \textit{recursive}, meaning that the variables can be ordered in a way such that if $V_i < V_j$, then $V_j$ is guaranteed to not be an input to $f_{V_i}$. This implies that the causal diagram induced by $\cM^*$ is acyclic, so $\cG$-constrained NCMs (from Def.~\ref{def:g-cons-nscm}) must also be recursive. That said, not all works make the recursive assumption, so the general definition of NCMs does not need to be constrained a priori. 
    \\
    
    \item  Why is $\cM^*$ assumed to be an SCM instead of an NCM in the definition for neural identifiability (Def.~\ref{def:nscm-id})?
    \vspace{+0.05in}
    \\ \textbf{Answer.}
    $\cM^*$ indicates the ground truth model of reality, and we assume that reality is modeled by an SCM, without any constraints in the form of the functions or probability over the exogenous variables. The NCM is a data structure that we constructed to solve causal inference problems, but the inferences of interest are those from the true model. For this reason, several results were needed to connect the properties of NCMs to those from SCMs, including Thm.~\ref{thm:nscm-g-cons} (showing that NCMs can encode the same constraints as SCMs' causal diagrams) and Thm.~\ref{thm:nscm-g-uat} (showing that NCMs can represent any $\cG$-consistent SCM on layer 2). These results are required to show the validity (or meaning) of the inferences made from the NCM, which otherwise would have no connection to the true model, $\cM^*$.
    
    Naturally, checking for identifiability within the space of NCMs defeats the purpose of the work since the model of interest is not (necessarily) an NCM. Conveniently, however, the expressiveness of the NCM (as shown in Thm.~\ref{thm:nscm-g-uat}) shows that if two SCMs agree on $P(\*V)$ and $\cG$ but do not agree on $P(\*y \mid do(\*x))$, then there must also be two NCMs that behave similarly. This means that attempting to check for identifiability in the space of NCMs will produce the correct result, which is why Alg.~\ref{alg:nscm-solve-id} can be used. Still, interestingly, this may no longer be the case within a less expressive model class (see Example \ref{ex:nonexpr} in Appendix \ref{app:examples-id}).
    \\
    
    \item What is the purpose of Thm.~\ref{thm:nscm-g-uat} if we already have Thm.~\ref{thm:lrepr}?
    \vspace{+0.05in}
    \\ \textbf{Answer.}
    While Thm.~\ref{thm:lrepr} shows that NCMs (from Def.~\ref{def:nscm}) can express any SCM on all three layers of the PCH, it does not make any claims about $\cG$-constrained NCMs (Def.~\ref{def:g-cons-nscm}). In fact, a $\cG$-constrained NCM is obviously incapable of expressing any SCM that is not $\cG$-consistent. Thm.~\ref{thm:nscm-g-uat} emphasizes that even when encoding the structural constraints in $\cG$ (CBN-like), NCMs are still capable of expressing any SCM that is also compatible with $\cG$. If this property did not hold, it would not always be possible to use the NCM as a proxy for the SCM, since there may be some SCMs that are $\cG$-consistent, but there does not exist an NCM that matches the corresponding distributions.
    
    One important subtlety of Thm.~\ref{thm:nscm-g-uat} is that the expressiveness is described after applying the constraints of $\cG$. It may be relatively simple to use a universal approximator to construct a model class that can express any SCM on layer 2, such as the $L_2$-expression wrapper (Def.~\ref{def:l2-expression-wrapper} from Appendix \ref{sec:pch-other-models}). However, the $L_2$-expression wrapper does not naturally enforce the constraints of $\cG$. Incorporating such constraints is nontrivial, and even if the constraints were applied, the model class may no longer retain its expressiveness. Thm.~\ref{thm:nscm-g-uat} shows that the NCM is the maximally expressive model class that still respects the constraints of $\cG$.
    \\
    
    \item The objective function in Eq.~\ref{eq:ncm-total-loss} is not entirely clear. What role does $\lambda$ play?
    \vspace{+0.05in}
    \\ \textbf{Answer.} 
    The purpose of Alg.~\ref{alg:ncm-learn-pv} is to solve the optimization problem in lines 2 and 3 of Alg.~\ref{alg:nscm-solve-id}. The goal can be described in two parts: maximize/minimize a query of interest, like $P(\*y \mid do(\*x))$, while simultaneously learning the observational distribution $P(\*V)$. This is a nontrivial optimization problem, and there are many different ways to approach it.
    The proposed method penalizes both of these quantities in the objective. The first term of Eq.~\ref{eq:ncm-total-loss} attempts to maximize log-likelihood, while the second term attempts to maximize/minimize $P(\*y \mid do(\*x))$. Certainly, we do not want the optimization of the second term to affect the optimization of the first term, since we need accurate learning of $P(\*V)$ to make any deductions on the identifiability status of $P(\*y \mid do(\*x))$ and subsequent estimation of it.
    
    Further, the hyperparameter $\lambda$ is used to balance the amount of attention placed on maximizing/minimizing $P(\*y \mid do(\*x))$. If $\lambda$ is too small, then the term would be ignored, and the optimization of $P(\*y \mid do(\*x))$ could fail even in non-ID cases. On the other hand, if $\lambda$ is too large, the optimization procedure might attempt to maximize/minimize $P(\*y \mid do(\*x))$ at the expense of learning $P(\*V)$ properly, which may result in incorrect inferences in ID cases. Our solution starts $\lambda$ at a large value to allow the optimization to quickly find parameters that maximize/minimize $P(\*y \mid do(\*x))$,  and then it lowers $\lambda$ as training progresses so that $P(\*V)$ is learned accurately by the end of training.
    \\
    
    \item Why would one want, or prefer, to solve identifiability using NCMs? Would it not be easier to use existing results (e.g., do-calculus) to determine the identifiability of a query, then use the NCM for estimation?
    \vspace{+0.05in}
    \\ \textbf{Answer.} 
     We provide a more detailed discussion on this topic in Appendix \ref{app:examples-symbolic-neural}. 
    In summary, when solving causal problems like identification or estimation, we introduced a new framework using NCMs where one can construct a proxy model to mimic the true SCM via neural optimization. Meanwhile, existing works focus on solving the problems at a higher-level of abstraction with the causal diagram, ignoring the true SCM. We call these works ``symbolic'' since they directly use the constraints of the causal diagram to algorithmically derive a solution. We are not deciding for the causal analyst which method to use when comparing symbolic and  optimization-based methods. For instance, we developed Alg.~\ref{alg:nscm-solve-id-symbolic} as an alternative to Alg.~\ref{alg:nscm-solve-id} for researchers who would prefer to use a symbolic approach for identification while using an NCM for estimation.

    We note that the identification and estimation tasks typically appear together in practice, and it is  remarkable that the proxy-model framework with NCMs is capable of solving the entire causal pipeline. In some sense, the identification task provides the causal reasoning required to perform estimation. Once a query is determined to be identifiable, the estimation task is only a matter of fitting an expression. It may be somewhat unsurprising that the expressiveness of neural networks along with its strong optimization results allow it to fit any expression, but this work shows, for the first time, that neural nets are also capable of performing the causal reasoning required to determine identifiability. We note that Alg.~\ref{alg:nscm-solve-id} describes a procedure for solving the ID problem that is unique compared to symbolic approaches. Our goal with Alg.~\ref{alg:nscm-solve-id} is to provide insights to alternative approaches to causal problems under the proxy model framework compatible with optimization methods.
    \\
    
    \item Why would one want to learn an entire SCM to solve the estimation problem for a specific query? Shouldn't the efforts be focused on estimating just the query of interest?
    \vspace{+0.03in}
 \\ \textbf{Answer. } 
    Indeed, existing works typically focus on a single interventional distribution and also abstract out the SCM, as shown in Fig.~\ref{fig:causal-pipeline}.  In particular,  solutions to the identification problem typically derive an expression for the query of interest using the causal diagram, and solutions to the estimation problem directly evaluate the derived expression.
    
    One of the novelties of this work is the introduction of a method of solving these tasks using a proxy model of the SCM. Compared to existing methods, there are a number of reasons one may want to have a proxy model. For instance, one may want to have generative capabilities of both layer 1 and identifiable layer 2 distributions rather than simply obtaining a probability value. We refer readers to Appendix \ref{app:examples-symbolic-neural} for a more detailed discussion.
    \\
    
    \item It appears that the theory and results only hold if learning is perfect. What happens if there is error in training?
    \vspace{+0.05in}
    \\ \textbf{Answer.}  
    This work does not make any formal claims regarding cases with error in training. Still, one of our goals with the experiments of Sec.~\ref{sec:experiments} is to show empirically that results tend to be fairly accurate if training error is minimized to an acceptable degree (i.e., $L_1$-consistency nearly holds). A more refined understanding and detailed theoretical analysis of robustness to training error is an interesting direction for future work.
    \\
    
\end{enumerate}

\end{document}